\Crefname{lem}{Lemma}{Lemmas}
\Crefname{prop}{Proposition}{Propositions}
\newcommand{\algcomment}[1]{\textcolor{blue!70!black}{\footnotesize{\texttt{\textbf{//
          #1}}}}}
\newcommand{\poly}{\mathrm{poly}}
\newcommand{\calX}{\mathcal{X}}
\newcommand{\calS}{\mathcal{S}}
\numberwithin{equation}{section}
\newcommand{\op}{\mathrm{op}}
 \theoremstyle{plain}
      \newtheorem{asm}{Assumption}
\theoremstyle{plain}
\newtheorem{thm}{Theorem}
\newtheorem{lem}{Lemma}[section]
\newtheorem{cor}{Corollary}
\newtheorem{claim}[lem]{Claim}
\newtheorem{prop}[thm]{Proposition}
\theoremstyle{definition}
\newtheorem{defn}{Definition}[section]
\newtheorem{exmp}{Example}[section]
\renewcommand{\Pr}{\mathbb{P}}
\newcommand{\Exp}{\mathbb{E}}
\newcommand{\Var}{\mathrm{Var}}
\newcommand{\rmd}{\mathrm{d}}
\newcommand{\dist}{\mathrm{dist}}
\newcommand{\N}{\mathbb{N}}
\newcommand{\R}{\mathbb{R}}
\DeclareMathOperator*{\argmax}{arg\,max}
\DeclareMathOperator*{\argmin}{arg\,min}
\def\ddefloop#1{\ifx\ddefloop#1\else\ddef{#1}\expandafter\ddefloop\fi}
\def\ddef#1{\expandafter\def\csname bb#1\endcsname{\ensuremath{\mathbb{#1}}}}
\def\ddefloop#1{\ifx\ddefloop#1\else\ddef{#1}\expandafter\ddefloop\fi}
\def\ddef#1{\expandafter\def\csname fr#1\endcsname{\ensuremath{\mathfrak{#1}}}}
\def\ddefloop#1{\ifx\ddefloop#1\else\ddef{#1}\expandafter\ddefloop\fi}
\def\ddef#1{\expandafter\def\csname scr#1\endcsname{\ensuremath{\mathscr{#1}}}}
\def\ddefloop#1{\ifx\ddefloop#1\else\ddef{#1}\expandafter\ddefloop\fi}
\def\ddef#1{\expandafter\def\csname b#1\endcsname{\ensuremath{\mathbf{#1}}}}
\def\ddef#1{\expandafter\def\csname c#1\endcsname{\ensuremath{\mathcal{#1}}}}
\def\ddef#1{\expandafter\def\csname h#1\endcsname{\ensuremath{\widehat{#1}}}}
\def\ddef#1{\expandafter\def\csname t#1\endcsname{\ensuremath{\widetilde{#1}}}}
\def\ddefloop#1{\ifx\ddefloop#1\else\ddef{#1}\expandafter\ddefloop\fi}
\def\ddef#1{\expandafter\def\csname mat#1\endcsname{\ensuremath{\mathbf{#1}}}}
\newcommand{\Vst}{V^\star}
\newcommand{\Vpi}{V^\pi}
\newcommand{\Qst}{Q^\star}
\newcommand{\Exphat}{\widehat{\Exp}}
\newcommand{\pist}{\pi^\star}
\newcommand{\what}{\widehat{w}}
\newcommand{\Qpi}{Q^\pi}
\newcommand{\simplex}{\bigtriangleup}
\newcommand{\cOtil}{\widetilde{\cO}}
\newcommand{\wtil}{\widetilde{\bm{w}}}
\newcommand{\euler}{\textsc{Euler}\xspace}
\newcommand{\false}{\texttt{false}}
\newcommand{\true}{\texttt{true}}
\newcommand{\sighat}{\bar{\mathsf{v}}}
\newcommand{\sighatb}{\widehat{\sigma}}
\newcommand{\sigmin}{\mathsf{v}_{\mathrm{min}}}
\newcommand{\sigminb}{\sigma_{\mathrm{min}}}
\newcommand{\Xtil}{\widetilde{X}}
\newcommand{\cXtil}{\widetilde{\cX}}
\newcommand{\zst}{z^*}
\newcommand{\ztilst}{\widetilde{z}^*}
\newcommand{\ftil}{\widetilde{f}}
\newcommand{\alphatil}{\widetilde{\alpha}}
\newcommand{\ytil}{\widetilde{y}}
\newcommand{\betatil}{\widetilde{\beta}}
\newcommand{\cRtil}{\widetilde{\cR}}
\newcommand{\alphamax}{\alpha_{\max}}
\newcommand{\Cmdp}{C_{\mathrm{mdp}}}
\newcommand{\Kinit}{K_{\mathrm{init}}}
\newcommand{\cvar}{c_{\sighat}}
\newcommand{\efficient}{\texttt{efficient}}
\newcommand{\algname}{\textsc{Force}\xspace}
\newcommand{\inner}[2]{\left \langle #1, #2 \right \rangle}
\newcommand{\innerb}[2]{\langle #1, #2 \rangle}
\newcommand{\catonii}{\mathsf{cat}}
\newcommand{\bx}{\bm{x}}
\newcommand{\vLam}{(\bv,\bLambda)}
\newcommand{\logterm}{\mathsf{logs}}
\newcommand{\Ball}{\mathcal{B}}
\newcommand{\bSigma}{\bm{\Sigma}}
\newcommand{\bthettil}{\tilde{\bm{\theta}}}
\newcommand{\bvtil}{\widetilde{\bv}}
\newcommand{\Cpoly}{\mathcal{C}_{\mathrm{poly}}}
\newcommand{\calN}{\mathcal{N}}
\newcommand{\covnum}{\mathsf{N}}
\newcommand{\fcatoni}{f_{\catonii}}
\newcommand{\psicat}{\psi_{\catonii}}
\newcommand{\bLambda}{\bm{\Lambda}}
\newcommand{\bv}{\bm{v}}
\newcommand{\bphi}{\bm{\phi}}
\newcommand{\calF}{\mathcal{F}}
\newcommand{\btheta}{\bm{\theta}}
\newcommand{\bust}{\bm{u}_{\star}}
\newcommand{\bw}{\bm{w}}
\newcommand{\calV}{\mathcal{V}}
\newcommand{\fst}{f_{\star}}
\newcommand{\Fclass}{\mathscr{F}}
\renewcommand{\what}{\widehat{\bm{w}}}
\newcommand{\bLamtil}{\widetilde{\bm{\Lambda}}}
\newcommand{\bthetast}{\btheta_{\star}}
\newcommand{\bthetahat}{\widehat{\btheta}}
\newcommand{\bu}{\bm{u}}
\newcommand{\bthetatil}{\widetilde{\btheta}}
\newcommand{\betaw}{\beta_{\bw}}
\newcommand{\Fclassmdp}{\Fclass_{\mathrm{mdp}}}
\newcommand{\Clog}{C_{\mathrm{log}}}
\newcommand{\bwpi}{\bw^{\pi}}
\newcommand{\Vstval}{\Vst_1}
\newcommand{\bmu}{\bm{\mu}}
\newcommand{\Rclass}{\mathscr{R}}
\newcommand{\dR}{d_{\Rclass}}
\newcommand{\RR}{R_{\Rclass}}
\newlength\tindent
\providecommand\theHALG@line{\thealgorithm.\arabic{ALG@line}}
\title{First-Order Regret in Reinforcement Learning with Linear Function Approximation: A Robust Estimation Approach}
\author{Andrew Wagenmaker\footnote{University of Washington, Seattle. Email: \texttt{ajwagen@cs.washington.edu}} \and Yifang Chen\footnote{University of Washington, Seattle. Email: \texttt{yifangc@cs.washington.edu}} \and Max Simchowitz\footnote{CSAIL, MIT. Email: \texttt{msimchow@mit.edu}} \and Simon S. Du\footnote{University of Washington, Seattle. Email: \texttt{ssdu@cs.washington.edu}} \and Kevin Jamieson\footnote{University of Washington, Seattle. Email: \texttt{jamieson@cs.washington.edu}}}
\date{October 20, 2022}
\begin{document}

\maketitle

\begin{abstract}

Obtaining first-order regret bounds---regret bounds scaling not as the worst-case but with some measure of the performance of the optimal policy on a given instance---is a core question in sequential decision-making. While such bounds exist in many settings, they have proven elusive in reinforcement learning with large state spaces. In this work we address this gap, and show that it is possible to obtain regret scaling as $\widetilde{\mathcal{O}}(\sqrt{d^3 H^3 \cdot V_1^\star \cdot K} + d^{3.5}H^3\log K )$ in reinforcement learning with large state spaces, namely the linear MDP setting. Here  $V_1^\star$ is the value of the optimal policy and $K$ is the number of episodes. We demonstrate that existing techniques based on least squares estimation are insufficient to obtain this result, and instead develop a novel robust self-normalized concentration bound based on the robust Catoni mean estimator, which may be of independent interest.
\end{abstract}

\newcommand{\bmy}{\bm{y}}
\section{Introduction}
A central question in reinforcement learning (RL) is understanding precisely how long an agent must interact with its environment before learning to behave near-optimally. One popular way to measure this duration of interaction is by studying the \emph{regret} $\cR_{K}$, or cumulative suboptimality, of online reinforcement algorithms that explore an unknown environment across $K$ episodes of interaction. Typical regret guarantees scale as $\cR_K \le \cO(\sqrt{\poly(d,H) \cdot K})$, where $d$ measures the ``size'' of the environment and $H$ the horizon length of each episode. 

In many cases, however, regret bounds scaling at least as large as $\Omega(\sqrt{K})$ may be deeply unsatisfactory. Consider, for example, an environment where the agent receives rewards only at very hard-to-reach states; that is, states which can only be visited with some small probability $p \ll 1$. In this case, the maximal cumulative reward, optimal cumulative expected-reward, or \emph{value} $\Vstval$ will also be quite small. In other words, the cost of making a ``mistake'' at any given episode results in a loss of at most $\Vstval$ reward, and the cumulative loss associated with, say $\sqrt{K}$, mistakes, should also scale with this maximal penalty.

Motivated by this observation, there has been much recent interest in achieving so-called small-value, small-loss, or ``first-order'' regret bounds, which scale in proportion to $\Vstval$: $\cR_K \le \cO(\sqrt{\Vstval \cdot \poly(d,H) \cdot K})$ (it is well know that the the scaling $\sqrt{\Vstval K}$ is unimprovable in general, even in simple settings). Bounds of this form have received considerable attention in the online learning, bandits, and contextual bandits communities, and were responsible for initiating the study of a broad array of \emph{instance-dependent} regret bounds in tabular (i.e. finite-state, finite-action) RL settings as well.

\paragraph{First-Order Regret Beyond Tabular RL.} Though first-order regret has been achieved in both non-dynamic environments (e.g. contextual bandits) and in dynamic environments with finite state spaces (tabular RL) \citep{zanette2019tighter,foster2021efficient}, extension to reinforcement learning in large state and action spaces has proven elusive. The main difficulty is that, even though the cumulative expected value of any policy is bounded as $\Vstval$, the value-to-go associated with starting at some state $s_h$ at step $h$, denoted $V_h^\star(s_h)$, may be considerably larger. Again, the paradigmatic example is when the reward is equal to $1$ on a handful of very hard-to-reach states. This means that the variance of any learned predictor of the value function $V_h^\star(s_h)$ may also be highly nonuniform in the state $s_h$. In the RL setting, this becomes more challenging because the distribution across states evolves as the agent refines its policies. And while in tabular settings, one can address the non-uniformity by reasoning about each of the finitely-many states separately, there is no straightforward way to generalize the argument to larger state spaces. 

\paragraph{Contributions and Techniques.} In this paper, we provide first-order regret bounds for reinforcement learning in large state spaces, the first of their kind in this setting. Our results focus on the setting of MDPs with linear function approximation \citep{jin2020provably}, where the transition operators are described by linear functions in a known, $d$-dimensional featurization of a potentially infinite-cardinality state space. In this setting, we achieve the following regret bound.
\begin{thm}[Informal]
Our proposed algorithm, \algname, achieves the following first-order regret bound with high probability: $\cR_{K} \le \cOtil(\sqrt{d^3 H^3 \cdot \Vstval \cdot K} + d^{3.5}H^3 \log K )$. 
\end{thm}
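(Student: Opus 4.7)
The strategy is the ``optimism plus law-of-total-variance'' recipe known to yield first-order regret in tabular RL, adapted to the linear MDP via a new robust self-normalized concentration inequality. At each episode $k$ and stage $h$, \algname\ would build an optimistic action-value $\widehat{Q}_{k,h}$ by (i) estimating the Bellman-backup parameter $\theta_h^\star$ satisfying $\phi(s,a)^\top \theta_h^\star = \Exp[V_{h+1}^\star(s')\mid s,a]$ from past transitions, and (ii) adding an exploration bonus $b_{k,h}(s,a)$ that shrinks with the effective sample count in the direction $\phi(s,a)$. To obtain $\sqrt{\Vstval}$ scaling rather than the naive $\sqrt{H}$ scaling, the bonus must be \emph{variance-aware}: it should scale as $\sigma_{k,h}(s,a)\,\|\phi(s,a)\|_{\bLambda_{k,h}^{-1}}$, where $\sigma_{k,h}^2$ is an upper estimate of $\Var[V_{h+1}^\star(s')\mid s,a]$, rather than the coarser $H\,\|\phi(s,a)\|_{\bLambda_{k,h}^{-1}}$ arising from sub-Gaussian concentration for vanilla ridge regression.

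\paragraph{Core technical step: robust self-normalized concentration.} The main obstacle---and the reason ordinary (weighted) least-squares is insufficient---is that the targets $V_{h+1}^\star(s')$ are only bounded by $H$, not by a quantity proportional to their standard deviation, so the classical Abbasi-Yadkori self-normalized bound yields deviations scaling with $H$ rather than $\sigma_{k,h}$. I would replace the ridge estimator with a Catoni $M$-estimator applied directionally: for each $\bu$ in a net of the unit ball in $\R^d$, estimate $\innerb{\bu}{\theta_h^\star}$ by a Catoni mean estimate of the appropriate one-dimensional martingale, then stitch these per-direction estimates into a vector $\bthetahat_{k,h}$. The target inequality is
\[
\bigl| \phi(s,a)^\top (\bthetahat_{k,h}-\theta_h^\star) \bigr| \;\lesssim\; \sigma_{k,h}(s,a)\,\|\phi(s,a)\|_{\bLambda_{k,h}^{-1}}\,\sqrt{d\,\iota} \;+\; \text{lower order},
\]
where $\iota$ absorbs $\log$-factors. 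Proving this requires a martingale Catoni concentration tailored to a self-normalized quadratic form, together with a covering argument over both the direction $\bu$ and the induced value-function class $\{V_{h+1}^\star\}$ (which depends on later-stage estimates, so uniform control is mandatory). This is, in my view, by far the hardest step, and it is precisely where the robust estimator pays off over ridge regression.

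\paragraph{From concentration to first-order regret.} Given the variance-aware confidence set, standard optimism arguments yield $\widehat{V}_{k,1}(s_1)\ge V_1^\star(s_1)$ with high probability, so that $\cR_K \le \sum_k (\widehat{V}_{k,1}(s_1)-V_1^{\pi_k}(s_1))$. Recursing on the Bellman error and telescoping in $h$, the right-hand side is bounded---up to a martingale term controlled by Azuma---by $\sum_{k,h} b_{k,h}(s_{k,h},a_{k,h})$. Cauchy-Schwarz together with the elliptical potential lemma handles the $\|\phi\|_{\bLambda^{-1}}$ factors, leaving us to bound $\sum_{k,h} \sigma_{k,h}^2(s_{k,h},a_{k,h})$. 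Here the MDP law of total variance, $\sum_{h=1}^H \Exp_\pi[\Var_h V_{h+1}^\star]\le H\cdot\Exp_\pi[V_1^\star]$, shows the expected cumulative variance is at most $H\cdot \Vstval\cdot K$ up to lower-order deviations. Combining these ingredients yields $\cR_K\le \cOtil(\sqrt{d^3H^3\,\Vstval\,K})$, with the additive $d^{3.5}H^3\log K$ term arising from the bias slack of the Catoni estimator and from bounding the gap between $\sigma_{k,h}$ and its empirical proxy.
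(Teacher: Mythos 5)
Your high-level plan matches the paper's: optimistic value iteration with variance-normalized bonuses, a self-normalized Catoni concentration inequality proved via covering over directions and value functions, elliptic-potential control of bonuses, and a final bound on the cumulative variance proxy in terms of $\Vstval$. But there are two genuine gaps.

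First, you propose to control the martingale term (the sum of $\zeta_h^k = \Exp_{h-1}[\delta_h^k] - \delta_h^k$) ``by Azuma.'' Azuma--Hoeffding on $H$-bounded increments gives $\cO(H\sqrt{HK})$, which is \emph{not} lower order when $\Vstval \ll H$ and would destroy the first-order scaling. The paper uses Freedman's inequality and bounds the conditional variance of $\zeta_h^k$ by $8H\,\Exp_{h-1}[V_h^k]$, summing to $\cO(H^2\Vstval K)$, so that the martingale contribution is $\cO(\sqrt{H^2\Vstval K\cdot \log(1/\delta)})$. You need a second-moment (Bernstein/Freedman) martingale bound here, not a boundedness bound, and you need to feed it a $\Vstval$-scaled variance proxy.

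Second, you treat $\sigma_{k,h}^2$ as a given ``upper estimate of $\Var[V_{h+1}^\star(s')]$'' without specifying how to compute it or why it remains valid over time. Both points are essential: the weighting feeds into $\bLambda_{h,k}$, and the Catoni bound requires $\sigma_t^2 \ge \Exp[y_t^2\mid\cF_{t-1}]$ (the second moment, not the centered variance). The paper forms $\sighat_{h,k}^2$ from a \emph{separate} Catoni estimate of $\Exp_h[V_{h+1}^k]$ together with its confidence width, and---since the optimistic value functions $V^{k'}$ used as regression targets at later episodes $k'>k$ may exceed $V^k$---must argue (Lemma~\ref{lem:Qtil_decreases}) that $V_{h'}^{k'} \le 5 V_{h'}^k$ on the good event, so that $\sighat_{h,k}^2$ chosen at time $k$ still upper bounds $\Exp_h[(V_{h+1}^{k'})^2]$ for all future $k'$. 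Without this forward-compatibility step, the weighted covariance $\bLambda_{h,k}$ is not simultaneously valid for all the regression problems that reuse the same data, and the self-normalized bound does not apply. This ``recursive validity of $\sighat$'' argument is one of the more delicate parts of the proof and your sketch would need it to go through. (Also note the paper bounds $\Exp[(V_{h+1}^k)^2]\le H\,\Exp[V_{h+1}^k]$ directly rather than invoking a law-of-total-variance identity, precisely because Catoni works with uncentered second moments; the net effect on the $H$-dependence is the same.)
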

To our knowledge, \algname is the first algorithm to achieve first-order regret for RL in large state spaces. Our algorithm builds on the \textsc{LSVI-UCB} algorithm of \citep{jin2020provably} for worst-case (non-first-order) regret in linear MDPs. \textsc{LSVI-UCB} relies on solving successive linear regression problems to estimate the Bellman-backups of optimistic overestimates of the optimal value function. In that work, the analysis of the regression estimates relies on a so-called ``self-normalized martingale'' inequality for online least squares---a powerful tool which quantifies the refinement of a ridge-regularized least-squares estimator under an arbitrary sequence of regression covariates $\bm{\phi}_t$ to targets $y_t$ satisfying $\Exp[y_t \mid \bm{\phi}_t] = \langle \bm{\phi}_t, \bm{\theta}_{\star} \rangle$, and under the assumption of sub-Gaussian noise. 
This tool has seen widespread application not only in linear RL, but in bandit and control domains as well \citep{abbasi2011improved,sarkar2019near}.

In the tabular RL setting, first-order regret bounds can be obtained by applying Bernstein-style concentration bounds, which allows the exploration level to adapt to the underlying problem difficulty. Towards achieving first-order regret in linear RL, we might hope that a similar approach could be used, and that developing variance-aware or Bernstein-style self-normalized bounds may provide the necessary refinements. A second challenge arises in the RL setting, however, since, as mentioned, the ``noise'' is inherently heteroscedastic (i.e., the noise variance changes with time)---the variance of $y_t$ depends on $\bphi_t$. Thus, not only do we require a variance-aware self-normalized bound, but such a bound must be able to handle heteroscedastic noise as well.

The recent work of \cite{zhou2020nearly} addresses both of these issues---proposing a Bernstein-style self-normalized bound, and overcoming the heteroscedasticity by relying on a weighted least-squares estimator which normalizes each sample by its variance. A naive application of these techniques, however, results in a scaling of $1/\sigminb$ in the regret bound, where $\sigminb$ is the \emph{minimum noise variance across time}. While this dependence can be reduced somewhat, ultimately, it could be prohibitively large, and prevents us from achieving a first-order regret bound in the case when $\Vst_1$ is small.


The $1/\sigminb$ dependence arises because, if we normalize by the variance in our weighted least-squares estimate, the normalized ``noise'' has magnitude, in the worst case, of $\cO(1/\sigminb)$. In other words, we are paying for the ``heavy tail'' of the noise, rather than simply its variance. Obtaining concentration independent of such heavy tails is a problem well-studied in the robust statistics literature. Towards addressing this difficulty in the RL setting, we take inspiration from this literature, and propose applying the \emph{robust Catoni estimator} \citep{catoni2012challenging}. In particular, we develop a novel self-normalized version of the Catoni estimator, as follows.

\begin{prop}[Self-Normalized Heteroscedastic Catoni Estimation, Informal]\label{prop:informal:self_norm:no_function_approx} 
Given observations $y_t = \langle \bthetast, \bphi_t \rangle + \eta_t$ with $\Exp[y_t \mid \bphi_t] = \innerb{\bphi_t}{\bthetast}$, $\Exp[|\eta_t|^2 \mid \bphi_t] < \infty$, and $|\eta_t| < \infty$ with probability 1, let $\catonii[ \bv ]$ denote a Catoni estimate  of $\bthetast$ in direction $\bv$ from the observed data. Then, with high probability, for all $\bv$ simultaneously:
\begin{align*}
\left| \catonii\left[ \bv\right] - \bv^\top \btheta_{\star} \right | \lesssim   \|\bv\|_{\bLambda_T^{-1}} \cdot \left ( \sqrt{\log 1/\delta + d \cdot \Clog} + \sqrt{\lambda} \| \bthetast \|_2 \right ) + \text{\normalfont (lower order term)}. 
\end{align*}
where $\bLambda_T = \lambda I + \sum_{t=1}^T \sigma_{t}^{-2}\bphi_t \bphi_t^\top$, $\sigma_t^2$ is an upper bound on $\Exp[y_t^2 \mid \bphi_t]$, $\Clog$ is logarithmic in problem parameters, and the {\normalfont `lower order term'} can be made as small as $T^{-q}$ for any constant $q > 0$.
\end{prop}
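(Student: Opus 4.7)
My plan is to reduce the statement to a pointwise Catoni concentration bound for a fixed linear functional $\bv^\top \bthetast$, and then lift it to a uniform statement over $\bv$ via a covering argument on the unit ball in the $\bLambda_T^{-1}$ metric; the log-covering number of this ball is what will produce the additional $\sqrt{d\cdot\Clog}$ contribution inside the square root.

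\textbf{Pointwise Catoni step.} For a fixed $\bv$, I would interpret $\catonii[\bv]$ as the root of a Catoni-type score equation involving the residuals $y_t - \innerb{\bphi_t}{\btheta}$ reweighted by $\sigma_t^{-2}$ and projected against $\bv^\top \bLambda_T^{-1}\bphi_t$. The core analytic input is the standard Catoni sandwich inequality $-\log(1 - x + x^2/2) \le \psicat(x) \le \log(1 + x + x^2/2)$, which, applied term-by-term, converts the zero of the empirical score into an exponential supermartingale whose per-step drift at time $t$ is controlled by $\alpha_t^2 \cdot \Exp[y_t^2 \mid \bphi_t] \le \alpha_t^2 \sigma_t^2$. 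Choosing the per-sample scale $\alpha_t \propto \sigma_t^{-1}$ and an overall scale $\propto 1/\|\bv\|_{\bLambda_T^{-1}}$, applying Ville's inequality to the supermartingale, and then inverting the monotone score equation by the implicit function theorem, I would recover an error of order $\|\bv\|_{\bLambda_T^{-1}}\cdot (\sqrt{\log(1/\delta)} + \sqrt{\lambda}\|\bthetast\|_2)$, with the Catoni truncation bias forming the lower-order term that can be driven to $T^{-q}$ by tuning.

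\textbf{Uniform lift over $\bv$.} To make the bound hold simultaneously across all $\bv$, I would build an $\epsilon$-net $\calN_\epsilon$ of the unit sphere in the $\bLambda_T^{-1}$-norm, satisfying $\log|\calN_\epsilon| \lesssim d\log(1/\epsilon)$, apply the pointwise bound at confidence $\delta/|\calN_\epsilon|$ to each anchor, and then extend to arbitrary $\bv$ by showing that $|\catonii[\bv] - \catonii[\bv']|$ is Lipschitz in $\bv-\bv'$ via the implicit function theorem applied to the Catoni score. Because $\bLambda_T$ is itself random, I would additionally net over a polynomial range of scales of $\bLambda_T$ (or equivalently apply a peeling argument over $\|\bv\|_{\bLambda_T^{-1}}$), which is the source of the $\Clog$ logarithmic factor. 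Choosing $\epsilon$ polynomially small in $T$ keeps the resulting discretization error strictly inside the advertised lower-order term.

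\textbf{Main obstacle.} The central technical difficulty is the \emph{nonlinearity} of the Catoni estimator in $\bv$: unlike the standard self-normalized least-squares bound, where one estimates $\bthetast$ once and projects via Cauchy--Schwarz, $\catonii[\bv]$ is defined implicitly per direction, so the covering argument must control the sensitivity of the $M$-estimator root to simultaneous perturbations in (a) the direction $\bv$, (b) the sample-dependent preconditioner $\bLambda_T$, and (c) the heteroscedastic weights $\sigma_t^{-1}$. I expect the bulk of the technical work to be a quantitative stability analysis of the Catoni root under all three perturbations at once, carefully tuned so that the truncation bias and discretization error together remain below the $T^{-q}$ threshold while the dominant term recovers the advertised $\sqrt{\log(1/\delta) + d\cdot\Clog}$ scaling.
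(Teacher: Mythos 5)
Your three-step plan---pointwise martingale Catoni concentration via a $\psicat$-based supermartingale, a covering argument over directions, and a quantitative stability analysis of the Catoni root---does match the structure of the paper's proof of \Cref{cor:catoni_self_norm_no_function} (via \Cref{lem:martingale_catoni}, \Cref{lem:catoni2}, \Cref{lem:catoni_union}, and \Cref{lem:catoni_perturb}), and you have correctly identified the central obstacle as the nonlinear sensitivity of the Catoni root to simultaneous perturbations of direction, preconditioner, and weights.

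There is, however, a concrete issue in your covering step as written. You propose building an $\epsilon$-net of the unit sphere in the \emph{data-dependent} $\bLambda_T^{-1}$-norm. That net is a random set, so a union bound over it cannot be applied in the martingale concentration at a fixed confidence level $\delta/|\calN_\epsilon|$: the pointwise supermartingale bound must be instantiated at a data-independent collection of anchors \emph{before} the data is revealed. Your fallback of ``netting over a polynomial range of scales of $\bLambda_T$'' does not obviously repair this, since $\bLambda_T$ is a $d\times d$ matrix, not a scalar; peeling over $\|\bv\|_{\bLambda_T^{-1}}$ addresses the random scale but not the random direction of the ellipsoid. The paper sidesteps this entirely by reparametrizing: it covers the \emph{preconditioned} vector $\bvtil := \bLambda^{-1}\bv$, which is constrained to the fixed Euclidean ball $\cV = \Ball^d(1/\lambda)$ regardless of the realized data, and separately discretizes the data-dependent Catoni scale $\alpha[\bvtil]$ via the rounded grid $\cAround$. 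Both nets are deterministic, and their log-cardinalities (together with $\cAround$) are the source of the $d\cdot\Clog$ term. Relatedly, your appeal to the implicit function theorem for Lipschitz stability of the root is the right intuition, but it is not automatic: the derivative $\psicat'$ of the Catoni score degenerates for large arguments, so without the explicit truncation $\alpha \le \alphamax$ the inverse-function constant is uncontrolled. The role of $\alphamax$ in the paper is precisely to lower-bound $|\frac{\rmd}{\rmd z}\fcatoni|$ near the root (see \Cref{lem:catoni_perturb}), and it only enters the final bound logarithmically, which is why the lower-order term can be pushed to $T^{-q}$.
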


To apply \Cref{prop:informal:self_norm:no_function_approx}, we take $\bphi_{h,k} = \bphi(s_{h,k},a_{h,k})$ as the features, and $y_{h,k} = V^k_{h+1}(s_{h+1,k})$ as the targets, where $V^k_{h+1}(\cdot)$ is an optimistic overestimate of the value function. In particular, $V^k_{h+1}(\cdot)$ depends on, and may be correlated with, past data. Following \cite{jin2020provably}, we address this issue by establishing an error bound which holds uniformly over possible value functions $V^k_{h+1}(\cdot)$. We call this guarantee the `Heteroscedastic Self-Normalized Inequality with Function Approximation', and state it formally in \Cref{sec:catoni_summary}. The proof combines  \Cref{prop:informal:self_norm:no_function_approx} with  a careful covering argument, which (unlike past approaches based on standard ridge-regularized least squares) requires a novel sensitivity analysis of the Catoni estimator.


\section{Related Work}

\paragraph{Worst-Case Regret Bounds in Tabular RL.}
A significant amount of work has been devoted to obtaining worst-case optimal bounds in the setting of tabular RL
\citep{kearns2002near,kakade2003sample,azar2017minimax,dann2017unifying,jin2018q,dann2019policy,wang2020long,zhang2020almost,zhang2020reinforcement}. These approaches fall into both the model-based \citep{azar2017minimax,dann2017unifying} as well as the model-free category \citep{jin2018q}. While the exact bounds differ, they all take the form $\cOtil(\sqrt{\poly(H) \cdot SAK} + \poly(S,A,H))$. Recently, several works have focused on obtaining bounds that only scale logarithmically with the horizon, $H$, in the setting of time-invariant MDPs with rewards absolutely bounded by 1. \cite{zhang2020reinforcement} answers the question of whether horizon-free learning is possible by proposing an algorithm with regret scaling as $\cOtil(\sqrt{SAK} + S^2A)$---independent of polynomial dependence on $H$. This is known to be worst-case minimax optimal.

\paragraph{RL with Function Approximation.}
In the last several years, there has been an explosion of interest in the RL community in obtaining provably efficient RL algorithms relying on function approximation. An early work in this direction, \cite{jiang2017contextual}, considers general function classes and shows that MDPs having small ``Bellman rank'' are efficiently learnable. Several recent works have extended their results significantly \citep{du2021bilinear,jin2021bellman}. In the special case of linear function approximation, a vast body of recent work exists \citep{yang2019sample,jin2020provably,wang2019optimism,du2019good,zanette2020frequentist,zanette2020learning,ayoub2020model,jia2020model,weisz2021exponential,zhou2020nearly,zhou2021provably,zhang2021variance,wang2021exponential}. A variety of assumptions are made in these works, and we highlight two of them in particular. First, the linear MDP model of \cite{jin2020provably}, which is the setting we consider in this work, assumes the transition probabilities and reward functions can both be parameterized as a linear function of a feature map. Second, the linear mixture MDP setting of \citep{jia2020model,ayoub2020model,zhou2020nearly} makes no linearity assumption on the reward function, but assumes that the transition probabilities are the linear parameterization of $d$ known transition kernels. Notably, the linear MDP assumption has infinite degrees of freedom, and as such model-free approaches are more appropriate, while the linear mixture MDP setting has only $dH$ degrees of freedom, making model-based learning effective. 

As mentioned above, of note in the linear function approximation literature is the work of \cite{zhou2020nearly}, which proposes an algorithm with regret scaling as $\cOtil(\sqrt{\left(d^2 H^3+dH^3\right) K})$, which they show is minimax optimal when $d \ge H$. Their result relies on a Bernstein-style self-normalized confidence bound. While they show that the variance dependence of the Bernstein bound allows them to achieve minimax optimality, as noted, it is insufficient to achieve a first-order bound, motivating our use of the Catoni estimator.

\paragraph{First-Order and Problem-Dependent Regret Bounds in RL.}
The RL community has tended to pursue two primary directions towards obtaining problem-dependent regret bounds. The first is the aforementioned first-order bounds, the focus of this work. To our knowledge, the only work in the RL literature to obtain first-order regret is that of \cite{zanette2019tighter}, which only holds in the tabular setting. \cite{zanette2019tighter} obtain several different forms of such a bound, showing that their algorithm, \euler, has regret which can be bounded as either
\begin{align*}
\cR_K \le \cOtil(\sqrt{\bbQ^* SAHK}) \quad \text{or} \quad \cR_K \le \cOtil(\sqrt{\cG^2 SAK})
\end{align*}
where $\bbQ^* = \max_{s,a,h} \left ( \Var [R_h(s,a)] + \Var_{s' \sim P_h(\cdot | s,a)} [\Vst_{h+1}(s')] \right )$ and $\cG$ is a deterministic upper bound on the maximum attainable reward on a single trajectory for any policy $\pi$: $\sum_{h=1}^H R(s_h, \pi_h(s_h)) \le \cG$. A subsequent work, \cite{jin2020reward}, showed that a slight modification to the analysis of \euler allows one to obtain regret of\footnote{Note that this result was shown for an MDP where the reward function was non-zero only at a single $(s,h)$. Their analysis can be extended to arbitrary reward functions, however, though extra $H$ factors will be incurred.}
\begin{align*}
\cR_K \le \cOtil(\sqrt{SAH \cdot \Vstval K}) . 
\end{align*}

A second approach to instance-dependence, taken by \citep{simchowitz2019non,xu2021fine,dann2021beyond}, seeks to obtain regret scaling with the suboptimality gaps. This yields regret bounds of the form $\cO \left ( \sum_{s,a,h : \Delta_h(s,a) > 0} \frac{\poly(H) \cdot \log K}{\Delta_h(s,a)} \right )$
where $\Delta_h(s,a) := \Vst_h(s) - \Qst_h(s,a)$ is the suboptimality of playing action $a$ in state $s$ at step $h$. While these works consider only the tabular setting, recently \cite{he2021logarithmic} obtained regret in the linear MDP setting of $\cO(\frac{d^3 H^5 \log(K)}{\Delta_{\min}})$ and in the linear mixture MDP setting of $\cO(\frac{d^2 H^5 \log^3(K)}{\Delta_{\min}})$, where $\Delta_{\min}$ is the minimum non-zero gap in the MDP. Gap-dependent regret bounds allow for a characterization of the regret in terms of fine-grained problem-dependent quantities. However, they typically capture the \emph{total regret incurred to solve the problem}, and are therefore overly pessimistic over shorter time horizons.

\paragraph{First-Order Regret Beyond RL.}
A significant body of literature exists towards obtaining first-order regret bounds in settings other than RL. This work spans areas as diverse as statistical learning \citep{vapnik1971uniform,srebro2010smoothness}, online learning \citep{freund1997decision,auer2002adaptive,cesa2007improved,luo2015achieving,koolen2015second,foster2015adaptive}, and multi-armed bandits, adversarial bandits, and semibandits  \citep{allenberg2006hannan,hazan2011better,neu2015first,lykouris2018small,wei2018more,bubeck2020first,ito2020tight}.

We highlight in particular the work in the contextual bandit setting. A COLT 2017 open problem \citep{agarwal2017open} posed the question of obtaining first-order bounds for contextual bandits to the community, which \cite{allen2018make} subsequently addressed by obtaining a computationally inefficient algorithm achieving this. \cite{foster2021efficient} built on this, showing that it is possible to achieve such a bound with a computationally efficient algorithm. While \cite{foster2021efficient}  considers function approximation, their regret bound scales with the number of actions, and is therefore not applicable to large action spaces.

\paragraph{Robust Mean Estimation.}
Our algorithm critically relies on robust mean estimation to obtain concentration bounds that avoid large lower-order terms. We rely in particular on the Catoni estimator, first proposed in \cite{catoni2012challenging}. While the original Catoni estimator assumes i.i.d. data, \cite{wei2020taking} show that a martingale version of Catoni is possible, which is what we apply in this work. We remark that several applications of the Catoni estimator to linear bandits have been proposed recently \citep{camilleri2021high,lee2021achieving}. We refer the reader to the survey \cite{lugosi2019mean} for a discussion of other robust mean estimators.


\section{Preliminaries} 
\newcommand{\roundtwo}{\mathsf{rnd}_2}
\paragraph{Notation.}
All logarithms are base-$e$ unless otherwise noted. We let $\logterm(x_1,x_2,\dots,x_n) := \sum_{i=1}^n \log  (e+x_i)$ denote a term which is at most logarithmic in arguments $x_1,x_2,\dots,x_n \ge 0$. We let $\Ball^d(R) := \{\bx \in \R^d:\|\bx\|\le R\}$ denote the ball of radius $R$ in $\R^d$, and specialize $\Ball^d := \Ball^d(1)$ to denote the unit ball. $\cS^{d-1}$ denotes the unit sphere in $\R^d$. We use $\lesssim$ to denote inequality up to absolute constants, $\cO(\cdot)$ to hide absolute constants and lower-order terms, and $\cOtil(\cdot)$ to hide absolute constants, logarithmic terms, and  lower-order terms. Throughout, we let bold characters refer to vectors and matrices and standard characters refer to scalars. 

We also highlight MDP-specific notation; see below for further exposition. We let $s_{h,k}$ and $a_{h,k}$ denote the state and action at step $h$ and episode $k$, and denote features and rewards $\bphi_{h,k} := \bphi(s_{h,k},a_{h,k})$, $r_{h,k} := r_h(s_{h,k},a_{h,k})$. $\pi^k$ denotes the policy played at episode $k$. We use $\cF_{h,k}$ to denote the $\sigma$-field $\sigma( \cup_{h'=1}^H \cup_{k'=1}^{k-1} \{ (s_{h',k'},a_{h',k'}) \} \cup_{h'=1}^{h} \{ (s_{h',k},a_{h',k}) \})$, so that $\bphi_{h,k}$ is $\cF_{h,k}$-measurable. We will let $\Exp_h[V](s,a) = \Exp_{s' \sim P_h(\cdot | s,a)}[V(s')]$, so $\Exp_h[V](s,a)$ denotes the expected next-state value of $V$ given that we are in state $s$ and play action $a$ at time $h$.

\subsection{Markov Decision Processes}
We consider finite-horizon, episodic Markov Decision Processes (MDPs) with time inhomogeneous transition kernel. An MDP is described by a tuple $(\cS,\cA,H,\{ P_h \}_{h=1}^H, \{ r_h \}_{h=1}^H)$, with $\cS$ the set of states, $\cA$ the set of actions, $H$ the horizon, $P_h : \cS \times \cA \rightarrow \simplex(\cS)$ the probability transition kernel at time $h$, and $r_h : \cS \times \cA \rightarrow [0,1]$ the reward function. We assume that $\{ P_h \}_{h=1}^H$ is initially unknown to the learner, but that $r_h$ is deterministic and known. Without loss of generality, we further assume the intial state $s_1$ is deterministic.
\iftoggle{arxiv}
{
	
}
{}
At each episode, the agents begins in state $s_1$; then for each time step $h \ge 1$, an agent in state $s_h$ takes action $a_h$, receives reward $r_h(s_h,a_h)$ and transitions to state $s'$ with probability $P_h(s'|s_h,a_h)$. This process continues for $H$ steps, at which point the MDP resets and the process repeats.

A policy  $\pi :  \cS \times [H] \rightarrow \simplex(\cA)$ is a mapping from states to distributions over actions. For deterministic policies  ($\forall h,s, ~\pi_h(s)$ is supported on only 1 action) we let $\pi_h(s)$ denote the unique action in the support of the distribution $\pi_h(s)$. To an agent playing a policy $\pi$, at step $h$ they choose an action $a_h \sim \pi_h(s_h)$. We let $\Exp_{\pi}[\cdot]$ denote the expectation over the joint distribution trajectories $(s_1,a_1,\dots,s_H,a_H)$ induced by policy $\pi$. 


\iftoggle{arxiv}
{
\paragraph{Value Functions.} 
Given a policy $\pi$, the $Q$-\emph{value function} for policy $\pi$  is defined as follows:
\begin{align*}
\Qpi_h(s,a) := \Exp_\pi \left [ \sum_{h'=h}^H r_{h'}(s_{h'},a_{h'}) | s_h = s, a_h = a \right ]  .
\end{align*}
In words, $\Qpi_h(s,a)$ denotes the expected reward we will acquire by taking action $a$ in state $s$ at time $h$ and then playing $\pi$ for all subsequent steps. We also denote the \emph{value function} by $\Vpi_h(s) = \Exp_{a \sim \pi_h(s)}[\Qpi_h(s,a)]$, which corresponds to the expected reward we will acquire by playing policy $\pi$ from state $s$ at time $h$. 
 The $Q$-function satisfies the Bellman equation:
\begin{align*}
\Qpi_h(s,a) = r_h(s,a) + \Exp_h[\Vpi_{h+1}](s,a) .
\end{align*} 
}
{
	The the $Q$-\emph{value function} for policy $\pi$  is defined as $\Qpi_h(s,a) := \Exp_\pi \left [ \sum_{h'=h}^H r_{h'}(s_{h'},a_{h'}) | s_h = s, a_h = a \right ]$, and the \emph{value function} by $\Vpi_h(s) = \Exp_{a \sim \pi_h(s)}[\Qpi_h(s,a)]$. 
}
We denote the optimal $Q$-value function by $\Qst_h(s,a) = \sup_\pi \Qpi_h(s,a)$, the optimal value function by $\Vst_h(s) = \sup_\pi \Vpi_h(s)$, and the optimal policy by $\pist$. We define $\Vpi_{H+1}(s) = \Qpi_{H+1}(s,a) = 0$ for all $s$ and $a$. Finally, note that we  always have that $\Qpi_h(s,a) \le H$, for all $\pi,h,s,a$, since we collect a reward of at most 1 at every step. 

\paragraph{Episodic MDPs and Regret.}
In this paper, we study minimizing the \emph{regret} over $K$ episodes of interaction. At each episode $k$, the learning agent selects a policy $\pi^k$, and receives a trajectory $(s_{1,k},a_{1,k},\dots,s_{H,k},a_{H,k})$. Again, the transition kernels $(P_h)_{h=1}^H$ are \emph{unknown} to the learner, whereas (as discussed above), the reward function is known. The regret is defined as the cumulative suboptimality of the learner's policies:
\iftoggle{arxiv}
{
	\begin{align*}
\cR_K = \sum_{k=1}^K \left [ \Vst_1(s_1) - V_1^{\pi_k}(s_1) \right ] .
\end{align*}
}
{
\begin{align*}
\cR_K = \textstyle\sum_{k=1}^K \left [ \Vst_1(s_1) - V_1^{\pi_k}(s_1) \right ] .
\end{align*}
}
As $s_1$ is fixed, we will denote the \emph{value} of policy $\pi$ as $V_1^\pi := V_1^{\pi}(s_1)$. Using this notation we can express the regret as $\cR_K = \sum_{k=1}^K [ \Vstval - V_1^{\pi_k}]$.

\subsection{Reinforcement Learning with Linear Function Approximation}
In the tabular RL setting, it is assumed that $|\cS|$ and $|\cA|$ are both finite. This assumption is quite limited in practice, however, and is not able to model real-world settings where the state and action spaces may be infinite. Towards relaxing this assumption, we consider the linear MDP setting of \cite{jin2020provably}, which allows for infinite state and action spaces. In particular, this setting is defined as follows.

\begin{defn}[Linear MDPs]\label{defn:linear_mdp}
We say that an MDP is a $d$-\emph{dimensional linear MDP}, if there exists some (known) feature map $\bphi(s,a) : \cS \times \cA \rightarrow \R^d$ and $H$ (unknown) signed measures $\bmu_h \in \R^d$ over $\cS$ such that:
\begin{align*}
P_h(\cdot | s,a) = \innerb{\bphi(s,a)}{\bmu_h(\cdot)}. 
\end{align*}
We will assume that $\| \bphi(s,a) \|_2  \le 1$ for all $s,a$,  and $\| |\bmu_h|(\cS) \|_2  = \| \int_{s \in \cS} |\mathrm{d}\bmu_h(s)| \|_2 \le \sqrt{d}$.
\end{defn}


Note that, unlike the standard definition of linear MDPs which assumes that the reward is also linear, $r_h(s,a) = \innerb{\bphi(s,a)}{\btheta_h}$, we consider more general possibly non-linear (though bounded) reward functions. To accommodate this change we must assume that the reward is deterministic and known to the learner. We also consider time-varying reward in the appendix, and in the subsequent section remark on how unknown rewards can be accommodated, if we assume they are linear. 

\iftoggle{arxiv}
{
	We further note that there cannot exist an $(s,a)$ for which $\bphi(s,a) = \bm{0}$, for otherwise \Cref{defn:linear_mdp} would imply that $P_h(\cdot | s,a)$ is not a valid distribution. As shown in \cite{jin2020provably}, the linear MDP setting includes tabular MDPs, while also encompassing more general, non-tabular settings, for example where the feature space corresponds to the $d$-dimensional simplex. A key property of linear MDPs is the following.

\begin{lem}[Lemma 2.3 of \cite{jin2020provably}]\label{lem:q_fun_linear}
For a linear MDP and any policy $\pi$, there exists some set of weights $\{ \bwpi_h \}_{h=1}^H$ such that $\Qpi_h(s,a) = \innerb{\bphi(s,a)}{\bwpi_h}$ for all $s,a,h$. 
\end{lem}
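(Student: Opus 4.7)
The plan is to prove the lemma by backward induction on $h$, following the template of the argument in \citet{jin2020provably}. The base case is trivial: since $\Qpi_{H+1}(s,a) \equiv 0$ by convention, we may take $\bwpi_{H+1} = \bm{0}$, and linearity holds for any feature $\bphi(s,a)$.

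For the inductive step, I would assume that $\Qpi_{h+1}(s,a) = \innerb{\bphi(s,a)}{\bwpi_{h+1}}$ holds for all $(s,a)$, and then derive the corresponding weight $\bwpi_h$ from the Bellman equation. First I would deduce that the next-step value function satisfies
\begin{align*}
\Vpi_{h+1}(s) = \Exp_{a \sim \pi_{h+1}(s)}\!\left[ \innerb{\bphi(s,a)}{\bwpi_{h+1}} \right],
\end{align*}
which is a bounded, measurable function of $s$ (in particular $|\Vpi_{h+1}(s)| \le H$). Then I would substitute the linear MDP decomposition $P_h(\mathrm{d}s' \mid s,a) = \innerb{\bphi(s,a)}{\bmu_h(\mathrm{d}s')}$ into the Bellman equation, so that
\begin{align*}
\Exp_h[\Vpi_{h+1}](s,a) \;=\; \int_{\cS} \Vpi_{h+1}(s')\, \innerb{\bphi(s,a)}{\bmu_h(\mathrm{d}s')} \;=\; \Bigl\langle \bphi(s,a),\, \int_{\cS} \Vpi_{h+1}(s')\, \bmu_h(\mathrm{d}s') \Bigr\rangle,
\end{align*}
where pulling $\bphi(s,a)$ outside the integral is justified since $\bmu_h$ is a (signed) vector-valued measure and $\Vpi_{h+1}$ is bounded, so componentwise the integral is finite. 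If the reward is linear, $r_h(s,a) = \innerb{\bphi(s,a)}{\btheta_h}$, we can absorb it into the weight and set
\begin{align*}
\bwpi_h \;=\; \btheta_h \;+\; \int_{\cS} \Vpi_{h+1}(s')\, \bmu_h(\mathrm{d}s'),
\end{align*}
which gives $\Qpi_h(s,a) = \innerb{\bphi(s,a)}{\bwpi_h}$ and closes the induction. As a side payoff I would also record the norm bound $\|\bwpi_h\|_2 \le \|\btheta_h\|_2 + H\sqrt{d}$ via the triangle inequality and the assumption $\||\bmu_h|(\cS)\|_2 \le \sqrt{d}$, since this is the form typically used downstream.

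The main subtlety, and the step I would spend the most care on, is the handling of the reward term in light of the paper's relaxation to (possibly) non-linear bounded rewards: the clean conclusion $\Qpi_h = \innerb{\bphi}{\bwpi_h}$ needs the reward to be linear in $\bphi(s,a)$ (or the feature map to be augmented to absorb $r_h$). I would therefore interpret the lemma statement as implicitly restricting to the linear-reward special case of \Cref{defn:linear_mdp} that aligns with \citet{jin2020provably}, and flag that for general bounded rewards the correct statement becomes $\Qpi_h(s,a) = r_h(s,a) + \innerb{\bphi(s,a)}{\wtil_h^\pi}$ with $\wtil_h^\pi := \int \Vpi_{h+1}(s')\,\bmu_h(\mathrm{d}s')$. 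The rest of the argument is routine; the only real measure-theoretic point to verify is that $s' \mapsto \Vpi_{h+1}(s')$ is $|\bmu_h|$-integrable, which follows from its uniform boundedness by $H$.
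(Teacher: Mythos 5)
Your argument is correct and recovers the standard proof from \citet{jin2020provably}, but the backward induction is doing more work than it needs to. The only fact you extract from the inductive hypothesis is that $\Vpi_{h+1}$ is a bounded measurable function of $s$, and that is true \emph{a priori}: since $r_h \in [0,1]$, one has $\Vpi_{h+1}(s) \in [0,H]$ for every policy and every $h$, with no appeal to the linear structure at step $h+1$. So the proof can be flattened into a single direct step at each $h$: write $\Exp_h[\Vpi_{h+1}](s,a) = \innerb{\bphi(s,a)}{\int \Vpi_{h+1}(s')\,\rmd\bmu_h(s')}$ (justified exactly as you say, componentwise finiteness of the integral from boundedness of $\Vpi_{h+1}$ and $\||\bmu_h|(\cS)\|_2 \le \sqrt d$), then set $\bwpi_h = \btheta_h + \int \Vpi_{h+1}(s')\,\rmd\bmu_h(s')$. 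This is precisely the one-line proof in \citet{jin2020provably}, and the induction only obscures that the key linearity fact holds stepwise and unconditionally.

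Your flag about the reward term is the genuinely useful observation here, and you resolve it correctly. The lemma is stated as ``Lemma 2.3 of \citet{jin2020provably}'' and hence implicitly uses \emph{their} definition, which includes $r_h(s,a) = \innerb{\bphi(s,a)}{\btheta_h}$; under the weaker \Cref{defn:linear_mdp} of this paper (non-linear but known reward), only the next-state expectation is linear, which is exactly the ``More generally\dots'' remark the paper makes immediately after stating the lemma. Your norm bound $\|\bwpi_h\|_2 \le \|\btheta_h\|_2 + H\sqrt d$ is the right one and matches what is used downstream.
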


\Cref{lem:q_fun_linear} motivates us to consider linear policy classes in developing our algorithm. More generally, the linear structure of the MDP implies that $\Exp_h[V](s,a) = \innerb{\bphi(s,a)}{\bw_V}$ for some $\bw_V$ and \emph{any arbitrary} function $V: \calS \to \R$.

}
{
	
}

\subsection{Catoni Estimation}
A key tool in our algorithm is the robust Catoni estimator \citep{catoni2012challenging}. The Catoni estimator is defined as follows.

\begin{defn}[The Catoni Estimator] Let $X_1,\dots,X_T$ be a sequence of real-values. The \emph{Catoni robust mean estimator} with parameter $\alpha > 0$, denoted $\catonii_{T,\alpha}$,  is the unique root $z$ of the function 
\iftoggle{arxiv}
{
	\begin{align}\label{eq:fun_catoni}
\fcatoni(z;X_{1:T},\alpha) := \sum_{t=1}^T \psicat(\alpha(X_t - z)),
\end{align}
}
{
	\begin{align}\label{eq:fun_catoni}
\fcatoni(z;X_{1:T},\alpha) := \textstyle\sum_{t=1}^T \psicat(\alpha(X_t - z)),
\end{align}
}

where $\psicat(\cdot)$ is defined by
\iftoggle{arxiv}
{
\begin{align*}
\psicat(y) = \begin{cases} \log (1 + y + y^2) & y \ge 0 \\
-\log(1-y+y^2) & y < 0 \end{cases} . 
\end{align*}
}
{
\begin{align*}
\psicat(y) = \small\begin{cases} \log (1 + y + y^2) & y \ge 0 \\
-\log(1-y+y^2) & y < 0 \end{cases}. 
\end{align*}
}
\end{defn}

The following result illustrates the key property of the Catoni estimator.

\begin{prop}[Theorem 5 of \cite{lugosi2019mean}]\label{lem:vanilla_catoni}
Let $X_1,\ldots,X_T$ be independent, identically distributed random variables with mean $\mu$ and finite variance $\sigma^2 < \infty$. Let $\delta \in (0,1)$ be such that $T \ge 2\log(1/\delta)$. Then the  Catoni mean estimator $\catonii_{T,\alpha}$ with parameter
\iftoggle{arxiv}
{
	\begin{align*}
\alpha = \sqrt{\frac{2 \log 1/\delta}{T \sigma^2 ( 1 + \frac{2 \log 1/\delta}{T - 2 \log 1/\delta} )}} 
\end{align*}
}
{
	\begin{align*}
\alpha = \sqrt{\tfrac{2 \log 1/\delta}{T \sigma^2 ( 1 + \frac{2 \log 1/\delta}{T - 2 \log 1/\delta} )}} 
\end{align*}
}

satisfies the following guarantee with probability $1-2\delta$,
\iftoggle{arxiv}
{
	\begin{align*}
| \catonii_{T,\alpha} - \mu | < \sqrt{\frac{2 \sigma^2 \log 1/\delta}{T - 2 \log 1/\delta}} . 
\end{align*}
}
{
	\begin{align*}
| \catonii_{T,\alpha} - \mu | < \sqrt{\tfrac{2 \sigma^2 \log 1/\delta}{T - 2 \log 1/\delta}} . 
\end{align*}
}

\end{prop}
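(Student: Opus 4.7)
The plan is to pin down $\catonii_{T,\alpha}$ via the strict monotonicity of $z \mapsto \fcatoni(z;X_{1:T},\alpha)$ and to bracket this root by controlling the signs of $\fcatoni$ at two carefully chosen deterministic values of $z$, using exponential-moment concentration for each. First I would observe that $\psicat$ is strictly increasing and continuous on $\R$, so $f(z) := \fcatoni(z;X_{1:T},\alpha)$ is strictly decreasing and continuous, making $\catonii_{T,\alpha}$ its unique real zero. It therefore suffices to exhibit deterministic $z^-, z^+$ with $f(z^+) \le 0 \le f(z^-)$ with high probability: monotonicity then forces $z^- \le \catonii_{T,\alpha} \le z^+$.

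Second, I would establish the pointwise one-sided bounds $\exp(\pm\psicat(y)) \le 1 \pm y + c y^2$ for all $y \in \R$ (for an appropriate absolute constant $c$), by checking the two branches in the definition of $\psicat$ and using the elementary identity $(1 + y + y^2)(1 - y + y^2) = 1 + y^2 + y^4 \ge 1$, which handles the branch on which $\exp(\pm\psicat(y))$ is not itself of the claimed polynomial form. Substituting $y = \alpha(X_t - z)$, using $\Exp[(X_t - z)^2] = \sigma^2 + (\mu - z)^2$, and applying $1 + u \le e^u$ yields
\begin{align*}
\Exp\bigl[\exp(\pm\psicat(\alpha(X_t - z)))\bigr] \le \exp\!\bigl(\pm\alpha(\mu - z) + c\alpha^2(\sigma^2 + (\mu - z)^2)\bigr).
\end{align*}
By the i.i.d.\ hypothesis the MGF of $\pm f(z)$ is bounded by the $T$-th power of this expression, and a Chernoff/Markov argument gives, for each fixed $z$ and each sign, with probability at least $1-\delta$,
\begin{align*}
\pm f(z) \le \pm T\alpha(\mu - z) + cT\alpha^2(\sigma^2 + (\mu - z)^2) + \log(1/\delta).
\end{align*}

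Third, setting $u = z - \mu$ and demanding each deterministic upper bound to vanish gives two quadratics in $u$; the smaller positive root fixes $z^+ = \mu + u^+$, and the larger negative root fixes $z^-$. Plugging in the stated choice of $\alpha$, the discriminant simplifies (precisely because $\alpha$ was tuned to do so) and the resulting radical collapses into the closed form $|u^{\pm}| = \sqrt{2\sigma^2 \log(1/\delta)/(T - 2\log(1/\delta))}$. A union bound over the two deviation events then yields the claim with probability at least $1 - 2\delta$.

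The main obstacle I foresee is the algebra in this last step: both verifying that the discriminant is nonnegative, which is where the hypothesis $T \ge 2\log(1/\delta)$ enters, and simplifying the smaller root of the quadratic into the stated closed-form bound require careful bookkeeping of constants, and the prescribed $\alpha$ is essentially the unique choice that makes both fall out cleanly. By contrast, the one-sided exponential-moment inequality and the Chernoff/Markov step are standard once the pointwise bound on $\exp(\pm\psicat)$ is in hand, and the monotonicity-based bracketing trick is the main structural idea that makes the argument go through without any concentration inequality tailored to the (complicated) implicit definition of $\catonii_{T,\alpha}$ itself.
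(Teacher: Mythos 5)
Your plan is the standard Catoni argument (the same one used in the cited Lugosi--Mendelson Theorem~5, originally due to Catoni), and it also mirrors how the paper proves its own martingale generalization, \Cref{lem:martingale_catoni}: bound $\Exp[\exp(\pm\psicat(\alpha(X_t-z)))]$, take the product over $t$, apply Markov, and invert the resulting deterministic inequality at the two carefully chosen values of $z$. The paper itself does not reprove this proposition---it is cited---so \Cref{lem:martingale_catoni} is the in-paper analogue, and your structural idea (bracket the unique root of the strictly decreasing $\fcatoni$ between $z^-$ and $z^+$) matches it.

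The one place you need to be careful is the constant $c$ in your pointwise bound $\exp(\pm\psicat(y)) \le 1 \pm y + c y^2$, because the stated value of $\alpha$ and the closed-form radical $\sqrt{2\sigma^2\log(1/\delta)/(T-2\log(1/\delta))}$ only fall out for $c = 1/2$. With the paper's printed definition $\psicat(y) = \log(1+y+y^2)$ for $y\ge 0$ (note: $y^2$, not $y^2/2$), your identity $(1+y+y^2)(1-y+y^2)=1+y^2+y^4\ge 1$ delivers only $c = 1$, since $\exp(\psicat(y)) = 1 + y + y^2$ exactly for $y \ge 0$. With $c=1$ the quadratic in $u = z-\mu$ still has a closed-form smaller root, but it does not simplify to the stated bound under the stated $\alpha$. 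The fix is to use the $y^2/2$ influence function (as in Lugosi--Mendelson, and as the paper's own proof of \Cref{lem:martingale_catoni} in fact does, where it invokes $\psicat(y) \le \log(1+y+y^2/2)$): there the analogous identity $(1+y+y^2/2)(1-y+y^2/2) = 1 + y^4/4 \ge 1$ yields $c = 1/2$ and the algebra collapses as you describe. So either adopt the $y^2/2$ version of $\psicat$ explicitly, or be upfront that with the printed $\psicat$ you only recover the bound up to a constant factor. Everything else in your plan---the monotonicity-based bracketing, the MGF/Chernoff step, the union bound over the two tails, and the role of $T \ge 2\log(1/\delta)$ in keeping the discriminant nonnegative---is correct.
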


As \Cref{lem:vanilla_catoni} shows, the Catoni estimator requires only that the second moment of the distribution is bounded to obtain concentration, and has estimation error which scales only with the second moment and independent of other properties of the distribution. We make key use of this result in the following analysis, and state our novel extension of the Catoni estimator to general regression settings in \Cref{sec:catoni_summary}.

\section{First-Order Regret in Linear MDPs}\label{sec:guarantees}
We are now ready to present our algorithm, \algname.

\paragraph{Summary of Key Parameters.} Our algorithm applies the robust Catoni estimator to measure the next-state expectation of the value function. The Catoni estimator requires an estimated upper bound on the value function, which we denote as $\sighat_{h,k}$ and describe in detail below. Throughout, we let $\sigmin = 1/K$ denote a lower floor on these estimates. Using these estimates, we introduce the value-normalized feature covariance, with its regularized analogue 
\begin{align}
\bSigma_{h,k} = \sum_{\tau=1}^{k} \frac{1}{\sighat_{h,\tau}^2} \bphi_{h,\tau} \bphi_{h,\tau}^\top, \quad \bLambda_{h,k} = \lambda I + \bSigma_{h,k} \label{eq:Sigma_Lambda}
\end{align}
For a given $(k,h)$ and direction $\bv$, we use the  above covaraince to define a (directional) Catoni parameter
\begin{align*}
\alpha_{k,h}(\bv) = \min \left \{ \beta \cdot \| \bv \|_{\bSigma_{h,k-1}}^{-1} , \alphamax \right \}, \quad
\end{align*}
where $\beta$ is defined in the \algname pseudocode, and we take $\alphamax = K/\sigmin = K^2$. For a given $k,h$ and direction $\bv \in \R^{d}$, we adopt the  $\alpha_{k,h}(\bv)$ as the Catoni parameter, and set $\catonii_{h,k}[\bv]$ to refer to the associated Catoni estimate on the data
\begin{align*}
X_\tau =  \bv^\top \bphi_{h,\tau}  V_{h+1}^k(s_{h+1,\tau}) / \sighat_{h,\tau}^2, \quad \tau = 1,\ldots,k-1 .
\end{align*}

\begin{algorithm}[h]
\begin{algorithmic}[1]
\State \textbf{input:} confidence $\delta$, number of episodes $K$
\State $\lambda \leftarrow 1/H^2$, $\sigmin \leftarrow 1/K$, $c \leftarrow$ universal constant
\State \iftoggle{arxiv}{$\Kinit \leftarrow c \left ( d^2 \log( \max \{ d, \sigmin^{-1}, K, H \} ) + \log (2HK/\delta) \right )$}{$\Kinit \leftarrow c \left ( d^2 \log( \max \{ d, \sigmin^{-1}, K, H \} ) + \log \tfrac{2HK}{\delta} \right )$}
\State \iftoggle{arxiv}{$\beta \leftarrow 6 \sqrt{c d^2 \log \left ( \max \{ d, \sigmin^{-1}, H,  K \} \right ) + \log(2HK/\delta)}$}{$\beta \leftarrow 6 \sqrt{c d^2 \log \left ( \max \{ d, \sigmin^{-1}, H,  K \} \right ) + \log \tfrac{2HK}{\delta}}$}
\For{$k = 1,2,3,\ldots, K$}
	\For{$h = H,H-1,\ldots,1$}
		\iftoggle{arxiv}
		{
			\If{$k \le \Kinit$}
			\State $\sighat_{h,k-1}^2 \leftarrow 2H^2$
		\Else
			\State $\sighat_{h,k-1}^2 \leftarrow \max \Big \{   20 H \catonii_{h,k-1}[ (k-2) \bLambda_{h,k-2}^{-1} \bphi_{h,k-1}]    + 20 H \beta \| \bphi_{h,k-1} \|_{\bLambda_{h,k-2}^{-1}}$  \label{line:sighat}  \\ \hspace{13em} $+ 20 H \sigmin \beta^2/(k-1)^2  , \sigmin^2 \Big \} $
		\EndIf
		}
		{
		\State{}\textbf{if} $k \le \Kinit$, set $\sighat_{h,k-1}^2 \leftarrow 2H^2$
		\State{}\textbf{else} set $\sighat_{h,k-1}^2$ as in \Cref{eq:sighat} \label{line:sighat}
		}
		
		\State Form \iftoggle{arxiv}{$\bLambda_{h,k-1} \leftarrow \lambda I + \sum_{\tau=1}^{k-1} \frac{1}{\sighat_{h,\tau}^2} \bphi_{h,\tau} \bphi_{h,\tau}^\top$}{$\bLambda_{h,k-1}$} as in \Cref{eq:Sigma_Lambda}
		\Statex \hspace{2.7em} \algcomment{$\Exphat_h[V_{h+1}^k](\bv) := \catonii_{h,k}[(k-1) \bLambda_{h,k-1}^{-1} \bv]$} \label{line:set_qhat}
		\State Compute 
		\iftoggle{arxiv}{\vspace{-0.75em} linear summary  $\what_h^{k} \leftarrow \argmin_{\bw} \sup_{\bv \in \Ball^d \backslash \{ \bm{0} \}} | \innerb{\bv}{\bw} - \Exphat_h[V_{h+1}^k](\bv)|/\| \bv \|_{\bLambda_{h,k-1}^{-1}}$.}{summary $\what_h^{k}$ as in \Cref{eq:what}} \label{line:qhat_lin1}
		\State \iftoggle{arxiv}{$Q_h^{k}(\cdot , \cdot) \leftarrow \min \{ r_h(\cdot,\cdot) + \innerb{\bphi(\cdot ,\cdot)}{\what_h^{k}} + 6\beta \| \bphi(\cdot , \cdot) \|_{\bLambda_{h,k-1}^{-1}} + 12\sigmin \beta^2/k^2, H \}$}{Compute $Q_h^{k}(\cdot , \cdot) $ as in \Cref{eq:Qk_defn}} \label{line:update_q}
		\State $V_h^k(\cdot) \leftarrow \max_a Q_h^k(\cdot,a)$
	\EndFor
	\For{$h=1,2,\ldots,H$}
		\State Play $a_{h,k} = \argmax_a Q_h^{k}(s_{h,k},a)$\iftoggle{arxiv}{, observe $r_{h,k},s_{h+1,k}$}{} \label{line:play_action}
	\EndFor
\EndFor
\end{algorithmic}
\caption{\textbf{F}irst-\textbf{O}rder \textbf{R}egret via \textbf{C}atoni \textbf{E}stimation (\algname)}
\label{alg:catoni_regret}
\end{algorithm}

\paragraph{Algorithm Description.}
\algname proceeds similarly to the \textsc{LSVI-UCB} algorithm of \cite{jin2020provably} by approximating the classical value-iteration update:
\begin{align}\label{eq:value_iteration}
\Qst_h(s,a) \leftarrow r_h(s,a) + \Exp_h[\max_{a'} \Qst_{h+1}(\cdot,a')](s,a)\iftoggle{arxiv}{, \quad \forall s,a.}{.}
\end{align}
It is known that this update converges to the optimal value function. While in practice we cannot evaluate the expectation directly, it stands to reason that an update approximating \eqref{eq:value_iteration} may converge to an approximation of the optimal value function. As in \cite{jin2020provably}, we therefore apply an \emph{optimistic}, empirical variant of the value iteration update, which replaces $\Qst$ with $Q^k$, the optimistic estimate of $\Qst$ at round $k$, and the exact expectation with an empirical expectation. The key difference in our approach as compared to \cite{jin2020provably} is the setting of the optimistic estimate. While \cite{jin2020provably} rely on a simple least-squares estimator to approximate the expectation, we rely on the Catoni estimator. We show that, with high probability:
\iftoggle{arxiv}{
\begin{align*}
 | \catonii_{h,k}[(k-1) \bLambda_{h,k-1}^{-1} \bphi(s,a)]  - \Exp_h[V_{h+1}^k](s,a) | \lesssim \beta \| \bphi(s,a) \|_{\bLambda_{h,k-1}^{-1}} .
\end{align*}
}
{
	\begin{multline*}
 | \catonii_{h,k}[(k-1) \bLambda_{h,k-1}^{-1} \bphi(s,a)]  - \Exp_h[V_{h+1}^k](s,a) | \\
 \lesssim \beta \| \bphi(s,a) \|_{\bLambda_{h,k-1}^{-1}} .
\end{multline*}
}
Thus, setting $\Exphat_h[V_{h+1}^k](\bv)$, our estimate of the next-state expectation, to $\catonii_{h,k}[(k-1)\bLambda_{h,k-1}^{-1} \bv]$ ensures that $\Exphat_h[V_{h+1}^k](\bv)$ 
approximates the expectation in \eqref{eq:value_iteration} for $\bv = \bphi(s,a)$. As discussed in more detail in \Cref{sec:lin_approx_cat_sketch}, instead of using $\Exphat_h[V_{h+1}^k](\bv)$ directly, \Cref{line:qhat_lin1} summarizes $\Exphat_h[V_{h+1}^k](\bv)$ with a linear approximation to it,
\begin{align}
\what_h^{k} \leftarrow \argmin_{\bw} \sup_{\bv \in \Ball^d \backslash \{ \bm{0} \}} \frac{| \innerb{\bv}{\bw} - \Exphat_h[V_{h+1}^k](\bv)|}{\| \bv \|_{\bLambda_{h,k-1}^{-1}}} . \label{eq:what}
\end{align}
This approximation, $\innerb{\bphi(s,a)}{\what_h^k}$, is shown to be an accurate approximation of $\Exphat_h[V_{h+1}^k](\bv)$ in \Cref{lem:cat_lin_approx}, intuitively because the ``ground truth'' is itself linear. Solving the optimization on \Cref{line:qhat_lin1} may be computationally inefficient, so we provide a computationally-efficient modification in \Cref{sec:comp_eff_alg} which has only slightly larger regret.

With the aforementioned linear approximation, we define our optimistic overestimate of the $Q$-function on \Cref{line:update_q} as
\iftoggle{arxiv}
{
\begin{align*}
Q_h^{k}(\cdot,\cdot) \leftarrow \min \{  r_h(\cdot,\cdot) + \innerb{\bphi(\cdot,\cdot)}{\what_h^{k}} + 6\beta \| \bphi(\cdot,\cdot) \|_{\bLambda_{h,k-1}^{-1}} + 12\sigmin \beta^2/k^2, H \},
\end{align*}
}
{
	\begin{multline}\label{eq:Qk_defn}
Q_h^{k}(\cdot,\cdot) \leftarrow \min \Big\{  r_h(\cdot,\cdot) + \innerb{\bphi(\cdot,\cdot)}{\what_h^{k}}  \\
+ 6\beta \| \bphi(\cdot,\cdot) \|_{\bLambda_{h,k-1}^{-1}} + 12\sigmin \beta^2/k^2, H \Big\},
\end{multline}
}
approximating the value-iteration update of \eqref{eq:value_iteration} with additional bonuses to account for the approximation error. 

\paragraph{A Note On Scaling.} To achieve first-order regret, we need both the \emph{errors in our estimates} and the \emph{magnitude of the bonuses} to scale with the magnitude of the value function. To accomplish this, we ensure the bonuses scale with $\| \bphi(s,a) \|_{\bLambda_{h,k-1}^{-1}}$, where $\bLambda_{h,k-1}$ is  the regularized variance-normalized covariance in \eqref{eq:Sigma_Lambda}, and  $\sighat_{h,k-1}^2$ is defined as
\iftoggle{arxiv}{
\begin{equation}\label{eq:sighat}
\sighat_{h,k-1}^2 := \max \Big \{   20 H \catonii_{h,k-1}[ (k-2) \bLambda_{h,k-2}^{-1} \bphi_{h,k-1}] + 20 H \beta (\| \bphi_{h,k-1} \|_{\bLambda_{h,k-2}^{-1}} +\tfrac{\sigmin \beta}{(k-1)^2})  , \,\sigmin^2 \Big \}
\end{equation}
}{
\begin{multline}\label{eq:sighat}\sighat_{h,k-1}^2 := \max \Big \{   20 H \catonii_{h,k-1}[ (k-2) \bLambda_{h,k-2}^{-1} \bphi_{h,k-1}]\\    + 20 H \beta (\| \bphi_{h,k-1} \|_{\bLambda_{h,k-2}^{-1}} +\tfrac{\sigmin \beta}{(k-1)^2})  , \,\sigmin^2 \Big \}
\end{multline}
}
so that, up to effectively lower-order terms accounting for the estimation error,
\begin{align*}
 \Exp_h[(V_{h+1}^{\tau})^2](s_{h,\tau},a_{h,\tau}) \approx \sighat_{h,\tau}^2.
\end{align*}
As we will show, this choice of $\sighat_{h,\tau}^2$ is sufficiently large to ensure our Catoni estimate, $\Exphat_h[V_{h+1}^k](\bv)$, concentrates. At the same time, when $ \Exp_h[(V_{h+1}^{\tau})^2](s_{h,\tau},a_{h,\tau})$ is small for some $\tau$, the variance-normalized regularization $\bLambda_{h,k-1}$ ensures the bonus $\| \bphi(s,a) \|_{\bLambda_{h,k-1}^{-1}}$ is small as well.

\subsection{Formal Regret Guarantee}
We analyze two regret bounds for \algname. In the first bound,  we analyze the description given in \Cref{alg:catoni_regret}, which gives a sharper regret guarantee at the expense of computational inefficiency:
\begin{thm}[Main Regret Bound]\label{lem:regret_bound} Fix a failure probability $\delta \in (0,1)$ and $K \in \N$. Then, the regret of \algname as specified in \Cref{alg:catoni_regret} satisfies the following bound with probability at least $1-3\delta$:
\iftoggle{arxiv}
{
	\begin{align*}
\cR_K \le c_1 \sqrt{d^3 H^3 \Vstval K \cdot \log^3(HK/\delta) } + c_2 d^{7/2} H^3 \log^{7/2}(HK/\delta) 
\end{align*}
for universal constants $c_1,c_2$. 
}
{
	\begin{align*}
\cR_K \lesssim \sqrt{d^3 H^3 \Vstval K \cdot \log^3\tfrac{HK}{\delta} } +  d^{7/2} H^3 \log^{7/2}\tfrac{HK}{\delta}.
\end{align*}
}
\end{thm}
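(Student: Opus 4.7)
The plan is to combine three ingredients: optimism via the Catoni-based concentration, a standard per-episode Bellman regret decomposition, and a variance-weighted bonus summation that exploits a first-order value telescoping to close a self-referential loop.

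First, I would condition on the ``good event'' $\Ecat$ furnished by the Heteroscedastic Self-Normalized Inequality with Function Approximation (\Cref{sec:catoni_summary}), which guarantees that, simultaneously for all $k,h$ and for $V = V^k_{h+1}$,
\begin{align*}
\left|\catonii_{h,k}\bigl[(k-1)\bLambda_{h,k-1}^{-1} \bphi(s,a)\bigr] - \Exp_h[V^k_{h+1}](s,a)\right| \lesssim \beta \|\bphi(s,a)\|_{\bLambda_{h,k-1}^{-1}} + \text{(lower order)}.
\end{align*}
The linear-summary step on \Cref{line:qhat_lin1} preserves this bound because $\Exp_h[V^k_{h+1}](\cdot)$ is exactly linear in $\bphi$ by the linear MDP assumption. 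A backward induction on $h$ using the bonus in \eqref{eq:Qk_defn} then establishes optimism $V^k_h(s) \ge V_h^\star(s)$, so that $\cR_K \le \sum_k [V^k_1(s_1) - V^{\pi^k}_1(s_1)]$. The usual per-episode Bellman regret decomposition yields
\begin{align*}
\cR_K \lesssim \beta \cdot B + \text{(Azuma martingale)} + \text{(lower order)}, \qquad B := \sum_{k,h} \|\bphi_{h,k}\|_{\bLambda_{h,k-1}^{-1}}.
\end{align*}

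Second, the crux is bounding $B$ with first-order dependence on $\Vstval$. Cauchy--Schwarz against the variance weights $\sighat_{h,k}$ in \eqref{eq:Sigma_Lambda}, combined with the elliptic-potential lemma applied per step $h$ to $\bLambda_{h,\cdot}$, gives
\begin{align*}
B \le \sqrt{\sum_{k,h}\sighat_{h,k}^2} \cdot \sqrt{\sum_{k,h}\sighat_{h,k}^{-2}\|\bphi_{h,k}\|^2_{\bLambda_{h,k-1}^{-1}}} \lesssim \sqrt{S \cdot Hd\log K}, \qquad S := \sum_{k,h}\sighat_{h,k}^2.
\end{align*}
To bound $S$, I plug the definition \eqref{eq:sighat} of $\sighat^2$ into the Catoni concentration and use $V^k \le H$ to obtain $\sighat_{h,k}^2 \lesssim H\cdot\Exp_h[V^k_{h+1}](s_{h,k},a_{h,k}) + H\beta\|\bphi_{h,k}\|_{\bLambda_{h,k-1}^{-1}} + \text{(lower order)}$, so that $S \lesssim HT + H\beta B + \text{(lower order)}$ where $T := \sum_{k,h}\Exp_h[V^k_{h+1}](s_{h,k},a_{h,k})$.

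Third, I use a first-order value telescoping to bound $T$. Writing $T = \sum_{k,h} V^k_{h+1}(s_{h+1,k}) - (\text{martingale})$ and decomposing $V^k_{h+1}(s_{h+1,k}) \le V^{\pi^k}_{h+1}(s_{h+1,k}) + [V^k_{h+1} - V^{\pi^k}_{h+1}](s_{h+1,k})$, the second piece is controlled by a partial-episode regret decomposition from step $h+1$, yielding $O(H\beta B)$ after summing over $(k,h)$. For the first piece, the per-trajectory identity
\begin{align*}
\Exp^{\pi^k}\Bigl[\sum_{h=0}^{H-1} V^{\pi^k}_{h+1}(s_{h+1,k})\Bigr] = \Exp^{\pi^k}\Bigl[\sum_{h'} h'\,r_{h'}\Bigr] \le H V^{\pi^k}_1 \le H\Vstval,
\end{align*}
summed over $k$ with an Azuma tail for the deviation, gives $T \le HK\Vstval + O(H\beta B) + \text{(lower order)}$, and hence $S \lesssim H^2 K \Vstval + H^2 \beta B + \text{(lower order)}$.

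Substituting back into $B \lesssim \sqrt{HdS\log K}$ produces the self-referential quadratic $B^2 \lesssim H^3 K \Vstval d\log K + H^3 d\log K \cdot \beta B + \text{(lower order)}$, which solves to $B \lesssim \sqrt{H^3 K \Vstval d\log K} + H^3 d\log K \cdot \beta + \text{(lower order)}$; multiplying by $\beta = \widetilde{O}(d)$ and gathering logarithmic factors gives the desired bound. The main obstacle is precisely this three-way self-referential loop between $\sighat_{h,k}^2$ (which depends on $V^{k-1}_{h+1}$), the bonuses $\beta\|\bphi\|_{\bLambda^{-1}}$ (which depend on $\sighat^2$ through $\bLambda$), and the optimism gap $V^k - V^{\pi^k}$ (which is itself controlled by bonuses from the remainder of the episode); it is closed by the quadratic in $B$. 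A secondary subtlety is that $\sighat^2$ is an upper estimate of $H\Exp_h[V^{k-1}_{h+1}]$ rather than $\Exp_h[(V^{k-1}_{h+1})^2]$, so we use $V^2 \le HV$ (since $V\le H$) to pass between the two, costing one extra factor of $H$ but sufficing for the stated first-order regret.
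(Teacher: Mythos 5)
Your high-level skeleton---optimism from the self-normalized Catoni bound, the per-episode Bellman decomposition, Cauchy--Schwarz against the $\sighat_{h,k}^{-2}$-weighted covariance, the reduction $\sighat_{h,k}^2 \lesssim H\Exp_h[V_{h+1}^k] + H\beta\|\bphi_{h,k}\|_{\bLambda_{h,k-1}^{-1}}$, a first-order value telescoping, and a self-referential solve---matches the paper's plan, and closing the loop through a quadratic in the bonus sum $B$ (rather than the paper's loop through $\cRtil_K$) is a reasonable variant. There are, however, two genuine gaps. First, you repeatedly write ``Azuma'' where a variance-aware Freedman bound (\Cref{lem:freedman}) is essential. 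The martingale $\sum_{k,h}\zeta_h^k$ in your decomposition has increments bounded only by $2H$, so an Azuma tail is of order $H\sqrt{HK\log(1/\delta)}$, which dominates $\sqrt{H^3\Vstval K}$ whenever $\Vstval$ is small; likewise the ``Azuma tail for the deviation'' inside your bound on $T$ contributes a term of order $H^2\sqrt{K\log(1/\delta)}$, which propagates to a $K^{1/4}$-scale lower-order term in the final regret---precisely the issue the paper warns about at the end of \Cref{sec:regret_sketch}. The fix is to use Freedman together with $\Exp_{h-1}[(\zeta_h^k)^2]\lesssim H\,\Exp_{h-1}[V_h^k]$ (and, for the second martingale, a variance bound scaling with $H^2 V_1^k(s_1)$), so that both tails are charged to $\sum_k V_1^k(s_1)\lesssim \sum_k V_1^{k,\star}+\cRtil_K$ and the lower-order term stays $\poly(d,H,\log(HK/\delta))$.

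Second, your Cauchy--Schwarz step implicitly asserts $\sum_{k,h}\sighat_{h,k}^{-2}\|\bphi_{h,k}\|^2_{\bLambda_{h,k-1}^{-1}} \lesssim Hd\log K$, but the rescaled features $\bphi_{h,k}/\sighat_{h,k}$ can have norm as large as $1/\sigmin = K$, so the plain elliptic potential lemma would force $\lambda \gtrsim \sigmin^{-2}$, reintroducing the prohibitive $1/\sigmin$ factor that the construction is designed to avoid. You need the capped version $\sum\min\{\|\bphi_{h,k}/\sighat_{h,k}\|^2_{\bLambda_{h,k-1}^{-1}},1\}$, must split the bonus sum over the set $\cK_h$ where $\|\bphi_{h,k}/\sighat_{h,k}\|_{\bLambda_{h,k-1}^{-1}}\le 1$ and its complement, and bound $|\cK_h^c|$ via \Cref{lem:elip_pot_bad_event_informal}; this split also absorbs the $\min\{\cdot,H\}$ cap on the bonus that your Cauchy--Schwarz bound currently drops.
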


As \Cref{lem:regret_bound} shows, up to lower order terms scaling only polynomially in $d,H,\log K,$ and $\log 1/\delta$, \algname achieves a first-order scaling in its leading order term of $\cO(\sqrt{\Vstval K})$. We sketch the proof of \Cref{lem:regret_bound} in \Cref{sec:regret_sketch} and defer the full proof to \Cref{sec:regret_proofs}. 

\paragraph{Comparison to \cite{jin2020provably}.}
Note that $\Vstval \le H$, since we assume that the reward at each step is bounded by $1$. Thus, \Cref{lem:regret_bound} shows that in the worst case \algname has regret scaling as $\cOtil(\sqrt{d^3 H^4 K})$. This exactly matches the regret of \textsc{LSVI-UCB} given in \cite{jin2020provably}. However, we could have that $\Vstval \ll H$, in which case \algname significantly improves on \textsc{LSVI-UCB}. Note also that the minimax lower bound scales at least as $\Omega(\sqrt{d^2 K})$ \citep{zanette2020learning}---while we do not match this in general, our $d$ dependence does match the $d$-dependence of the best-known computationally efficient algorithm \citep{jin2020provably}.

\paragraph{Extension to Linear Mixture MDPs.}
While we have focused on the linear MDP setting in this work, we believe our techniques and use of the Catoni estimator could be easily extended to obtain first-order regret bounds in the linear mixture MDP setting. As noted, while \cite{zhou2020nearly} achieves nearly minimax optimal regret, their techniques do not easily generalize to obtain a first-order regret bound. We leave extending our method to linear mixture MDPs to future work.

\paragraph{Handling Unknown and Linear Rewards.}
We have assumed that the reward function is known, but that it may be nonlinear. If we are willing to make the additional assumption that the reward is linear, $r_h(s,a) = \innerb{\bphi(s,a)}{\btheta_h}$ for some $\btheta_h$, we can handle unknown reward by modifying the Catoni estimator on \Cref{line:set_qhat} to use the data
\iftoggle{arxiv}
{
	\begin{align*}
X_\tau =  \bv^\top \bphi_{h,t} (r_{h,\tau} + V_{h+1}^k(s_{h+1,\tau})) / \sighat_{h,\tau}^2, \quad \tau = 1,\ldots,k-1,
\end{align*}
}
{
	\begin{align*}
X_\tau =  \tfrac{1}{\sighat_{h,\tau}^2}\bv^\top \bphi_{h,t} (r_{h,\tau} + V_{h+1}^k(s_{h+1,\tau})),\,\tau \in [k-1],
\end{align*}
}
and adding $4 r_{h,k-1}^2$ to $\sighat_{h,k-1}^2$. With this small modification, \algname is able to handle unknown rewards and achieves the same regret as given in \Cref{lem:regret_bound}.

\subsection{Computationally Efficient Implementation}\label{sec:comp_eff_alg}
As noted, \algname is not computationally efficient because it is not clear how to efficiently solve the optimization on \Cref{line:qhat_lin1}\footnote{Note that one could also solve \Cref{line:qhat_lin1} by approximating the $\sup$ over $\Ball^d \backslash \{ \bm{0} \}$ to a $\max$ over a sufficiently-fine $\epsilon$-net of $\Ball^d \backslash \{ \bm{0} \}$. Using standard covering estimates, this would require an exponentially-large-in-$d$ cover of the ball, and thus require computing $2^{\Omega(d)}$ Catoni estimates.}. In this section, we provide a computationally efficient alternative, which only suffers slightly worse regret.

To obtain a computationally efficient variant of \algname, we propose replacing \Cref{line:set_qhat,line:qhat_lin1} with the following update:
\iftoggle{arxiv}
{
\begin{align}\label{eq:efficient_update}
\begin{split}
& \Exphat_h[V_{h+1}^k](\bu_i) \leftarrow \catonii_{h,k}[(k-1) \bLambda_{h,k-1}^{-1} \bu_i] \text{ for } i = 1,\ldots,d \\
& \what_h^k \leftarrow [\bu_1, \ldots, \bu_d] \cdot [ \Exphat_h[V_{h+1}^k](\bu_1), \ldots \Exphat_h[V_{h+1}^k](\bu_d)]^\top
\end{split}
\end{align}
}
{
\begin{align}
&\Exphat_h[V_{h+1}^k](\bu_i) \leftarrow \catonii_{h,k}[(k-1) \bLambda_{h,k-1}^{-1} \bu_i] \text{ for } i = 1,\ldots,d \nonumber \\
& \what_h^k \leftarrow \bm{U} \cdot [ \Exphat_h[V_{h+1}^k](\bu_1), \ldots , \Exphat_h[V_{h+1}^k](\bu_d)]^\top \label{eq:efficient_update}
\end{align}
}
where $\bm{U} = [\bu_1, \ldots, \bu_d] $ denotes the eigenvectors of $\bLambda_{h,k-1}$. This update \emph{is} computationally efficient, as it involves only an eigendecomposition, the computation of $d$ Catoni estimates (which can be computed efficiently), and a matrix-vector multiplication. 
The above approach satisfies the following guarantee:

\begin{thm}[Computationally Efficient Regret Bound]\label{lem:regret_bound_eff}
Consider the variant of \algname with \Cref{line:set_qhat,line:qhat_lin1} replaced by \eqref{eq:efficient_update}, and \iftoggle{arxiv}{the update to}{} $Q_h^k(s,a)$ on \Cref{line:update_q} replaced with
\iftoggle{arxiv}
{
\begin{align*}
Q_h^{k}(\cdot,\cdot) = \min \{ r_h(\cdot,\cdot) +   \innerb{\bphi(\cdot,\cdot)}{\what_h^{k}} + 3(\sqrt{d}+2) \beta \| \bphi(\cdot,\cdot) \|_{\bLambda_{h,k-1}^{-1}} + 3 (\sqrt{d}+2)^2 \sigmin \beta^2/k^2, H \}.
\end{align*}
}
{
	\begin{multline*}
Q_h^{k}(\cdot,\cdot) = \min \{ r_h(\cdot,\cdot) +   \innerb{\bphi(\cdot,\cdot)}{\what_h^{k}} \\
+ 3(\sqrt{d}+2) \beta \| \bphi(\cdot,\cdot) \|_{\bLambda_{h,k-1}^{-1}}  +  \tfrac{3 (\sqrt{d}+2)^2 \sigmin \beta^2}{k^2}, H \}.
\end{multline*}
}
Then with probability at least $1-3\delta$, the regret is at most
\iftoggle{arxiv}
{
\begin{align*}
\cR_K \le c_1 \sqrt{d^4 H^3 \Vstval K \cdot \log^3(HK/\delta) } + c_2 d^4 H^3 \log^{7/2}(HK/\delta) 
\end{align*}
}
{
\begin{align*}
\cR_K \lesssim \sqrt{d^4 H^3 \Vstval K \cdot \log^3 \tfrac{HK}{\delta} } +  d^4 H^3 \log^{7/2}\tfrac{HK}{\delta} 
\end{align*}
}
and computation scales polynomially in  $d,H,K,$ and $\min \{ |\cA|, \cO(2^d) \}$. 
\end{thm}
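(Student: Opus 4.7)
My plan is to reduce \Cref{lem:regret_bound_eff} to the proof of \Cref{lem:regret_bound} by showing that the eigendecomposition-based construction of $\what_h^k$ in \eqref{eq:efficient_update} still yields a concentration bound of the same form as the one implicit in the original proof, but with an additional $\sqrt{d}$ multiplicative factor. Concretely, recall that in the analysis of \Cref{alg:catoni_regret}, the key property of $\what_h^k$ is that it produces a linear summary of $\Exphat_h[V_{h+1}^k](\cdot)$ satisfying $|\innerb{\bphi}{\what_h^k} - \Exp_h[V_{h+1}^k](s,a)| \lesssim \beta \|\bphi(s,a)\|_{\bLambda_{h,k-1}^{-1}}$ (up to the stated lower-order terms), obtained by combining the Heteroscedastic Self-Normalized Inequality with a term controlling the approximation of $\Exphat_h[V_{h+1}^k]$ by a linear function. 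The goal is to establish the analogous bound with an extra $\sqrt{d}$, which then feeds into an otherwise identical regret decomposition.

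The main technical step is the following lemma: if $\bm{U} = [\bu_1,\ldots,\bu_d]$ are the eigenvectors of $\bLambda_{h,k-1}$ with eigenvalues $\lambda_1,\ldots,\lambda_d$, and $\what_h^k$ is defined by \eqref{eq:efficient_update}, then for every $\bphi$,
\begin{align*}
\left| \innerb{\bphi}{\what_h^k} - \innerb{\bphi}{\bw_\star} \right| \le \sqrt{d}\,\beta\, \|\bphi\|_{\bLambda_{h,k-1}^{-1}} + \text{(lower order)},
\end{align*}
where $\bw_\star$ is the true linear representation of $\Exp_h[V_{h+1}^k](\cdot)$ guaranteed by \Cref{lem:q_fun_linear}. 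To prove this, I would invoke the self-normalized Catoni bound (\Cref{prop:informal:self_norm:no_function_approx} and its function-approximation analogue from \Cref{sec:catoni_summary}) simultaneously at each eigenvector $\bu_i$, which gives $|\Exphat_h[V_{h+1}^k](\bu_i) - \innerb{\bu_i}{\bw_\star}| \lesssim \beta \|\bu_i\|_{\bLambda_{h,k-1}^{-1}} = \beta/\sqrt{\lambda_i}$ (plus a per-direction lower-order term). Writing $\bphi = \sum_i c_i \bu_i$ in the eigenbasis, the definition of $\what_h^k$ yields $\innerb{\bphi}{\what_h^k} = \sum_i c_i \Exphat_h[V_{h+1}^k](\bu_i)$, so a triangle inequality followed by Cauchy--Schwarz gives
\begin{align*}
\left|\innerb{\bphi}{\what_h^k} - \innerb{\bphi}{\bw_\star}\right| \le \beta \sum_{i=1}^d \frac{|c_i|}{\sqrt{\lambda_i}} \le \beta \sqrt{d}\,\sqrt{\sum_{i=1}^d \frac{c_i^2}{\lambda_i}} = \sqrt{d}\,\beta\, \|\bphi\|_{\bLambda_{h,k-1}^{-1}},
\end{align*}
which is exactly the needed bound. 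The union bound over the $d$ eigenvectors is subsumed by the ``for all $\bv$ simultaneously'' clause in \Cref{prop:informal:self_norm:no_function_approx}, so no additional failure probability is incurred (and in particular, it does not matter that the $\bu_i$ are data-dependent).

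With this modified estimation error in hand, I would then inspect the proof of \Cref{lem:regret_bound} and verify that all invocations of ``$\beta \|\bphi\|_{\bLambda_{h,k-1}^{-1}}$'' can be replaced by ``$(\sqrt{d}+2)\beta \|\bphi\|_{\bLambda_{h,k-1}^{-1}}$'' (and similarly $\sigmin \beta^2/k^2$ by $(\sqrt{d}+2)^2 \sigmin \beta^2/k^2$), with the enlarged bonus in the definition of $Q_h^k$ ensuring that optimism, i.e.\ $Q_h^k(s,a) \ge \Qst_h(s,a)$, continues to hold. The $\sighat_{h,k-1}^2$ recursion still only requires a \emph{single} directional Catoni estimate along $\bphi_{h,k-1}$, which is computationally efficient and unaffected by the change; its concentration to $\Exp_h[(V_{h+1}^k)^2]$ is preserved.

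Finally, the regret summation proceeds exactly as in \Cref{sec:regret_sketch}: a first-order pigeonhole / elliptic-potential argument combined with the variance-normalized structure of $\bLambda_{h,k-1}$ yields $\cR_K \lesssim \sqrt{d^2 H^2 \cdot d \cdot H \Vstval K \log^3(HK/\delta)} + \text{lower order}$, where the extra $d$ under the square root is precisely the cost of the $\sqrt{d}$ inflation of the bonus. This gives the claimed $\sqrt{d^4 H^3 \Vstval K}$ leading term, and tracking the lower-order contributions (which now scale as $(\sqrt{d}+2)^2$ times those in \Cref{lem:regret_bound}) gives the $d^4 H^3$ scaling. The main obstacle I anticipate is the bookkeeping of which lower-order terms do and do not acquire factors of $\sqrt{d}$, since some (like $\sighat_{h,k-1}^2$) depend only on single-direction Catoni estimates and should not pick up the extra $\sqrt{d}$, whereas those inherited from the estimation of $\innerb{\bphi}{\bw_\star}$ do.
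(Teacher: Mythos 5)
Your proposal is correct and takes essentially the same approach as the paper. The key step—expanding $\bphi$ in the eigenbasis of $\bLambda_{h,k-1}$, applying the uniform self-normalized Catoni bound at each eigenvector, and using Cauchy--Schwarz together with orthogonality to absorb the coefficients into $\sqrt{d}\,\|\bphi\|_{\bLambda_{h,k-1}^{-1}}$—is precisely the paper's \Cref{lem:cat_lin_approx_efficient}; the paper then threads both the efficient and inefficient cases through a single parameter $\betatil \in \{2\beta, (\sqrt{d}+2)\beta\}$ in \Cref{thm:reg_bound_time_var}, which is only a bookkeeping convenience relative to your plan of inspecting the proof of \Cref{lem:regret_bound} and propagating the $\sqrt{d}$ inflation. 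One small stylistic difference: the paper bounds $|\innerb{\bv}{\what_h^k} - \catonii[\bLambda^{-1}\bv]|$ by detouring through the inefficient argmin $\bthetatil$ and \Cref{lem:cat_lin_approx} (which it also needs for the norm bound on $\what_h^k$ in \Cref{lem:thetahat_norm_bound}), whereas you bound directly against $\bw_\star$; both yield the same $\sqrt{d}$ factor and your observation that the data-dependence of the eigenvectors is handled by the ``for all $\bv$ simultaneously'' clause is exactly the point.
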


If we are willing to pay an additional factor of $\sqrt{d}$, it follows that we can run \algname in a computationally efficient manner, assuming $|\cA|$ is small. The dependence on $|\cA|$ seems unavoidable and will be suffered by \cite{jin2020provably} as well, since computing the best action to play, $a_{h,k}$, on \Cref{line:play_action} will require enumerating all possible choices of $a$. If $|\cA|$ is infinite, we can reduce this to only $2^{\cO(d)}$ by covering all possible directions of $\bphi(s_{h,k},a)$, but it is not clear if this can be reduced further in general.


\section{Catoni Estimation in General Regression Settings}\label{sec:catoni_summary}

In this section we develop a set of results that extend the standard Catoni estimator to general martingale and heteroscedastic regression settings. The results presented here are critical to obtaining the first-order regret scaling of \algname. We remark that the results in this section are based on a martingale version of the Catoni estimator first proposed in \cite{wei2020taking}. 

\subsection{Martingale Catoni Estimation}

We begin by formalizing a martingale-linear regression setting in which our bounds (without function approximation) apply. The setting is reminiscent of that considered in \cite{abbasi2011improved}, but with two key generalizations: (a) the targets $y_t$ can be \emph{heavy-tailed}, we only require they have finite-variance, and (b) for each target $y_t$ we have an associated upper bound $\sigma_t^2$ on its \emph{conditional} square expectation. This latter point is crucial for modeling heteroscedastic noise. 

\begin{restatable}[Heteroscedastic Heavy-Tailed Martingale Linear Regression]{defn}{defnmartlinreg}\label{defn:heavy_mart_reg}
Let $(\calF_t)_{t \ge 0}$ denote a filtration, let $\bphi_t \in \R^d$ be a sequence of random $\calF_{t-1}$-measurable vectors satisfying $\| \bphi_t \|_2 \le 1$, and $y_t \in \R$ be $\calF_{t}$-measurable random scalars satisfying
\begin{align*}
y_t = \langle \bphi_t, \btheta_{\star} \rangle + \eta_t, \quad \Exp[y_t | \cF_{t-1}] &= \langle \bphi_t, \btheta_{\star} \rangle
\end{align*}
for some $\btheta_{\star} \in \R^d$ and $\eta_t$ satisfying $\Exp[\eta_t| \cF_{t-1}] = 0$, and $\Exp[ \eta_t^2 | \cF_{t-1}] < \infty$, but otherwise arbitrary (as such, the distribution of $\eta_t$ may depend on $\bphi_t$). Furthermore, let $\sigma_t^2$ be a $\cF_{t-1}$-measurable sequence of scalars satisfying $\sigma_t^2 \ge \Exp[y_t^2 | \cF_{t-1}]$ and $\sigma_t \ge \sigma_{\min}$, and let
\begin{align*}
\bSigma_T := \sum_{t=1}^T \sigma_t^{-2} \bphi_t \bphi_t^\top .
\end{align*}
\end{restatable}

In the regression setting of \Cref{defn:heavy_mart_reg}, our goal is to estimate $\bthetast$ in a particular direction, $\bv \in \R^d$, given observations $\{ (\bphi_t,y_t,\sigma_t) \}_{t=1}^T$. As a warmup, the following lemma bounds certain \emph{directional} Catoni estimates.
\begin{restatable}[Heteroscedastic Catoni Estimator]{lem}{heterodirectcatoni}\label{lem:catoni2}
Assume we are in the regression setting of \Cref{defn:heavy_mart_reg}. For a fixed vector $\bv \in \R^d$ let $\catonii[\bv]$ denote the Catoni estimate applied to $( X_{t})_{t=1}^T$ where $X_t := \bv^\top\bphi_t y_t / \sigma_t^2$, with a fixed (deterministic) parameter $\alpha > 0$.  Then, for any failure probability $\delta > 0$ and fixed $\alphamax > 0$, if our deterministic $\alpha$ can be written as
\begin{align}\label{eq:cat_alpha}
 \alpha = \min \left \{ \gamma \cdot \frac{ \sqrt{ \log (c_T/\delta)}}{\|\bv\|_{\bSigma_T}}, \alphamax \right \}
\end{align}
for some (possibly random) $\gamma \ge 1$ and $c_T := 16 T( 1 +   \| \bv \|_2 \| \bthetast \|_2 \alpha_{\max} / \sigma_{\min}^2 )^2$, then with probability at least $1-\delta$,
\begin{align*}
\left| \catonii[\bv] - \frac{1}{T}\bv^\top \bSigma_T \cdot\btheta_{\star} \right| \le   (2+2\gamma) \|\bv\|_{\bSigma_T} \sqrt{ \frac{ \log \frac{c_T}{\delta}}{T^2}} + \frac{2 \log \frac{c_T}{\delta}}{\alpha_{\max} T}
\end{align*}
provided that $T \ge (2+2\gamma^2)\log \frac{c_T}{\delta}$.
\end{restatable}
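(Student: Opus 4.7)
The plan is to (i) apply a martingale version of the classical Catoni inequality (à la \cite{wei2020taking}) with an arbitrary deterministic parameter $\alpha$, and then (ii) upgrade the resulting bound to handle the data-dependent choice $\alpha = \min\{\gamma \sqrt{\log(c_T/\delta)}/\|\bv\|_{\bSigma_T},\alphamax\}$ by a union bound over a cover of $\alpha$.

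First I would compute the relevant conditional moments of $X_t := \bv^\top \bphi_t\, y_t/\sigma_t^2$. Using $\Exp[y_t\mid \calF_{t-1}] = \langle \bphi_t,\bthetast\rangle$, one has $\Exp[X_t\mid\calF_{t-1}] = \bv^\top \bphi_t\bphi_t^\top \bthetast/\sigma_t^2$, which telescopes to $\sum_t\Exp[X_t\mid\calF_{t-1}] = \bv^\top \bSigma_T\bthetast$; hence $\catonii[\bv]$ should concentrate around $\tfrac{1}{T}\bv^\top\bSigma_T\bthetast$, matching the lemma. Likewise $\Exp[X_t^2\mid\calF_{t-1}] \le (\bv^\top \bphi_t)^2/\sigma_t^2$, which sums to $\|\bv\|_{\bSigma_T}^2$; this is the relevant ``total conditional variance'' entering the Catoni bound.

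For fixed deterministic $\alpha$, I would invoke the standard Catoni supermartingale argument based on $e^{\psi_{\catonii}(x)}\le 1+x+x^2$: the process
\[
M_T(z,\alpha) := \prod_{t=1}^T \frac{\exp\bigl(\psi_{\catonii}(\alpha(X_t-z))\bigr)}{1+\alpha(\Exp[X_t\mid\calF_{t-1}]-z) + \alpha^2 \Exp[(X_t-z)^2\mid\calF_{t-1}]}
\]
is a nonnegative supermartingale with $\Exp[M_T]\le 1$, and a Chernoff--Markov argument together with $\log(1+u)\le u$ yields, with probability $\ge 1-\delta'$ and for all $z$ in a finite cover,
\[
\Bigl|\textstyle\sum_t \psi_{\catonii}(\alpha(X_t-z)) - \alpha\sum_t(\Exp[X_t\mid\calF_{t-1}]-z)\Bigr| \;\le\; \alpha^2\textstyle\sum_t \Exp[(X_t-z)^2\mid\calF_{t-1}] + \log(|\mathrm{cover}|/\delta').
\]
Setting $z=\catonii[\bv]$ makes the first sum on the left equal to zero by definition, and bounding $\Exp[(X_t-z)^2\mid\calF_{t-1}]\le 2\Exp[X_t^2\mid\calF_{t-1}]+2z^2$ reduces everything to $\|\bv\|_{\bSigma_T}^2$ plus terms controlled by a crude a priori bound on $\catonii[\bv]$. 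The outcome, after dividing by $T$, is the template $|T\catonii[\bv]-\bv^\top\bSigma_T\bthetast|\lesssim \alpha\|\bv\|_{\bSigma_T}^2 + \log(|\mathrm{cover}|/\delta')/\alpha$ plus lower order corrections.

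To handle the random $\gamma$ I would union-bound over a geometric cover of $\alpha \in (0,\alphamax]$ (of size $\mathrm{polylog}(T\alphamax)$) along with a cover of $z$ on the range determined by $|X_t|\le \|\bv\|_2\|\bthetast\|_2/\sigma_{\min}^2+\text{noise}$. Tracking the log of the cover size gives exactly the $c_T=16T(1+\|\bv\|_2\|\bthetast\|_2\alphamax/\sigma_{\min}^2)^2$ inside $\log(c_T/\delta)$, and the hypothesis $T\ge(2+2\gamma^2)\log(c_T/\delta)$ ensures we stay in the regime where the Taylor step $e^{\psi_{\catonii}(x)}\le 1+x+x^2$ is tight. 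Finally, plugging in $\alpha$: if $\alpha=\gamma\sqrt{\log(c_T/\delta)}/\|\bv\|_{\bSigma_T}$, both $\alpha\|\bv\|_{\bSigma_T}^2/T$ and $\log(c_T/\delta)/(\alpha T)$ collapse to $\cO(\gamma\|\bv\|_{\bSigma_T}\sqrt{\log(c_T/\delta)}/T)$, giving the $(2+2\gamma)\|\bv\|_{\bSigma_T}\sqrt{\log(c_T/\delta)/T^2}$ term; if $\alpha=\alphamax$ (capped), the cap forces $\alphamax\|\bv\|_{\bSigma_T}^2/T \le \gamma\|\bv\|_{\bSigma_T}\sqrt{\log(c_T/\delta)}/T$ automatically, while the second term contributes the irreducible floor $2\log(c_T/\delta)/(\alphamax T)$.

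The main obstacle is the randomness of $\gamma$: $\alpha$ cannot be taken deterministic, so a uniform-in-$\alpha$ martingale concentration is required. The covering argument is conceptually routine but must be done with care to keep the contribution inside a single $\log$, which in turn dictates the precise form of $c_T$; a secondary subtlety is controlling $\Exp[(X_t-\catonii[\bv])^2\mid\calF_{t-1}]$ despite $\catonii[\bv]$ being random, which I would handle by combining the $\|\bv\|_{\bSigma_T}^2$ bound on the conditional variance with a pre-computed worst-case bound on $|\catonii[\bv]|$ coming from $|X_t|\le\|\bv\|_2/\sigma_{\min}^2\cdot\|\bthetast\|_2$.
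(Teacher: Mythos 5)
Your computation of the conditional moments of $X_t$ is correct and matches the paper, as does the final bookkeeping that trades $\alpha\|\bv\|_{\bSigma_T}^2/T$ against $\log(c_T/\delta)/(\alpha T)$. However, the proposal hinges on a misreading of the statement: you assert that ``$\alpha$ cannot be taken deterministic, so a uniform-in-$\alpha$ martingale concentration is required,'' but the lemma explicitly requires $\alpha$ to be a \emph{fixed, deterministic} parameter. The scalar $\gamma \ge 1$ is merely bookkeeping: since $\|\bv\|_{\bSigma_T}$ is random and $\alpha$ is not, $\gamma$ is \emph{defined} by the identity $\alpha = \min\{\gamma\sqrt{\log(c_T/\delta)}/\|\bv\|_{\bSigma_T},\alphamax\}$ and merely records how far the fixed $\alpha$ is above the ideal Catoni scale on the realized data. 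Because of this, the paper's proof of \Cref{lem:catoni2} performs \emph{no} union bound over $\alpha$ at all; it applies a martingale Catoni bound (\Cref{lem:martingale_catoni}) once, with the given deterministic $\alpha$, and then rewrites the resulting $\alpha$-dependent error in terms of $\gamma$ and $\alphamax$. The covering over $\alpha$ (specifically a geometric grid $\alpharound \in \cAround$) that you propose is indeed needed, but it belongs to the \emph{next} stage of the argument (\Cref{lem:catoni_union} / \Cref{cor:catoni}), where the Catoni parameter is genuinely chosen data-dependently via $\bSigma_T$.

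The actual technical obstacle in proving \Cref{lem:catoni2} — and the reason the paper cannot just quote the martingale Catoni bound of \cite{wei2020taking} — is a different source of randomness: the conditional means $\zeta_t[\bv] = \sigma_t^{-2}\bv^\top\bphi_t\bphi_t^\top\bthetast$ and the variance proxy $V$ are themselves random (they depend on the realized $\bphi_t,\sigma_t$). This makes the critical root $z_0$ of the Catoni comparison function $g(z)$ random, and \Cref{lem:martingale_catoni} handles this by a discretization of $z$ over a deterministic interval $[-\bar\zeta - 1/\alpha, \bar\zeta + 1/\alpha]$ together with Lipschitz estimates on $\fcatoni$ and $g$; it is exactly this $z$-cover, not an $\alpha$-cover, that produces the factor $c_T = 16T(1+\|\bv\|_2\|\bthetast\|_2\alphamax/\sigmamin^2)^2$. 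Relatedly, your sketch of ``setting $z = \catonii[\bv]$ to zero out the left-hand sum'' is not how this argument runs: one bounds $\fcatoni$ at the random root $z_0$ of $g$ and then uses monotonicity of $\fcatoni$ to conclude $\catonii \le z_0$, which is the step that makes the $z$-cover unavoidable. If you want your proposal to prove the stated lemma, the fix is to drop the $\alpha$-cover entirely, keep the $z$-cover, and observe that the resulting bound $|\catonii[\bv]-\zeta[\bv]| \le 2\alpha\|\bv\|_{\bSigma_T}^2/T + 2\log(c_T/\delta)/(\alpha T)$ then yields the claim by substituting $\alpha \le \gamma\alpha_0$ and $\alpha = \min\{\gamma\alpha_0,\alphamax\}$.
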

Note that the introduction of the slack parameter $\gamma \ge 1$ accounts for the fact that $\alpha$ is assumed to be chosen deterministically, while $\bSigma_T$ is random.  \Cref{lem:catoni2} shows that we can apply the Catoni estimator to estimate $\bthetast$ in a particular direction, with estimation error scaling only with an upper bound on $\Exp[ \eta_t^2 | \cF_{t-1}]$ and independent of other properties of $\eta_t$, such as its magnitude. This is in contrast to Bernstein-style bounds which exhibit lower-order terms scaling with the absolute magnitude of $\eta_t$. \Cref{lem:catoni2} serves as a building block for our subsequent estimation bounds, where the choice of finite $\alphamax$ serves a useful technical purpose.

\subsection{Self-Normalized Catoni Inequality }
Next, we bootstrap \Cref{lem:catoni2} into a full-fledged self-normalized inequality for heteroscedastic noise. To do so, we need to address two technical points:
\begin{itemize}
	\item The ideal choice of $\alpha$ (for which $\gamma$ is close to $1$) is not deterministic, but data-dependent. 
	\item To estimate $\bthetast$ in direction $\bv$, we would like to consider $\catonii[\bvtil]$, where $\bvtil = T \bSigma_T^{-1} \bv$, since then $\frac{1}{T}\bvtil^\top \bSigma_T \cdot\btheta_{\star} = \bv^\top \bthetast$. However, this choice of $\bv$ introduces correlations between $\bv$ and our observations $\{ (\bphi_t,y_t,\sigma_t) \}_{t=1}^T$, which prevents us from applying \Cref{lem:catoni2} directly.
\end{itemize}
We adress both via a uniform-convergence-style argument and argue that a bound of the form given in \Cref{lem:catoni2} holds for \emph{all} $\bv$ simultaneously. This requires a subtle argument to bound the sensitivity of the Catoni estimator, given in \Cref{app:Catoni_perturb}. With this bound in hand, we establish the following \emph{truly heteroscedastic self-normalized concentration inequality}, the formal statement of \Cref{prop:informal:self_norm:no_function_approx} in the introduction.
\begin{cor}[Self-Normalized Heteroscedastic Catoni Estimation]\label{cor:catoni_self_norm_no_function} 
Consider the setting of \Cref{defn:heavy_mart_reg}, and suppose that with probability $1$,  $|\eta_t| \le \beta_{\eta} < \infty$ and $\sigma_t^2 \ge \sigma_{\min}^2 > 0$ for all $t$. For a fixed regularization parameter $\lambda > 0$, define the effective dimension
\begin{align*}
d_T := c \cdot d \cdot \logterm\left(T, \alpha_{\max}^2,  \lambda^{-1},\sigma_{\min}^{-2},\beta_{\eta}, \| \bthetast \|_2 \right).
\end{align*}
Let $\catonii\left[T \bLambda_T^{-1} \bv\right]$ denote the Catoni estimate applied to $(X_t)_{t=1}^T$ and parameter $\alpha$ given by
\begin{align*}
X_t = T \bv^\top \bLambda_T^{-1} \bphi_t y_t / \sigma_t^2, \quad \alpha =   \min \left \{ \sqrt{ \|T \bLambda_T^{-1} \bv \|_{\bSigma_T}^{-2} \cdot (d_T  + \log 1/\delta)},\, \alphamax \right \} 
\end{align*}
and for $\bLambda_T = \lambda I + \sum_{t=1}^T \sigma_{t}^{-2}\bphi_t \bphi_t^\top$. Then, as long as $T \ge 5 ( \log 1/\delta + d_T)$, with probability at least $1-\delta$, for all $\bv \in \Ball^d$ simultaneously, 
\begin{align}\label{eq:cor_cat_bound_one}
\left| \catonii\left[T \bLambda_T^{-1} \bv\right] - \bv^\top \btheta_{\star} \right | \le   5 \|\bv\|_{\bLambda_T^{-1}} \cdot \left ( \sqrt{\log 1/\delta + d_T} + \sqrt{\lambda} \| \bthetast \|_2 \right ) + \frac{3 (\log\frac{1}{\delta}+d_T)}{\alphamax T} .
\end{align}
\end{cor}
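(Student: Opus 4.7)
The plan is to bootstrap the fixed-direction, deterministic-$\alpha$ guarantee of \Cref{lem:catoni2} to the uniform bound claimed in the corollary via three ingredients: (i) an $\ell_2$-net over the random direction $\bu := T\bLambda_T^{-1}\bv$, (ii) a geometric grid over the random ideal $\alpha$, and (iii) a sensitivity analysis of the Catoni estimator to absorb the resulting discretization errors. Since $\bLambda_T \succeq \lambda I$, every $\bu$ of this form with $\bv \in \Ball^d$ satisfies $\|\bu\|_2 \le T/\lambda$; I take $\calN_\epsilon$ to be an $\epsilon$-net of $\Ball^d(T/\lambda)$ of size at most $(3T/(\lambda\epsilon))^d$. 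For the Catoni scale, I discretize $(0,\alphamax]$ on a factor-of-two geometric grid $\mathcal{G}$ of size $O(\log(T\alphamax))$. Union-bounding \Cref{lem:catoni2} over $\calN_\epsilon \times \mathcal{G}$ rescales the failure probability by $|\calN_\epsilon|\cdot|\mathcal{G}|$, which inflates $\log(c_T/\delta)$ by an additive $d\,\logterm(T,\lambda^{-1},\epsilon^{-1}) + \log\log(T\alphamax)$ that is absorbed into the definition of $d_T$.

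For each fixed pair $(\bu,\alpha) \in \calN_\epsilon \times \mathcal{G}$, \Cref{lem:catoni2} yields
\begin{align*}
\left|\catonii[\bu;\alpha] - \tfrac{1}{T}\bu^\top \bSigma_T \btheta_\star\right| \le (2+2\gamma)\|\bu\|_{\bSigma_T}\tfrac{\sqrt{\log(c_T/\delta)}}{T} + \tfrac{2\log(c_T/\delta)}{\alphamax T},
\end{align*}
with $\gamma \le 2$ because the grid matches the random ideal $\alpha^\star$ within a factor of two. Specializing to $\bu = T\bLambda_T^{-1}\bv$ and using $\bSigma_T \preceq \bLambda_T$, so that $\bLambda_T^{-1}\bSigma_T \bLambda_T^{-1} \preceq \bLambda_T^{-1}$, gives $\|T\bLambda_T^{-1}\bv\|_{\bSigma_T}/T \le \|\bv\|_{\bLambda_T^{-1}}$; this produces the main $\|\bv\|_{\bLambda_T^{-1}}\sqrt{\log 1/\delta + d_T}$ term. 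Next, using the identity $\bLambda_T^{-1}\bSigma_T = I - \lambda\bLambda_T^{-1}$,
\begin{align*}
\tfrac{1}{T}(T\bLambda_T^{-1}\bv)^\top \bSigma_T \btheta_\star = \bv^\top\btheta_\star - \lambda\, \bv^\top \bLambda_T^{-1}\btheta_\star,
\end{align*}
and the ridge-bias obeys $|\lambda\, \bv^\top \bLambda_T^{-1}\btheta_\star| \le \sqrt{\lambda}\,\|\bv\|_{\bLambda_T^{-1}}\|\btheta_\star\|_2$ by Cauchy-Schwarz in the $\bLambda_T^{-1}$-inner-product combined with $\lambda\bLambda_T^{-1} \preceq I$. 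Together these supply the first parenthetical and the $\sqrt{\lambda}\|\btheta_\star\|_2$ term of \eqref{eq:cor_cat_bound_one}.

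The remaining task, and the main obstacle, is to extend from $(\bu,\alpha)\in\calN_\epsilon\times\mathcal{G}$ to the true data-dependent pair $(T\bLambda_T^{-1}\bv,\alpha^\star)$. The Catoni estimator is defined only implicitly as the root of $\fcatoni(z;X_{1:T},\alpha)=0$, so perturbing $\bu$ (hence every $X_t$) or $\alpha$ does not translate into a perturbation of $\catonii$ transparently. This is precisely the content of the sensitivity analysis deferred to \Cref{app:Catoni_perturb}, which in essence lower-bounds $|\partial_z \fcatoni|$ at the root (using strict monotonicity of $\psicat$) and upper-bounds the Jacobian of $\fcatoni$ in $(\bu,\alpha)$, then invokes the implicit function theorem to convert these into a Lipschitz-in-$(\bu,\alpha)$ bound on $\catonii$. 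Choosing $\epsilon$ polynomially small in $T$ and similarly for the grid spacing drives the discretization error down to $O(1/(\alphamax T))$, which merges into the final term $3(\log 1/\delta + d_T)/(\alphamax T)$ on the right-hand side. Tracking constants across the three contributions recovers the factor of $5$ in the statement.
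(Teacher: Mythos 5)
Your proposal takes essentially the same route the paper does: an $\ell_2$-net over the rescaled directions $\bu = T\bLambda_T^{-1}\bv$, a geometric grid over the data-dependent $\alpha$, a union bound of \Cref{lem:catoni2} over the product cover, a perturbation/sensitivity lemma for the Catoni root to bridge from cover points to the true $(\bu,\alpha)$, and the identity $\bLambda_T^{-1}\bSigma_T = I - \lambda\bLambda_T^{-1}$ plus Cauchy--Schwarz in the $\bLambda_T^{-1}$-weighted inner product to isolate the ridge-bias term $\sqrt{\lambda}\|\bv\|_{\bLambda_T^{-1}}\|\bthetast\|_2$. The paper obtains \Cref{cor:catoni_self_norm_no_function} as a specialization of the more general \Cref{cor:catoni} (proved via \Cref{lem:catoni_union}), which additionally covers a function class $\Fclass$; your direct argument for the $\Fclass = \{0\}$ case is a clean simplification that loses nothing for this statement.

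One small quantitative gap: a factor-of-two grid on $\alpha$, rounded up, only guarantees $\gamma \le 2$, and the $(2+2\gamma)$ prefactor from \Cref{lem:catoni2} then gives $6$, not the claimed $5$. The paper instead uses a $(1+\epsround)$-factor grid with $\epsround \le 1/4$, giving $\gamma \le 5/4$ and $(2+2\gamma) \le 4.5 \le 5$. Replacing your factor-of-two grid with a $(1+1/4)$-factor grid (which inflates the cardinality only logarithmically and is still absorbed into $d_T$) recovers the stated constant. You should also make explicit that the grid values are \emph{deterministic} and that, on the good event, you select the smallest grid point that dominates the random $\alpha$, so that the deterministic-$\alpha$ hypothesis of \Cref{lem:catoni2} is met while $\gamma \ge 1$ is preserved.
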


In contrast to \Cref{lem:martingale_catoni}, \Cref{cor:catoni_self_norm_no_function} only adds the requirement that $\eta_t$ and $\sigma_{\min}^2$ satisfy probability-one upper and lower bounds, respectively, which enter only logarithmically into our final bound\footnote{In the case when the noise is unbounded, note that, by Chebyshev's inequality, one can just take $\beta_u \le \sqrt{\max_{t} \sigma_t^2/\delta}$, at the expense of at most $\delta > 0$ failure probability, whilst maintaining a logarithmic dependence on $1/\delta$ in the final bound.}. Similarly, the parameter $\alpha_{\max}$ also enters at most logarithmically into the final bound, and hence can also be chosen suitably large to make the second term in \Cref{eq:cor_cat_bound_one} suitably small. Intuitively, $\alpha_{\max}$ ensures that the Catoni estimator is sufficiently robust to perturbation, which is necessary for our uniform convergence arguments.

\Cref{cor:catoni_self_norm_no_function} is a special case of a more general result, \Cref{cor:catoni}, whose statement and proof we detail in the following subsection.  
Up to logarithmic factors our guarantee matches that of \cite{abbasi2011improved}. The key difference is that, whereas  \cite{abbasi2011improved} considers the a norm in a covariance \emph{not weighted by the variance} $ \|\bv\|_{\widetilde{\bLambda}_T^{-1}}$ with $\widetilde{\bLambda}_T := \lambda I +  \sum_{t=1}^T \bphi_t\bphi_t^\top$, our guarantee uses the weighted-covariance norm $\bLambda_T := \lambda I + \sum_{t=1}^T \sigma_{t}^{-2}\bphi_t\bphi_t^\top$. It is clear that the latter is much \emph{larger} when $\sigma_t^2$ are small, leading to a smaller error bound. 

Our bound is similar in spirit to another self-normalized heteroscedastic inequality recently provided by \cite{zhou2020nearly}. The key distinction is that the Catoni estimator lets us obtain estimates that scale with the standard deviation of the noise, $\sigma$, and only logarithmically with the absolute magnitude, $\beta_\eta$. This is in contrast to the bound obtained in \cite{zhou2020nearly}, which scale only with $\sigma$ in the leading order term, but scales with $\beta_\eta$ in the lower order term. In situations where $\beta_\eta$ is large, which will be the case when deriving first-order bounds for linear RL, this scaling could be significantly worse. To make this concrete, the following example illustrates \Cref{cor:catoni_self_norm_no_function} on a simple problem.

\begin{exmp}[Regression with Bounded Noise]\label{exmp:catoni}
Consider the linear regression setting where we receive observations
\begin{align*}
y_t = \innerb{\bphi_t}{\bthetast} + \eta_t
\end{align*}
for some $\cF_{t-1}$-measurable $\bphi_t$, $\| \bthetast \|_2 \le 1$, and noise $\eta_t$ satisfying $\Exp[\eta_t | \cF_{t-1}] = 0$, $\Var[\eta_t \mid \cF_{t-1}] = \sigma^2$, and $| \eta_t | \le \beta_\eta$ almost surely for some $\beta_\eta$. Assume $\sigma^2$ is known and that $|\innerb{\bthetast}{\bphi_t}| \le \epsilon$ for all $t$. Define $\sigma_t^2 = 2(\epsilon^2+\sigma^2)$ for all $t$ and note that
\begin{align*}
\sigma_t^2 = 2(\epsilon^2 + \sigma^2) \ge 2 ( \innerb{\bthetast}{\bphi_t}^2 + \Exp[\eta_t^2 \mid \cF_{t-1}]) = 2 \Exp[y_t^2 \mid \cF_{t-1}] .
\end{align*}
Now take some $\bv \in \R^d$ and consider applying the Catoni estimator to the data 
\begin{align*}
X_t = T \bv^\top \bLambda_T^{-1} \bphi_t y_t / (2\epsilon^2 + 2 \sigma^2), \quad \bLambda_T = \frac{1}{2\epsilon^2 + 2 \sigma^2}  \left (  I + \sum_{t=1}^T  \bphi_t \bphi_t^\top \right )
\end{align*}
and with $\alpha$ set as in \Cref{cor:catoni_self_norm_no_function}. We can then apply \Cref{cor:catoni_self_norm_no_function} to get that, with probability $1-\delta$,
\begin{align*}
\left | \catonii \left [T \bLambda_T^{-1} \bv \right ]  - \bv^\top \bthetast \right | \lesssim \| \bv \|_{\bLambda_T^{-1}} \cdot  \sqrt{ \log 1/\delta + d_T}  + \frac{\log 1/\delta + d_T}{\alphamax T} .
\end{align*}
Note that, given our setting of $\bLambda_T$, we have
\begin{align*}
\| \bv \|_{\bLambda_T^{-1}} =  \sqrt{2\epsilon^2 + 2\sigma^2} \| \bv \|_{(I + \bSigma_T)^{-1}}
\end{align*}
and we can set $\alphamax = T$, $\sigminb = 1/T$, so $d_T = \cO(d \cdot \logterm(T,\sigma^2,\beta_\eta) )$. We conclude that
\begin{align*}
\left | \catonii \left [T \bLambda_T^{-1} \bv \right ]  - \bv^\top \bthetast \right | & \lesssim \| \bv \|_{(I + \bSigma_T)^{-1}} \cdot (\epsilon + \sigma) \sqrt{ \log 1/\delta + d \cdot \logterm(T,\sigma^2,\beta_\eta)}  + \frac{\log 1/\delta + d \cdot \logterm(T,\sigma^2,\beta_\eta)}{T^2} \\
& = \cOtil \left ( \| \bv \|_{(I + \bSigma_T)^{-1}} \cdot (\epsilon+\sigma) \sqrt{d} \right ) .
\end{align*}
By \Cref{cor:catoni_self_norm_no_function} this holds for all $\bv$ simultaneously. 

In contrast to this, using the same regularization as above, the Bernstein self-normalized bound of \cite{zhou2020nearly} will scale as (hiding logarithmic terms),
\begin{align*}
\left | \bv^\top ( \bthetahat - \bthetast) \right | \le\cOtil \left ( \| \bv \|_{(I + \bSigma_T)^{-1}} \cdot \left ( \sigma \sqrt{d } + \beta_\eta  \right ) \right ) 
\end{align*} 
where $\bthetahat$ denotes the least-squares estimate.
\end{exmp}

\Cref{exmp:catoni} could model, for example, a linear bandit problem where the value of the optimal arm is 0 (which is always achievable by shifting the problem), and we are in the regime where we are playing near-optimally, so that $\innerb{\bthetast}{\bphi_t} \approx 0$. In this regime, $\epsilon \approx 0$, so the dominant scaling will simply be $\cOtil(\| \bv \|_{(I + \bSigma_T)^{-1}} \cdot \sigma \sqrt{d})$.

\subsection{Self-Normalized Catoni Estimation with Function Approximation}

To apply our self-normalized bound in the linear RL setting, we need to allow for regression targets which are potentially \emph{correlated} with the features $\bphi_t$ in a verify specific way. More precisely, the targets $y_t$ take the form $y_t = \langle \bust, \bphi_t \rangle + \fst(\bphi'_t)$ where $\bphi'_t$ is a $\cF_t$-measurable feature vector, and $\fst$ is a function which may \emph{depend on all the data} $\{ (\bphi_t,y_t,\sigma_t) \}_{t=1}^T$. The function $\fst$ is therefore \emph{not} $\cF_{t}$-measurable, and so $y_t$ does not satisfy the condition of \Cref{defn:heavy_mart_reg}. To handle these challenges, we introduce the following regression setting, which specifies the precise conditions needed for our most general result.

\begin{restatable}[Heteroscedastic Regression with Function Approximation]{defn}{regfunapprox}\label{defn:regression_with_function_approx} Given dimension parameters $d,d',p \in \N$, scaling parameters $H,\beta_u,\beta_{\mu} > 0$, and  minimal varaince $\sigma_{\min}^2$, the heteroscedastic regression with function approximation setting is defined as follows.  Let $(\cF_{t})_{t\ge 0}$ be a filtration, and consider a sequence of random vectors $ (\bphi_t, \bphi_t')_{t=1}^T$ and random scalar oututs $(y_t)_{t=1}^T$ and noises $(\eta_t)_{t=1}^T$ and variance bounds $(\sigma_t^2)_{t=1}^T$ such that
\begin{itemize}
\item $\bphi_t \in \R^d$ is $\calF_{t-1}$-measurable,  $\bphi_t' \in \R^{d'}$ is $\calF_{t}$ measurable, and $\|\bphi_t\|_2,\|\bphi_t'\|_2 \le 1$. 
\item There exists a signed measure $\bmu$ over $\Ball^{d'}$ with total mass $\| |\bmu|(\Ball^{d'}) \|_2 \le \beta_{\mu}$ such that, for all $t$, the conditional distribution of $\bphi_t'$ given $\cF_{t-1}$ ensures that, for all bounded functions $f$, 
\begin{align}\label{eq:cat_union_lin_exp}
\Exp\left[f(\bphi_t') \mid \cF_{t-1}\right] =  \innerb{\bphi_t}{\int f(\bphi') \rmd \bmu(\bphi')}.
\end{align}
\item $|\eta_t| \le \beta_{\eta}$ with probability 1, and $\Exp[\eta_t \mid \cF_{t-1}] =0$. 
\item There exist a parameter $\bust \in \R^d$ with $\|\bust\|_2 \le \beta_{u}$, a function class $\Fclass$ of functions $f:\R^{d'}\to [-H,H]$, and a function $\fst \in \Fclass$ which may be random and dependent on  $ (\bphi_t, \bphi_t')_{t=1}^T$  such that, for all $t$, $y_t = \langle \bust, \bphi_t \rangle + \fst(\bphi'_t) + \eta_t$. Thus,
\begin{align*}
\Exp[y_t | \cF_{t-1}] = \innerb{\bphi_t}{\bthetast} \quad \text{for} \quad \bthetast = \bust + \int  \fst(\bphi') \rmd \bmu(\bphi') 
\end{align*}
and $\| \bthetast \|_2 \le \beta_u + H \beta_\mu$.
\item $\sigma_t$ are uniformly lower bounded by $\sigma_{\min}$, finite, $\cF_{t-1}$ measurable, and satisfy
\begin{align}\label{eq:cat_union_sighat}
\Exp \left [ \left(  \inner{\bphi_t}{\bust} + \fst(\bphi'_t)   + \eta_t \right)^2 \mid \cF_{t-1} \right ] \le \frac{1}{2}\sigma_t^2 .
\end{align}
\item The covering numbers of $\Fclass$ are \emph{parameteric}, in the sense that there exists a $p \in \N$ and $R > 0$ such that, for $\epsilon > 0$, the $\epsilon$-covering number of $\Fclass$ in the metric $\dist_{\infty}(f,f') := \sup_{\bphi' \in \Ball^{d'}} |f(\bphi') - f'(\bphi')|$ is bounded as $\covnum(\Fclass,\dist_{\infty},\epsilon) \le p\log(1+\frac{2 R}{\epsilon})$, where $\covnum(\Fclass,\dist_{\infty},\epsilon)$ is the $\epsilon$-covering number of $\Fclass$ in the norm $\dist_{\infty}$. 
\end{itemize}
\end{restatable}

Note that \Cref{defn:regression_with_function_approx} strictly generalizes \Cref{defn:heavy_mart_reg} since we can always choose $\fst = 0$ to be a fixed function, and are left only with the noise $\eta_t$. For this most general setting, we attain the following result:

\begin{thm}[Heteroscedastic Catoni Estimation with Function Approximation]\label{cor:catoni} 
Assume that we are in the setting of \Cref{defn:regression_with_function_approx}. Define 
\begin{align*}
d_T := c \cdot (p + d) \cdot \logterm\left(T, \alpha_{\max}^2,  \lambda^{-1},\sigma_{\min}^{-2},\beta_{\mu},\beta_u,\beta_\eta,R,H\right).
\end{align*}
Let $\catonii\left[T \bLambda_T^{-1} \bv\right]$ denote the Catoni estimate applied to $(X_t)_{t=1}^T$ and parameter $\alpha$ given by
\begin{align*}
X_t = T \bv^\top \bLambda_T^{-1} \bphi_t y_t / \sigma_t^2, \quad \alpha =   \min \left \{ \sqrt{  \| T \bLambda_T^{-1} \bv \|_{\bSigma_T}^{-2} \cdot (d_T +  \log 1/\delta)},\, \alphamax \right \} 
\end{align*}
and for $\bLambda_T = \lambda I + \sum_{t=1}^T \sigma_{t}^{-2}\bphi_t \bphi_t^\top$. Then, as long as $T \ge 6 ( \log 1/\delta + d_T)$, with probability at least $1-\delta$, for all $\bv \in \Ball^d$ simultaneously, 
\begin{align}\label{eq:cor_cat_bound}
\left| \catonii\left[\bLambda_T^{-1} \bv\right] - \bv^\top \btheta_{\star} \right | \le   5 \|\bv\|_{\bLambda_T^{-1}} \cdot \left ( \sqrt{\log 1/\delta + d_T} + \sqrt{\lambda} \| \bthetast \|_2 \right ) + \frac{3 (\log\frac{1}{\delta}+d_T)}{\alphamax T} .
\end{align}
\end{thm}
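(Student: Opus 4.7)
The plan is to reduce \Cref{cor:catoni} to the fixed-direction, fixed-parameter Catoni bound of \Cref{lem:catoni2} via a three-fold union bound that simultaneously discretizes the direction $\bv \in \Ball^d$, the parameter $\alpha$, and the random function $\fst$ through an $\epsilon$-net of $\Fclass$. The core complication is that (i) $\fst$ is not $\cF_{t-1}$-measurable---it may depend on the entire sample---so the targets $y_t$ are not a martingale difference sequence against the natural filtration; (ii) the test vector $T\bLambda_T^{-1}\bv$ and the chosen $\alpha$ are data-dependent; and (iii) \Cref{lem:catoni2} delivers an estimate of $\tfrac{1}{T}\bv^\top \bSigma_T\bthetast$, not $\bv^\top\bthetast$ itself. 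Each of these has to be handled separately and then stitched back together.

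\textbf{Reducing to a martingale regression.} For each $f$ in an $\epsilon_f$-cover $\Fclass_{\epsilon_f}$ of $\Fclass$ (size $\le p\log(1+2R/\epsilon_f)$), I will consider the surrogate data $y_t^{(f)} := \langle\bust,\bphi_t\rangle + f(\bphi'_t) + \eta_t$. For fixed $f$, the identity \eqref{eq:cat_union_lin_exp} gives $\Exp[y_t^{(f)} \mid \cF_{t-1}] = \langle \btheta^{(f)},\bphi_t\rangle$ with $\btheta^{(f)} := \bust + \int f(\bphi')\,\rmd\bmu(\bphi')$, so that $(y_t^{(f)})$ now fits \Cref{defn:heavy_mart_reg}, and \eqref{eq:cat_union_sighat} together with $\|f-\fst\|_\infty \le \epsilon_f$ ensures that the given $\sigma_t^2$ remains a valid conditional-second-moment upper bound (up to slack absorbed into constants). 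I then lay down a $\kappa$-cover $\calN_\kappa$ of $\Ball^d$ (cardinality $(3/\kappa)^d$) and a geometric grid $\calA$ of candidate $\alpha$'s in $[\alphamax/T^q, \alpha_{\max}]$ of only logarithmically many points. For each triple $(f,\bv_0,\alpha_0)$ I will invoke \Cref{lem:catoni2} with deterministic $\alpha_0$ and slack $\gamma = \cO(1)$ chosen so that $\alpha_0 \approx \sqrt{d_T+\log(1/\delta)}/\|\bv_0\|_{\bSigma_T}$ up to the grid spacing. A union bound over all triples---whose log contributes the $(p+d)\cdot\logterm(\cdot)$ factor in $d_T$---gives the desired deviation bound at each grid point with probability $1-\delta$.

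\textbf{Sensitivity and final assembly.} The next step is to pass from a grid-point estimate to $\catonii[T\bLambda_T^{-1}\bv]$ for arbitrary $\bv$, $\alpha$, and the true $\fst$. This will invoke a sensitivity bound for the Catoni root: because $\psicat$ has uniformly bounded derivative, perturbing the inputs $X_t$, the weight vector, and the parameter $\alpha$ shifts the root of $\fcatoni$ by an amount controlled by the perturbation times a factor of $1/\alpha$. Using the cap $\alpha \le \alphamax$, small changes in $f$ (of size $\epsilon_f$), $\bv$ (of size $\kappa$), and $\alpha$ (by a factor of $1\pm 1/T$) translate into estimator shifts of order $(\epsilon_f+\kappa+1/T)/\alphamax$, which become negligible for $\epsilon_f,\kappa = \poly(1/T)$ and contribute only logarithmically into $d_T$. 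Finally, to pass from $\tfrac{1}{T}\bv_0^\top\bSigma_T\bthetast$ to $\bv^\top\bthetast$, I substitute $\bv_0 = T\bLambda_T^{-1}\bv$, use $\bLambda_T^{-1}\bSigma_T = I - \lambda\bLambda_T^{-1}$ and Cauchy--Schwarz to generate the bias term $\sqrt{\lambda}\|\bthetast\|_2\cdot\|\bv\|_{\bLambda_T^{-1}}$, and bound $\|T\bLambda_T^{-1}\bv\|_{\bSigma_T}\le T\|\bv\|_{\bLambda_T^{-1}}$ to convert the leading concentration term into $\|\bv\|_{\bLambda_T^{-1}}\sqrt{d_T+\log(1/\delta)}$, matching \eqref{eq:cor_cat_bound}.

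\textbf{Main obstacle.} The technical heart will be the Catoni sensitivity analysis. Because $\catonii$ is defined implicitly as the root of $\fcatoni$, bounding its perturbation under simultaneous changes of data, direction, and parameter requires implicit-function/monotonicity estimates on $\fcatoni$ and careful control of its derivative at the root. The cap $\alpha_{\max}$ is essential here: it prevents $\psicat(\alpha\cdot)$ from being arbitrarily steep and thus guarantees that a polynomially fine cover suffices, at the price of only logarithmic dependence on $\alpha_{\max}$ in $d_T$. Coordinating this sensitivity with the three covers so that all perturbation errors are absorbed into the lower-order $(d_T+\log 1/\delta)/(\alphamax T)$ term in \eqref{eq:cor_cat_bound}---while also ensuring that the variance slack introduced by replacing $\fst$ with a cover element does not bloat the leading $\|\bv\|_{\bLambda_T^{-1}}$ term---is the main piece of work.
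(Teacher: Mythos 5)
Your proposal follows essentially the same route as the paper's proof, which passes through \Cref{lem:catoni_union}: a three-fold union bound over a cover of $\Fclass$, a deterministic net of directions, and a geometric grid of Catoni parameters, combined with a perturbation analysis of the Catoni root (\Cref{lem:catoni_perturb}) and the final conversion of $\tfrac{1}{T}\bvtil^\top\bSigma_T\bthetast$ to $\bv^\top\bthetast$ via $\bLambda_T^{-1}\bSigma_T = I - \lambda\bLambda_T^{-1}$. You correctly identify the sensitivity analysis of the Catoni estimator as the technical heart and correctly explain the role of $\alphamax$ in making a polynomially fine cover suffice.

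Two imprecisions worth flagging, both of which the paper handles more carefully. First, you cover $\Ball^d$ in $\bv$-space and later ``substitute $\bv_0 = T\bLambda_T^{-1}\bv$''; but \Cref{lem:catoni2} needs a \emph{deterministic} direction, and $T\bLambda_T^{-1}\bv$ is random (through $\bLambda_T$) with norm up to $T/\lambda \gg 1$, so a cover of the unit ball in $\bv$-space does not supply deterministic net points for the actual Catoni directions. The fix --- which is what the paper does --- is to cover the direction space $\Ball^d(T/\lambda)$ in the $\bvtil$-variable directly, so every realization $T\bLambda_T^{-1}\bv$ is within $\kappa$ of a deterministic net point; this costs only a $\log(T/\lambda)$ factor that $d_T$ already absorbs. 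Second, when you replace $\fst$ by a cover point $f_0$, you still need the \emph{random} event $\Exp[(\langle\bphi_t,\bust\rangle + f_0(\bphi'_t) + \eta_t)^2 \mid \cF_{t-1}] \le \sigma_t^2$ to hold in order to apply \Cref{lem:catoni2} to the surrogate data; the factor-$\tfrac12$ slack deliberately built into \eqref{eq:cat_union_sighat} is what makes this go through (\Cref{lem:variance_condition} in the paper), not a generic ``slack absorbed into constants'' --- you cannot simply inflate $\sigma_t^2$, since that would change $\bSigma_T$, $\bLambda_T$, and hence the entire self-normalization. Both fixes are local and do not alter your overall strategy.
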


A couple remarks are in order. First, \Cref{cor:catoni_self_norm_no_function} is just the special case obtained by setting $\fst(\cdot) = 0$ to be the zero function, and sole element of $\Fclass = \{\fst\}$. Second, as will be observed, the assumptions in \Cref{defn:regression_with_function_approx} precisely line up with those required for linear RL. The proof of \Cref{cor:catoni}, detailed in \Cref{app:self_norm_catonii}, follows by applying \Cref{lem:catoni2} and carefully union bounding over the parameter space. It invokes a novel perturbation analysis of the Catoni estimator, given in \Cref{app:Catoni_perturb}, which may be of independent interest. Again, we remark $\alphamax$ can be chosen suitably large that estimation error of the Catoni estimator scales primarily as $\|\bv\|_{\bLambda_T^{-1}} \cdot \sqrt{ \log (1/\delta) + d_T}$.

\subsection{Linear Approximation to the Catoni Estimator}\label{sec:lin_approx_cat_sketch}
In the linear RL setting, we will rely on the Catoni estimator to form an optimistic estimate, $Q_h^k(s,a)$, of $\Qst_h(s,a)$. To construct this estimator, we will set $y_t =  V_{h+1}^k(s_{h+1,t})$---thus, $\fst$ will itself be an optimistic $Q$-value estimate. In order to apply \Cref{cor:catoni} directly to the linear RL setting, we therefore need to cover the space of \emph{all} Catoni estimates. It is not clear how to do this in general without covering all $\cO(d T)$ parameters the Catoni estimator takes as input, which will result in suboptimal $K$ dependence in the final regret bound.

To overcome this challenge, we make the critical observation that \eqref{eq:cor_cat_bound} implies that, up to some tolerance, \emph{there exists a linear function which approximates $\catonii [T \bLambda_T^{-1} \bv ]$ for all $\bv$}, namely $\innerb{\bv}{\bthetast}$. As we do not know $\bthetast$, we cannot compute this function directly. However, the following result shows that we can exploit the fact that there \emph{exists} such a linear approximation in order to come up with our own linear approximation:

\begin{lem}\label{lem:cat_lin_approx}
Let $\catonii[\bLambda^{-1} \bv]$ denote a Catoni estimate, as defined in \Cref{lem:catoni2}. Assume that, for all $\bv \in \cV$ for some $\cV \subseteq \R^d$, $\bm{0} \not\in \cV$, we have
\begin{align}\label{eq:lin_approx_cat_good}
| \catonii[\bLambda^{-1} \bv] - \inner{\bv}{\bthetast} | \le  C_1 \| \bv \|_{\bLambda^{-1}} + C_2/T
\end{align}
for some $C_1,C_2$. Set
\begin{align}\label{eq:cat_find_lin_approx}
\bthetahat = \argmin_{\btheta} \sup_{\bv \in \cV} \frac{| \innerb{\btheta}{\bv} - \catonii[\bLambda^{-1} \bv] |}{\| \bv \|_{\bLambda^{-1}} }.
\end{align}
Then, for all $\bv \in \cV$, we have 
\begin{align*}
| \innerb{\bthetahat}{\bv} - \catonii[\bLambda^{-1} \bv] | \le C_1 \| \bv \|_{\bLambda^{-1}} + C_2/T, \qquad | \innerb{\bthetahat}{\bv} - \inner{\bv}{\bthetast} | \le 2 C_1 \| \bv \|_{\bLambda^{-1}} + 2 C_2/T.
\end{align*}
\end{lem}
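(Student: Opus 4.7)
Both inequalities follow by combining the optimality of $\bthetahat$ in the variational problem \eqref{eq:cat_find_lin_approx} with a triangle inequality. The key observation is that the hypothesis \eqref{eq:lin_approx_cat_good} exhibits $\bthetast$ itself as a witness showing that the minimum of \eqref{eq:cat_find_lin_approx} is not too large: the linear map $\bv \mapsto \innerb{\bthetast}{\bv}$ already approximates the Catoni estimates $\catonii[\bLambda^{-1}\bv]$ well, and so $\bthetahat$ must do at least as well.

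For the first claim, I would begin by noting that $\bthetast$ is feasible for \eqref{eq:cat_find_lin_approx}, so that by optimality of $\bthetahat$,
\begin{align*}
\sup_{\bv \in \cV} \frac{|\innerb{\bthetahat}{\bv} - \catonii[\bLambda^{-1}\bv]|}{\|\bv\|_{\bLambda^{-1}}} \;\le\; \sup_{\bv \in \cV} \frac{|\innerb{\bthetast}{\bv} - \catonii[\bLambda^{-1}\bv]|}{\|\bv\|_{\bLambda^{-1}}}.
\end{align*}
By \eqref{eq:lin_approx_cat_good}, each ratio on the right is bounded by $C_1 + C_2/(T\|\bv\|_{\bLambda^{-1}})$. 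Unwinding this back into pointwise form then yields, for each $\bv \in \cV$, the target estimate $|\innerb{\bthetahat}{\bv} - \catonii[\bLambda^{-1}\bv]| \le C_1 \|\bv\|_{\bLambda^{-1}} + C_2/T$.

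The second claim is then immediate from the first via a one-line triangle inequality,
\begin{align*}
|\innerb{\bthetahat}{\bv} - \inner{\bv}{\bthetast}| \;\le\; |\innerb{\bthetahat}{\bv} - \catonii[\bLambda^{-1}\bv]| + |\catonii[\bLambda^{-1}\bv] - \inner{\bv}{\bthetast}|,
\end{align*}
controlling the first summand by the first claim and the second by the hypothesis \eqref{eq:lin_approx_cat_good}, and summing to produce the factor of $2$ in the constants.

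The only delicate point in this plan is the ``unwinding'' step that converts the sup-of-ratios inequality into the desired pointwise two-term inequality, since the two summands $C_1\|\bv\|_{\bLambda^{-1}}$ and $C_2/T$ scale differently in $\bv$. This is the main technical wrinkle: one must ensure the additive slack $C_2/T$ is preserved across the conversion rather than being inflated by the ratio between $\|\bv\|_{\bLambda^{-1}}$ and the infimum of $\|\bv'\|_{\bLambda^{-1}}$ over $\cV$. Once this translation is carefully justified, the rest of the proof is just the triangle inequality and carries no further structural difficulty.
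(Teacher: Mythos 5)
Your outline reproduces the paper's own proof: optimality of $\bthetahat$ with $\bthetast$ as a feasible witness, then a triangle inequality for the second claim. You correctly flag the ``unwinding'' step as the only nontrivial point, and your worry is justified: that step does not go through as written, and the paper's own write-up is sloppy at exactly this point --- it ends its inequality chain with $\sup_{\bv}(\cdot)\le C_1 + C_2/(T\|\bv\|_{\bLambda^{-1}})$, where $\bv$ has escaped the scope of the supremum, and then ``rearranges'' to the pointwise claim. What optimality together with \eqref{eq:lin_approx_cat_good} actually yields is
\begin{align*}
\sup_{\bv\in\cV}\frac{|\innerb{\bthetahat}{\bv}-\catonii[\bLambda^{-1}\bv]|}{\|\bv\|_{\bLambda^{-1}}}
\;\le\; \sup_{\bv\in\cV}\left(C_1 + \frac{C_2}{T\|\bv\|_{\bLambda^{-1}}}\right)
\;=\; C_1 + \frac{C_2}{T\,\inf_{\bv'\in\cV}\|\bv'\|_{\bLambda^{-1}}},
\end{align*}
so a fixed $\bv$ inherits the additive term $(C_2/T)\cdot\|\bv\|_{\bLambda^{-1}}/\inf_{\bv'}\|\bv'\|_{\bLambda^{-1}}$, not $C_2/T$; with $\cV=\Ball^d\setminus\{\bm{0}\}$ (the set used in \Cref{alg:catoni_regret}) the infimum is zero. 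So what you describe as a ``delicate point'' to be ``carefully justified'' is not a loose end that tidies itself up --- it is an open gap in your proposal and in the paper's proof alike. One repair is to normalize the variational objective by $C_1\|\bv\|_{\bLambda^{-1}}+C_2/T$ rather than by $\|\bv\|_{\bLambda^{-1}}$: plugging in $\bthetast$ then bounds the objective by $1$, and clearing denominators immediately gives the pointwise two-term bound at every $\bv$ --- though this changes the $\bthetahat$ defined in \eqref{eq:cat_find_lin_approx}.
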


Given this result, if we approximate our $Q$-functions by Catoni estimates, $\Exphat_h[V_{h+1}^k](s,a)$, instead of directly using $\Exphat_h[V_{h+1}^k](s,a)$ we can rely on a linear approximation to it, $\innerb{\what_h^k}{\bphi(s,a)}$. By \Cref{lem:cat_lin_approx}, this will be an accurate approximation for \emph{all} $s,a$. As we can easily cover the space of $d$-dimensional vectors, this allows us to cover the space of all of our $Q$-function estimates. As we will see, in practice we rely on optimistic $Q$ functions which also depend on some $\bLambda \succeq 0$, so we will ultimately choose $\Fclass$ in \Cref{defn:regression_with_function_approx} so that $p = \cO(d^2)$. 

Note that solving \eqref{eq:cat_find_lin_approx} is not computationally efficient in general, yet as we described in \Cref{sec:comp_eff_alg} and show in more detail in \Cref{sec:lin_approx_cat}, a linear approximation to a Catoni estimator can be found in a computationally efficient manner if we are willing to pay an extra factor of $\sqrt{d}$ in the approximation error.

\section{Regret Bound Proof Sketch}\label{sec:regret_sketch}
We turn now to applying the Catoni estimation results of \Cref{sec:catoni_summary} in the setting of linear RL. We defer the full proofs to \Cref{sec:regret_proofs}.

\subsection{Failure of Least Squares Estimation}\label{sec:ls_breaks}
We first describe in more detail why least squares estimation is insufficient to obtain first-order regret. Building on \cite{jin2020provably}, our goal in the RL setting will be to construct optimistic estimators, $Q_h^k(s,a)$, to the optimal value function, $\Qst_h(s,a)$, satisfying $Q_h^k(s,a) \ge \Qst_h(s,a)$. \cite{jin2020provably} construct such estimators recursively by applying a least-squares value iteration update and solving
\begin{align*}
\wtil_h^k = \argmin_{\bw \in \bbR^d} \sum_{\tau = 1}^{k-1} \left (   V_{h+1}^{k}(s_{h+1,\tau}) - \bw^\top \bphi_{h,\tau} \right )^2 + \lambda \| \bw \|_2^2 .
\end{align*}
Intuitively, if enough data has been collected, this update will produce a $\wtil_h^k$ which accurately approximates the expectation over the next state. Indeed, \cite{jin2020provably} show that, for any $\pi$,\footnote{In fact, \cite{jin2020provably} uses a slightly different update, including $r_{h,\tau}$ in the regression problem so that $\innerb{\wtil_h^k}{\phi(s,a)}$ estimates the reward and next-state expectation. In contrast, the setting of $\wtil_h^k$ given here estimates only the next-state expectation. \cite{jin2020provably} assume that the reward is unknown and is linear, motivating their inclusion of it in the regression problem. The direct extension of their approach to known but nonlinear reward is the update stated above.}
\begin{align*}
\innerb{\wtil_h^k}{\bphi(s,a)} + r_h(s,a) - \Qpi_h(s,a) = \Exp_h[V_{h+1}^k - \Vpi_{h+1}](s,a) + \xi_h(s,a)
\end{align*}
for some $\xi_h(s,a) \lesssim dH \| \bphi(s,a) \|_{\bLamtil_{h,k-1}^{-1}}$, where $\bLamtil_{h,k-1} = \lambda I + \sum_{\tau = 1}^{k-1} \bphi_{h,\tau} \bphi_{h,\tau}^\top$. Applying this estimator, \cite{jin2020provably} are able to construct a value function guaranteed to be optimistic, and ultimately obtains regret of $\cOtil(\sqrt{d^3 H^4 K})$. This is fundamentally a Hoeffding-style estimator, however, and does not scale with the variance of the next-state value function. As such, it does not appear that tighter regret bounds can be obtained using this approach.

A natural modification of this estimator would be the \emph{weighted least squares estimate}:
\begin{align}\label{eq:ls_var_est}
\wtil_h^k = \argmin_{\bw \in \bbR^d} \sum_{\tau = 1}^{k-1} \left (   V_{h+1}^{k}(s_{h+1,\tau}) - \bw^\top \bphi_{h,\tau} \right )^2/\sighatb_{h,\tau}^2 + \lambda \| \bw \|_2^2 
\end{align}
for $\sighatb_{h,\tau}^2$ an upper bound on $\Var_{s' \sim P_h(\cdot | s_{h,\tau},a_{h,\tau})}[ V_{h+1}^{k}(s')]$. An approach similar to this is taken in the linear mixture MDP setting of \cite{zhou2020nearly}, where it is shown that this approach does indeed yield variance-dependent bounds when a Bernstein-style self-normalized bound is applied. However, as noted, this Bernstein-style bound still scales with the \emph{magnitude} of the ``noise'' in its lower-order term, which here will be of order $H/\sigminb$. 
Carrying their analysis through, we see that the leading order term of the regret is at least on order $(\sqrt{d} + \frac{H}{\sigminb}) \sqrt{\sigminb^2 HK} \ge H \sqrt{HK}$. 
Thus, while this approach may yield an improved $d$ and $H$ dependence, it is unable to obtain a first-order scaling of $\cO(\sqrt{\Vst_1 K})$ when $\Vst_1$ is small.

\subsection{From Catoni Estimation to Optimism}
Note that $\wtil_h^k$ in \eqref{eq:ls_var_est} can be written as 
\begin{align*}
\wtil_h^k =  \sum_{\tau = 1}^{k-1} \bLambda_{h,k-1}^{-1} \bphi_{h,\tau}  V_{h+1}^{k}(s_{h+1,\tau})  /\sighat_{h,\tau}^2 .
\end{align*}
In other words, $\wtil_h^k$ is simply the sample mean. This motivates applying the Catoni estimator to the problem. Indeed, consider setting $\Exphat_h[V_{h+1}^k](s,a) = \catonii_{h,k-1}[(k-1)\bLambda_{h,k-1}^{-1} \bphi(s,a)]$. By \Cref{defn:linear_mdp}, we can set $\bthetast$ in \Cref{defn:regression_with_function_approx} as
\begin{align*}
\bthetast \leftarrow \int  V_{h+1}^k(s') \rmd \bmu_h(s')
\end{align*}
and will have that $\innerb{\bphi(s,a)}{\bthetast} =  \Exp_h[V_{h+1}^k](s,a)$. \Cref{cor:catoni} then immediately gives that, for all $s,a,h,k$,
\begin{align}\label{eq:proof_sketch_good_event}
 | \Exphat_h[V_{h+1}^k](s,a) - \Exp_h[V_{h+1}^k](s,a) | \lesssim ( 1 + H\sqrt{\lambda}) \beta \| \bphi(s,a) \|_{\bLambda_{h,k-1}^{-1}} + \sigmin \beta^2 /k^2
\end{align}
where here $\beta = 6 \sqrt{\log 1/\delta + d_T}$ for $d_T = \cOtil(d + p_{\mathrm{mdp}})$ and $p_{\mathrm{mdp}}$ the covering number of the set of functions $V_{h+1}^k(\cdot)$. Recall that we chose $\alphamax = K/\sigmin$ in the linear RL setting. Thus, the lower-order term of $3 \beta^2/(\alphamax k)$ of \Cref{cor:catoni} can be upper bounded as $3\sigmin \beta^2 /k^2$, as in \eqref{eq:proof_sketch_good_event}.

Given this $\Exphat_h[V_{h+1}^k](s,a)$, let $\what_h^k$ denote the linear approximation to $\Exphat_h[V_{h+1}^k](s,a)$, as described in \Cref{lem:cat_lin_approx}. By \Cref{lem:cat_lin_approx}, it follows that for all $s,a$,
\begin{align}\label{eq:what_rl_sketch}
 | \innerb{\what_h^k}{\bphi(s,a)} - \Exp_h[V_{h+1}^k](s,a) | \lesssim ( 1 + H\sqrt{\lambda}) \beta \| \bphi(s,a) \|_{\bLambda_{h,k-1}^{-1}} + \sigmin \beta^2 /k^2 .
\end{align}

\paragraph{Constructing Optimistic Estimators.}
Fix some $h$ and $k$ and assume that \eqref{eq:what_rl_sketch} holds for all $s,a$. Let 
\begin{align*}
Q_h^k(s,a) = \min \left \{r_h(s,a) + \innerb{\bphi(s,a)}{\what_h^k} + 3 (1 + H\sqrt{\lambda} ) \beta \| \bphi(s,a) \|_{\bLambda_{h,k-1}^{-1}} + 3\sigmin \beta^2 /k^2, H \right \} .
\end{align*}
Assume that $V_{h+1}^k(s)$ is optimistic, that is, $V_{h+1}^k(s) \ge \Vst_{h+1}(s)$ for all $s$. Then \eqref{eq:what_rl_sketch} and this assumption imply that 
\begin{align*}
Q_h^k(s,a) & = \min \left \{ r_h(s,a) + \innerb{\bphi(s,a)}{\what_h^k} + 3 (1 + H\sqrt{\lambda} ) \beta \| \bphi(s,a) \|_{\bLambda_{h,k-1}^{-1}} + 3\sigmin \beta^2 /k^2, H \right \} \\
& \ge  \min \left \{ r_h(s,a) + \Exp_h[V_{h+1}^k](s,a) + 2 (1 + H\sqrt{\lambda} ) \beta \| \bphi(s,a) \|_{\bLambda_{h,k-1}^{-1}} + 2\sigmin \beta^2 /k^2, H \right \} \\
& \ge  \min \left \{ r_h(s,a) + \Exp_h[\Vst_{h+1}](s,a) + 2 (1 + H\sqrt{\lambda} ) \beta \| \bphi(s,a) \|_{\bLambda_{h,k-1}^{-1}} + 2\sigmin \beta^2 /k^2, H \right \} \\
& \ge  \min \left \{ r_h(s,a) + \Exp_h[\Vst_{h+1}](s,a) , H \right \} \\
& = \Qst_h(s,a). 
\end{align*}
In other words, given that $\what_h^k$ accurately approximates the next state expectation, \eqref{eq:what_rl_sketch}, and that $V_{h+1}^k(s)$ is optimistic, it immediately follows that $Q_h^k(s,a)$ is also optimistic.

\paragraph{Defining the Function Class.}
It remains to determine the value of $p_{\mathrm{mdp}}$. Applying the above argument inductively, we see that to form an optimistic estimate, it suffices to consider functions in the set
\begin{align*}
\Fclassmdp = \Big \{ f(\cdot) = \min \{ \innerb{\cdot}{\bw} + \bar{\beta} \| \cdot \|_{\bLambda^{-1}} + \bar{c}, H \} \ : \ \| \bw \|_2 \le \betaw, \ \bLambda \succeq \lambda I \Big \} .
\end{align*}
for some $\bar{\beta},\bar{c},$ and $\betaw$. $\Fclassmdp$ depends on two parameters---the $d$-dimensional $\bw$ and $d \times d$ dimensional $\bLambda$. Thus, using standard covering arguments, it's easy to see that $\covnum(\Fclassmdp,\dist_{\infty},\epsilon) = \cO(d^2 \log(1 + 1/\epsilon ) )$, so it suffices to take $p_{\mathrm{mdp}} = \cO(d^2)$. Given this and the definition of $d_T$, we see that in our setting we will have that $d_T = \cOtil(d^2)$, so $\beta = \cOtil(\sqrt{\log 1/\delta + d^2})$.

\subsection{Proving the Regret Bound}
Henceforth, we will assume that \eqref{eq:proof_sketch_good_event} holds for all $s,a,h,$ and $k$. We turn now to showing how the above results can be used to prove a regret bound. The following lemma, which is a simple consequence of \eqref{eq:proof_sketch_good_event}, will be useful in decomposing the regret.

\begin{lem}[Informal]\label{lem:V_recursion_informal}
Let $\delta_h^k = V_h^{k}(s_h^k) - V_h^{\pi_k}(s_h^k)$ and $\zeta_{h+1}^k = \Exp_{h}[\delta_{h+1}^k ](s_{h,k},a_{h,k}) - \delta_{h+1}^k$. Then, with high probability,
\begin{align*}
\delta_h^k \le \delta_{h+1}^k + \zeta_{h+1}^k + \min \{ 5(1 + H\sqrt{\lambda} ) \beta \| \bphi_{h,k} \|_{\bLambda_{h,k-1}^{-1}} + 5\sigmin \beta^2 /k^2  , H \}. 
\end{align*}
\end{lem}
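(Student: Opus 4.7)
} The plan is to condition on the good event of the self-normalized Catoni inequality (\Cref{cor:catoni}), under which, as explained around \eqref{eq:what_rl_sketch}, the linear summary $\what_h^k$ obtained in \Cref{line:qhat_lin1} satisfies
\[
\left|\,\innerb{\bphi(s,a)}{\what_h^k} - \Exp_h[V_{h+1}^k](s,a)\,\right| \;\le\; 2C_1\,\|\bphi(s,a)\|_{\bLambda_{h,k-1}^{-1}} + 2C_2/k^{2}
\]
simultaneously for all $(s,a)$, with $C_1,C_2$ absorbed into $\beta$ and $\sigmin \beta^2$ respectively (via \Cref{lem:cat_lin_approx}). On this same good event, one also establishes \emph{optimism}, $V_{h+1}^k(s) \ge V^{\pist}_{h+1}(s) \ge V_{h+1}^{\pi_k}(s)$, by backward induction on $h$ using the very same concentration bound (this is the standard LSVI-UCB induction, now with Catoni bonuses in place of least-squares bonuses). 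In particular, $\delta_{h+1}^k \ge 0$, hence $\delta_{h+1}^k + \zeta_{h+1}^k = \Exp_h[\delta_{h+1}^k](s_{h,k},a_{h,k}) \ge 0$.

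Next, the plan is to start from the greedy identity $V_h^k(s_{h,k}) = \max_a Q_h^k(s_{h,k},a) = Q_h^k(s_{h,k},a_{h,k})$, which holds by \Cref{line:play_action}. Since $\pi_k$ also plays $a_{h,k}$ at $s_{h,k}$, the Bellman equation yields
\[
V_h^{\pi_k}(s_{h,k}) \;=\; r_h(s_{h,k},a_{h,k}) + \Exp_h[V_{h+1}^{\pi_k}](s_{h,k},a_{h,k}).
\]
Splitting on which branch of the $\min$ in \Cref{eq:Qk_defn} is active yields two cases. If $Q_h^k(s_{h,k},a_{h,k}) = H$, then $\delta_h^k \le H$, and combined with $\delta_{h+1}^k + \zeta_{h+1}^k \ge 0$ from optimism, this yields $\delta_h^k \le \delta_{h+1}^k + \zeta_{h+1}^k + H$, matching the branch of the $\min$ taking the value $H$.

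In the other case, the reward cancels and one obtains
\[
\delta_h^k \;=\; \innerb{\bphi_{h,k}}{\what_h^k} - \Exp_h[V_{h+1}^{\pi_k}](s_{h,k},a_{h,k}) + 6\beta\|\bphi_{h,k}\|_{\bLambda_{h,k-1}^{-1}} + 12\sigmin\beta^2/k^2.
\]
Applying the Catoni approximation bound above and adding/subtracting $\Exp_h[V_{h+1}^k](s_{h,k},a_{h,k})$ gives
\[
\delta_h^k \;\le\; \Exp_h[V_{h+1}^k - V_{h+1}^{\pi_k}](s_{h,k},a_{h,k}) + \bigl(6\beta + 2C_1\bigr)\|\bphi_{h,k}\|_{\bLambda_{h,k-1}^{-1}} + \bigl(12\sigmin\beta^2 + 2C_2\bigr)/k^2,
\]
and the first term equals $\Exp_h[\delta_{h+1}^k](s_{h,k},a_{h,k}) = \delta_{h+1}^k + \zeta_{h+1}^k$ by the definition of $\zeta_{h+1}^k$. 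It remains to check that the combined bonus fits inside $5(1+H\sqrt{\lambda})\beta\|\bphi_{h,k}\|_{\bLambda_{h,k-1}^{-1}} + 5\sigmin\beta^2/k^2$; this is a routine calculation given the definition of $\beta$ in \algname, the bound $\|\bthetast\|_2 \lesssim H\sqrt{d}$ arising from \Cref{defn:linear_mdp}, and the choice $\alphamax = K/\sigmin$ controlling the additive $1/(\alphamax k)$ term in \Cref{cor:catoni}.

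The main obstacle I expect is not any individual step (each is essentially a bookkeeping exercise) but rather the \emph{interleaving} of this recursion with the optimism induction: the guarantee $\delta_{h+1}^k \ge 0$ in Case~A requires that the same Catoni good event has already been used to prove optimism at level $h{+}1$, which in turn requires an inductive hypothesis on the uniform concentration of $\innerb{\bphi(\cdot,\cdot)}{\what_{h'}^k}$ against $\Exp_{h'}[V_{h'+1}^k]$ for all $h' > h$. Ensuring that this uniform concentration is provided by a \emph{single} application of \Cref{cor:catoni} on a sufficiently-rich function class $\Fclassmdp$ (covering all possible $Q_{h+1}^k$ across the induction) is what makes the linear-approximation step in \Cref{sec:lin_approx_cat_sketch} essential; without it, the covering number would blow up and $\beta$ would no longer be $\widetilde{\cO}(d)$.
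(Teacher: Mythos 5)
Your proof follows the same overall route as the paper's formal version (Lemma~\ref{lem:V_recursion}): condition on the self-normalized Catoni event, invoke optimism $\delta_{h+1}^k\ge 0$, and unroll $\delta_h^k = Q_h^k(s_{h,k},a_{h,k}) - Q_h^{\pi_k}(s_{h,k},a_{h,k})$ through the Bellman equation, cancelling the reward and replacing $\innerb{\bphi_{h,k}}{\what_h^k}$ with $\Exp_h[V_{h+1}^k]$ up to the bonus. The observation that a single application of \Cref{cor:catoni} over $\Fclassmdp$ must cover the whole backward induction is exactly what motivates the paper's event $\cE$ and its careful decomposition in Claim~\ref{claim:set_inclusion}, so your ``main obstacle'' paragraph is well-placed.

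The one genuine slip is in how you extract the $\min$. You case-split on which branch of the algorithm's $\min$ in \Cref{eq:Qk_defn} is active, and in the $Q_h^k=H$ case you only prove $\delta_h^k \le \delta_{h+1}^k + \zeta_{h+1}^k + H$. But the lemma claims a bound with $\min\{\text{bonus}, H\}$, and when $Q_h^k=H$ yet the bonus is still smaller than $H$ (which can happen, e.g.\ when $\innerb{\bphi_{h,k}}{\what_h^k}$ alone is close to $H$), the $H$ bound is strictly weaker than what is asserted. The fix is that the two bounds are not genuinely disjoint cases: $Q_h^k(s_{h,k},a_{h,k}) \le r_h(s_{h,k},a_{h,k}) + \innerb{\bphi_{h,k}}{\what_h^k} + 3\betatil\|\bphi_{h,k}\|_{\bLambda_{h,k-1}^{-1}} + 3\sigmin\betatil^2/k^2$ holds \emph{always} (a $\min$ is bounded by its first argument), so your ``Case B'' chain of inequalities holds unconditionally and gives $\delta_h^k \le \delta_{h+1}^k + \zeta_{h+1}^k + (\text{bonus})$; simultaneously $\delta_h^k \le H \le \delta_{h+1}^k + \zeta_{h+1}^k + H$ holds unconditionally by optimism. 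Taking the $\min$ of the two right-hand sides gives the claim. This is exactly what the paper does algebraically in steps $(d)$--$(e)$ of Lemma~\ref{lem:V_recursion}, using $\min\{A+B,H\} \le A + \min\{B,H\}$ for $A\ge 0$ instead of a case split. With this correction your derivation lands on constants $(6\beta+2\beta)\|\cdot\| + (12+2)\sigmin\beta^2/k^2$, which fit inside the formal version's $5\betatil\|\cdot\| + 5\sigmin\betatil^2/k^2$ (with $\betatil=2\beta$), though not inside the looser constants quoted in the informal statement — that discrepancy is a matter of the lemma being labelled informal, not a flaw in your argument.
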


By definition of $\cR_K$, the optimism of $V_h^k(s)$, and \Cref{lem:V_recursion_informal}, we can bound
\begin{align*}
\cR_K & \le \sum_{k=1}^K (\Vst_1(s_1) - V_1^{\pi_k}(s_1)) \\
& \le \sum_{k=1}^K (V_1^k(s_1) - V_1^{\pi_k}(s_1)) \\
& \lesssim \sum_{k=1}^K \sum_{h=1}^H \zeta_h^k + \sum_{k=1}^K \sum_{h=1}^H  \min \{  (1 + H \sqrt{\lambda} )\beta \| \bphi_{h,k} \|_{\bLambda_{h,k-1}^{-1}} + \sigmin \beta^2 /k^2, H \} .
\end{align*}
$ \sum_{k=1}^K \sum_{h=1}^H \zeta_h^k$ is a martingale-difference sequence and can be bounded using Freedman's Inequality to obtain the desired $\Vst_0$ dependence. In particular, we have, with high probability
\begin{align*}
\sum_{k=1}^K \sum_{h=1}^H \zeta_h^k  \lesssim \sqrt{H^2 \Vstval K \cdot \log 1/\delta} + (\text{lower order terms}).
\end{align*}
In addition, $\sigmin \beta^2 /k^2$ sums to a term that is $\poly(d,H)$, so we ignore it for future calculations. We focus our attention on the term: 
\begin{align*}
\sum_{k=1}^K \sum_{h=1}^H  \min \{  (1 + H \sqrt{\lambda} )\beta \| \bphi_{h,k} \|_{\bLambda_{h,k-1}^{-1}} , H \}
\end{align*}
which can be expressed as:
\begin{align}\label{eq:sketch_reg_decomp1}
\sum_{k=1}^K & \sum_{h=1}^H \sighat_{h,k} \min \{  (1 + H \sqrt{\lambda} )\beta \| \bphi_{h,k}/\sighat_{h,k} \|_{\bLambda_{h,k-1}^{-1}} , H/\sighat_{h,k} \} .
\end{align}
Typically, terms such as this are handled via the Elliptic Potential Lemma. However, to apply the Elliptic Potential Lemma \citep{abbasi2011improved} here, we need to choose $\lambda = 1/\sigmin^2$ to guarantee $\lambda \ge \max_{h,k} \| \bphi_{h,k}/\sighat_{h,k} \|_2^2$. Due to the $\sqrt{\lambda}$ dependence, this will result in a $1/\sigmin$ scaling in the final regret bound, which is prohibitively large. To overcome this, we instead apply the following result, to control the number of times $ \| \bphi_{h,k}/\sighat_{h,k} \|_{\bLambda_{h,k-1}^{-1}}$ can be large:
\begin{lem}\label{lem:elip_pot_bad_event_informal}
Consider a sequence of vectors $( \bx_t)_{t=1}^T, \bx_t \in \bbR^d$, and assume that $\| \bx_t \|_2 \le a$ for all $t$. Let $\bV_t = \lambda I + \sum_{s=1}^t \bx_s \bx_s^\top$ for some $\lambda > 0$. Then, we will have that $\| \bx_t \|_{\bV_{t-1}^{-1}} > b$ at most
\begin{align*}
d \log ( 1 + a^2 T/\lambda)/\log ( 1 + b)
\end{align*}
times. 
\end{lem}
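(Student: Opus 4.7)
The plan is to invoke the classical log-determinant potential argument. First I would apply the matrix-determinant lemma to each rank-one update $\bV_t = \bV_{t-1} + \bx_t \bx_t^\top$, which gives $\det(\bV_t) = \det(\bV_{t-1})\bigl(1 + \|\bx_t\|_{\bV_{t-1}^{-1}}^2\bigr)$. Telescoping and using $\bV_0 = \lambda I$ yields the identity
\[
\sum_{t=1}^T \log\bigl(1 + \|\bx_t\|_{\bV_{t-1}^{-1}}^2\bigr) \;=\; \log\det(\bV_T) - d\log\lambda.
\]

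Next I would upper bound $\log\det(\bV_T)$ via AM--GM on its eigenvalues. Since $\mathrm{tr}(\bV_T) \le d\lambda + a^2 T$ and $\det(\bV_T) \le (\mathrm{tr}(\bV_T)/d)^d$ for any positive semidefinite matrix, this gives $\log\det(\bV_T) \le d\log(\lambda + a^2 T/d)$. Combined with the previous identity, one recovers the familiar elliptic-potential inequality
\[
\sum_{t=1}^T \log\bigl(1 + \|\bx_t\|_{\bV_{t-1}^{-1}}^2\bigr) \;\le\; d\log\bigl(1 + a^2 T/\lambda\bigr).
\]

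Finally I would convert this into the stated counting bound. Letting $N := |\{t \le T : \|\bx_t\|_{\bV_{t-1}^{-1}} > b\}|$, each such index $t$ contributes at least $\log(1+b^2)$ to the left-hand side, so $N\log(1+b^2) \le d\log(1 + a^2 T/\lambda)$. In the regime $b \ge 1$ that is relevant in the regret analysis (where $b$ will typically be a constant-order threshold), one has $\log(1+b^2) \ge \log(1+b)$, which gives the claimed bound $N \le d\log(1+a^2 T/\lambda)/\log(1+b)$; if instead $b < 1$, one simply keeps the tighter denominator $\log(1+b^2)$.

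The proof is essentially mechanical and I anticipate no real obstacle: the only mild subtlety is the AM--GM step used to control $\log\det(\bV_T)$ via the trace, and the rest is direct manipulation of the log-determinant telescoping identity.
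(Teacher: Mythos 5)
Your proof is correct and follows essentially the same log-determinant telescoping route as the paper: the Sherman--Morrison/determinant-lemma identity, a bound on $\det(\bV_T)$, and a pigeonhole conversion to a count. You bound $\det(\bV_T)$ via the trace and AM--GM, whereas the paper uses the operator-norm bound $\det(\bV_T) \le \|\bV_T\|_{\op}^d \le (\lambda + a^2 T)^d$; both yield the stated $d\log(1+a^2 T/\lambda)$. One place where you are actually more careful than the paper is the final conversion: since the determinant ratio equals $1 + \|\bx_t\|_{\bV_{t-1}^{-1}}^2$, a bad index $t$ with $\|\bx_t\|_{\bV_{t-1}^{-1}} > b$ contributes $\log(1+b^2)$ to the telescoped sum, and the denominator $\log(1+b)$ in the stated bound is only valid when $b \ge 1$. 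The paper's own proof elides this by writing $\frac{\det(\bV_t)}{\det(\bV_{t-1})} - 1 > b$ where $>b^2$ is what actually follows; this is harmless in the paper because the lemma is only ever invoked with $b = 1$, but your explicit remark about the $b \ge 1$ regime (and the sharper $\log(1+b^2)$ denominator otherwise) is the correct general statement.
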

Let $\cK_h = \{ k \ : \  \| \bphi_{h,k}/\sighat_{h,k} \|_{\bLambda_{h,k-1}^{-1}} \le 1 \} $. Then we can bound \eqref{eq:sketch_reg_decomp1} as
\begin{align*}
\eqref{eq:sketch_reg_decomp1} & \lesssim  \sum_{h=1}^H \sum_{k \in \cK_h}  (1 + H \sqrt{\lambda} )\beta \sighat_{h,k} \min \{  \| \bphi_{h,k}/\sighat_{h,k} \|_{\bLambda_{h,k-1}^{-1}} , 1 \} + \sum_{h = 1}^H H | \cK_h^c | \\
& \lesssim  \sum_{h=1}^H \sum_{k \in \cK_h}  (1 + H \sqrt{\lambda} )\beta \sighat_{h,k} \min \{  \| \bphi_{h,k}/\sighat_{h,k} \|_{\bLambda_{h,k-1}^{-1}} , 1 \} + d H^2 \log(1 + K/(\lambda \sigmin^2))
\end{align*}
where the first inequality holds by definition of $\cK_h$, and the second holds by \Cref{lem:elip_pot_bad_event_informal}. By Cauchy-Schwarz, the first term can be bounded as
\begin{align*}
& \lesssim (1 + H \sqrt{\lambda} )\beta \sqrt{\sum_{h=1}^H \sum_{k = 1}^K   \sighat_{h,k}^2} \sqrt{\sum_{h=1}^H \sum_{k=1}^K \min \{  \| \bphi_{h,k}/\sighat_{h,k} \|_{\bLambda_{h,k-1}^{-1}}^2 , 1 \}} .
\end{align*}
As we take the $\min$ over $1$ and $\| \bphi_{h,k}/\sighat_{h,k} \|_{\bLambda_{h,k-1}^{-1}}^2$, regardless of the choice of $\lambda$ we can now apply the Elliptic Potential Lemma to get
\begin{align*}
\sum_{h=1}^H \sum_{k=1}^K \min \{  \| \bphi_{h,k}/\sighat_{h,k} \|_{\bLambda_{h,k-1}^{-1}}^2 , 1 \} \lesssim d H \log(1 + K/(d\lambda \sigmin^2)) .
\end{align*}
Choosing $\lambda = 1/H^2$, we then have that the regret is bounded as
\begin{align*}
\lesssim \beta \sqrt{d H \log(1 + HK/(d \sigmin^2))} \sqrt{\sum_{h=1}^H \sum_{k = 1}^K   \sighat_{h,k}^2} + \poly(d,H,\log K) .
\end{align*}
It remains to bound $\sighat_{h,k}^2$. After some manipulation, and using the definition of $\sighat_{h,k}$ given in \algname, we can bound 
\begin{align*}
\sum_{h=1}^H \sum_{k = 1}^K   \sighat_{h,k}^2 \lesssim H^2 \Vstval K + ( \mathrm{lower \ order \ terms}) .
\end{align*}
Putting this together yields a final regret bound of
\begin{align*}
 \beta \sqrt{d H \log(1 + HK/(d \sigmin^2))} \sqrt{H^2 \Vstval K} + ( \mathrm{lower \ order \ terms}).
\end{align*}
In the proof, slightly more care must be taken with handling $\sighat_{h,k}^2$ to avoid a lower order $K^{1/4}$ term, but we defer the details of this to the appendix.


\section{Conclusion}
In this work we have shown that it is possible to obtain first-order regret in reinforcement learning with large state spaces. Our algorithm, \algname, critically relies on the robust Catoni estimator, and our analysis establishes novel results on uniform Catoni estimation in general martingale regression settings, which may be of independent interest.

Several questions remain open for future work. First, while we show that it is possible to obtain a computationally efficient version of \algname, doing so incurs an additional $\sqrt{d}$ factor. Removing this factor while maintaining computational efficiency would be an interesting direction and may require new techniques. More broadly, obtaining a computationally efficient algorithm with regret scaling as $\sqrt{d^2}$ would be an interesting future direction. \cite{zanette2020learning} show that it is possible to obtain a $\sqrt{d^2}$ scaling, but their algorithm is computationally inefficient. 
In addition, obtaining optimal $H$ dependence is of much interest.
While \algname will achieve this for $\Vstval \le 1$, technical challenges remain to showing this holds in general. We believe our use of the Catoni estimator could be a key step towards achieving this, but leave this for future work. 
Finally, developing first-order regret bounds for more general function approximation settings \cite{jiang2017contextual,du2021bilinear} is an exciting direction. The results in this work rely strongly on the linearity of the MDP, yet, as a first step, it may be possible to extend our techniques to bilinear classes \cite{du2021bilinear}, which also exhibit a certain linear structure.

\iftoggle{arxiv}{
\subsection*{Acknowledgements}}{
\vspace{-0.5em}
\paragraph{Acknowledgements.}}
The work of AW is supported by an NSF GFRP Fellowship DGE-1762114. The work of SSD is in part supported by grants NSF IIS-2110170. The work of KJ was funded in part by the AFRL and NSF TRIPODS 2023166.

%

\newpage
\bibliographystyle{icml2022}
\bibliography{bibliography.bib}

\newpage
\appendix

\newcommand{\catround}{\catonii_{\mathsf{rnd}}}
\newcommand{\epsround}{\epsilon_{\mathsf{rnd}}}
\newcommand{\alpharound}{\alpha_{\mathsf{rnd}}}
\newcommand{\cAround}{\cA_{\mathsf{rnd}}}
\newcommand{\Lf}{L_f}
\newcommand{\Lg}{L_g}
\newcommand{\zetabar}{\bar{\zeta}}
\newcommand{\sigmamin}{\sigma_{\min}}

\section{Technical Results}\label{sec:technical}

\subsection{Covering and Elliptical Potential Lemmas}
\begin{defn}[Covering Number]\label{defn:cov_num} Let $\calX$  be a set with metric $\dist(\cdot,\cdot)$. Given $\epsilon > 0$, the $\epsilon$-covering number of $\calX$ in $\dist$, $\covnum(\calX,\dist,\epsilon)$, is defined as the minimal cardinality of a set $\mathcal{N} \subset \calX$ such that, for all $x \in \calX$, there exists an $x' \in \calN$ with $\dist(x,x') \le \epsilon$. 
\end{defn}
\begin{lem}[\cite{vershynin2010introduction}]\label{lem:euc_ball_cover}
For any $\epsilon > 0$, the $\epsilon$-covering number of the Euclidean ball $\mathcal{B}^d(R) := \{\bx \in \R^d: \|\bx\|_2 =1\}$ with radius $R > 0$ in the Euclidean metric is upper bounded by $(1 + 2R/\epsilon)^d$. 
\end{lem}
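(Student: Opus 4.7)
The plan is to use the standard volume-comparison (a.k.a.\ packing-implies-covering) argument. First I would construct a maximal $\epsilon$-separated subset $\mathcal{N} \subseteq \mathcal{B}^d(R)$, i.e.\ a set $\mathcal{N} = \{x_1,\ldots,x_N\}$ with $\|x_i - x_j\|_2 > \epsilon$ for all $i \ne j$ that cannot be extended without violating this separation. Such a maximal set exists by Zorn's lemma (or by a simple greedy construction, since the ball is compact and any separated subset has cardinality bounded via the volume argument below, so the process terminates).

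Next I would observe that maximality forces $\mathcal{N}$ to be an $\epsilon$-cover of $\mathcal{B}^d(R)$: for any $x \in \mathcal{B}^d(R)$, if $\|x - x_i\|_2 > \epsilon$ held for every $i$, then $\mathcal{N} \cup \{x\}$ would still be $\epsilon$-separated and a proper superset of $\mathcal{N}$, contradicting maximality. Hence there exists some $x_i \in \mathcal{N}$ with $\|x - x_i\|_2 \le \epsilon$, so $\covnum(\mathcal{B}^d(R),\|\cdot\|_2,\epsilon) \le |\mathcal{N}|$.

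It then remains to bound $|\mathcal{N}|$ by a volume argument. The open balls $B(x_i,\epsilon/2)$ for $x_i \in \mathcal{N}$ are pairwise disjoint by the $\epsilon$-separation of $\mathcal{N}$, and each is contained in $\mathcal{B}^d(R+\epsilon/2)$ by the triangle inequality. Writing $v_d$ for the Lebesgue volume of the unit Euclidean ball in $\R^d$ and comparing total volumes yields
\begin{equation*}
|\mathcal{N}| \cdot v_d \left(\tfrac{\epsilon}{2}\right)^d \;=\; \sum_{i=1}^{N} \mathrm{vol}\!\left(B(x_i,\epsilon/2)\right) \;\le\; \mathrm{vol}\!\left(\mathcal{B}^d(R+\epsilon/2)\right) \;=\; v_d \left(R+\tfrac{\epsilon}{2}\right)^d,
\end{equation*}
so that $|\mathcal{N}| \le \bigl((R+\epsilon/2)/(\epsilon/2)\bigr)^d = (1 + 2R/\epsilon)^d$, which gives the stated bound on $\covnum(\mathcal{B}^d(R),\|\cdot\|_2,\epsilon)$. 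There is no real obstacle here; the only mild subtlety is justifying the existence of a maximal $\epsilon$-separated set, which is handled either by the volume bound itself (forcing termination of a greedy construction) or by a Zorn's lemma invocation.
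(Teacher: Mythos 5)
Your proof is correct and is precisely the standard volume-comparison argument from the cited reference (Vershynin's Lemma 5.2 / Corollary 4.2.13): maximal $\epsilon$-packing implies $\epsilon$-covering, then bound the packing number by comparing volumes of disjoint $\epsilon/2$-balls to the enlarged ball of radius $R+\epsilon/2$. The paper itself cites this result without proof, so your argument reproduces the intended derivation; the one detail worth keeping is that you correctly take $\mathcal{N} \subseteq \mathcal{B}^d(R)$, which matches the paper's Definition of covering number (which requires the net to lie inside the set being covered).
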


\begin{lem}[Lemma D.6 of \cite{jin2020provably}]\label{lem:q_fun_cover}
Consider the class of functions from $\bbR^d$ to $\bbR$ of the form
\begin{align*}
f(\bphi) = \min \left \{ \innerb{\bw}{\bphi} + \beta \| \bphi \|_{\bLambda^{-1}}, H \right \}
\end{align*}
where the parameters $\bw,\beta,\bLambda$ satisfy $\| \bw \|_2 \le \betaw$, $\beta \in [0,B]$, and $\bLambda \succeq \lambda I$. Let $\cN_{\epsilon}$ be an $\epsilon$-covering of this set with respect to the norm $\dist_{\infty}(f,f') := \sup_{\bphi \in \Ball^{d}} |f(\bphi) - f'(\bphi)|$. Then,
\begin{align*}
\log | \cN_\epsilon | \le d \log(1 + 4\betaw/\epsilon) + d^2 \log ( 1 + 8 \sqrt{d} B^{2} / (\lambda \epsilon^2)) .
\end{align*} 
\end{lem}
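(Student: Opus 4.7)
The plan is to reduce the covering problem to covering the parameter space, by reparameterizing $(\bw, \beta, \bLambda)$ into $(\bw, \bA)$ with $\bA := \beta^2 \bLambda^{-1}$, which eliminates the awkward product $\beta \cdot \bLambda^{-1}$. Under this substitution, a function in the class can be rewritten as
\[
f_{\bw,\bA}(\bphi) = \min\bigl\{\innerb{\bw}{\bphi} + \sqrt{\bphi^\top \bA \bphi}, \ H\bigr\},
\]
so every member of the class is indexed by a pair in $\Ball^d(\betaw) \times \calS_{\bA}$, where $\calS_{\bA}$ is the set of PSD matrices reachable as $\bA = \beta^2 \bLambda^{-1}$. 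From $\bLambda \succeq \lambda I$ and $\beta \in [0,B]$ I get $\|\bA\|_{\mathrm{op}} \le B^2/\lambda$, and hence $\|\bA\|_{\fro} \le \sqrt{d}\,B^2/\lambda$.

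The second step is a sensitivity bound. For any $\bphi$ with $\|\bphi\|_2 \le 1$, since $\min\{\cdot, H\}$ is $1$-Lipschitz I have
\[
|f_{\bw,\bA}(\bphi) - f_{\bw',\bA'}(\bphi)| \le |\innerb{\bw-\bw'}{\bphi}| + \bigl|\sqrt{\bphi^\top \bA \bphi} - \sqrt{\bphi^\top \bA' \bphi}\bigr|.
\]
The first term is $\le \|\bw - \bw'\|_2$ by Cauchy--Schwarz. For the second, I use $|\sqrt{a}-\sqrt{b}| \le \sqrt{|a-b|}$ for $a,b \ge 0$, and then $|\bphi^\top(\bA-\bA')\bphi| \le \|\bA-\bA'\|_{\fro}\cdot\|\bphi\bphi^\top\|_{\fro} \le \|\bA-\bA'\|_{\fro}$ (since $\|\bphi\|_2 \le 1$). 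Combining,
\[
\dist_\infty(f_{\bw,\bA}, f_{\bw',\bA'}) \le \|\bw-\bw'\|_2 + \sqrt{\|\bA-\bA'\|_{\fro}}.
\]

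The third step is the union cover. Pick an $(\epsilon/2)$-net $\calN_{\bw}$ of $\Ball^d(\betaw)$ in the Euclidean metric, and a $(\epsilon/2)^2$-net $\calN_{\bA}$ of $\{\bA \in \R^{d\times d} : \|\bA\|_{\fro} \le \sqrt{d}\,B^2/\lambda\}$ in the Frobenius norm; the product $\calN_{\bw}\times\calN_{\bA}$ induces an $\epsilon$-net on the function class by the sensitivity bound. Applying \Cref{lem:euc_ball_cover} in dimension $d$ and $d^2$ respectively,
\[
|\calN_{\bw}| \le \bigl(1 + 4\betaw/\epsilon\bigr)^d, \qquad |\calN_{\bA}| \le \bigl(1 + 8\sqrt{d}\,B^2/(\lambda\epsilon^2)\bigr)^{d^2},
\]
and taking logarithms yields the claimed bound. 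The only mildly delicate point — and the step I would double-check — is the inequality $|\bphi^\top M \bphi| \le \|M\|_{\fro}$, which requires the $\|\bphi\|_2 \le 1$ restriction built into the metric $\dist_\infty$; otherwise the sensitivity estimate (and hence the exponent on $1/\epsilon$) would degrade.
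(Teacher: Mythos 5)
Your proof is correct and follows essentially the same route as the original argument in \cite{jin2020provably} (Lemma D.6): reparameterizing $(\bw,\beta,\bLambda)$ as $(\bw,\bA)$ with $\bA = \beta^2\bLambda^{-1}$, bounding the sensitivity via $\dist_\infty(f_{\bw,\bA},f_{\bw',\bA'}) \le \|\bw-\bw'\|_2 + \sqrt{\|\bA-\bA'\|_\fro}$, and then covering the two parameter balls with a product net at resolutions $\epsilon/2$ and $\epsilon^2/4$ respectively. The only minor technicality (common to both your write-up and the original) is that the Frobenius-ball net may contain matrices $\bA$ that are not of the form $\beta^2\bLambda^{-1}$ for admissible $(\beta,\bLambda)$, so strictly it is an external cover; this is harmless and handled by the usual projection argument at the cost of an absolute constant.
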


\begin{lem}[Elliptic Potential Lemma, Lemma 11 of \cite{abbasi2011improved}]\label{lem:elip_pot}
Under the same assumptions as \Cref{lem:elip_pot_bad_event_informal}, for any choice of $\lambda > 0$, we will have that
\begin{align*}
\sum_{t=1}^T \min \{ 1, \| \bx_t \|_{\bV_{t-1}^{-1}}^2 \} \le 2 d \log ( 1 + a^2 T / (d \lambda)) .
\end{align*}
Furthermore, if $\lambda \ge \max \{ 1, a^2 \}$,
\begin{align*}
\sum_{t=1}^T \| \bx_t \|_{\bV_{t-1}^{-1}}^2 \le 2 d \log ( 1 + a^2 T / (d \lambda)) .
\end{align*}
\end{lem}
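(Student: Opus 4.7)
The plan is to reduce both inequalities to a standard telescoping identity for the log-determinant of $\bV_t$, combined with a pointwise elementary inequality relating $\min\{1,x\}$ to $\log(1+x)$.

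First, I would invoke the matrix determinant lemma to obtain the rank-one update identity
\begin{align*}
\det(\bV_t) = \det(\bV_{t-1} + \bx_t \bx_t^\top) = \det(\bV_{t-1}) \bigl(1 + \bx_t^\top \bV_{t-1}^{-1} \bx_t\bigr) = \det(\bV_{t-1})\bigl(1 + \|\bx_t\|_{\bV_{t-1}^{-1}}^2\bigr).
\end{align*}
Taking logs and telescoping from $t=1$ to $T$, and using $\det(\bV_0) = \det(\lambda I) = \lambda^d$, this gives
\begin{align*}
\sum_{t=1}^T \log\bigl(1 + \|\bx_t\|_{\bV_{t-1}^{-1}}^2\bigr) = \log\det(\bV_T) - d\log\lambda.
\end{align*}
Next, I would upper bound $\log\det(\bV_T)$ by AM-GM applied to its eigenvalues: $\det(\bV_T) \le (\tr(\bV_T)/d)^d$, and since $\tr(\bV_T) = d\lambda + \sum_{t=1}^T \|\bx_t\|_2^2 \le d\lambda + a^2 T$, we get $\log\det(\bV_T) - d\log\lambda \le d\log(1 + a^2 T/(d\lambda))$.

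For the first inequality, I would use the pointwise bound $\min\{1, x\} \le 2\log(1+x)$ for all $x \ge 0$. This is easy to verify: for $x \in [0,1]$, the function $\log(1+x)/x$ is decreasing and equals $\log 2 \ge 1/2$ at $x=1$, so $2\log(1+x) \ge 2x\log 2 \ge x$; for $x \ge 1$, $2\log(1+x) \ge 2\log 2 > 1$. Applying this term-by-term and combining with the determinant telescoping,
\begin{align*}
\sum_{t=1}^T \min\{1, \|\bx_t\|_{\bV_{t-1}^{-1}}^2\} \le 2\sum_{t=1}^T \log(1+\|\bx_t\|_{\bV_{t-1}^{-1}}^2) \le 2d\log(1 + a^2 T/(d\lambda)).
\end{align*}
For the second inequality, I would observe that when $\lambda \ge \max\{1,a^2\}$, we have $\bV_{t-1} \succeq \lambda I$, hence $\|\bx_t\|_{\bV_{t-1}^{-1}}^2 \le \|\bx_t\|_2^2/\lambda \le a^2/\lambda \le 1$, so the $\min\{1,\cdot\}$ truncation is inactive and the second claim follows directly from the first.

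The main obstacle is essentially nonexistent here: both the determinant telescoping and the elementary inequality $\min\{1,x\} \le 2\log(1+x)$ are standard, and the AM-GM trace-determinant bound is routine. The only subtlety worth flagging is verifying the elementary inequality carefully to justify the constant $2$ in the final bound; one could alternatively use $\log(1+x) \ge x/(1+x)$, but the $\min\{1,\cdot\}$ formulation yields a cleaner constant.
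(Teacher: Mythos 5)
Your proof is correct and is essentially the standard argument for the elliptic potential lemma (determinant telescoping via the matrix determinant lemma, the trace--determinant AM--GM bound, and the pointwise inequality $\min\{1,x\} \le 2\log(1+x)$). The paper simply cites Lemma 11 of \cite{abbasi2011improved} without reproducing a proof, and your argument matches the standard one given there; the only minor observation is that for the second inequality you only use $\lambda \ge a^2$, so the hypothesis $\lambda \ge \max\{1,a^2\}$ is slightly stronger than necessary.
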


\begin{lem}[Freedman's Inequality \citep{freedman1975tail}]\label{lem:freedman}
$\cF_0 \subset \cF_1 \subset \ldots \subset \cF_T$ be a filtration and let $X_1,X_2,\ldots,X_T$ be real random variables such that $X_t$ is $\cF_t$-measurable, $\Exp[X_t | \cF_{t-1}] = 0$, $| X_t | \le b$ almost surely, and $\sum_{t=1}^T \Exp[X_t^2 | \cF_{t-1}] \le V$ for some fixed $V > 0$ and $b > 0$. Then for any $\delta \in (0,1)$, we have with probability at least $1-\delta$,
\begin{align*}
\sum_{t=1}^T X_t \le 2\sqrt{V \log(1/\delta)} + b \log(1/\delta) .
\end{align*}
\end{lem}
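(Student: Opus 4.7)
\textbf{Proof Plan for Freedman's Inequality (\Cref{lem:freedman}).}

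The plan is to follow the classical Chernoff/Bernstein approach, lifted to the martingale setting via an exponential supermartingale. Define the partial sums $S_t = \sum_{s=1}^t X_s$ and conditional variance process $V_t = \sum_{s=1}^t \Exp[X_s^2 \mid \cF_{s-1}]$. The first step is to control the conditional moment generating function. Since $\Exp[X_s \mid \cF_{s-1}] = 0$ and $|X_s| \le b$ a.s., a standard convexity argument (bounding $e^y$ by its quadratic Taylor expansion on $[-\lambda b,\lambda b]$) gives, for every $\lambda \in [0, 1/b]$,
\begin{equation*}
\Exp\left[e^{\lambda X_s} \,\big|\, \cF_{s-1}\right] \;\le\; \exp\!\left(\phi(\lambda)\,\Exp[X_s^2 \mid \cF_{s-1}]\right), \qquad \phi(\lambda) := \frac{e^{\lambda b} - 1 - \lambda b}{b^{2}}.
\end{equation*}

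Next, I would define the process $M_t := \exp\bigl(\lambda S_t - \phi(\lambda)\, V_t\bigr)$, with $M_0 = 1$. Using the conditional MGF bound and the $\cF_{s-1}$-measurability of $\Exp[X_s^2 \mid \cF_{s-1}]$, a direct computation shows $\Exp[M_t \mid \cF_{t-1}] \le M_{t-1}$, i.e., $(M_t)$ is a nonnegative supermartingale with $\Exp[M_T] \le 1$. Applying Markov's inequality to $M_T$ and using the assumption $V_T \le V$, for any $x > 0$,
\begin{equation*}
\Pr\!\left[S_T \ge x\right] \;\le\; \Pr\!\left[\lambda S_T - \phi(\lambda) V_T \ge \lambda x - \phi(\lambda) V\right] \;\le\; \exp\!\left(-\lambda x + \phi(\lambda) V\right).
\end{equation*}
Optimizing the exponent over $\lambda > 0$ (the classical choice is $\lambda = \tfrac{1}{b}\log(1 + bx/V)$) yields the Bernstein-type tail bound
\begin{equation*}
\Pr\!\left[S_T \ge x\right] \;\le\; \exp\!\left(-\frac{x^{2}}{2(V + bx/3)}\right).
\end{equation*}

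The final step is to invert this tail bound. Setting the right-hand side equal to $\delta$ and solving the resulting quadratic in $x$ (or, more cleanly, using $2(V + bx/3) \le 2V + bx$ and the elementary fact that $x \le \sqrt{A} + B$ whenever $x^2 \le A + Bx$) gives, with probability at least $1-\delta$,
\begin{equation*}
S_T \;\le\; \sqrt{2 V \log(1/\delta)} + \tfrac{2}{3}\, b \log(1/\delta) \;\le\; 2\sqrt{V \log(1/\delta)} + b \log(1/\delta),
\end{equation*}
matching the stated form of the lemma. The only slightly subtle step is the very first one, the pointwise inequality $e^{y} \le 1 + y + \phi(1)\cdot y^2/b^2 \cdot b^2$ on $[-b,b]$, which I would verify by noting $(e^y - 1 - y)/y^2$ is increasing on $\R$ so it is maximized at $y = \lambda b$; the rest is routine optimization. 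Since Freedman's inequality is a well-known classical result, the cleanest presentation in a paper like this would simply cite \cite{freedman1975tail}, as the authors do, with the above being the standard textbook derivation.
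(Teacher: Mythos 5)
Your plan is the standard exponential-supermartingale derivation of Freedman's inequality (conditional Bennett MGF bound, nonnegative supermartingale via $M_t = \exp(\lambda S_t - \phi(\lambda) V_t)$, Markov, Bernstein relaxation, inversion of the tail), and it is correct. The paper does not prove this lemma at all; it simply cites \cite{freedman1975tail}, so there is no competing route to compare against. Two small wrinkles worth smoothing out, neither of which is a genuine gap: first, the conditional MGF bound in your opening display holds for all $\lambda \ge 0$, not only $\lambda \in [0,1/b]$, and the unnecessary restriction actually conflicts with the choice $\lambda = \tfrac{1}{b}\log(1 + bx/V)$ you invoke, which can exceed $1/b$; second, that $\lambda$ is the optimizer of the Bennett-form exponent $-\lambda x + \phi(\lambda) V$, whereas the displayed Bernstein tail $\exp\!\bigl(-x^2/(2(V + bx/3))\bigr)$ comes from the further relaxation $\phi(\lambda) \le \lambda^2/\bigl(2(1 - \lambda b/3)\bigr)$ valid for $\lambda < 3/b$, so the two steps are slightly conflated in your write-up even though the Bennett bound dominates the Bernstein bound and the final inequality $S_T \le \sqrt{2V\log(1/\delta)} + \tfrac{2}{3}b\log(1/\delta) \le 2\sqrt{V\log(1/\delta)} + b\log(1/\delta)$ goes through.
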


\begin{proof}[Proof of \Cref{lem:elip_pot_bad_event_informal}]
Our goal is to bound the number of times that $\| \bx_t \|_{\bV_{t-1}^{-1}} > b$. A now-standard determinant computation (see, e.g. \citet{abbasi2011improved}) based on the Sherman-Morrison identity yields
\begin{align*}
\det(\bV_t) = \det(\bV_{t-1}) ( 1 + \| \bx_t \|_{\bV_{t-1}^{-1}}^2) \implies \| \bx_t \|_{\bV_{t-1}^{-1}}^2 = \frac{\det(\bV_t)}{\det(\bV_{t-1})} - 1 .
\end{align*}
It follows that, whenever $\| \bx_t \|_{V_{t-1}^{-1}} > b$, it must also be the case that
\begin{align*}
\frac{\det(\bV_t)}{\det(\bV_{t-1})} - 1 > b \iff \det(\bV_t) > (1 + b) \det(\bV_{t-1}) .
\end{align*}
In particular, if $N$ denotes the number of times that $\| \bx_t \|_{\bV_{t-1}^{-1}} > b$ for $t \in \{ 1,\ldots , T \}$, then it follows that $\det(\bV_T) > (1+b)^N \det(\bV_0) = (1+b)^N \lambda^d$. At the same time, 
\begin{align*}
\det(\bV_T) & = \det \left ( \lambda I + \sum_{s=1}^T \bx_s \bx_s^\top \right ) \\
& \le \left (  \| \lambda I + \sum_{s=1}^T \bx_s \bx_s^\top \|_{\op} \right ) ^d \\
& \le ( \lambda  + a^2 T)^d .
\end{align*}
Combining these inequalities gives:
\begin{align*}
(1+b)^N \lambda^d < (\lambda + a^2 T)^d \iff N < \frac{d \log ( \lambda + a^2 T) - d \log (\lambda)}{\log ( 1 + b)} . 
\end{align*}
\end{proof}

We remark that a variant of \Cref{lem:elip_pot_bad_event_informal} appeared in concurrent work \citep{kim2021improved}, and originally as an exercise in \cite{lattimore2020bandit}. 

\subsection{Martingale Catoni Estimation}\label{app:cat_est}


\begin{lem}[Martingale Catoni Estimator]\label{lem:martingale_catoni}
Let $\cF_0 \subset \cF_1 \subset \ldots \subset \cF_T$ be a filtration and let $X_1,X_2,\ldots,X_T$ be square-integrable real random variables such that $X_t$ is $\cF_t$-measurable, and
\begin{itemize}
\item Conditional means $\Exp[X_t | \cF_{t-1}] = \zeta_t$ for some (possibly random) $\zeta_t$. 
\item $|\zeta_t| \le \zetabar$ with probability 1 for some fixed (non-random) $\zetabar$.
\item Average conditional mean $\zeta := \frac{1}{T} \sum_{t=1}^T \zeta_t$. 
\item Conditional variances $\sum_{t=1}^T \Exp[ (X_t - \zeta_t)^2 | \cF_{t-1}] \le V$ for some (possibly random) $V > 0$. 
\end{itemize}
 Then for any confidence $\delta \in (0,1)$, fixed $\alpha > 0$, and sample size $T \ge \alpha^2 ( V + \sum_{t=1}^T ( \zeta_t - \zeta)^2) + 2 \log \frac{1}{\delta}$, we have with probability at least $1-2\delta$, the Catoni estimator $\catonii_{T,\alpha}$ satisfies
\begin{align*}
\left| \catonii_{T,\alpha} - \zeta \right| \le \frac{\alpha \left( V + \sum_{t=1}^T (\zeta_t - \zeta)^2\right)}{T} + \frac{2 \log \frac{8 T( 1 +   \zetabar \alpha )^2}{\delta}}{\alpha T} .
\end{align*}
\end{lem}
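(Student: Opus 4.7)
The plan is the now-standard Catoni argument adapted to the martingale setting: build an exponential supermartingale from $\psicat$, control $\fcatoni(z; X_{1:T}, \alpha)$ at each fixed $z$ via Ville's inequality, union-bound over a grid of $z$, and then localize the root $\catonii_{T,\alpha}$ using the strict monotonicity of $\fcatoni$. The linchpin is the pointwise estimate $e^{\psicat(y)} \le 1 + y + y^2$ (and its reflection $e^{-\psicat(y)} \le 1 - y + y^2$), valid for every $y \in \R$: the case $y \ge 0$ is immediate from $\psicat(y) = \log(1+y+y^2)$, while for $y < 0$ one has $e^{\psicat(y)} = 1/(1-y+y^2)$ and the claim reduces to the elementary identity $(1 - y + y^2)(1 + y + y^2) = 1 + y^2 + y^4 \ge 1$.

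For fixed $z \in \R$, define the process
\begin{equation*}
M_t(z) := \exp\!\Big( \sum_{s=1}^t \psicat(\alpha(X_s - z)) - \alpha(\zeta_s - z) - \alpha^2 \Exp[(X_s - \zeta_s)^2 \mid \cF_{s-1}] - \alpha^2(\zeta_s - z)^2 \Big).
\end{equation*}
Expanding $(X_s - z)^2 = (X_s - \zeta_s)^2 + 2(X_s - \zeta_s)(\zeta_s - z) + (\zeta_s - z)^2$, using the tower property with $\Exp[X_s - \zeta_s \mid \cF_{s-1}] = 0$, the pointwise estimate above, and $1 + x \le e^x$, I obtain $\Exp[M_t(z) \mid \cF_{t-1}] \le M_{t-1}(z)$. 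Ville's inequality then yields, with probability at least $1 - \delta$,
\begin{equation*}
\fcatoni(z; X_{1:T}, \alpha) \le \alpha T(\zeta - z) + \alpha^2 V + \alpha^2 \sum_{t=1}^T (\zeta_t - z)^2 + \log(1/\delta),
\end{equation*}
where I absorbed $\sum_s \Exp[(X_s - \zeta_s)^2 \mid \cF_{s-1}] \le V$. A mirror supermartingale built from $-\psicat$ gives the matching lower bound on $\fcatoni(z)$.

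Since $\psicat$ is strictly increasing, $z \mapsto \fcatoni(z;X_{1:T},\alpha)$ is strictly decreasing, so $\fcatoni(z) < 0 \Rightarrow \catonii_{T,\alpha} < z$ and vice versa. Cover $[-\zetabar - 1, \zetabar + 1]$ by a grid $\cG$ of mesh $1/(\alpha T)$, giving $|\cG| \le 4T(1 + \zetabar\alpha)$; a union bound at level $\delta/(2|\cG|)$ extends both the upper and lower pointwise bounds simultaneously to every $z \in \cG$, inflating the confidence term to $\log(8T(1+\zetabar\alpha)^2/\delta)$. Set
\begin{equation*}
\eta := \frac{\alpha\left(V + \sum_{t=1}^T(\zeta_t - \zeta)^2\right)}{T} + \frac{2\log\!\left(8T(1+\zetabar\alpha)^2/\delta\right)}{\alpha T},
\end{equation*}
and let $z_+$ be the grid point nearest $\zeta + \eta$. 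The algebraic identity $\sum_t(\zeta_t - z_+)^2 = \sum_t(\zeta_t - \zeta)^2 + T(\zeta - z_+)^2$ (the cross term vanishes by definition of $\zeta$), combined with the sample-size assumption $T \ge \alpha^2(V + \sum_t(\zeta_t - \zeta)^2) + 2\log(1/\delta)$---which is precisely what is needed to absorb the quadratic error $\alpha^2 T(\zeta - z_+)^2$ into the linear drift $\alpha T(\zeta - z_+)$---then verifies $\fcatoni(z_+) < 0$, whence $\catonii_{T,\alpha} < z_+$. A symmetric $z_-$ near $\zeta - \eta$ gives $\catonii_{T,\alpha} > z_-$, and the grid-rounding error of $1/(\alpha T)$ is dominated by the second summand of $\eta$, yielding the two-sided bound in the statement.

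The main obstacle is the self-consistency step at the end: the target $\zeta \pm \eta$ must lie inside the covered interval (guaranteed by $|\zeta| \le \zetabar$ together with the extra margin of $1$ on each side of $\cG$), and the quadratic term $\alpha^2 T(\zeta - z_+)^2$ generated by expanding $\sum_t(\zeta_t - z_+)^2$ must be controlled by the linear drift $\alpha T(\zeta - z_+)$---this absorption is exactly what the sample-size hypothesis enforces, and it also fixes the scale on which the argument closes. Aside from this, the proof is routine: a one-step exponential inequality for $\psicat$, Ville's inequality, a mesh-$1/(\alpha T)$ union bound matching the precision of the final estimate, and a monotonicity-based localization of the root.
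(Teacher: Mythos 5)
Your outline follows the same path as the paper's proof: build an exponential supermartingale from $\psicat$, apply Markov/Ville to control $\fcatoni(z)$ at fixed $z$, union-bound over a grid (which is necessary precisely because $\zeta$, $\zeta_t$, $V$ are random, the key departure from the fixed-mean Lemma 13 of Wei and Luo, which the paper explicitly flags), and then use strict monotonicity of $\fcatoni$ to localize the root. Your cross-term expansion of $(X_s - z)^2$, the orthogonality identity $\sum_t(\zeta_t - z)^2 = \sum_t(\zeta_t - \zeta)^2 + T(\zeta - z)^2$, and the absorption of the random conditional variances into $V$ are all correct. So the strategy is right.

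The gap is in the constants, and it propagates in a way that means you don't actually recover the stated bound or justify the stated sample-size condition. Your pointwise estimate $e^{\psicat(y)} \le 1 + y + y^2$ puts $\alpha^2$ (not $\alpha^2/2$) in front of the variance and $(\zeta_t - z)^2$ terms in the supermartingale exponent. The paper's proof instead uses $\psicat(y) \le \log(1+y+y^2/2)$, which gives $\alpha^2/2$. (The paper's definition of $\psicat$ is written with $y^2$, inconsistent with its own proof and constants; the proof and the lemma's constants are what they would be for the $y^2/2$ version.) Tracking your $\alpha^2$ through, the quadratic in $u = z_+ - \zeta$ becomes $\alpha^2 T u^2 - \alpha T u + \alpha^2\left(V + \sum_t(\zeta_t-\zeta)^2\right) + L$ with $L = \log(2|\cG|/\delta)$, whose discriminant is nonnegative only if $T \ge 4\left(\alpha^2(V + \sum(\zeta_t-\zeta)^2) + L\right)$, not the stated $T \ge \alpha^2(\cdots) + 2\log(1/\delta)$. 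Your claim that the stated sample-size hypothesis is \emph{precisely} what is needed for the absorption is therefore not correct under your own one-step bound; and the root $\frac{1 - \sqrt{1 - 4(\cdots)/T}}{2\alpha}$ yields $\frac{2\alpha(V + \sum(\zeta_t-\zeta)^2)}{T}$ in place of the stated $\frac{\alpha(V + \sum(\zeta_t-\zeta)^2)}{T}$. The paper closes this cleanly by working with the exact root $z_0$ of a parabola $g(z)$ with the $\alpha^2/2$ coefficients, where the discriminant condition is literally the sample-size hypothesis and $1-\sqrt{1-x}\le x$ gives the stated constant.

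Two smaller points. Your covered interval $[-\zetabar - 1, \zetabar + 1]$ is the wrong scale: the target root satisfies $|z_0 - \zeta| \le 1/\alpha$ under the sample-size condition, so the paper covers $[-\zetabar - 1/\alpha, \zetabar + 1/\alpha]$; with a fixed margin of $1$, you can lose the target when $\alpha < 1$. And the cardinality bound $|\cG| \le 4T(1+\zetabar\alpha)$ for a mesh-$1/(\alpha T)$ grid on your interval is not valid for all $\alpha$ (e.g.\ large $\alpha$); it is a logarithmic detail, but worth getting right since it feeds directly into the final $\log$ term. Finally, the paper's mesh is chosen against the Lipschitz constant of $g$ (which grows like $\alpha^2 T \zetabar$), whereas your mesh $1/(\alpha T)$ is tuned only to $\fcatoni$; because you plug grid points directly into the quadratic rather than into $g$, this is actually fine for your variant, but it is a genuine structural difference worth noting.
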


\begin{proof}
The primary difference between this result and that of Lemma 13 of \cite{wei2020taking} is that we allow for random $\zeta_t$ and $V$, while \cite{wei2020taking} assume they are fixed.
The first portion of this proof follows closely the proof of Lemma 13 of \cite{wei2020taking}. Given the similarity, we omit several algebraic calculations that appear in \cite{wei2020taking}.

\paragraph{Analysis for Fixed $z$.}
Throughout the proof we let $\Exp_t[\cdot] := \Exp[\cdot \mid \cF_{t-1}]$. Note that $\psicat(y) \le \log(1+y+y^2/2)$, so with some calculation we can upper bound, for any fixed $z$, 
\begin{align*}
\Exp_t[\exp (\psicat(\alpha(X_t - z)))] \le \exp \left ( \alpha ( \zeta_t - z) + \frac{\alpha^2 \Exp_t[(X_t - \zeta_t)^2] + \alpha^2 (\zeta_t - z)^2}{2} \right ).
\end{align*}
Define recursively the random variable $Z_0 = 1$, and
\begin{align*}
Z_t = Z_{t-1} \exp (\psicat(\alpha(X_t - z)))\exp \left ( - \left ( \alpha ( \zeta_t - z) + \frac{\alpha^2 \Exp_t[(X_t - \zeta_t)^2] + \alpha^2 (\zeta_t - z)^2}{2} \right ) \right ).
\end{align*}
The previous calculation shows that $\Exp_t[Z_t] \le Z_{t-1}$, which further implies $\Exp[Z_T] \le \Exp_[Z_{T-1}] \le \ldots \le \Exp[Z_1] = 1$. Define
\begin{align*}
g(z) := T \alpha (\zeta - z) + \frac{1}{2} \alpha^2 \sum_{t=1}^T (\zeta_t - z)^2 + \frac{1}{2} \alpha^2 V + \log \frac{1}{\delta} 
\end{align*}
and note that $\fcatoni(z) \ge g(z)$ implies
\begin{align*}
\sum_{t=1}^T \psicat (\alpha(X_t - z)) \ge \sum_{t=1}^T \left ( \alpha (\zeta_t - z) + \frac{\alpha^2 (\zeta_t - z)^2 + \alpha^2 \Exp_t[(X_t - \zeta_t)^2]}{2} \right ) + \log \frac{1}{\delta}.
\end{align*}
This further implies $Z_T \ge 1/\delta$. By Markov's Inequality, we have for fixed $z$ that 
\begin{align*}
\Pr[\fcatoni(z) \ge g(z)] \le \Pr[Z_T \ge 1/\delta] \le \Pr[Z_T \ge \Exp[Z_T]/\delta] \le \delta.
\end{align*}

\paragraph{Covering the Space of $z$.}
Let
\begin{align*}
z_0 = \zeta + \frac{1}{\alpha} \left ( 1 - \sqrt{1 - \frac{\alpha^2 ( V + \sum_{t=1}^T (\zeta_t - \zeta)^2)}{T} - \frac{2}{T} \log \frac{1}{\delta}} \right ) 
\end{align*}
and note that $z_0$ is a root of $g(z)$. Now, if $z_0$ were non-random as in \cite{wei2020taking}, since $\fcatoni$ is monotonic and $\fcatoni(\catonii_{T,\alpha}) = 0$, we would have
\begin{align*}
\Pr[\catonii_{T,\alpha} \ge z_0] = \Pr[\fcatoni(z_0) \ge 0] = \Pr[\fcatoni(z_0) \ge g(z_0)] \le \delta
\end{align*}
which would complete the proof. However, in our setting $z_0$ is random, and it is not clear that the final inequality holds. 

Note that $|z_0| \le \zetabar + 1/\alpha$ with probability 1. Let $\cZ_{\epsilon} = \{ -\zetabar - 1/\alpha, - \zetabar - 1/\alpha + \epsilon, \ldots, \zetabar + 1/\alpha - \epsilon, \zetabar + 1/\alpha \}$ for some $\epsilon$ to be chosen and note that $|\cZ_\epsilon| \le \frac{2(\zetabar + 1/\alpha)}{\epsilon}$. Assume that $|f(z) - f(z')| \le \Lf | z - z' | $ and $|g(z) - g(z')| \le \Lg | z - z'|$ for $z,z' \in [-\zetabar - 1/\alpha, \zetabar + 1/\alpha]$ and some deterministic $\Lf, \Lg$. Note that, similar to the above calculation, for deterministic $x$,
\begin{align*}
\Pr[\fcatoni(z) \ge g(z) - x] \le \Pr[Z_T \ge \exp(-x)/\delta ] \le \Pr[Z_T \ge \Exp[Z_T] \exp(-x)/\delta ] \le \delta \exp(x).
\end{align*}
Fixing $\epsilon = 1/(\Lf + \Lg)$ and union bounding over all $z \in \cZ_\epsilon$, it follows that with probability at least $1 - |\cZ_\epsilon| e \delta$, for all $z \in \cZ_\epsilon$ simultaneously, 
\begin{align*}
\fcatoni(z) \le g(z) - 1.
\end{align*}
By the construction of $\cZ_\epsilon$, for any $z_0$ satisfying $|z_0| \le \zetabar + 1/\alpha$, there exists some $z \in |\cZ_\epsilon|$ such that $|z_0 - z| \le \epsilon = 1/(\Lf + \Lg)$. Thus, it follows that if $\fcatoni(z) \le g(z) - 1$ for all $z \in \cZ_\epsilon$, then 
\begin{align*}
\fcatoni(z_0) - \Lf \epsilon \le g(z_0) + \Lg \epsilon - 1 \implies \fcatoni(z_0) \le g(z_0).
\end{align*}
So it follows that on the event that $\fcatoni(z) \le g(z) - 1$ for all $z \in \cZ_\epsilon$, we have $\fcatoni(z_0) \le g(z_0)$, which further implies $\catonii_{T,\alpha} \ge z_0$. As this event occurs with probability at least $1-|\cZ_\epsilon| e \delta$, it follows that $\catonii_{T,\alpha} \ge z_0$ holds with probability at least $1-|\cZ_\epsilon| e \delta$.

\paragraph{Bounding the Lipschitz Constant of $\fcatoni(z)$ and $g(z)$.}
We have
\begin{align*}
\frac{\rmd}{\rmd z} g(z) = - T\alpha - \alpha^2 \sum_{t=1}^T (\zeta_t - z).
\end{align*}
For $|z| \le \zetabar + 1/\alpha$ and $|\zeta_t| \le \zetabar$, we can then bound
\begin{align*}
| \frac{\rmd}{\rmd z} g(z) | \le T \alpha + \alpha^2 T (2\zetabar + 1/\alpha) =: \Lg. 
\end{align*}
We also have
\begin{align*}
\frac{\rmd}{\rmd z} \fcatoni(z) = -\alpha \sum_{t=1}^T \psicat'(\alpha(X_t - z))
\end{align*}
for
\begin{align*}
\psicat'(y) = \begin{cases} \frac{1+y}{1+y+y^2/2} & y \ge 0 \\
\frac{1-y}{1-y+y^2/2} & y < 0 \end{cases}.
\end{align*}
Note that $|\psicat'(y)| \le 1$ for all $y$. Thus, we can bound
\begin{align*}
| \frac{\rmd}{\rmd z} \fcatoni(z) | \le \alpha T =: \Lf.
\end{align*}
The final result covers by plugging in these values of $\Lf$ and $\Lg$, rescaling $\delta$, and repeating the same calculation in the opposite direction.

\end{proof}

\subsubsection{Heteroscedastic Catoni Estimation}

We recall the heteroscedastic heavy-tailed martingale linear regression setting as defined in \Cref{defn:heavy_mart_reg}.

\defnmartlinreg*

\heterodirectcatoni*
\begin{proof}
We apply \Cref{lem:martingale_catoni} to the scalar data $X_t := \bv^\top  \bphi_t y_t/\sigma_t^2 $. Note that with this choice of $X_t$, 
\begin{align*}
\Exp[X_t| \cF_{t-1}] = \frac{1}{\sigma^2_t} \bv^\top  \bphi_t \bphi_t^\top \btheta_{\star} =: \zeta_t[\bv]
\end{align*} 
so we will have that
\begin{align*}
\frac{1}{T} \sum_{t=1}^T \Exp[X_t | \cF_{t-1}] & = \frac{1}{T} \sum_{t=1}^T \frac{1}{\sigma_t^2} \bv^\top \bphi_t \bphi_t^\top \bthetast= \zeta[\bv].
\end{align*}
Furthermore, we can bound $|\zeta_t[\bv]| \le \| \bv \| \| \bthetast \| / \sigmamin^2$.

Applying \Cref{lem:martingale_catoni} gives that, with probability at least $1-\delta$, 
\begin{align*}
| \catonii[\bv] - \zeta[\bv] | \le \frac{\alpha \left ( V + \sum_{t=1}^T (\zeta_t[\bv] - \zeta[\bv])^2 \right )}{T} + \frac{2 \log \frac{8 T( 1 +   \| \bv \| \| \bthetast \| \alpha / \sigmamin^2 )^2}{\delta}}{\alpha T},
\end{align*}
where $V > 0$  is any upper bound on the quantity
\begin{align*}
V \ge \sum_{t=1}^ \Exp[ (X_t - \zeta_t[\bv])^ 2 \mid \cF_{t-1}] &=  \sum_{t=1}^T \Exp \left [ \sigma_{t}^{-4}\left ( \bv^\top  \bphi_t y_t  - \bv^\top  \bphi_t \bphi_t^\top \btheta_{\star}  \right )^2 | \cF_{t-1} \right ]  \\
&= \sum_{t=1}^T \left ( \bv^\top \bphi_t/\sigma_t \right )^2 \Exp \left [  (  y_t  -  \bphi_t^\top \btheta_{\star}   )^2 / \sigma_t^2 | \cF_{t-1} \right ]
\end{align*}
Our assumption on $\sigma_t$ ensures that
\begin{align}
1 \ge \sigma_t^{-2}\Exp[y_t^2 \mid \cF_{t-1}] &=  \sigma_t^{-2}\Var[y_t] + \sigma_t^{-2}\Exp[y_t \mid \cF_{t-1}]^2 \nonumber \\
&= \Exp\left [  (  y_t  -  \bphi_t^\top \btheta_{\star}   )^2 / \sigma_t^2 \mid \cF_{t-1} \right ]  + \sigma_t^{-2} (\bphi_t^\top \btheta_{\star})^2, \label{eq:sig_bound_inter}
\end{align}
 so it suffices that we select $V$ to be
\begin{align*}
V &= \sum_{t=1}^T \left ( \bv^\top  \bphi_t/\sigma_t \right )^2  = \|\bv\|_{\bSigma_T}^2. 
\end{align*}
Furthermore, since $\zeta[\bv]$ is the average of the terms $\zeta_t[\bv]$, we can upper bound
\begin{align*}
\sum_{t=1}^T (\zeta_t[\bv] - \zeta[\bv])^2 & \le \sum_{t=1}^T \zeta_t[\bv]^2 = \sum_{t=1}^T(\bv^\top  \bphi_t \bphi_t^\top \btheta_{\star} / \sigma_t^2 )^2 \\
&=  \sum_{t=1}^T (\bv^\top \bphi_t / \sigma_t)^2 (\bphi_t^\top \bthetast / \sigma_t )^2 
\end{align*}
Again, our assumption on $\sigma_t$ ensures $(\bphi_t^\top \bthetast / \sigma_t )^2 \le 1$ via \Cref{eq:sig_bound_inter}, so we can bound
\begin{align*}
\sum_{t=1}^T (\zeta_t[\bv] - \zeta[\bv])^2\le \sum_{t=1}^T (\bv^\top \bphi_t / \sigma_t)^2 = \|\bv\|_{\bSigma_T}^2.
\end{align*}
Putting these two bounds together, we have that
\begin{align}
| \catonii[\bv] - \zeta[\bv] | &\le \frac{\alpha \left( V + \sum_{t=1}^T (\zeta_t[\bv] - \zeta[\bv])^2\right)}{T} + \frac{2 \log \frac{16 T( 1 +   \| \bv \| \| \bthetast \| \alpha / \sigmamin^2 )^2}{\delta}}{\alpha T} . \nonumber\\
&\le \frac{2 \alpha \|\bv\|^2_{\bSigma_T}}{T} + \frac{2 \log\frac{16 T( 1 +   \| \bv \| \| \bthetast \| \alpha / \sigmamin^2 )^2}{\delta}}{\alpha T}. \label{eq:alpha_bound_intermediate}
\end{align}
provided that 
\begin{align*}
T &\ge  2\alpha^2 \|\bv\|^2_{\bSigma_T} + 2 \log \frac{2}{\delta} \ge \alpha^2 ( V + \sum_{t=1}^T ( \zeta_t[\bv] - \zeta[\bv])^2) + 2 \log \frac{2}{\delta}.
\end{align*}
Introduce $\alpha_0 := \sqrt{ \frac{ \log \frac{16 T( 1 +   \| \bv \| \| \bthetast \| \alpha / \sigmamin^2 )^2}{\delta}}{ \|\bv\|^2_{\bSigma_T}}}$, so that $\alpha = \min\{\gamma \alpha_0,\alpha_{\max}\}$ (recall $\gamma \ge 1$ is a possibly random scalar but that $\alpha$ is deterministic). Then, $\gamma \alpha_0 \ge \alpha$. Hence, it is enough that
\begin{align*}
T &\ge  2\gamma^2 \alpha_0^2  \|\bv\|^2_{\bSigma_T} + 2 \log \frac{2}{\delta} = (2+2\gamma^2) \log \frac{2}{\delta}. 
\end{align*}
Moreover, using $\alpha = \min\{\gamma\alpha_0,\alpha_{\max}\}$ and $\gamma \ge 1$, we can continue the bound in \Cref{eq:alpha_bound_intermediate} via
\begin{align*}
| \catonii[\bv] - \zeta[\bv] | 
&\le \frac{2 \min\{\alpha_{\max},\gamma\alpha_0\} \|\bv\|_{\bSigma_T}^2}{T} + \frac{2 \log \frac{16 T( 1 +   \| \bv \| \| \bthetast \| \alpha / \sigmamin^2 )^2}{\delta}}{\min\{\alpha_{\max},\gamma\alpha_0\} T}\\
&\le  \frac{2 \gamma\alpha_0 \|\bv\|_{\bSigma_T}^2}{T} + \frac{2 \log \frac{16 T( 1 +   \| \bv \| \| \bthetast \| \alpha_{\max} / \sigmamin^2 )^2}{\delta}}{\gamma \alpha_0 T} + \frac{2 \log \frac{16 T( 1 +   \| \bv \| \| \bthetast \| \alpha_{\max} / \sigmamin^2 )^2}{\delta}}{\alpha_{\max} T}\\
&\le  (2+2\gamma) \|\bv\|_{\bSigma_T} \sqrt{ \frac{ \log \frac{16 T( 1 +   \| \bv \| \| \bthetast \| \alpha_{\max} / \sigmamin^2 )^2}{\delta}}{T^2}} + \frac{2 \log \frac{16 T( 1 +   \| \bv \| \| \bthetast \| \alpha_{\max} / \sigmamin^2 )^2}{\delta}}{\alpha_{\max} T}.
\end{align*}
\end{proof}
\subsection{Self-Normalized Catoni Estimation \label{app:self_norm_catonii}}

\regfunapprox*

We now state an intermediate technical proposition, from which derive our main self-normalized guarantee as a special case:
\begin{prop}\label{lem:catoni_union} Let $c > 0$ denote a universal constant, take parameters $\lambda > 0$ and $\alpha_{\max} \ge 1$, and consider the regression with function approximation of \Cref{defn:regression_with_function_approx} with parameters $d,p,\sigma_{\min}^2,\beta_u,\beta_{\mu},R,H$. For a sample size $T \in \N$ introduce the effective dimension
\begin{align*}
d_T := c\cdot (p+d)\cdot\logterm\left(T, \alpha_{\max}^2,  \lambda^{-1},\sigma_{\min}^{-2},\beta_{\mu},\beta_u,\beta_\eta,R,H\right).
\end{align*} 
For vectors $\bvtil \in \R^d$, define the  mean parameter
\begin{align*}
\zeta[\bvtil] := \frac{1}{T}\bvtil^\top \bSigma_T \cdot\btheta_{\star}  
\end{align*}
and let $\catonii[\bvtil]$ denote the Catoni estimator  using features and $\alpha[\bvtil]$ parameter 
\begin{align*}
X_t = \bvtil^\top \bphi_t y_t / \sigma_t^2, \quad \alpha[\bvtil] = \min \left \{ \sqrt{  \tfrac{d_T + \log 1/\delta}{\|\bvtil\|_{\bSigma_T}^2 }},\, \alphamax \right \}.
\end{align*} 
Then, if $T \ge 6(\log\frac{1}{\delta} + d_T)$, with probability $1 - \delta$, it holds that $\forall \bv \in \Ball^d$ and for all $\bLambda \succeq \lambda I$,
\begin{align*}
 \left| \catonii\left[\bLambda^{-1} \bv\right] - \zeta\left[\bLambda^{-1} \bv\right]\right| \le   4\|\bLambda^{-1}\bv\|_{\bSigma_T} \sqrt{ \frac{\log \frac{1}{\delta} + d_T  }{T^2}} + \frac{3 (\log\frac{1}{\delta}+d_T)}{\alphamax T}.
\end{align*}
\end{prop}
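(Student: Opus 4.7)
The plan is to bootstrap \Cref{lem:catoni2}---which bounds the Catoni estimate in a single deterministic direction, with targets obeying the martingale structure of \Cref{defn:heavy_mart_reg}---into a statement uniform over $(\bv, \bLambda)$, overcoming two obstacles: the target $y_t$ contains $\fst$, which is random and measurable in the full sample, and the direction $\bvtil = \bLambda^{-1}\bv$ together with the Catoni parameter $\alpha[\bvtil]$ may depend on the data. Both are handled by replacing $(\fst, \bvtil, \alpha[\bvtil])$ with nearby elements of a suitable cover, applying \Cref{lem:catoni2} at each net point, and transferring back via a perturbation analysis of the implicitly defined Catoni estimator.

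First I would isolate the function-approximation randomness. For each \emph{deterministic} $f \in \Fclass$, define the surrogate target $y_t[f] := \inner{\bust}{\bphi_t} + f(\bphi_t') + \eta_t$. The linear-expectation identity \eqref{eq:cat_union_lin_exp} of \Cref{defn:regression_with_function_approx} yields $\Exp[y_t[f] \mid \cF_{t-1}] = \innerb{\bphi_t}{\bthetast[f]}$ with $\bthetast[f] := \bust + \int f\,\rmd\bmu$, while \eqref{eq:cat_union_sighat} combined with $\|f\|_\infty \le H$ ensures that $\sigma_t^2$ dominates (up to a constant) the conditional second moment of $y_t[f]$. Thus $(y_t[f])_{t=1}^T$ fits \Cref{defn:heavy_mart_reg} with parameter $\bthetast[f]$, and for any deterministic $\bvtil$ and deterministic $\alpha_0 \in (0,\alpha_{\max}]$, \Cref{lem:catoni2} controls the Catoni estimate of $X_t[f] := \bvtil^\top \bphi_t y_t[f]/\sigma_t^2$, with the slack $\gamma \ge 1$ absorbing any random discrepancy between $\alpha_0$ and the idealized value $\sqrt{d_T + \log 1/\delta}/\|\bvtil\|_{\bSigma_T}$.

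Second I would union-bound over covers: an $\epsilon$-net $\cN_f$ of $\Fclass$ in $\dist_\infty$ with $\log|\cN_f| \le p\log(1 + 2R/\epsilon)$, an $\epsilon$-net of the Euclidean ball $\{\bvtil : \|\bvtil\|_2 \le 1/\lambda\}$ (which contains every $\bLambda^{-1}\bv$ appearing in the statement, since $\bLambda \succeq \lambda I$ and $\|\bv\|_2 \le 1$) with log-cardinality $\le d\log(1 + 2/(\lambda\epsilon))$, and a logarithmic-sized grid of candidate $\alpha_0$ values in $(0,\alpha_{\max}]$. Choosing $\epsilon$ polynomially small in the problem parameters, the union-bound penalty takes the form $(p+d)\logterm(T, \alpha_{\max}, \lambda^{-1}, \sigma_{\min}^{-1}, \beta_\mu, \beta_u, \beta_\eta, R, H)$---precisely the definition of $d_T$.

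Third, and most delicately, I would transfer the bound from the nets to arbitrary $(\bvtil, \fst, \alpha[\bvtil])$ via a sensitivity analysis of $\catonii$. Replacing $\fst$ by $f \in \cN_f$ within $\dist_\infty$-distance $\epsilon$ shifts each datum $X_t$ by at most $\epsilon \|\bvtil\|_2/\sigma_{\min}^2$; perturbing $\bvtil$ within $\epsilon$ in Euclidean norm shifts $X_t$ by at most $\epsilon |y_t|/\sigma_{\min}^2 \le \epsilon(\beta_u + H + \beta_\eta)/\sigma_{\min}^2$; and the data-dependent $\alpha[\bvtil]$ may differ from the nearest grid value $\alpha_0$. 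Since $\catonii$ is the root of $\fcatoni(z) = 0$, I would invoke the implicit function theorem together with $|\psicat'| \le 1$ and a lower bound on $|\fcatoni'|$ at the root to derive Lipschitz-type bounds of order $1/T$ on the derivatives of $\catonii$ with respect to each $X_t$ and to $\alpha$. The hardest step is making this sensitivity bound uniform over the full admissible range of $\alpha \in (0,\alpha_{\max}]$, where $\psicat$ saturates and the naive linearization breaks down---this is exactly the novel perturbation analysis deferred to \Cref{app:Catoni_perturb}. Once in hand, the polynomially-small $\epsilon$ forces every perturbation correction to be $O((\log 1/\delta + d_T)/(\alpha_{\max} T))$, matching the second term of the claimed bound, while the leading term comes directly from \Cref{lem:catoni2} applied at the relevant net point.
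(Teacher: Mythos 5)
Your proposal follows the same strategy as the paper's proof of \Cref{lem:catoni_union}: surrogate targets for fixed $f$, then a union bound over covers of $\Fclass$, of the direction ball $\{\bvtil : \|\bvtil\|_2 \le 1/\lambda\}$, and of a grid of $\alpha$ values, and finally a transfer step via a perturbation/sensitivity analysis of the Catoni root, with the same $O(\epsilon/\sigma_{\min}^2)$-scale perturbations of the data $X_t$. Your appeal to the implicit function theorem is a minor stylistic departure from the paper's mean-value-theorem calculation in \Cref{lem:catoni_perturb}, but it is aimed at the same Lipschitz estimate and is not a substantive difference.

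There is, however, a genuine gap in your first step. You assert that for \emph{every} deterministic $f \in \Fclass$, equation \eqref{eq:cat_union_sighat} together with $\|f\|_\infty \le H$ gives $\Exp[y_t[f]^2 \mid \cF_{t-1}] \lesssim \sigma_t^2$, so that $(y_t[f])_t$ satisfies \Cref{defn:heavy_mart_reg} outright. This is false: the variance bound \eqref{eq:cat_union_sighat} is stated only for the true $\fst$, and the triangle inequality gives $\Exp[y_t[f]^2 \mid \cF_{t-1}] \le \sigma_t^2 + 2\|f - \fst\|_\infty^2$, where the second term can be of order $H^2$ and is \emph{not} dominated by $C\sigma_t^2$ when $\sigma_t$ is near $\sigma_{\min}$. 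The bound therefore holds only for $f$ within $O(\sigma_{\min})$ of the (random) $\fst$. Since your net is deterministic but the variance condition at a given net point $f_0$ is a random event, you cannot simply invoke \Cref{lem:catoni2} unconditionally at each $f_0 \in \cN_f$; you must condition on the event $\cE_{f_0} := \{\Exp[\ytil_t(f_0)^2 \mid \cF_{t-1}] \le \sigma_t^2,\ \forall t\}$, argue the Catoni bound holds whenever $\cE_{f_0}$ does, and then separately show (as in \Cref{lem:variance_condition}) that $\cE_{f_0}$ holds for the particular net point within distance $1/\Cpoly \lesssim \sigma_{\min}$ of $\fst$ --- and this is precisely where the $\tfrac{1}{2}$ slack built into \eqref{eq:cat_union_sighat} is used. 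Without this conditioning argument the union bound is not valid, so as written the proposal does not yield the claimed high-probability statement.
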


\begin{proof}[Proof of \Cref{cor:catoni}] We instantiate \Cref{lem:catoni_union} with $\bLambda = \frac{1}{T}(\lambda I + \bSigma_T) \succeq \frac{1}{T}\bSigma_T$. For this choice of $\bLambda$, it holds that
\begin{align*}
\|\bLambda^{-1}\bv\|_{\bSigma_T}^2  = \bv^\top \bLambda^{-1}\bSigma_T \bLambda^{-1}\bv \le T\|\bv\|_{\bLambda^{-1}}^2.
\end{align*}
Moreover, $\bLambda \succeq \frac{1}{T}\cdot\lambda I$, so taking $\lambda \gets \lambda/T$, $d_T$ still has the same form for a possibly larger constant $c> 0$. Next,
\begin{align*}
\zeta[\bLambda^{-1}\bv] &= \frac{1}{T}\bv^\top  \bLambda^{-1}\bSigma_T \cdot\btheta_{\star}\\
&= \bv^\top  (\bSigma_T + \lambda I)^{-1}\bSigma_T \cdot\btheta_{\star}\\
&= \bv^\top \btheta_{\star} - \lambda  \bv^\top (\bSigma_T + \lambda I)^{-1} \cdot\btheta_{\star}. 
\end{align*}
It follows that
\begin{align*}
\left| \catonii\left[\bLambda_T^{-1} \bv\right] - \bv^\top \btheta_{\star} \right | & \le \left| \catonii\left[\bLambda_T^{-1} \bv\right] - \zeta[\bLambda^{-1} \bv] \right | + | \lambda  \bv^\top (\bSigma_T + \lambda I)^{-1} \cdot\btheta_{\star} | .
\end{align*}
We bound $\left| \catonii\left[\bLambda_T^{-1} \bv\right] - \zeta[\bLambda^{-1} \bv] \right |$ by \Cref{lem:catoni_union} and bound
\begin{align*}
| \lambda  \bv^\top (\bSigma_T + \lambda I)^{-1} \cdot\btheta_{\star} |  & \le \lambda \| \bv^\top (\bSigma_T + \lambda I)^{-1/2} \|_2 \| (\bSigma_T + \lambda I)^{-1/2} \|_\op  \| \bthetast \|_2  \\
& \le \sqrt{\lambda} \| \bv \|_{\bLambda^{-1}} \| \bthetast \|_2 .
\end{align*}

\end{proof}

\begin{proof}[Proof of \Cref{lem:catoni_union}] 
The proof requires a careful covering of directions $\bv^\top \bLambda^{-1}$, and regression functions $f \in \Fclass$.

\paragraph{Notation.} Let us establish some notation to facilliate the covering.  Given $f \in \Fclass$, we define the associated targets
\begin{align*}
\ytil_t(f) := \inner{\bphi_t}{\bust} + f(\bphi'_t) + \eta_t, \quad  \bthettil(f) := \bust +  \int   f(\bphi')  \rmd \bmu(\bphi').
\end{align*}
Given $\bvtil \in \R^d$ and $f \in \Fclass$, define
\begin{align*}
\quad \zeta[\bvtil,f] := \bvtil^\top \bSigma_T \btheta_{\star},
\end{align*}
and let $\catonii[\bvtil,f,\tilde\alpha]$ to denote the Catoni estimator using parameter $\tilde\alpha$ and features
\begin{align} 
 X_t[\bvtil,f] := \frac{1}{\sigma^2_t}\bvtil^\top \bphi_t \ytil_t(f)\label{eq:X_t_def}
\end{align}
Over loading notation, define $\catonii[\bvtil,f]$ to denote the following  estimate using the correct, data-dependent : 
\begin{align}
\catonii[\bvtil,f] = \catonii[\bvtil,f,\alpha[\bvtil]], \quad  \alpha[\bvtil] =\min \left \{\frac{ \sqrt{\log \frac{2M c_T}{\delta}} }{\|\bvtil\|_{\bSigma_T}}, \alphamax \right \}, \label{eq:cat_alpha_2}
\end{align}
where $M$ is some value we will set later, and $c_T$ is as in \Cref{lem:catoni2}, but with $\| \bv \|$ replaced by $1/\lambda$, which is a bound on the norm of $\bvtil$. Note that the correspondence between the original notation parameterized by direction $\bLambda^{-1}\bv$ and the new notation is given by 
\begin{align}
\catonii[\bvtil,f_{\star}] = \catonii[\bLambda^{-1}\bv], \quad \zeta[\bvtil,f_{\star}] = \zeta^\star[\bLambda^{-1}\bv], \quad \bvtil =\bLambda^{-1}\bv.  \label{eq:notation_equiv}
\end{align}

We note that by the assumption that $\|\bv\|_2 \le 1$ and $\bLambda \succeq \lambda I$, it suffices to consider $\bvtil$ in the set
\begin{align*}
\cV := \{\bvtil: \|\bvtil\|_2 \le \beta_{\tilde{v}}, \quad \beta_{\tilde{v}} := 1/\lambda\}.
\end{align*}
Lastly, we define the interval
\begin{align*}
\cA := \{\alpha: \frac{\lambda}{T} \le \alpha \le \alpha_{\max}\}
\end{align*}

\paragraph{Rounding $\alpha$.}
\newcommand{\roundop}{\mathsf{round}}
To handle that $\alpha[\bvtil]$ is data-dependent, we will build a cover using the Catoni estimator with rounded values of $\alpha[\bvtil]$. Note that this step is purely for the analysis, and does not need to be incorporated into the algorithm. For $\epsilon > 0$  and scalar $k$, set 
\begin{align*}
\roundop(x,\epsilon) := \inf\{(1+\epsilon)^k: (1+\epsilon)^k \ge x, \quad k \in \N\}.
\end{align*}
Fixing an $\epsround \in (0,1/4)$ to be chosen, set 
\begin{equation}
\label{eq:catoniiround}
\begin{aligned}
\catround[\bvtil,f] &= \catonii[\bvtil,f,\alpharound[\bvtil]], \quad \\
&\text{where }\alpharound[\bvtil] = \min \left \{\roundop\left(\frac{ \sqrt{\log \frac{2M c_T}{\delta}} }{\|\bvtil\|_{\bSigma_T}},\epsround \right), \alphamax \right \}
\end{aligned}
\end{equation}
Note that since $\|\bphi_t\|_2 \le 1$ and $\|\bvtil\|_2 \le 1/\lambda$ for $\bvtil \in \cV$, we have that $\alpha[\bvtil] \in \cA$ for $\bvtil \in \cV$. Note then that the rounded Catoni parameters lie in the finite set
\begin{align}
\alpharound[\bvtil] \in \cAround, \quad \text{where } \cAround := \left\{(1+\epsround)^k : \frac{\lambda}{T} \le (1+\epsround)^k \le \alpha_{\max} \right\} \cup \{\alpha_{\max}\}.  \label{eq:Around}
\end{align}
Furthermore, the cardinality of $\cAround$ can be crudely bounded by
\begin{equation} 
\begin{aligned}
|\cAround| \le \log_{1+\epsround}(T\alpha_{\max}/\lambda) &= 1+ \frac{\log(T\alpha_{\max}/\lambda)}{\log(1+\epsround)} \\
&\le 1 + (T\alpha_{\max}/\lambda ) \cdot 2/\epsround  \\
&\le 1+\frac{2T\alpha_{\max}}{\lambda\epsround}. \label{eq:cAround_card}
\end{aligned}
\end{equation} 
where we used the crude bound $\log x \le1+ x$ for $x \ge 1$, and $\log (1+\epsilon) \ge \epsilon /2$ for $\epsilon \in (0,1/4)$.

\paragraph{Uniform bound on a cover. } Let $\calN_{1} \subset \cV \subset \R^{d}$ and  $\calN_{2} \subset \Fclass$ denote fixed (deterministic), finite sets whose product $\calN = \calN_1 \times \calN_2$ has cardinality at most $|\cAround| \cdot |\calN|\le M$. We use \Cref{lem:catoni2} to establish a uniform bound on the errors $| \catonii[\bvtil,f,\tilde\alpha] - \zeta[\bvtil,f,\tilde\alpha] |$ of the Catoni esimator corresponding to pairs $(\bvtil, f) \in \calN$ and $\tilde \alpha \in \cAround$. To do this, we have to be somewhat careful, because we require that conditional variances are upper bounded by $\sigma_t^2$. To this end, we argue a bound on the Catoni error when the following \emph{random event} holds:
\begin{align*}
\cE_f &:= \{\Exp \left [   \ytil_t(f)^2 \mid \cF_{t-1} \right ] \le \sigma_t^2, ~\forall t \},
\end{align*}
Note that $\cE_f$ is indeed random because $\sigma_t^2$ are random. Using linearity of expectation, we have
\begin{align*}
\Exp[\ytil_t(f) | \cF_{t-1}] & = \inner{\bphi_t}{\bust} +  \inner{\bphi_t}{\int   f(\bphi') \rmd \bmu(\bphi')}\\
&= \inner{\bphi_t}{\bust +  \int   f(\bphi')  \rmd \bmu(\bphi')}\\
&= \inner{\bphi_t}{\bthettil(f)}.
\end{align*}
so the linearity of expectation assumption required by \Cref{lem:catoni2} will be met. Hence, for all pairs $(\bvtil,f) \in \calN$  such that $\cE_{f}$ holds, and all $\tilde{\alpha} \in \cAround$ such that 
\begin{itemize}
    \item $\tilde{\alpha} \ge \alpha[\bvtil]$
    \item $\tilde{\alpha}$ can be expressed as  $ \min \left \{\tilde{\gamma} \cdot \frac{ \sqrt{\log \frac{2M c_T}{\delta}} }{\|\bvtil\|_{\bSigma_T}}, \alphamax \right \}$
    \item $T \ge (2+2\tilde\gamma) \log\frac{2M c_T}{\delta}$. 
\end{itemize}
then it holds that with probability $1 - \frac{\delta}{M}$,
\begin{align*}
| \catonii[\bvtil,f,\tilde{\alpha}] - \zeta[\bvtil,f] | \le  (2+2\tilde \gamma)\|\bvtil\|_{\bSigma_T} \sqrt{ \frac{\log \frac{2M c_T}{\delta} }{T^2}} + \frac{2 \log\frac{2M c_T}{\delta}}{\alphamax T}, \quad \text{ whenever } \cE_{f} \text{ holds.}
\end{align*}
In particular, selecting $\tilde{\alpha} = \alpharound[\bvtil]$, we can choose $\tilde \gamma = 1+\epsround$. As $\epsround$ was chosen such that $\epsround \le 1/4$, $2 + 2\tilde{\gamma} \le 5$.  Hence, we find that with probability at least $1 - \delta$,
\begin{align*}
| \catround[\bvtil,f] - \zeta[\bvtil,f] | \le  5\|\bvtil\|_{\bSigma_T} \sqrt{ \frac{\log \frac{2M c_T}{\delta} }{T^2}} + \frac{2 \log\frac{2M c_T}{\delta}}{\alphamax T}, \quad \forall (f, \bvtil) \in \calN \text{ such that } \cE_{f} \text{ holds} ,
\end{align*}
provided that  $T \ge (2+2\tilde\gamma^2) \log\frac{2M c_T}{\delta}$. Given our setting of $\tilde{\gamma}$, it suffices to take $T \ge 6\log\frac{2M c_T}{\delta}$.

\paragraph{Approximation by covering.} Having achieved a pointwise bound, we observe that, on the $1-\delta$ event above, for any $(\bvtil,f) \in \cV \times \Fclass$, and any $(\bvtil_0,f_0) \in \calN$ for which $\cE_{f_0}$ holds,
\begin{align}
&| \catonii[\bvtil,f] - \zeta[\bvtil,f] | \nonumber \\
&\quad\le |\catonii[\bvtil,f] - \catround[\bvtil_0,f_0]| + |\zeta[\bvtil,f] - \zeta[\bvtil_0,f_0]| + | \catround[\bvtil_0,f_0] - \zeta[\bvtil_0,f_0] | \nonumber\\
&\quad\le |\catonii[\bvtil,f] - \catround[\bvtil_0,f_0]| + |\zeta[\bvtil,f] - \zeta[\bvtil_0,f_0]| + 5\|\bvtil_0\|_{\bSigma_T} \sqrt{ \frac{\log \frac{2M c_T}{\delta} }{T^2}} + \frac{2 \log\frac{2M c_T}{\delta}}{\alphamax T} \nonumber\\
&\quad\le  5\|\bvtil\|_{\bSigma_T} \sqrt{ \frac{\log \frac{2M c_T}{\delta} }{T^2}} + \frac{2 \log\frac{2M c_T}{\delta}}{\alphamax T}  
+ \underbrace{5 \sqrt{ \frac{\log \frac{2M c_T}{\delta}  }{T^2}}\cdot\big{|} \|\bvtil\|_{\bSigma_T} - \|\bvtil_0\|_{\bSigma_T}\big{|}}_{(i)}\nonumber\\
&\qquad\qquad+\underbrace{|\zeta[\bvtil,f] - \zeta[\bvtil_{0},f_{0}]|}_{(ii)} + \underbrace{|\catonii[\bvtil,f] - \catround[\bvtil_{0},f_{0}]|}_{(iii)} \label{eq:approx_bound_inter}
\end{align}
Recall $\beta_u = \|\bust\|_2$, $\beta_{\mu} = \| |\bmu|(\Ball^{d'}) \|_2$, $\sigma_{\min}^2 \le \sigma_t^2$, and $\dist_{\infty}(f,f_0) := \sup_{\bphi' \in \Ball_{d'}}|f(\bphi') - f_0(\bphi')|$. We further assume that $\|\bvtil_0\|_2,\|\bvtil\|_2 \le \beta_{\tilde{v}}
$.  We show that, for these scalings, it suffices to ensure that $\|\bvtil - \bvtil_0\|_2$ and $\dist_{\infty}(f,f_0)$ are at most polynomial in relevant problem parameters:
\begin{lem}\label{lem:approximation_lemma} There exists a constant $\Cpoly = \poly(\sigma_{\min}^{-2},T,\beta_{\tilde v},\alpha_{\max},H,\beta_u,\beta_{\mu},\beta_{\eta})$ such that, if 
\begin{align*}
\max\{\|\bvtil - \bvtil_0\|_2, \dist_{\infty}(f,f_0),\epsround\} \le 1/\Cpoly,
\end{align*}
then
\begin{align*}
 \text{\normalfont Term $(i)$}  +  \text{\normalfont Term $(ii)$}  +  \text{\normalfont Term $(iii)$}  \le\frac{ \log\frac{2Mc_T}{\delta}}{\alphamax T}.
\end{align*}
\end{lem}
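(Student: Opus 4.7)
The plan is to bound the three terms $(i)$, $(ii)$, $(iii)$ separately, and show each can be made no larger than $\frac{\log(2Mc_T/\delta)}{3\alpha_{\max}T}$ whenever $\|\bvtil-\bvtil_0\|_2$, $\dist_\infty(f,f_0)$, and $\epsround$ are all at most $1/\Cpoly$ for some $\Cpoly$ which is polynomial in the relevant parameters. The common structure is: each term is continuous in $(\bvtil,f,\alpha)$ with a Lipschitz-type modulus that depends polynomially on $\sigma_{\min}^{-2}$, $T$, $\beta_{\tilde v}$, $\alpha_{\max}$, $H$, $\beta_u$, $\beta_\mu$, and $\beta_\eta$, so any polynomial slack in $\Cpoly$ suffices.

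For term $(i)$, I would use the reverse triangle inequality $\bigl|\|\bvtil\|_{\bSigma_T}-\|\bvtil_0\|_{\bSigma_T}\bigr|\le\|\bvtil-\bvtil_0\|_{\bSigma_T}$, combined with $\|\bSigma_T\|_{\op}\le T/\sigma_{\min}^2$, yielding a bound of order $\sqrt{\log(2Mc_T/\delta)/T^2}\cdot\sqrt{T/\sigma_{\min}^2}\cdot\|\bvtil-\bvtil_0\|_2$, which is clearly polynomial-small in $\|\bvtil-\bvtil_0\|_2$. For term $(ii)$, writing $\zeta[\bvtil,f]=\bvtil^\top\bSigma_T\bthettil(f)$ and inserting the hybrid $\bvtil_0^\top\bSigma_T\bthettil(f)$, I would bound it by $\|\bvtil-\bvtil_0\|_2\|\bSigma_T\|_{\op}\|\bthettil(f)\|_2+\|\bvtil_0\|_2\|\bSigma_T\|_{\op}\|\bthettil(f)-\bthettil(f_0)\|_2$. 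The first factor is controlled by $\|\bvtil-\bvtil_0\|_2$, while for the second I would use $\bthettil(f)-\bthettil(f_0)=\int(f-f_0)(\bphi')\rmd\bmu(\bphi')$ and hence $\|\bthettil(f)-\bthettil(f_0)\|_2\le\beta_\mu\cdot\dist_\infty(f,f_0)$, yielding a polynomial bound in $\dist_\infty(f,f_0)$.

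The main obstacle is term $(iii)$, the sensitivity of the Catoni root to perturbations of the features $X_t$ (driven by $\bvtil\to\bvtil_0$ and $f\to f_0$) and of the parameter $\alpha$ (driven by the rounding $\alpha[\bvtil]\to\alpharound[\bvtil_0]$, with multiplicative error bounded by $(1+\epsround)$ times a factor controlled by $\|\bvtil\|_{\bSigma_T}/\|\bvtil_0\|_{\bSigma_T}$). I would invoke the Catoni perturbation analysis referenced in \Cref{app:Catoni_perturb}: since $\catonii$ is the unique root $z$ of $\fcatoni(z;X_{1:T},\alpha)=\sum_t\psi_{\catonii}(\alpha(X_t-z))$, a quantitative implicit function argument shows that $|\catonii[\bvtil,f]-\catround[\bvtil_0,f_0]|$ is bounded above by a polynomial (in $T,\alpha_{\max},\sigma_{\min}^{-2}$) multiple of $\max_t|X_t[\bvtil,f]-X_t[\bvtil_0,f_0]|$ plus a similar polynomial multiple of $|\alpha[\bvtil]-\alpharound[\bvtil_0]|$. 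Expanding $X_t[\bvtil,f]-X_t[\bvtil_0,f_0]=\sigma_t^{-2}(\bvtil-\bvtil_0)^\top\bphi_t\,\ytil_t(f)+\sigma_t^{-2}\bvtil_0^\top\bphi_t(f(\bphi'_t)-f_0(\bphi'_t))$ and using $|\ytil_t(f)|\le\beta_u+H+\beta_\eta$, $\|\bvtil_0\|_2\le\beta_{\tilde v}$, $\sigma_t^{-2}\le\sigma_{\min}^{-2}$, each piece is polynomially controlled by $\|\bvtil-\bvtil_0\|_2$, $\dist_\infty(f,f_0)$, and $\epsround$.

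Consolidating, each of the three bounds scales at most as a fixed polynomial $\Cpoly(\sigma_{\min}^{-2},T,\beta_{\tilde v},\alpha_{\max},H,\beta_u,\beta_\mu,\beta_\eta)$ times $\max\{\|\bvtil-\bvtil_0\|_2,\dist_\infty(f,f_0),\epsround\}$, so absorbing a further factor of $\alpha_{\max}T/\log(2Mc_T/\delta)$ into $\Cpoly$ (using that $\log(2Mc_T/\delta)\ge1$ up to harmless adjustment) and taking the hypothesized cover radius equal to $1/\Cpoly$ delivers the claimed bound. The only subtle step is the Catoni sensitivity argument for term $(iii)$; the remaining estimates are direct triangle-inequality computations.
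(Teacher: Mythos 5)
Your proposal is correct and follows essentially the same route as the paper's proof: the same reverse-triangle-inequality bound with $\|\bSigma_T\|_\op \le T/\sigma_{\min}^2$ for Term $(i)$, the same hybrid decomposition and the estimates $\|\bthettil(f)-\bthettil(f_0)\|_2 \le \beta_\mu\,\dist_\infty(f,f_0)$ and $\|\bthettil(f_0)\|_2\le\beta_u+H\beta_\mu$ for Term $(ii)$, and the same reduction of Term $(iii)$ to a Catoni root-sensitivity lemma (\Cref{lem:catoni_perturb}) applied to the data gap $\epsilon_X$ and parameter gap $\epsilon_\alpha$. The one small imprecision — claiming each term is a fixed polynomial \emph{times} the perturbation — is not literally true for Term $(iii)$, since the root-sensitivity bound contributes a $\sqrt{\epsilon}$ term, so the cover radius must scale like (target tolerance)$^2/\poly$ rather than (target tolerance)$/\poly$; but this only enlarges $\Cpoly$ by another polynomial factor, exactly as you acknowledge when absorbing $\alpha_{\max}T/\log(2Mc_T/\delta)$ into $\Cpoly$, so the conclusion is unchanged.
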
We defer the proof of \Cref{lem:approximation_lemma} to the end of the section. 
In addition, we show that if $\dist_{\infty}(f,f_0)$ is sufficiently small, then the element $f_0$ from the covering satisfies the desired variance upper bound:
\begin{lem}\label{lem:variance_condition} Suppose that $f$ satisfies the variance bound in \Cref{eq:cat_union_sighat}, that is, 
\begin{align}
\Exp  [ \left(  \inner{\bphi_t}{\bust} + f(\bphi'_t) + \eta_t \right )^2 \mid \cF_{t-1} ] \le \frac{1}{2}\sigma_t^2. \label{eq:f_var_desired}
\end{align}
Then, if $\dist_{\infty}(f,f_0) \le 1/\Cpoly$ for an appropriate choice of $\Cpoly$ as in \Cref{lem:approximation_lemma},  $\cE_{f_0}$ holds, i.e., 
\begin{align*}\Exp [ \left(  \inner{\bphi_t}{\bust} + f_0(\bphi'_t) + \eta_t \right )^2 \mid \cF_{t-1} ] \le \sigma_t^2 .
\end{align*}
\end{lem}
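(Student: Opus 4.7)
The strategy is a direct expansion: write the squared quantity for $f_0$ as the squared quantity for $f$ plus a cross-term controlled by $\dist_\infty(f, f_0)$. First, I would abbreviate $X_t := \innerb{\bphi_t}{\bust} + \eta_t$ (which is $\cF_t$-measurable but not $\cF_{t-1}$-measurable in general) and use the identity $(X_t + f_0(\bphi'_t))^2 - (X_t + f(\bphi'_t))^2 = (f_0(\bphi'_t) - f(\bphi'_t))\bigl((X_t + f_0(\bphi'_t)) + (X_t + f(\bphi'_t))\bigr)$ to separate the two squared expressions.

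Next, I would invoke the uniform boundedness assumptions from \Cref{defn:regression_with_function_approx}: functions in $\Fclass$ map into $[-H, H]$, so $|f(\bphi'_t)|, |f_0(\bphi'_t)| \le H$; $\|\bust\|_2 \le \beta_u$ and $\|\bphi_t\|_2 \le 1$ give $|\innerb{\bphi_t}{\bust}| \le \beta_u$; and $|\eta_t| \le \beta_\eta$ almost surely. Combining these yields the almost-sure bound $\bigl|(X_t + f_0(\bphi'_t)) + (X_t + f(\bphi'_t))\bigr| \le 2(H + \beta_u + \beta_\eta)$. Taking conditional expectations,
\begin{align*}
\Exp[(X_t + f_0(\bphi'_t))^2 \mid \cF_{t-1}] \le \Exp[(X_t + f(\bphi'_t))^2 \mid \cF_{t-1}] + 2(H + \beta_u + \beta_\eta)\cdot\dist_\infty(f, f_0).
\end{align*}

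Finally, I would combine this with the assumed variance bound \eqref{eq:f_var_desired} to obtain
\begin{align*}
\Exp[(X_t + f_0(\bphi'_t))^2 \mid \cF_{t-1}] \le \tfrac{1}{2}\sigma_t^2 + 2(H + \beta_u + \beta_\eta)\cdot\dist_\infty(f, f_0).
\end{align*}
Using $\sigma_t^2 \ge \sigma_{\min}^2$, it suffices that $\dist_\infty(f, f_0) \le \sigma_{\min}^2 / \bigl(4(H + \beta_u + \beta_\eta)\bigr)$, so that the slack term is at most $\tfrac{1}{2}\sigma_{\min}^2 \le \tfrac{1}{2}\sigma_t^2$. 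Hence enlarging the polynomial $\Cpoly$ from \Cref{lem:approximation_lemma} (which already depends on $\sigma_{\min}^{-2}$, $H$, $\beta_u$, and $\beta_\eta$) by an absolute constant factor guarantees $1/\Cpoly$ is small enough for the bound to yield $\le \sigma_t^2$, establishing $\cE_{f_0}$. There is no real obstacle here: the argument is a one-line telescoping of squares, and the main ``work'' is simply verifying that the universal polynomial $\Cpoly$ can be taken to simultaneously absorb the tolerance required by \Cref{lem:approximation_lemma} and by the factor $4(H + \beta_u + \beta_\eta)/\sigma_{\min}^2$ appearing here.
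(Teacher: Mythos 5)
Your proof is correct, and it takes a somewhat different algebraic route than the paper. The paper applies the elementary inequality $(a+b)^2 \le 2a^2 + 2b^2$ with $a = \innerb{\bphi_t}{\bust} + f(\bphi'_t) + \eta_t$ and $b = f_0(\bphi'_t) - f(\bphi'_t)$, giving
\[
\Exp[\,\cdot\, \mid \cF_{t-1}] \le 2\Exp\!\left[(\innerb{\bphi_t}{\bust} + f(\bphi'_t) + \eta_t)^2 \mid \cF_{t-1}\right] + 2\,\dist_\infty(f,f_0)^2,
\]
and then appeals to \eqref{eq:f_var_desired} and a suitable $\Cpoly$. That route only uses $L^2$ information and makes the slack term \emph{quadratic} in $\dist_\infty(f,f_0)$. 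You instead factor the difference of squares and invoke the almost-sure bounds $|f|,|f_0| \le H$, $|\innerb{\bphi_t}{\bust}| \le \beta_u$, $|\eta_t| \le \beta_\eta$ to get a slack term that is \emph{linear} in $\dist_\infty(f,f_0)$ with coefficient $2(H+\beta_u+\beta_\eta)$. Both slacks are absorbed by the polynomial $\Cpoly$, which already depends on all of $\sigma_{\min}^{-2}$, $H$, $\beta_u$, $\beta_\eta$, so either path closes the argument. Your version uses a stronger hypothesis (a.s. boundedness, which is in fact assumed in \Cref{defn:regression_with_function_approx}) but pays off by arriving transparently at $\tfrac{1}{2}\sigma_t^2 + \text{slack}$, so that the factor $\tfrac{1}{2}$ in \eqref{eq:cat_union_sighat} visibly leaves exactly the headroom needed; the paper's crude doubling technically consumes that headroom entirely (the bound after applying \eqref{eq:f_var_desired} becomes $\sigma_t^2 + 2\dist_\infty^2$, not $\tfrac{1}{2}\sigma_t^2 + 2\dist_\infty^2$ as written), so your constant-tracking is actually the cleaner of the two.
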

\begin{proof} 
For any $f$ satisfying \Cref{eq:f_var_desired},
\begin{align*}
\Exp [ \left(  \inner{\bphi_t}{\bust} + f_0(\bphi'_t) + \eta_t \right )^2 \mid \cF_{t-1} ]  &\le 2\Exp [ \left(  \inner{\bphi_t}{\bust} + f(\bphi'_t) + \eta_t \right )^2 \mid \cF_{t-1} ] + 2\Exp [ \left(   f(\bphi'_t)  - f_0(\bphi'_t) \right )^2 \mid \cF_{t-1} ]\\
&\le \frac{1}{2}\sigma_t^2 + 2\dist_{\infty}(f,f_0)^2.
\end{align*}
Since $\sigma_t^2 \ge \sigma_{\min}^2$, it is enough that $\dist_{\infty}(f,f_0)^2 \le \frac{\sigma_{\min}^2}{2}$, which is ensured by an appropriate choice of $\Cpoly$. 
\end{proof}

\paragraph{Concluding the proof}
Let us summarize our current findings.
We see that if $\calN \subset \cV \times \Fclass$ is a collection of pairs $(\bvtil_0,f_0)$ satisfying
\begin{itemize}
    \item The cardinality bound $|\cAround| \cdot |\calN|\le M$
    \item The approximation bound that, 
    \begin{align*}
    \forall \bvtil \in \cV, f \in \Fclass,  \quad \exists(\bvtil_0,f_0) \in \calN \text{ such that } \max\{\|\bvtil - \bvtil_0\|_2,\, \dist_{\infty}(f,f_0)\} \le 1 /\Cpoly,
    \end{align*}
    and that $\epsround \le 1/\Cpoly$, where again 
    \begin{align*}
    \Cpoly = \poly(\sigma_{\min}^{-2},T,\beta_{\tilde{v}},\alpha_{\max},H,\beta_\mu,\beta_u,\beta_{\eta}) = \poly(\sigma_{\min}^{-2},T,1/\lambda,\alpha_{\max},H,\beta_u,\beta_{\mu},\beta_{\eta}).
    \end{align*}
\end{itemize}
Then, \Cref{eq:approx_bound_inter}, the fact that $\fst$ satisfies \Cref{eq:f_var_desired}, and \Cref{lem:approximation_lemma,lem:variance_condition} imply that with probability $1 - \delta$,
\begin{align*}
\quad \forall \bvtil \in \calV, \quad | \catonii(\bvtil,f_{\star}) - \zeta(\bvtil,f_{\star}) | \le   5\|\bvtil\|_{\bSigma_T} \sqrt{ \frac{\log \frac{2M c_T}{\delta}   }{T^2}} + \frac{3 \log\frac{2M c_T}{\delta}}{\alphamax T},
\end{align*}
provided that $T \ge 6 \log\frac{2M c_T}{\delta}$. We now find an $M$ sufficiently large to ensure the covering conditions hold. To this end, it suffices to ensure that $\calN = \calN_1 \times \calN_2$, where $\calN_1$ is an $\epsilon = 1/\Cpoly$ net of $\cV$, and $\calN_2$ is an $\epsilon = 1/\Cpoly$ net of $\Fclass$ in the norm $\dist_{\infty}(\cdot,\cdot)$. By  \Cref{lem:euc_ball_cover} and the fact that $\cV$ is a Euclidean ball of radius $\beta_{\tilde{v}} = 1/\lambda$, it suffices to take
\begin{align*}
\log|\calN_1| \le d \log (1+\frac{2\Cpoly}{\lambda }).
\end{align*}
Similarly, by assumption that the covering numbers of $\Fclass$ are $\covnum(\Fclass,\dist_{\infty}(\cdot,\cdot),\epsilon) \le p \log(1 + \frac{R}{\epsilon})$,
\begin{align*}
\log|\calN_2| \le p \log (1+2 R\Cpoly).
\end{align*}
Finally, using the bound on $|\cAround|$ from \Cref{eq:cAround_card}, 
\begin{align*}
\log |\cAround| \le \log (1+\frac{2T\alpha_{\max}}{\lambda\epsround}) \le \log (1+\frac{2\Cpoly T\alpha_{\max}}{\lambda}).
\end{align*}
Hence, we can bound, for universal constants $c',c>0$,
\begin{align*}
\log\frac{2Mc_T}{\delta} &= \log\frac{2|\cAround||\calN_1||\calN_2|c_T}{\delta}\\
&\le \log\frac{1}{\delta} + \log 2+ (p+d)\cdot c'\,\logterm\left( R, T,\alpha_{\max},\lambda^{-1},\Cpoly, \| \bthetast \|_2, \sigmamin^{-1} \right) \\
&= \log\frac{1}{\delta} + \underbrace{(p+d)\cdot c\cdot\logterm\left(\frac{1}{\delta},T, R, \lambda^{-1},\sigma_{\min}^{-2}, \alpha_{\max}^2,H,\beta_{\mu},\beta_u, \beta_\eta\right)}_{:= d_T}.
\end{align*}
For this choice of $M$,
\begin{align*}
\quad \forall \bvtil \in \calV, \quad | \catonii(\bvtil,f_{\star}) - \zeta(\bvtil,f_{\star}) | \le    5\|\bvtil\|_{\bSigma_T} \sqrt{ \frac{\log \frac{2}{\delta}  + d_T }{T^2}} + \frac{3 \log\frac{2M c_T}{\delta}}{\alphamax T},
\end{align*}
which, returning to the orginal notation parameterized by $\vLam$ and noting the equivalence of notation in \Cref{eq:notation_equiv}, we see that with probability $1 - \delta$, it holds that $\forall \bv \in \Ball^d$ and $\bLambda \succeq \lambda I$ (ensuring $\bLambda^{-1}\bv \in \cV$)
\begin{align*}
 | \catonii[\bLambda^{-1} \bv] - \zeta[\bLambda^{-1} \bv] | \lesssim    \| \bLambda^{-1} \bv \|_{\bSigma_T}\sqrt{ \frac{\log \frac{1}{\delta} + d_T }{T^2}} + \frac{ \log\frac{1}{\delta}+d_T}{\alphamax T}.
\end{align*}

\end{proof}

\subsubsection{Proofs supporting \Cref{lem:catoni_union}}

\begin{proof}[Proof of \Cref{lem:approximation_lemma}]

Recall $\beta_u = \|\bust\|_2$, $\beta_{\mu} = \| |\bmu|(\Ball^{d'})\|_2$, $\sigma_{\min}^2 \le \sigma_t^2$, $\beta_{\eta} \ge |\eta_t|$ with probability 1, and $\dist_{\infty}(f,f_0) := \sup_{\bphi' \in \Ball_{d'}}|f(\bphi') - f_0(\bphi')|$. We further assume that $\bvtil_0,\bvtil \in \calV$, i.e. 
\begin{align*}
\|\bvtil_0\|_2,\|\bvtil\|_2 \le \beta_{\tilde{v}},
\end{align*}
and that we may choose $\Cpoly = \poly(\sigma_{\min}^{-2},T,\beta_{\tilde v},\alpha_{\max},H,\beta_u,\beta_u,\beta_{\eta})$ to be an aritrary polynomial in these quantities. We move term by term, showing we can make each at most $\frac{\log (2M c_T/\delta)}{3\alpha_{\max} T}$ by selecting $\Cpoly$ appropriately. Throughout, we use the fact that, for $\delta \in (0,1/2)$ and $M \ge 1$, $\log (2M c_T/\delta) \ge 1$.

\begin{claim}[Bounding Term $(i)$]\label{claim:term_i} {\normalfont Term $(i)$} is at most $\sqrt{\frac{\log(2M c_T/\delta)}{T\sigma_{\min}^2}}\cdot\|\bvtil - \bvtil_{0}\|_2$. Hence, for an appropriate choice of $\Cpoly$, the above is at most {\normalfont Term $(i)$} is at most  $\frac{\sqrt{\log (2M c_T/\delta)}}{3\alpha_{\max} T} \le \frac{\log (2M c_T/\delta)}{3\alpha_{\max} T} $.
\end{claim}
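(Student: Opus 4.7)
The plan is to control Term $(i)$ by a Lipschitz-type argument, passing from the $\bSigma_T$-weighted norm to the Euclidean norm via an operator-norm bound on $\bSigma_T$. First I would apply the reverse triangle inequality to obtain $\big{|}\|\bvtil\|_{\bSigma_T} - \|\bvtil_{0}\|_{\bSigma_T}\big{|} \le \|\bvtil - \bvtil_{0}\|_{\bSigma_T}$. Then, since $\bSigma_T = \sum_{t=1}^T \sigma_t^{-2}\bphi_t\bphi_t^\top$ with $\|\bphi_t\|_2 \le 1$ and $\sigma_t^2 \ge \sigma_{\min}^2$, a sum-of-rank-ones bound gives $\|\bSigma_T\|_{\op} \le T/\sigma_{\min}^2$, so $\|\bvtil - \bvtil_{0}\|_{\bSigma_T} \le (\sqrt{T}/\sigma_{\min})\,\|\bvtil - \bvtil_{0}\|_2$. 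Chaining these two inequalities and plugging into the definition of Term $(i)$ yields
\begin{align*}
\text{Term } (i) \;\le\; 5\sqrt{\frac{\log(2M c_T/\delta)}{T^2}} \cdot \frac{\sqrt{T}}{\sigma_{\min}}\,\|\bvtil - \bvtil_{0}\|_2 \;=\; \frac{5\sqrt{\log(2M c_T/\delta)}}{\sqrt{T}\,\sigma_{\min}}\cdot\|\bvtil - \bvtil_{0}\|_2,
\end{align*}
which is (up to the harmless absolute constant $5$) exactly the bound stated in the claim.

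To finish, I would choose the polynomial $\Cpoly$ large enough that $\|\bvtil - \bvtil_{0}\|_2 \le 1/\Cpoly$ forces the right-hand side to be at most $\sqrt{\log(2Mc_T/\delta)}/(3\alphamax T)$; inspecting the inequality, it suffices to take $\Cpoly \ge 15\,\alphamax\sqrt{T}/\sigma_{\min}$, which is a legal choice since $\Cpoly$ was stipulated to be an arbitrary polynomial in $\sigma_{\min}^{-2}, T, \alphamax$. The very last step passes from $\sqrt{\log(2Mc_T/\delta)}$ to $\log(2Mc_T/\delta)$ using the standing fact (noted just before the claim) that $\log(2Mc_T/\delta) \ge 1$, which holds because $M \ge 1$ and $\delta < 1/2$. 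There is no real obstacle: the argument is a straightforward Lipschitz-in-$\bvtil$ calculation, with the only point requiring any attention being keeping track of the $\sigma_{\min}^{-1}$ and $\sqrt{T}$ factors that must ultimately be absorbed into $\Cpoly$.
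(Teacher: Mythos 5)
Your proof is correct and matches the paper's argument essentially step for step: both pass from $\big|\|\bvtil\|_{\bSigma_T} - \|\bvtil_0\|_{\bSigma_T}\big|$ to $\sqrt{\|\bSigma_T\|_{\op}}\,\|\bvtil - \bvtil_0\|_2$ (you just break this into the reverse triangle inequality plus the operator-norm step, which the paper states in one line), then bound $\|\bSigma_T\|_{\op} \le T\sigma_{\min}^{-2}$, and absorb the remaining factors into $\Cpoly$. You also correctly note that the claim's stated bound silently drops the harmless constant $5$, which is absorbed by $\Cpoly$ in any case.
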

\begin{proof} We have
\begin{align*}
\text{\normalfont Term $(i)$} &:= 5 \sqrt{ \frac{\log \frac{2M c_T}{\delta} \cdot }{T^2}}\cdot\left| \|\bvtil\|_{\bSigma_T} - \|\bvtil_0\|_{\bSigma_T}\right|\\
& \le 5 \sqrt{ \frac{\log \frac{2M c_T}{\delta} \cdot }{T^2}}\cdot \sqrt{\|\bSigma_T\|_{\op}}\cdot \|\bvtil - \bvtil_0\|_2.
\end{align*}
Since $\|\bphi_t\| \le 1$ and $\sigma_t \ge \sigma_{\min}$ by assumption, $\|\bSigma_T\|_{\op} = \|\sum_{t=1}^T \sigma_t^{-2}\bphi_t\bphi_t^\top\|_{\op} \le T\sigma_{\min}^{-2}$. The bound follows.
\end{proof}
\begin{claim}[Bounding Term $(ii)$] We can bound
\begin{align}
 \text{\normalfont Term (ii)} := |\zeta[\bvtil,f] - \zeta[\bvtil_{0},f_{0}]| \le \frac{1}{\sigma_{\min}^2}\left(\beta_{\mu}\beta_{\tilde{v}} \cdot \dist_{\infty}(f,f_{0}) + (\beta_u + H \beta_{\mu})\|\bvtil - \bvtil_{0}\|_2 \right). 
 \end{align}
 Hence, the appropriate choice of $\Cpoly$ ensures {\normalfont Term $(ii)$} is at most  $ \frac{\log (2M c_T/\delta)}{3\alpha_{\max} T} $.
\end{claim}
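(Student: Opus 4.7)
\medskip

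\noindent\textbf{Proof proposal.} The plan is to decompose the difference into a contribution from changing $\bvtil$ and a contribution from changing $f$, control each by Cauchy--Schwarz together with a uniform operator-norm bound on $\bSigma_T$, and then absorb the resulting constants into $\Cpoly$. Recalling that $\zeta[\bvtil,f] = \tfrac{1}{T}\bvtil^\top \bSigma_T \bthettil(f)$ with $\bthettil(f) := \bust + \int f(\bphi')\rmd\bmu(\bphi')$ (the $f_\star$ case reproducing $\btheta_\star$ in the earlier notation), I would begin by writing the telescope
\begin{align*}
\zeta[\bvtil,f] - \zeta[\bvtil_0,f_0]
= \tfrac{1}{T}(\bvtil - \bvtil_0)^\top \bSigma_T \bthettil(f_0)
+ \tfrac{1}{T}\bvtil^\top \bSigma_T\bigl(\bthettil(f) - \bthettil(f_0)\bigr).
\end{align*}

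Next I would apply Cauchy--Schwarz in each summand, using the operator-norm estimate
\begin{align*}
\|\bSigma_T\|_{\op} \le \sum_{t=1}^T \sigma_t^{-2}\|\bphi_t\|_2^2 \le T/\sigma_{\min}^2,
\end{align*}
so that the factor of $1/T$ exactly cancels the $T$. For the first summand this yields a bound of $\tfrac{1}{\sigma_{\min}^2}\|\bvtil - \bvtil_0\|_2\cdot\|\bthettil(f_0)\|_2$, and the triangle inequality together with $\|\bust\|_2\le\beta_u$ and $\||\bmu|(\Ball^{d'})\|_2\le\beta_\mu$, plus the fact that $f_0$ maps into $[-H,H]$, give $\|\bthettil(f_0)\|_2 \le \beta_u + H\beta_\mu$. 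For the second summand, the same operator-norm bound together with $\|\bvtil\|_2 \le \beta_{\tilde v}$ reduces matters to bounding
\begin{align*}
\|\bthettil(f)-\bthettil(f_0)\|_2 = \Bigl\|\int (f(\bphi')-f_0(\bphi'))\rmd\bmu(\bphi')\Bigr\|_2 \le \dist_\infty(f,f_0)\cdot\||\bmu|(\Ball^{d'})\|_2 \le \beta_\mu\dist_\infty(f,f_0),
\end{align*}
where the first inequality is the standard triangle inequality for Bochner-type integrals against the total variation measure $|\bmu|$. Adding the two pieces gives exactly the claimed bound
\begin{align*}
|\zeta[\bvtil,f]-\zeta[\bvtil_0,f_0]| \le \tfrac{1}{\sigma_{\min}^2}\bigl(\beta_\mu\beta_{\tilde v}\dist_\infty(f,f_0) + (\beta_u+H\beta_\mu)\|\bvtil-\bvtil_0\|_2\bigr).
\end{align*}

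Finally, for the ``hence'' clause, since $\log(2Mc_T/\delta)\ge 1$ it suffices to make the right-hand side at most $\tfrac{1}{3\alpha_{\max}T}$. If $\max\{\|\bvtil-\bvtil_0\|_2,\dist_\infty(f,f_0)\}\le 1/\Cpoly$, then choosing
\begin{align*}
\Cpoly \;\ge\; 3\alpha_{\max}T\cdot\sigma_{\min}^{-2}\bigl(\beta_\mu\beta_{\tilde v} + \beta_u + H\beta_\mu\bigr)
\end{align*}
(which is polynomial in the stated parameters) yields the claimed bound $\tfrac{\log(2Mc_T/\delta)}{3\alpha_{\max}T}$. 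The argument is essentially a routine linearity/Lipschitz calculation; the only mild subtlety is that the $f$-dependence enters through the signed-measure integral $\int f\rmd\bmu$, which is handled cleanly by the total-variation estimate.
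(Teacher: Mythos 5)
Your proof is correct and follows essentially the same approach as the paper: the same telescoping decomposition into a $\bvtil$-perturbation term and an $f$-perturbation term, the same operator-norm bound $\|\bSigma_T\|_{\op}\le T/\sigma_{\min}^2$, and the same estimates $\|\bthettil(f)-\bthettil(f_0)\|_2\le\beta_\mu\dist_\infty(f,f_0)$ and $\|\bthettil(f_0)\|_2\le\beta_u+H\beta_\mu$ via the total-variation bound. The only addition is your explicit choice of $\Cpoly$ in the ``hence'' clause, which the paper leaves implicit, and it is the right one.
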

\begin{proof} We expand
\begin{align*}
\zeta[\bvtil,f] - \zeta[\bvtil_{0},f_{0}] &=   \frac{1}{T} \left(\bvtil^\top\bSigma_T\bthettil(f) -\bvtil_0^\top\bSigma_T\bthettil(f_0)\right)\\
&=   \frac{1}{T} \bvtil^\top\bSigma_T\left(\bthettil(f) - \bthettil(f_0)\right) + \frac{1}{T} \left(\bvtil - \bvtil_0\right)^\top\bSigma_T\bthettil(f_0).
\end{align*}
Hence, using the bound $\|\bSigma_T\|_{\op} \le T/\sigma_{\min}^2$ developed above, 
\begin{align*}
|\zeta[\bvtil,f] - \zeta[\bvtil_{0},f_{0}] | \le \frac{1}{\sigma_{\min}^2} (\|\bvtil\||\bthettil(f) - \bthettil(f_0)| + |\bthettil(f_0)|\|\bvtil - \bvtil_0\| ).
\end{align*}
Note that $\|\bvtil\| \le \beta_{\tilde{v}}$ by assumption. Further, we bound
\begin{align}\label{eq:theta_dif}
\| \bthettil(f) -\bthettil(f_{0})\|_{2} = \left\|  \int   \left(f -f_0\right)(\bphi')  \rmd \bmu(\bphi') \right\| \le \| |\bmu|(\Ball_{d'})\|_2 \cdot \|f - f_0\|_{\mathcal{L}_{\infty}(\Ball_{d'})} \le \beta_{\mu}\dist_{\infty}(f,f_0).
\end{align}
and moreover,
\begin{align}\label{eq:theta_unif}
\| \bthettil(f_{0})\|_{2} \le \|\bust\|_2 + \left\|  \int  f_0(\bphi')  \rmd \bmu(\bphi') \right\| \le \beta_u + \| |\bmu|(\Ball_{d'}) \|_2 \max_{\bphi' \in \Ball_{d'}} |f_0(\bphi')| \le \beta_u + H \beta_{\mu}. 
\end{align}
Combining the bounds concludes the proof.
\end{proof}

\begin{claim}[Bounding Term $(iii)$]
An appropriate choice of $\Cpoly$ ensures {\normalfont Term $(iii)$} is at most  $ \frac{\log (2M c_T/\delta)}{3\alpha_{\max} T} $.
\end{claim}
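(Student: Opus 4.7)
The plan is to split via the triangle inequality through an intermediate Catoni estimate that fixes the data but uses the ``unrounded'' parameter, and then control each piece by a separate sensitivity analysis:
\begin{align*}
|\catonii[\bvtil,f] - \catround[\bvtil_0,f_0]| &\le \underbrace{|\catonii[\bvtil,f,\alpha[\bvtil]] - \catonii[\bvtil_0,f_0,\alpha[\bvtil]]|}_{(iii.a)} + \underbrace{|\catonii[\bvtil_0,f_0,\alpha[\bvtil]] - \catonii[\bvtil_0,f_0,\alpharound[\bvtil_0]]|}_{(iii.b)}.
\end{align*}
Term $(iii.a)$ measures the sensitivity of $\catonii$ to a perturbation of the sample $(X_t)_{t=1}^T$, whereas term $(iii.b)$ measures its sensitivity to a perturbation of the tuning parameter $\alpha$. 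In both cases the appropriate sensitivity is supplied by the perturbation analysis of the Catoni estimator in \Cref{app:Catoni_perturb}, which I would invoke as a black box.

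First I would bound the pointwise change in the data. Using $\|\bphi_t\|_2\le 1$, $\sigma_t^2\ge\sigma_{\min}^2$, $\|\bvtil\|_2,\|\bvtil_0\|_2\le\beta_{\tilde v}$, $|\eta_t|\le\beta_\eta$, and $\sup_{\bphi'}|f(\bphi')|\le H$, the definition \eqref{eq:X_t_def} of $X_t[\bvtil,f]$ gives
\begin{align*}
|X_t[\bvtil,f]-X_t[\bvtil_0,f_0]| \le \tfrac{1}{\sigma_{\min}^2}\bigl((\beta_u+H+\beta_\eta)\,\|\bvtil-\bvtil_0\|_2 + \beta_{\tilde v}\,\dist_\infty(f,f_0)\bigr).
\end{align*}
The Catoni perturbation lemma then says that, since $\psicat$ has bounded derivative and since $\alpha\le\alpha_{\max}$ provides a uniform lower bound on $|\fcatoni'(z)|$ near its root (so that the implicit-function argument has a controlled Lipschitz constant), $\catonii[\cdot,\cdot,\alpha]$ is Lipschitz in each $X_t$ with a constant that is polynomial in $\alpha_{\max},T$ and the other problem parameters. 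Hence $(iii.a)$ is at most a polynomial factor times $1/\Cpoly$, which an appropriate choice of $\Cpoly$ makes at most $\tfrac{\log(2Mc_T/\delta)}{6\alpha_{\max}T}$.

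For term $(iii.b)$, I would first observe that $\alpha[\bvtil]$ and $\alpharound[\bvtil_0]$ are close: the rounding rule \eqref{eq:catoniiround} gives $\alpharound[\bvtil_0]/\alpha[\bvtil_0]\in[1,1+\epsround]$, while $\alpha[\bvtil]/\alpha[\bvtil_0]$ is controlled by $|\|\bvtil\|_{\bSigma_T}-\|\bvtil_0\|_{\bSigma_T}|$ exactly as in Claim~\ref{claim:term_i}. Combining gives $|\alpha[\bvtil]-\alpharound[\bvtil_0]|\le \poly(\cdot)\cdot(\epsround+\|\bvtil-\bvtil_0\|_2)$. A second application of the perturbation lemma from \Cref{app:Catoni_perturb} bounds $|\catonii[\bvtil_0,f_0,\alpha]-\catonii[\bvtil_0,f_0,\alpha']|$ by a polynomial in $\alpha_{\max}$, $T$, and problem parameters times $|\alpha-\alpha'|$; choosing $\Cpoly$ large enough makes term $(iii.b)$ also at most $\tfrac{\log(2Mc_T/\delta)}{6\alpha_{\max}T}$, and the two contributions sum to the required $\tfrac{\log(2Mc_T/\delta)}{3\alpha_{\max}T}$.

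The main obstacle is the sensitivity analysis of $\catonii$ itself, because the estimator is defined only implicitly as the root of $\fcatoni(z)=0$. Small perturbations in the data or in $\alpha$ translate into perturbations of the root only through an implicit-function argument, which requires a uniform lower bound on $|\fcatoni'(z)|$ near the root. The cap $\alpha\le\alpha_{\max}$ in the definitions \eqref{eq:cat_alpha_2} and \eqref{eq:catoniiround} exists precisely to guarantee this lower bound, and it is what allows $\Cpoly$ to be chosen as a polynomial in $\alpha_{\max}$ and $T$ (rather than blowing up). Assuming this Catoni-sensitivity lemma, the rest of the argument is routine bookkeeping.
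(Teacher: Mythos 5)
Your decomposition via the triangle inequality through the intermediate estimate $\catonii[\bvtil_0,f_0,\alpha[\bvtil]]$ is essentially the paper's approach, but split into two applications of the perturbation lemma \Cref{lem:catoni_perturb}. The paper applies that lemma \emph{once}: its hypothesis $\epsilon := \frac{1}{T}\sum_t \alpha|X_t-\Xtil_t| + 3\gamma|\alpha-\alphatil|$ already absorbs a simultaneous perturbation in the data and in the tuning parameter, so the combined bound $\epsilon = \alpha[\bvtil]\epsilon_X + 3\gamma_X\epsilon_\alpha$ does the whole job. Your two-step split recovers the same estimate up to constants (via $\sqrt{a}+\sqrt{b}\le\sqrt{2(a+b)}$), so the difference is bookkeeping rather than substance.

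Two imprecisions are worth flagging, neither fatal. First, the sensitivity supplied by \Cref{lem:catoni_perturb} is not Lipschitz but H\"older-$1/2$: the dominant term in its conclusion is $\sqrt{2\epsilon/\alpha^2}$, not $\epsilon/\alpha$. This is exactly why the paper's sufficient condition is $\max\{\|\bvtil-\bvtil_0\|_2,\dist_\infty(f,f_0),\epsround\}\le \epsilon_0^2/\Cpoly$ (quadratic in the tolerance) rather than linear; calling it ``Lipschitz'' would only cost you a polynomially larger $\Cpoly$, which is harmless. Second, your attribution of the cap $\alpha\le\alpha_{\max}$ to guaranteeing a lower bound on $|\fcatoni'|$ near the root is backwards. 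The needed \emph{lower} bound on $\alpha$ (and hence on $|\fcatoni'|$) comes from upper bounding $\|\bvtil\|_{\bSigma_T}^2 \le \beta_{\tilde{v}}^2 T/\sigma_{\min}^2$, which produces the paper's $\alpha_-$. The cap $\alpha_{\max}$ instead controls the opposite danger: it keeps $\epsilon = \frac{1}{T}\sum_t\alpha|X_t-\Xtil_t|+\cdots$ from being amplified when $\|\bvtil\|_{\bSigma_T}$ is tiny, and it bounds the magnitude $\gamma_X$ of the scalar data. Your overall strategy, however, is correct and matches the paper.
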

\begin{proof} Recall that $\text{\normalfont Term $(iii)$} = |\catonii[\bvtil,f] - \catround[\bvtil_{0},f_{0}]|$.  Recall that $\catonii[\bvtil,f]$ uses the data and parameter
\begin{align*}
X_t[\bvtil,f] := \frac{1}{\sigma^2_t}\bvtil^\top \bphi_t \ytil_t(f) \quad 
\alpha[\bvtil] =\min \left \{ \frac{ \sqrt{\log 2M c_T/\delta}}{\|\bvtil\|_{\bSigma_T}}, \alphamax \right \},
\end{align*}
whereas $\catround[\bvtil_{0},f_{0}]$ uses the same $X_t$, replaced with $\bvtil_0$ and $f_0$, and uses the rounded version $\alpharound[\bvtil]$. To compute the sensitivity bound, we consider differences between various quantities of interest. Throughout, we use
\begin{align*}
|\ytil_t(f)| := |\inner{\bphi_t}{\bust} + f(\bphi'_t) + \eta_t| \le \beta_{u} + H + \beta_{\eta}
\end{align*}
\paragraph{Difference in scalar data.} We have
\begin{align*}
\left|X_t[\bvtil,f] - X_t[\bvtil_0,f_0]\right| &= \left|\frac{1}{\sigma^2_t}\bvtil^\top \bphi_t \ytil_t(f) - \frac{1}{\sigma^2_t}\bvtil_0^\top \bphi_t \ytil_t(f_0)\right|\\
&\le \frac{\|\bphi_t\|_2}{\sigma^2_t} \cdot\left(|\ytil_t(f_0)|\|\bvtil - \bvtil_0\|_2 + \|\bvtil\|_2 |\ytil_t(f) - \ytil_t(f_0)|\right)\\
&\le \frac{1}{\sigma^2_{\min}} \cdot\left(H\|\bvtil - \bvtil_0\|_2 + \|\bvtil\|_2 |f(\bphi_t') - f_0(\bphi_t')|\right)\\
&\le \epsilon_X := \frac{1}{\sigma^2_{\min}} \cdot\left((\beta_{u} + H + \beta_{\eta})\|\bvtil - \bvtil_0\|_2 + \beta_{\tilde{v}} \dist_{\infty}(f,f_0)\right). 
\end{align*}

\paragraph{Difference in Catoni parameters.}
Setting $c = \sqrt{\log 2M c_T/\delta}$ and $a(\bvtil) := \|\bvtil\|_{\bSigma_T}$, we can express 
\begin{align*}
\alpha[\bvtil] = \min\left\{\frac{c}{a(\bvtil)},\alpha_{\max}\right\}  = \frac{c }{\max\{a(\bvtil),c/\alpha_{\max}\}}.
\end{align*} 
This gives that the difference between the unrounded parameter with $\bvtil$, $\alpha[\bvtil]$, and the \emph{also unrounded} parameter $\alpha[\bvtil_0]$ with $\bvtil_0$ are bounded as
\begin{align*}
|\alpha[\bvtil] - \alpha[\bvtil_0]|  &= c\left|\frac{1 }{\max\{a(\bvtil),c/\alpha_{\max}\}} - \frac{1 }{\max\{a(\bvtil),c/\alpha_{\max}\}}\right|\\
&= c\cdot\left|\frac{\max\{a(\bvtil),\frac{c}{\alpha_{\max}}\} - \max\{a(\bvtil_0),\frac{c}{\alpha_{\max}}\} }{\max\{a(\bvtil),\frac{c}{\alpha_{\max}}\} \cdot \max\{a(\bvtil_0),\frac{c}{\alpha_{\max}}\} }  \right|\\
&\le c \frac{|a(\bvtil) - a(\bvtil_0)|}{\frac{c^2}{\alpha_{\max}^2}} = \frac{\alpha_{\max}^2 |a(\bvtil) - a(\bvtil_0)|}{c}\\
&\le \alpha_{\max}^2\sqrt{\sigma_{\min}^{-2} T}\cdot\|\bvtil - \bvtil_{0}\|_2 
\end{align*}
where the last line uses the definition of $c$, and the argument of \Cref{claim:term_i} to bound $|\alpha[\bvtil] - \alpha[\bvtil_0]|$, as well as $\log (2M c_T/\delta) \le 1$. 

Note however the $\catround[\bvtil_0,f_0]$ uses the \emph{rounded} parameter $\alpharound[\bvtil_0]$. Directly from its definition, we can see that
\begin{align*}
\alpha[\bvtil_0] \le \alpharound [\bvtil_0] \le (1+\epsround)\alpha[\bvtil_0],
\end{align*}
so that 
\begin{align*}
|\alpha[\bvtil_0] - \alpharound [\bvtil_0] | \le \epsround \alpha[\bvtil_0] \le \epsround \alpha_{\max}.
\end{align*}
By the triangle inequality, we therefore conclude
\begin{align*}
|\alpha[\bvtil] - \alpharound[\bvtil_0]| &\le |\alpha[\bvtil] - \alpha[\bvtil_0]| + |\alpha[\bvtil_0] - \alpharound [\bvtil_0] | \\
&\le  \underbrace{\alpha_{\max}^2\sqrt{\sigma_{\min}^{-2} T}\cdot\|\bvtil - \bvtil_{0}\|_2 + \epsround \alpha_{\max}}_{:= \epsilon_{\alpha}}.
\end{align*}

\paragraph{Upper bounding data norms and lower bound $\alpha[\bvtil]$.} We have
\begin{align*}
\max\{|X_t[\bvtil,f]|, |X_t[\bvtil_0,f_0]|\} &\le\sigma_t^{-2}\|\bphi\|_2\max\{\|\bvtil\|_2,\|\bvtil_0\|\}\cdot\max\{|\ytil_t(f),\ytil_f(f_0)|\}\\
&\le\sigma^{-2}_{\min}\beta_{\tilde{v}}(H + \beta_u+\beta_{\eta}) := \gamma_X.
\end{align*}
and, upper bounding $\|\bvtil\|_{\bSigma_T}^2 \le \beta_{\tilde{v}}^2 T/\sigma_{\min}^2$,
\begin{align*}
\alpha[\bvtil] =\min \left \{ \frac{\sqrt{ \log 2M c_T/\delta}}{\|\bvtil\|_{\bSigma_T}}, \alphamax \right \}  &\ge \min \left \{ \sqrt{\frac{ \log 2M c_T/\delta}{\beta_{\tilde{v}}^2 T/\sigma_{\min}^2}}, \alphamax \right \}\\
&\ge \alpha_{-} := \min\left\{\frac{1}{\beta_{\tilde{v}}\sqrt{T\sigma_{\min}^2}}, \alphamax \right \},
\end{align*}

We now invoke a perturbation bound for the Catoni estimator (\Cref{lem:catoni_perturb}), which ensures that, as long as
\begin{align*}
\epsilon :=  \alpha(\bvtil) \epsilon_X + 3  \gamma_X \epsilon_{\alpha} \le \frac{1}{18}\min \left \{ 1,  \alpha(\bvtil)^2 \gamma^2 \right \}, 
\end{align*}
for which it suffices that 
\begin{align*}
\alpha_{\max} \epsilon_X + 3  \gamma_X \epsilon_{\alpha} \le \frac{1}{18}\min \left \{ 1,  \frac{\gamma_X^2}{\alpha_{\max}^2},  \sigma_{\min}^{-2}(H + \beta_u)^2/T \right \}, 
\end{align*}
we will have
\begin{align*}
| \zst - \ztilst | \le \frac{1 + 2 \alpha(\bvtil) \gamma}{\alpha(\bvtil)} \epsilon + \sqrt{\frac{2 \epsilon}{ \alpha(\bvtil)^2}} \le \frac{1 + 2 \alpha_{\max} \gamma}{\alpha_-} \epsilon + \sqrt{\frac{2 \epsilon}{ \alpha_-}}
\end{align*}
Examining the above bounds, we have that $|\catonii(\bvtil,f) - \catonii(\bvtil_{0},f_{0})| \le \epsilon_0$ provided
\begin{align}
\max\{\|\bvtil - \bvtil_0\|_2, \dist_{\infty}(f,f_0),\epsround\} \le \epsilon_0^2 \cdot 1/\poly(\sigma_{\min}^{-2},T,\beta_{\tilde v},\alpha_{\max},H,\beta_u,\beta_{\eta}), \quad \epsilon_0 \le 1. 
\end{align}
The bound follows by taking $\epsilon_0$ to be  $ \frac{1}{3\alpha_{\max} T} \le \frac{\log (2M c_T/\delta)}{3\alpha_{\max} T} $.
\end{proof}
\end{proof}

\subsection{Linear Approximation to Catoni}\label{sec:lin_approx_cat}
\begin{proof}[Proof of \Cref{lem:cat_lin_approx}]
By definition of $\bthetahat$ and \eqref{eq:lin_approx_cat_good}, we will have that
\begin{align*}
\sup_{\bv  \in \cV} \frac{| \innerb{\bthetahat}{\bv} - \catonii[\bLambda^{-1} \bv] |}{\| \bv \|_{\bLambda^{-1}} } & = \min_{\btheta} \sup_{\bv \in \cV} \frac{| \innerb{\btheta}{\bv} - \catonii[\bLambda^{-1} \bv] |}{\| \bv \|_{\bLambda^{-1}} } \\
& \le \sup_{\bv  \in \cV} \frac{| \innerb{\bthetast}{\bv} - \catonii[\bLambda^{-1} \bv] |}{\| \bv \|_{\bLambda^{-1}} }  \\
&  \le C_1 + \frac{C_2}{T \| \bv \|_{\bLambda^{-1}}} .
\end{align*}
Rearranging this implies that for all $\bv \in \cV$, 
\begin{align*}
 \frac{| \innerb{\bthetahat}{\bv} - \catonii[\bLambda^{-1} \bv] |}{\| \bv \|_{\bLambda^{-1}} } \le C_1 + \frac{C_2}{T \| \bv \|_{\bLambda^{-1}}} \iff | \innerb{\bthetahat}{\bv} - \catonii[\bLambda^{-1} \bv] | \le C_1 \| \bv \|_{\bLambda^{-1}} + \frac{C_2}{T} .
\end{align*}
Similarly,
\begin{align*}
\sup_{\bv  \in \cV}\frac{| \innerb{\bthetahat}{\bv} - \zeta[\bLambda^{-1} \bv] |}{\| \bv \|_{\Lambda^{-1}} } & \le \sup_{\bv \in \cV} \frac{| \innerb{\bthetahat}{\bv} - \catonii[\bLambda^{-1} \bv] | + | \catonii[\bLambda^{-1} \bv] - \zeta[\bLambda^{-1} \bv] |}{\| \bv \|_{\bLambda^{-1}} } \le 2C_1 + \frac{2C_2}{T \| \bv \|_{\bLambda^{-1}}}  . 
\end{align*}
\end{proof}

\begin{lem}\label{lem:cat_lin_approx_efficient}
Assume that, for all $\bv \in \cV$ we have
\begin{align*}
| \catonii[\bLambda^{-1} \bv] -\zeta[\bLambda^{-1} \bv] | \le C_1 \| \bv \|_{\bLambda^{-1}} + C_2/T
\end{align*}
for some $\cV \subseteq \R^d$, $\bm{0} \not\in \cV$, and $\zeta[\bLambda^{-1} \bv] = \inner{\bv}{\bthetast}$. Let $\bu_1,\ldots,\bu_d$ denote the eigenvectors of $\bLambda$, and set
\begin{align*}
\bthetahat = [\bu_1, \ldots, \bu_d] \cdot [ \catonii[\bLambda^{-1} \bu_1], \ldots, \catonii[\bLambda^{-1} \bu_d]]^{\top} .
\end{align*}
Then, for all $\bv \in \cV$,
\begin{align*}
& | \innerb{\bv}{\bthetahat} - \catonii[\bLambda^{-1} \bv] | \le (\sqrt{d}  + 1) C_1 \| \bv \|_{\bLambda^{-1}} + (\sqrt{d} \cdot \sup_{\bv' \in \cV} \| \bv' \|_2 + 1) C_2/T, \\
& | \innerb{\bv}{\bthetahat} - \zeta[\bLambda^{-1} \bv] | \le (\sqrt{d} + 2) C_1 \| \bv \|_{\bLambda^{-1}} + (\sqrt{d} \cdot \sup_{\bv' \in \cV} \| \bv' \|_2 + 2) C_2/T .
\end{align*}
\end{lem}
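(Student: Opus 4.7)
The plan is to exploit orthonormality of the eigenbasis $\bu_1,\ldots,\bu_d$ of $\bLambda$ to decompose any test direction $\bv$ and then apply the pointwise bound coordinate-by-coordinate. Specifically, write $\bv = \sum_{i=1}^d c_i \bu_i$ where $c_i = \inner{\bv}{\bu_i}$, so that $\sum_i c_i^2 = \|\bv\|_2^2$ and $\|\bv\|_{\bLambda^{-1}}^2 = \sum_i c_i^2/\lambda_i$, where $\lambda_i$ is the eigenvalue associated to $\bu_i$. By construction of $\bthetahat$, we have
\begin{align*}
\inner{\bv}{\bthetahat} = \sum_{i=1}^d c_i\, \catonii[\bLambda^{-1}\bu_i], \qquad \zeta[\bLambda^{-1}\bv] = \inner{\bv}{\bthetast} = \sum_{i=1}^d c_i \inner{\bu_i}{\bthetast}.
\end{align*}

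Next I would apply the hypothesis to each eigenvector $\bu_i$: since $\|\bu_i\|_{\bLambda^{-1}}^2 = 1/\lambda_i$, we obtain $|\catonii[\bLambda^{-1}\bu_i] - \inner{\bu_i}{\bthetast}| \le C_1/\sqrt{\lambda_i} + C_2/T$ (assuming $\bu_i \in \cV$; otherwise the lemma should be stated so that $\cV$ contains the eigenvectors, which must be assumed implicitly). Taking a signed combination,
\begin{align*}
|\inner{\bv}{\bthetahat} - \inner{\bv}{\bthetast}| \le \sum_{i=1}^d |c_i|\left(\frac{C_1}{\sqrt{\lambda_i}} + \frac{C_2}{T}\right).
\end{align*}
Then two applications of Cauchy--Schwarz do the work: $\sum_i |c_i|/\sqrt{\lambda_i} \le \sqrt{d}\,\sqrt{\sum_i c_i^2/\lambda_i} = \sqrt{d}\,\|\bv\|_{\bLambda^{-1}}$ for the first piece, and $\sum_i |c_i| \le \sqrt{d}\,\|\bv\|_2$ for the second. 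This yields
\begin{align*}
|\inner{\bv}{\bthetahat} - \inner{\bv}{\bthetast}| \le \sqrt{d}\, C_1 \|\bv\|_{\bLambda^{-1}} + \sqrt{d}\,\|\bv\|_2 \cdot C_2/T.
\end{align*}

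Finally I would recover both stated inequalities by a triangle inequality against the pointwise hypothesis $|\catonii[\bLambda^{-1}\bv] - \inner{\bv}{\bthetast}| \le C_1\|\bv\|_{\bLambda^{-1}} + C_2/T$: adding it once gives the bound on $|\inner{\bv}{\bthetahat} - \catonii[\bLambda^{-1}\bv]|$ with coefficients $(\sqrt{d}+1)$, and using it to compare $\bthetahat$ against $\zeta[\bLambda^{-1}\bv]$ gives coefficients $(\sqrt{d}+2)$, matching the claim.

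The proof is essentially a one-pager; there is no serious obstacle. The only subtlety worth flagging is making sure the hypothesis applies at the eigenvectors $\bu_i$ (so we need $\bu_1,\ldots,\bu_d \in \cV$, which should be imposed or verified in the RL application where $\cV$ is typically the entire unit ball minus the origin), and being careful that the two pieces of the per-coordinate bound require two different Cauchy--Schwarz applications---one weighted by $1/\sqrt{\lambda_i}$ and one unweighted---so that the $\|\bv\|_{\bLambda^{-1}}$-scaled term and the $\|\bv\|_2$-scaled term appear with the correct coefficients.
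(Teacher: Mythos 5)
Your proof is correct and uses the same eigenbasis decomposition and Cauchy--Schwarz calculation as the paper, but with one genuine simplification: the paper first introduces the auxiliary vector $\bthetatil$ (the argmin from \Cref{lem:cat_lin_approx}), bounds $|\innerb{\bu_i}{\bthetahat-\bthetatil}|$ coordinate-by-coordinate, and then chains triangle inequalities through $\bthetatil$ and the Catoni estimate, whereas you compare $\bthetahat$ directly against $\bthetast$ by exploiting $\zeta[\bLambda^{-1}\bv]=\inner{\bv}{\bthetast}$. Your route removes the need for $\bthetatil$ entirely and in fact delivers a slightly \emph{tighter} second bound: you obtain $|\innerb{\bv}{\bthetahat}-\zeta[\bLambda^{-1}\bv]| \le \sqrt{d}\,C_1\|\bv\|_{\bLambda^{-1}} + \sqrt{d}\sup_{\bv'\in\cV}\|\bv'\|_2\,C_2/T$ directly, which dominates the $(\sqrt{d}+2)$-coefficient form claimed; your closing step of re-adding the hypothesis to reach $(\sqrt{d}+2)$ is therefore redundant, not incorrect. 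You are also right to flag the requirement $\bu_1,\ldots,\bu_d\in\cV$: the paper's proof has exactly the same implicit dependence, since it invokes \Cref{lem:cat_lin_approx} at the eigenvectors, which that lemma only licenses for $\bv\in\cV$. Neither statement makes this explicit, but it is harmless in the RL application where $\cV=\Ball^d\setminus\{\bm{0}\}$ contains all unit vectors.
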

\begin{proof}
Let 
\begin{align*}
\bthetatil = \argmin_{\btheta} \max_{\bv \in \cV} \frac{| \innerb{\btheta}{\bv} - \catonii[\bLambda^{-1} \bv] |}{\| \bv \|_{\bLambda^{-1}} }.
\end{align*}
Fix some $\bv \in \cV$, and express $\bv$ as $\bv = \sum_{i=1}^d a_i \bu_i$. Then,
\begin{align*}
|\innerb{\bv}{\bthetahat} - \innerb{\bv}{\bthetatil}| = |\sum_{i=1}^d a_i \innerb{\bu_i}{\bthetahat - \bthetatil} | \le \sum_{i=1}^d | a_i | | \innerb{\bu_i}{\bthetahat - \bthetatil} |  .
\end{align*}
By construction, we will have that $\innerb{\bu_i}{\bthetahat} = \catonii[\bLambda^{-1} \bu_i]$. Furthermore, by \Cref{lem:cat_lin_approx}, $\bthetatil$ will satisfy $| \innerb{\bu_i}{\bthetatil} - \catonii[\bLambda^{-1} \bu_i] | \le C_1 \| \bu_i \|_{\bLambda^{-1}} + C_2/T$. Thus,
\begin{align*}
 \sum_{i=1}^d | a_i | | \innerb{\bu_i}{\bthetahat - \bthetatil} | & =  \sum_{i=1}^d | a_i | | \catonii[\bLambda^{-1} \bu_i] - \innerb{\bu_i}{\bthetatil} | \\
 & \le \sum_{i=1}^d | a_i | (C_1 \| \bu_i \|_{\bLambda^{-1}} + C_2/T) \\
 & \le C_1 \sqrt{d} \sqrt{\sum_{i=1}^d a_i^2 \| \bu_i \|_{\bLambda^{-1}}^2} + \frac{C_2}{T}  \sum_{i=1}^d |a_i|
\end{align*}
where the last inequality follows by Cauchy-Schwarz. Since $\bu_i$ are the eigenvectors of $\bLambda_T$ and are therefore orthogonal, we will have that 
\begin{align*}
\sqrt{\sum_{i=1}^d a_i^2 \| \bu_i \|_{\bLambda_T^{-1}}^2} = \sqrt{( \sum_{i=1}^d a_i \bu_i)^\top \bLambda_T^{-1} ( \sum_{i=1}^d a_i \bu_i)} = \| \bv \|_{\bLambda_T^{-1}} .
\end{align*}
Finally, we can bound
\begin{align*} 
\sum_{i=1}^d | a_i | \le \sqrt{d} \sqrt{\sum_{i=1}^d a_i^2} = \sqrt{d} \sqrt{ \| \bv \|_2^2} \le \sqrt{d} \cdot \sup_{\bv' \in \cV} \| \bv' \|_2 .
\end{align*}
The first result follows by upper bounding
\begin{align*}
| \innerb{\bv}{\bthetahat} - \catonii[\bLambda^{-1} \bv] | \le | \innerb{\bv}{\bthetahat} - \innerb{\bv}{\bthetatil} | + | \innerb{\bv}{\bthetatil} - \catonii[\bLambda^{-1} \bv] |
\end{align*}
and again applying \Cref{lem:cat_lin_approx}. The second result follows since
\begin{align*}
| \innerb{\bv}{\bthetahat} - \zeta[\bLambda^{-1} \bv] | \le |\innerb{\bv}{\bthetahat} - \catonii[\bLambda^{-1} \bv]| + | \catonii[\bLambda^{-1} \bv] - \zeta[\bLambda^{-1} \bv]| 
\end{align*}
and using the first result and the assumption on $| \catonii[\bLambda^{-1} \bv] - \zeta[\bLambda^{-1} \bv]|$. 
\end{proof}

\begin{lem}\label{lem:thetahat_norm_bound}
Let $\catonii[\bLambda^{-1} \bv]$ denote a Catoni estimate as defined in \Cref{lem:catoni2}, and assume that the data used to form $\catonii[\bLambda^{-1} \bv]$, $\{ X_t(\bv,\bLambda) \}_{t=1}^T$, satisfies $|X_t(\bv,\bLambda)| \le \gamma \| \bv \|_{\bLambda^{-1}}$ for all $t$ and $\bv$, and that $\| \bv \|_{\bLambda^{-1}} \le \| \bv \|_2 / \sqrt{\lambda}$. Then for $\bthetahat$ as defined in \Cref{lem:cat_lin_approx}, if we have $\cS^{d-1} \subseteq \cV$, for $\cS^{d-1}$ the unit sphere, we have:
\begin{align*}
\| \bthetahat \|_2 \le 2\gamma/\sqrt{\lambda}  
\end{align*}
and for $\bthetahat$ as defined in \Cref{lem:cat_lin_approx_efficient} and any $\cV$:
\begin{align*}
\| \bthetahat \|_2 \le \sqrt{d}\gamma/\sqrt{\lambda}  . 
\end{align*}
\end{lem}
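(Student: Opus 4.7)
The plan is to first establish a simple bound on the magnitude of the Catoni estimator itself: whenever $|X_t(\bv,\bLambda)| \le \gamma \|\bv\|_{\bLambda^{-1}}$ for all $t$, the Catoni estimate satisfies $|\catonii[\bLambda^{-1} \bv]| \le \gamma \|\bv\|_{\bLambda^{-1}}$. This follows from the fact that $\fcatoni(z;X_{1:T},\alpha) = \sum_t \psicat(\alpha(X_t - z))$ is strictly decreasing in $z$ (because $\psicat$ is strictly increasing) so the root is unique, and if $z > \max_t X_t$ every summand is strictly negative, contradicting the root condition (and similarly if $z < \min_t X_t$). Thus the root lies in $[\min_t X_t, \max_t X_t]$, giving $|\catonii[\bLambda^{-1}\bv]| \le \max_t |X_t(\bv,\bLambda)| \le \gamma \|\bv\|_{\bLambda^{-1}}$.

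For the first bound, I would use $\btheta = \bm{0}$ as a competitor in the minimization defining $\bthetahat$ in \Cref{lem:cat_lin_approx}: since $\bm{0} \notin \cV$ and $\|\bv\|_{\bLambda^{-1}} > 0$,
\begin{align*}
\sup_{\bv \in \cV} \frac{|\innerb{\bthetahat}{\bv} - \catonii[\bLambda^{-1}\bv]|}{\|\bv\|_{\bLambda^{-1}}} \le \sup_{\bv \in \cV} \frac{|\catonii[\bLambda^{-1}\bv]|}{\|\bv\|_{\bLambda^{-1}}} \le \gamma.
\end{align*}
Hence for every $\bv \in \cV$, $|\innerb{\bthetahat}{\bv}| \le |\catonii[\bLambda^{-1}\bv]| + \gamma \|\bv\|_{\bLambda^{-1}} \le 2\gamma \|\bv\|_{\bLambda^{-1}}$. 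Specializing to unit vectors $\bv \in \cS^{d-1} \subseteq \cV$, the assumption $\|\bv\|_{\bLambda^{-1}} \le \|\bv\|_2/\sqrt{\lambda}$ gives $|\innerb{\bthetahat}{\bv}| \le 2\gamma/\sqrt{\lambda}$ for every unit $\bv$, and taking the sup yields $\|\bthetahat\|_2 \le 2\gamma/\sqrt{\lambda}$.

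For the second bound, I would use the explicit formula for $\bthetahat$ in \Cref{lem:cat_lin_approx_efficient}. Because $\bu_1,\ldots,\bu_d$ are the orthonormal eigenvectors of $\bLambda$, we have
\begin{align*}
\|\bthetahat\|_2^2 = \sum_{i=1}^d \bigl(\catonii[\bLambda^{-1}\bu_i]\bigr)^2 \le \sum_{i=1}^d \gamma^2 \|\bu_i\|_{\bLambda^{-1}}^2 \le \sum_{i=1}^d \gamma^2/\lambda = d\gamma^2/\lambda,
\end{align*}
where the first step uses orthonormality of $\bu_i$, the second applies the pointwise bound on $\catonii$, and the third uses $\|\bu_i\|_{\bLambda^{-1}} \le \|\bu_i\|_2/\sqrt{\lambda} = 1/\sqrt{\lambda}$. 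Taking square roots gives $\|\bthetahat\|_2 \le \sqrt{d}\gamma/\sqrt{\lambda}$.

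No step is a genuine obstacle here; the only subtlety is the pointwise bound on $\catonii$, which is a short monotonicity argument about the Catoni root equation rather than anything requiring the full concentration machinery. The first statement needs the assumption $\cS^{d-1} \subseteq \cV$ precisely so that we can probe $\bthetahat$ in every unit direction, while the second is a direct calculation from the definition of $\bthetahat$ and does not require such an assumption on $\cV$.
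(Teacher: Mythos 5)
Your proof is correct and follows essentially the same route as the paper. The pointwise bound $|\catonii[\bLambda^{-1}\bv]| \le \gamma\|\bv\|_{\bLambda^{-1}}$ that you derive from the monotonicity of $\fcatoni$ is exactly the paper's Claim~\ref{claim:cat_sln_bound} (stated inside the proof of Lemma~\ref{lem:catoni_perturb}), and your competitor argument with $\btheta=\bm{0}$ reproduces the same chain of inequalities the paper unrolls for the $\Cref{lem:cat_lin_approx}$ case; the efficient case is the same orthonormality calculation.
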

\begin{proof}
By \Cref{claim:cat_sln_bound} and our assumption that $|X_t(\bv,\bLambda)| \le \gamma \| \bv \|_2$, we can bound $|\catonii[\bLambda^{-1} \bv]| \le \gamma \| \bv \|_2$. First consider setting $\bthetahat$ as in \Cref{lem:cat_lin_approx}. Fix $\bv \in \cS^{d-1}$. By assumption $\bv \in \cV$, so we have
\begin{align*}
| \innerb{\bthetahat}{\bv} | & \le \| \bv \|_{\bLambda^{-1}} \frac{| \innerb{\bthetahat}{\bv} - \catonii[\bLambda^{-1} \bv] |}{\| \bv \|_{\bLambda^{-1}}} + | \catonii[\bLambda^{-1} \bv] | \\
& \le \| \bv \|_{\bLambda^{-1}} \left [ \sup_{\bv' \in \cV} \frac{| \innerb{\bthetahat}{\bv'} - \catonii[\bLambda^{-1} \bv'] |}{\| \bv' \|_{\bLambda^{-1}}} \right ]+ | \catonii[\bLambda^{-1} \bv] | \\
& = \| \bv \|_{\bLambda^{-1}} \left [ \min_{\btheta} \sup_{\bv' \in \cV} \frac{| \innerb{\btheta}{\bv'} - \catonii[\bLambda^{-1} \bv'] |}{\| \bv' \|_{\bLambda^{-1}}} \right ]+ | \catonii[\bLambda^{-1} \bv] | \\
& \le \| \bv \|_{\bLambda^{-1}} \left [  \sup_{\bv' \in \cV} \frac{| \catonii[\bLambda^{-1} \bv'] |}{\| \bv' \|_{\bLambda^{-1}}} \right ]+ | \catonii[\bLambda^{-1} \bv] | \\
& \le \| \bv \|_{\bLambda^{-1}}\left [  \sup_{\bv' \in \cV} \frac{\gamma \| \bv' \|_{\bLambda^{-1}}}{\| \bv' \|_{\bLambda^{-1}}} \right ]+ \gamma \| \bv \|_{\bLambda^{-1}} \\
& \le \gamma \| \bv \|_{\bLambda^{-1}}  + \gamma \| \bv \|_{\bLambda^{-1}} \\
& \le 2 \gamma/\sqrt{\lambda}.
\end{align*}
As this holds for all $\bv \in \cS^{d-1}$, it follows that $\| \bthetahat \|_2 \le 2 \gamma/\sqrt{\lambda}$.

If $\bthetahat$ is set as in \Cref{lem:cat_lin_approx_efficient}, we have that
\begin{align*}
\| \bthetahat \|_2 \le \sqrt{\sum_{i=1}^d \catonii[\bLambda^{-1} \bu_i]^2} \le \sqrt{ \sum_{i=1}^d \gamma^2 \| \bu_i \|_{\bLambda^{-1}}^2} = \sqrt{d/\lambda} \gamma .
\end{align*} 
\end{proof}

\subsection{Catoni Perturbation Analysis}\label{app:Catoni_perturb}

\begin{lem}\label{lem:catoni_perturb}
Consider some fixed $\cX := \{ X_t \}_{t=1}^T, \cXtil := \{ \Xtil_t \}_{t=1}^T$ satisfying $| X_t | \le \gamma, | \Xtil_t | \le \gamma$ for all $t$, and some fixed $\alpha > 0, \alphatil > 0$. Let $\zst$ denote the root of the function $\fcatoni(z; \cX, \alpha)$ and $\ztilst$ the root of $\fcatoni(z;\cXtil,\alphatil)$. Then, assuming that
\begin{align*}
\epsilon := \frac{1}{T}\sum_{t=1}^T \alpha | X_t - \Xtil_t | + 3  \gamma | \alpha - \alphatil |  \le  \frac{1}{18}\min \left \{ 1,  \alpha^2 \gamma^2 \right \} 
\end{align*}
we will have
\begin{align*}
| \zst - \ztilst | \le \frac{1 + 2 \alpha \gamma}{\alpha} \epsilon + \sqrt{\frac{2 \epsilon}{ \alpha^2}} .
\end{align*}

\end{lem}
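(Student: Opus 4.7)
The strategy is implicit-function-style: since $\zst$ and $\ztilst$ are defined as roots of $\fcatoni(\cdot;\cX,\alpha)$ and $\fcatoni(\cdot;\cXtil,\alphatil)$ respectively, I will bound their separation by combining a Lipschitz-type perturbation estimate for $\fcatoni$ with a lower bound on $|\partial_z \fcatoni|$ that converts function value differences into root separations. The first preliminary step is to localize both roots. Since $\psicat$ is strictly increasing with $\psicat(0)=0$, for $z > \max_t X_t$ every summand $\psicat(\alpha(X_t - z))$ is strictly negative, forcing $\fcatoni(z;\cX,\alpha) < 0$, and dually for $z < \min_t X_t$. Hence $\zst \in [\min_t X_t, \max_t X_t] \subseteq [-\gamma,\gamma]$, and the same argument gives $\ztilst \in [-\gamma,\gamma]$.

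Next I will establish two properties of $\psicat$ by direct calculation from $\psicat'(y) = (1+2y)/(1+y+y^2)$ on $y \ge 0$ (and its odd analogue for $y < 0$). First, a uniform Lipschitz bound $|\psicat'(y)| \le L$ for a small absolute constant $L$. Second, a monotonicity lower bound $\psicat'(y) \ge 1/(1+|y|)$, which follows from the elementary inequality $(1+2y)(1+y) \ge 1+y+y^2$ on $y \ge 0$. With these tools in hand, since $\fcatoni(\ztilst;\cXtil,\alphatil)=0$, the Lipschitz property and the decomposition $\alpha(X_t - \ztilst) - \alphatil(\Xtil_t - \ztilst) = \alpha(X_t - \Xtil_t) + (\alpha-\alphatil)(\Xtil_t - \ztilst)$ give
\begin{align*}
|\fcatoni(\ztilst;\cX,\alpha)| &\le L \sum_{t=1}^T \bigl|\alpha(X_t - \ztilst) - \alphatil(\Xtil_t - \ztilst)\bigr| \\
&\le L\sum_{t=1}^T \alpha|X_t - \Xtil_t| + 2LT\gamma\,|\alpha - \alphatil| \;\lesssim\; T\epsilon,
\end{align*}
using $|\Xtil_t|,|\ztilst| \le \gamma$ in the middle step. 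Meanwhile, for $z \in [\zst,\ztilst] \subseteq [-\gamma,\gamma]$ the lower bound on $\psicat'$ yields $|\partial_z \fcatoni(z;\cX,\alpha)| = \alpha\sum_t \psicat'(\alpha(X_t-z)) \ge \alpha T/(1+2\alpha\gamma)$.

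Combining these two bounds via the mean value theorem applied to $\fcatoni(\cdot;\cX,\alpha)$ between its root $\zst$ and $\ztilst$,
\begin{align*}
|\zst - \ztilst| \;\le\; \frac{|\fcatoni(\ztilst;\cX,\alpha)|}{\inf_{z\in[\zst,\ztilst]} |\partial_z \fcatoni(z;\cX,\alpha)|} \;\lesssim\; \frac{(1+2\alpha\gamma)\epsilon}{\alpha},
\end{align*}
which recovers the first term in the claimed bound up to constants.

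The main obstacle is producing the additional $\sqrt{2\epsilon/\alpha^2}$ term. The first-order analysis above is tight when $\alpha\gamma \lesssim 1$, but the square-root summand $\sqrt{\epsilon}/\alpha$ dominates in other regimes and cannot be recovered from a purely linear perturbation argument. I expect this refinement to come from a curvature estimate: using an inequality of the form $\psicat(y) \ge y - \tfrac{1}{2}y^2$ near zero (and its analogue for $y<0$), one can strengthen the integral of $\psicat'$ across $[\zst,\ztilst]$ to a bound
\begin{align*}
-\fcatoni(\ztilst;\cX,\alpha) \;\ge\; \alpha T (\ztilst - \zst) - c\,\alpha^2 T (\ztilst - \zst)^2
\end{align*}
for a universal $c$. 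Pairing this with $|\fcatoni(\ztilst;\cX,\alpha)| \lesssim T\epsilon$ gives a quadratic inequality in $\delta := |\ztilst - \zst|$, namely $c\alpha^2\delta^2 - \alpha\delta + O(\epsilon) \ge 0$, whose admissible root contributes the additional $\sqrt{\epsilon}/\alpha$ term alongside the linear one. The smallness hypothesis $\epsilon \le \tfrac{1}{18}\min\{1,\alpha^2\gamma^2\}$ is precisely what is required to keep this quadratic well-conditioned (so the relevant root is bounded and real) and to ensure the second-order expansion of $\psicat$ remains valid on the segment $[\zst,\ztilst]$; tracking the constants carefully through this argument is what I expect to be the main technical burden.
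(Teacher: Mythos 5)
Your first half---localizing both roots in $[-\gamma,\gamma]$, bounding the function perturbation via the Lipschitz property of $\psicat$, and converting it to a root perturbation via a lower bound on $|\partial_z\fcatoni|$ over $[-\gamma,\gamma]$ and the mean value theorem---is a complete and valid proof, and in fact a \emph{simpler} one than the paper's. With $\psicat'(y)=\tfrac{1+|y|}{1+|y|+y^2/2}$ (the derivative the paper uses throughout its proofs, corresponding to $\psicat(y)=\log(1+|y|+y^2/2)$; the paper's stated definition $\log(1+y+y^2)$ appears to be a typo), the global Lipschitz constant is $L=1$, and $\psicat'(y)\ge 1/(1+|y|)$ holds just as you verified with your variant. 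Your computation then gives $|\zst-\ztilst|\le\tfrac{1}{\alpha T/(1+2\alpha\gamma)}\bigl(\sum_t\alpha|X_t-\Xtil_t|+2T\gamma|\alpha-\alphatil|\bigr)\le\tfrac{(1+2\alpha\gamma)\epsilon}{\alpha}$, which is a \emph{strictly stronger} bound than the lemma claims. You therefore do not need the smallness hypothesis at all.

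The ``main obstacle'' you describe is a phantom. The lemma asserts only an upper bound on $|\zst-\ztilst|$, and the extra $\sqrt{2\epsilon}/\alpha$ summand in the claimed bound is nonnegative slack: once you have proved $|\zst-\ztilst|\le\tfrac{(1+2\alpha\gamma)\epsilon}{\alpha}$, you are done, since this is at most the claimed quantity. Nothing needs to be ``produced.'' The curvature argument you sketch at the end is unnecessary (and, as stated, not actually carried out). The one caveat to watch is the Lipschitz constant: with the formula $\psicat'(y)=(1+2y)/(1+y+y^2)$ you used, $\sup_y\psicat'(y)\approx 1.15>1$, and when $\alpha\gamma$ is large this slightly inflated constant could push your bound above the lemma's when $|\alpha-\alphatil|=0$; but this discrepancy stems from the inconsistency in the paper's definition, not from a flaw in your method.

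For comparison, the paper's proof localizes only $\zst\in[-\gamma,\gamma]$, then picks a candidate $\Delta>0$ and shows $\fcatoni(\zst+\Delta;\cXtil,\alphatil)\le 0$ (hence $\ztilst\le\zst+\Delta$) by comparing against $\fcatoni(\cdot;\cX,\alpha)$ over the interval $[\zst,\zst+\Delta]$. Because that interval may leave $[-\gamma,\gamma]$, the derivative bound there involves $(2\gamma+\Delta)$, producing a quadratic inequality in $\Delta$ whose admissible root contains the $\sqrt{\epsilon}/\alpha$ term; the smallness hypothesis is then used to verify $\Delta\le\gamma$ a posteriori. By localizing $\ztilst$ upfront you eliminate the quadratic and the need for the smallness hypothesis entirely, which is where the $\sqrt{\epsilon}$ term in the paper comes from in the first place.
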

\begin{proof}
For simplicity, we will denote $f(z) := \fcatoni(z ; \cX, \alpha)$ and $\ftil(z) := \fcatoni(z; \cXtil, \alphatil)$. Fix some $\Delta \ge 0$ with $\Delta \le \gamma$. Note that $f(z)$ is differentiable, even at $z=0$, and
\begin{align*}
\frac{d}{dz} f(z) = \sum_{t=1}^T -\alpha \psicat'(\alpha ( X_t - z)), \qquad \psicat'(y) = \left \{ \begin{matrix} \frac{1 + y}{1 + y + y^2/2} & y \ge 0 \\
\frac{1-y}{1 - y + y^2/2} & y < 0 \end{matrix} \right . 
\end{align*}
By the Mean Value Theorem,
\begin{align*}
f ( \zst + \Delta) = f ( \zst + \Delta) - f(\zst) = f'(y) \Delta
\end{align*}
for some $y \in [\zst, \zst + \Delta]$ which implies that
\begin{align*}
0 \ge f ( \zst + \Delta) - \Delta \sup_{z \in [\zst, \zst + \Delta]} f'(z) . 
\end{align*}
Note that $\psicat'(y) \ge 0$ for all $y$, that $\psicat'(y)$ decreases as $|y|$ increases, and that $\psicat'(y) = \psicat'(-y)$. It follows that 
\begin{align*}
\sup_{z \in [\zst, \zst + \Delta]} -\alpha \psicat'(\alpha(X_t - z)) & \le - \alpha \psicat'( \alpha | X_t| + \alpha | \zst|  + \alpha \Delta ).
\end{align*}

\begin{claim}\label{claim:cat_sln_bound}
$ \zst \in [-\gamma,\gamma]$.
\end{claim}
\begin{proof}[Proof of \Cref{claim:cat_sln_bound}]
Recall that, by assumption, $| X_t | \le \gamma$. Furthermore, note that if $X_t - \zst < 0$ for all $t$, then $f(\zst) < 0$, and similarly, if $X_t - \zst > 0$ for all $t$, then $f(\zst) > 0$. Since $f(\zst) = 0$, this implies that $\max_t X_t \ge \zst$ and $\min_t X_t \le \zst$, which implies that $\zst \in [-\gamma,\gamma]$, and so $|\zst| \le \gamma$. 
\end{proof}

By \Cref{claim:cat_sln_bound}, we can upper bound
\begin{align*}
- \alpha \psicat'( \alpha | X_t| + \alpha | \zst|  + \alpha \Delta ) & \le - \alpha \psicat'( 2\alpha \gamma + \alpha \Delta ) \\
&  = -\alpha \cdot \frac{1 + \alpha (2 \gamma + \Delta)}{1 +  \alpha (2 \gamma + \Delta) +  \alpha^2 (2 \gamma + \Delta)^2/2}
\end{align*}
which implies that
\begin{align*}
\sup_{z \in [\zst, \zst + \Delta]} f'(z) & \le \sum_{t=1}^T \sup_{z \in [\zst, \zst + \Delta]} -\alpha \psicat'(\alpha(X_t - z)) \\
& \le - \frac{T \alpha + T \alpha^2 (2 \gamma + \Delta)}{1 +  \alpha (2 \gamma + \Delta) +  \alpha^2 (2 \gamma + \Delta)^2/2}
\end{align*}
so
\begin{align}\label{eq:cat_pert_ineq1}
0 = f(\zst) \ge f(\zst + \Delta) + \frac{T \alpha \Delta ( 1  +  \alpha (2 \gamma + \Delta))}{1 +  \alpha (2 \gamma + \Delta) +  \alpha^2 (2 \gamma + \Delta)^2/2} . 
\end{align}
Note that $|\psicat'(y)| \le 1$ for all $y$, which implies that $|\psicat(y) - \psicat(y')| \le | y - y'|$. It follows that
\begin{align*}
| f(\zst + \Delta) - \ftil(\zst + \Delta) | & \le \sum_{t=1}^T | \psicat(\alpha(X_t - \zst - \Delta)) - \psicat(\alphatil(\Xtil_t - \zst - \Delta)) | \\
& \le \sum_{t=1}^T | \alpha(X_t - \zst - \Delta) - \alphatil(\Xtil_t - \zst - \Delta) | \\
& \le \sum_{t=1}^T \alpha | X_t - \Xtil_t | + \sum_{t=1}^T | \alpha - \alphatil | | \Xtil_t | + T | \alpha - \alphatil | | \zst + \Delta | \\
& \le \sum_{t=1}^T \alpha | X_t - \Xtil_t | + 3 T \gamma | \alpha - \alphatil | \\
& =: \epsilon \cdot T
\end{align*}
Thus,
\begin{align*}
f(\zst + \Delta) & + \frac{T \alpha \Delta ( 1  +  \alpha (2 \gamma + \Delta))}{1 +  \alpha (2 \gamma + \Delta) +  \alpha^2 (2 \gamma + \Delta)^2/2} \\
& \ge \ftil(\zst + \Delta) + \frac{T \alpha \Delta ( 1  +  \alpha (2 \gamma + \Delta))}{1 +  \alpha (2 \gamma + \Delta) +  \alpha^2 (2 \gamma + \Delta)^2/2} - \epsilon T.
\end{align*}
If 
\begin{align}\label{eq:suf_del_cat}
\frac{ \alpha \Delta ( 1  +  \alpha (2 \gamma + \Delta))}{1 +  \alpha (2 \gamma + \Delta) +  \alpha^2 (2 \gamma + \Delta)^2/2} - \epsilon  \ge 0,
\end{align}
then by \eqref{eq:cat_pert_ineq1} it follows that $0 \ge \ftil(\zst + \Delta)$. Since $\ftil$ is monotonically decreasing in $z$ and $\ftil(\ztilst) = 0$, $\ztilst \le \zst + \Delta$. 

It remains to determine what choice of $\Delta$ is sufficient. Solving \eqref{eq:suf_del_cat} for $\Delta$, we will have that \eqref{eq:suf_del_cat} is met as long as
\begin{align*}
\Delta & \ge \frac{(1 + 2 \alpha \gamma)  \epsilon - (1 + 2 \alpha \gamma)   + \sqrt{- \epsilon^2 + 2  \epsilon  + (1 + 4 \alpha \gamma + 4 \alpha^2 \gamma^2)} }{2\alpha  - \alpha \epsilon} .
\end{align*}
By assumption we have that $1 \ge   \epsilon$, so $-\epsilon^2 + 2 \epsilon  + (1 + 4 \alpha \gamma + 4 \alpha^2 \gamma^2) $ is non-negative. We can then bound
\begin{align*}
& \frac{(1 + 2 \alpha \gamma)  \epsilon - (1 + 2 \alpha \gamma)   + \sqrt{- \epsilon^2 + 2  \epsilon  + (1 + 4 \alpha \gamma + 4 \alpha^2 \gamma^2) } }{2\alpha  - \alpha \epsilon}  \\
& \qquad \le \frac{(1 + 2 \alpha \gamma)  \epsilon - (1 + 2 \alpha \gamma)   + \sqrt{- \epsilon^2 + 2  \epsilon } + \sqrt{(1 + 4 \alpha \gamma + 4 \alpha^2 \gamma^2)} }{2\alpha  - \alpha \epsilon} \\
& \qquad \le \frac{(1 + 2 \alpha \gamma)  \epsilon   + \sqrt{- \epsilon^2 + 2 \epsilon }  }{\alpha } \\
& \qquad \le \frac{1 + 2 \alpha \gamma}{\alpha } \epsilon + \sqrt{\frac{2 \epsilon}{ \alpha^2 }} .
\end{align*}
A sufficient condition to meet \eqref{eq:suf_del_cat} is then
\begin{align*}
\Delta = \frac{1 + 2 \alpha \gamma}{\alpha } \epsilon + \sqrt{\frac{2 \epsilon}{ \alpha^2 }} .
\end{align*}
Thus, 
\begin{align*}
\ztilst \ge \zst + \frac{1 + 2 \alpha \gamma}{\alpha } \epsilon + \sqrt{\frac{2 \epsilon}{ \alpha^2 }}.
\end{align*}
We have required that $\Delta \le \gamma$, but note that this is met for this choice of $\Delta$ since we have assumed that
\begin{align*}
\epsilon \le \min \left \{ \frac{1}{6}, \frac{\alpha \gamma }{3}, \frac{\alpha^2 \gamma^2 }{18} \right \}
\end{align*}
and $\epsilon$ satisfying this will ensure that $\Delta \le \gamma$. Notice that the above condition is satisfied when
\begin{align*}
\epsilon \le \frac{1}{18}\min \left \{1, \alpha^2 \gamma^2\right\}.
\end{align*}
The result follows by repeating this argument in the opposite direction. 
\end{proof}


\section{Regret Analysis}\label{sec:regret_proofs}
We will consider a slightly more general setup here than that considered in the main text. In particular, we will allow for the reward function to be time-varying: at episodes $k$, the reward is specified by $r_h^k(s,a)$. We will make several assumptions on this reward.

\begin{asm}[Time-Varying Reward]\label{asm:time_var_reward}
The reward function $r_h^k(s,a) \in [0,1]$ is $\cF_{k-1}$-measurable, and non-increasing in $k$: $r_h^k(s,a) \le r_h^{k-1}(s,a)$ for all $s,a,h,k$. Furthermore, for each $h,k$, $r_h^k \in \Rclass$ for some function class $\Rclass$, and $\Rclass$ has covering number bounded as $\covnum(\Rclass,\dist_{\infty},\epsilon) \le \dR \log(1 + \frac{2\RR}{\epsilon})$. 
\end{asm}

As the reward function changes at each step, we will denote the value function for policy $\pi$ at episode $k$ by $Q_h^{k,\pi}(s,a)$ (and similarly $V_h^{k,\pi}(s)$). We will also redefine regret as
\begin{align*}
\cR_K := \sum_{k=1}^K (V_1^{k,\star} - V_1^{k,\pi_k}) 
\end{align*}
for $V_1^{k,\star}$ the optimal value function for reward $r^k$. To accommodate time-varying reward in \algname, the update of the optimistic $Q$-estimate on \Cref{line:update_q} must be changed to:
$$Q_h^{k}(\cdot , \cdot) \leftarrow \min \{ r_h^k(\cdot,\cdot) + \innerb{\bphi(\cdot ,\cdot)}{\what_h^{k}} + 6\beta \| \bphi(\cdot , \cdot) \|_{\bLambda_{h,k-1}^{-1}} + 12\sigmin \beta^2/k^2, H \}$$
and the following settings of $\Kinit$ and $\beta$ must be used:
\begin{align*}
\Kinit &\leftarrow c \left ( (d^2 + \dR) \log( \max \{ d, \sigmin^{-1}, K, H , \RR \} ) + \log (2HK/\delta) \right ) \\
\beta &\leftarrow 6 \sqrt{c (d^2 + \dR) \log \left ( \max \{ d, \sigmin^{-1}, H,  K, \RR \} \right ) + \log(2HK/\delta)} .
\end{align*}
We then have the following result.

\begin{thm}[Regret Bound for Time-Varying Reward]\label{thm:reg_bound_time_var}
Fix a failure probability $\delta \in (0,1)$ and $K \in \N$, and assume that the reward satisfies \Cref{asm:time_var_reward}. Then, the regret of \algname, modified to handle time-varying rewards as outlined above, satisfies the following bound with probability at least $1-3\delta$:
\begin{align*}
\cR_K & \le c_1 \sqrt{d(d^2 + \dR) H^3 \cdot \log(\RR HK/\delta)  \log^2(HK/\delta) \cdot \sum_{k=1}^K V_1^{k,\star} } \\
& \qquad \qquad + c_2 \sqrt{d} (d^2 + \dR)^{3/2} H^3 \log^{3/2}(\RR HK/\delta)  \log^{2}(HK/\delta) 
\end{align*}
for universal constants $c_1,c_2$. Furthermore, if we use the computationally efficient update as outlined in \Cref{lem:regret_bound_eff}, with probability at least $1-3\delta$, the regret is bounded by
\begin{align*}
\cR_K & \le c_1 \sqrt{d^2(d^2 + \dR) H^3 \cdot \log(\RR HK/\delta)  \log^2(HK/\delta) \cdot \sum_{k=1}^K V_1^{k,\star} } \\
& \qquad \qquad + c_2 d (d^2 + \dR)^{3/2} H^3 \log^{3/2}(\RR HK/\delta)  \log^{2}(HK/\delta) 
\end{align*}
and computation will scale polynomially in  $d,H,K,$ and $\min \{ |\cA|, \cO(2^d) \}$. 
\end{thm}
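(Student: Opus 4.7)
}
The plan is to follow the architecture of \Cref{lem:regret_bound} as outlined in the sketch of \Cref{sec:regret_sketch}, and carefully track the two modifications required to accommodate time-varying rewards: (i) the function class over which we must uniformly cover the Catoni estimator now must include not just parameters of the optimistic $Q$-estimate but also the reward function, and (ii) every appeal to a global value $\Vstval$ must be replaced by episode-specific $V_1^{k,\star}$, using the monotonicity $r_h^k \le r_h^{k-1}$ in \Cref{asm:time_var_reward} to control cross-episode comparisons. Concretely, I would first instantiate \Cref{cor:catoni} with function class
\begin{align*}
\Fclassmdp^{\mathrm{rew}} = \Big\{ f(\cdot) = \min\{r(\cdot) + \innerb{\cdot}{\bw} + \bar\beta \|\cdot\|_{\bLambda^{-1}} + \bar c,\, H\} \ : \ r \in \Rclass,\, \|\bw\|_2 \le \betaw,\, \bLambda \succeq \lambda I \Big\},
\end{align*}
so that standard cover-product estimates combined with \Cref{lem:q_fun_cover} and \Cref{asm:time_var_reward} yield $p_{\mathrm{mdp}} \lesssim d^2 + \dR$ up to logarithms of $\RR$. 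Plugging this $p_{\mathrm{mdp}}$ into the definition of $d_T$ in \Cref{cor:catoni} gives $\beta = \cOtil(\sqrt{d^2 + \dR + \log 1/\delta})$, which is precisely the $\beta$ chosen in the modified algorithm.

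Next, I would establish the good event on which
\begin{align*}
\big| \innerb{\what_h^k}{\bphi(s,a)} - \Exp_h[V_{h+1}^k](s,a) \big| \lesssim (1 + H\sqrt{\lambda})\beta \|\bphi(s,a)\|_{\bLambda_{h,k-1}^{-1}} + \sigmin \beta^2/k^2
\end{align*}
uniformly in $s,a,h,k$, via \Cref{lem:cat_lin_approx} applied to \Cref{cor:catoni}. Then I would prove optimism $Q_h^k(s,a) \ge Q_h^{k,\star}(s,a)$ by downward induction in $h$: the base case at $H+1$ is trivial, and the inductive step uses exactly the calculation in \Cref{sec:regret_sketch} but with the reward $r_h^k$ substituted for $r_h$; because the reward is known at episode $k$ and the Catoni concentration bound holds uniformly over the enlarged class $\Fclassmdp^{\mathrm{rew}}$, the optimism argument is unchanged. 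This yields $\cR_K \le \sum_{k=1}^K (V_1^k(s_1) - V_1^{k,\pi_k}(s_1))$.

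With optimism in hand, I would then apply the analogue of \Cref{lem:V_recursion_informal} (which is purely a consequence of the uniform Catoni bound and does not interact with the reward structure) to obtain $\delta_h^k \le \delta_{h+1}^k + \zeta_{h+1}^k + \min\{5(1+H\sqrt\lambda)\beta \|\bphi_{h,k}\|_{\bLambda_{h,k-1}^{-1}} + 5\sigmin\beta^2/k^2, H\}$, and sum over $(h,k)$ exactly as in the sketch. The martingale sum $\sum_{h,k}\zeta_h^k$ is controlled via Freedman's inequality (\Cref{lem:freedman}); the key observation is that its conditional variance summed over an episode $k$ is bounded by $H\cdot V_1^{k,\pi_k} \le H\cdot V_1^{k,\star}$, giving a total variance proxy $H \sum_k V_1^{k,\star}$ and hence the desired leading-order term $\sqrt{H \cdot \sum_k V_1^{k,\star}}$ scaling. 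The bonus-sum term $\sum_{h,k} \sighat_{h,k}\min\{\|\bphi_{h,k}/\sighat_{h,k}\|_{\bLambda_{h,k-1}^{-1}}, 1\}$ is handled exactly as in the sketch via Cauchy--Schwarz, \Cref{lem:elip_pot_bad_event_informal}, and \Cref{lem:elip_pot} with $\lambda = 1/H^2$.

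The main obstacle, and the step that requires the most care, will be bounding $\sum_{h,k} \sighat_{h,k}^2$ in terms of $\sum_k V_1^{k,\star}$ rather than $K \Vstval$. Here the monotonicity $r_h^k \le r_h^{k-1}$ in \Cref{asm:time_var_reward} is essential: it ensures both that $V_{h+1}^{k-1}$ (used at time $k{-}1$ in the formula for $\sighat_{h,k-1}^2$) remains a valid optimistic proxy for the value at episode $k$, and that the telescoping/law-of-total-variance argument which translates $\Exp_h[(V_{h+1}^\tau)^2]$ into the squared value function along the played trajectory aggregates to $\cOtil(H^2 \sum_k V_1^{k,\star})$ plus lower-order terms coming from the $\beta \|\bphi\|_{\bLambda^{-1}}$ slack in the definition of $\sighat$. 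Combining this with the Freedman bound and the elliptic potential arguments above yields
\begin{align*}
\cR_K \lesssim \beta \sqrt{d H \log(1+HK/(d\sigmin^2))} \cdot \sqrt{H^2 \sum_{k=1}^K V_1^{k,\star}} + \poly(d, H, \dR, \log(\RR HK/\delta)),
\end{align*}
which, after substituting $\beta = \cOtil(\sqrt{d^2 + \dR})$, gives the stated bound. The computationally efficient variant follows by replacing \Cref{lem:cat_lin_approx} with \Cref{lem:cat_lin_approx_efficient}, which costs an additional $\sqrt{d}$ in the bonus and hence a factor $d$ inside the square root, matching the second claimed bound.
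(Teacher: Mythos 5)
Your proposal follows the same architecture as the paper's proof—enlarge the function class to include $\Rclass$, establish the good event via \Cref{cor:catoni}, prove optimism inductively, decompose the regret via the value recursion, and control the martingale term via Freedman and the bonus term via Cauchy--Schwarz and elliptic-potential arguments—and the role you assign to the monotonicity $r_h^k \le r_h^{k-1}$ (to keep $\sighat_{h,\tau}^2$, constructed from $V_{h+1}^\tau$, a valid variance proxy for the later $V_{h+1}^k$, $k>\tau$, via the paper's \Cref{lem:Qtil_decreases}) is exactly right.

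However, two of your claims are imprecise in a way that would cause trouble if executed literally. First, the conditional variance of $\zeta_h^k$ is bounded by $8H\,\Exp_{h-1}[V_h^k](s_{h-1,k},a_{h-1,k})$, involving the \emph{optimistic} $V_h^k$, not $V_h^{k,\pi_k}$; since $V_1^k(s_1) = V_1^{k,\pi_k}(s_1) + \delta_1^k$, the aggregated variance proxy is $H\big(\sum_k V_1^{k,\star} + \cRtil_K\big)$ plus lower-order terms, not $H\sum_k V_1^{k,\star}$. This forces a self-bounding inequality in $\cRtil_K$ that must be solved at the end, which the paper does explicitly via \Cref{lem:vtil_to_vst} and the final "solving the above for $\cRtil_K$" step. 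Second, the bonus sum is not handled by Cauchy--Schwarz on $\sqrt{\sum \sighat_{h,k}^2}$ directly (which would leave a $K^{1/4}$ lower-order term, as the paper's sketch warns); instead one writes $\sighat_{h,k}^2/\sighat_{h,k} \le (\eta_{h,k} + \text{bonus terms})/\sighat_{h,k}$ and uses $\sighat_{h,k}^2 \ge \eta_{h,k}/5$ to get $\eta_{h,k}/\sighat_{h,k} \le \sqrt{5\eta_{h,k}}$ before applying Cauchy--Schwarz. Both are recoverable, but your proposal as written would not close without them.
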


\Cref{lem:regret_bound} and \Cref{lem:regret_bound_eff} are direct corollaries of \Cref{thm:reg_bound_time_var}, where we simply set $r_h^k = r_h$ for all $k$, replace $ \sum_{k=1}^K V_1^{k,\star}$ with $K \Vst_1$, and note that since the reward is deterministic in this case, no cover over reward functions is necessary, so the regret scales independently of $\dR$ and $\RR$. Throughout the remainder of this section, we will consider this more general time-varying reward setting.

\subsection{Preliminaries and Notation}
Define the following events:
\begin{align*}
A_{k,h} & := \bigg \{  \left | \catonii_{h,k} \left [ (k-1)  \bLambda_{h,k-1}^{-1}\bphi_{h,k}\right ] - \Exp_h[V_{h+1}^{k}](s_{h,k}, a_{h,k}) \right | \le \beta \| \bphi_{h,k} \|_{\bLambda_{h,k-1}^{-1}} + \sigmin \beta^2/k^2 \bigg \} \\
B_{k,h} & := \bigg \{ \forall \bv \in \Ball^d \ : \ \left | \Exphat_h[V_{h+1}^k](\bv) - \Exp_h[V_{h+1}^{k}] (\bv) \right |  \le \beta \| \bv \|_{\bLambda_{h,k-1}^{-1}} + \sigmin \beta^2/k^2 \bigg  \} \\
\cE & := \bigcap_{k = \Kinit}^K \bigcap_{h=1}^H (B_{k,h} \cap A_{k,h})
\end{align*}
where we denote $\Exphat_h[V_{h+1}^k](\bv) = \catonii_{h,k}[(k-1)\bLambda_{h,k-1}^{-1} \bv]$, $\beta = 6 \sqrt{\Cmdp + \log(2HK/\delta)}$, and
\begin{align*}
\Cmdp := c (d^2 + \dR) \cdot \logterm \left (  d, \sigmin^{-1}, H, 1/\lambda, K, \RR  \right ) 
\end{align*}
for a universal constant $c$. Here we overload notation slightly and define:
\begin{align*}
\Exp_h[V_{h+1}^{k}](\bv)  = \innerb{\bv}{\int   V_{h+1}^k(s') \rmd \bmu_h(s')}  .
\end{align*}
We will also define $r_h(\bv) = r_h(s,a)$ if $\bv = \bphi(s,a)$, and 0 otherwise. Throughout this section, we will also denote $\Exphat_h[V_{h+1}^k](s,a) := \Exphat_h[V_{h+1}^k](\bphi(s,a))$. 

The analysis of the computationally inefficient and computationally efficient versions of \algname are nearly identical, and we therefore prove them in tandem. To facilitate this, we will define the parameter
\begin{align*}
\betatil := \left \{ \begin{matrix}  2\beta & \efficient = \false \\
(\sqrt{d} + 2) \beta & \efficient = \true
\end{matrix} \right . 
\end{align*}
where the $\efficient$ flag corresponds to which version of the algorithm we are running: $\efficient = \false$ corresponds to running the version of \algname as stated in \Cref{alg:catoni_regret}, and $\efficient = \true$ corresponds to running the computationally efficient version as described in \Cref{sec:comp_eff_alg}. Given the definition of $\betatil$, we can then write the update to $Q_h^k$ as
\begin{align*}
Q_h^{k}(\cdot , \cdot) \leftarrow \min \{  r_h^k(\cdot,\cdot) + \innerb{\bphi(\cdot ,\cdot)}{\what_h^{k}} + 3\betatil \| \bphi(\cdot , \cdot) \|_{\bLambda_{h,k-1}^{-1}} + 3\sigmin \betatil^2/k^2, H \},
\end{align*}
and this update holds in either the efficient or inefficient case. We will use $\betatil$ throughout the analysis, and set $\sigmin = 1/K$, $\alphamax = K/\sigmin$ as in \algname.

\subsection{Catoni Estimation is Correct for Linear MDPs}

\begin{lem}\label{lem:linear_mdp_reg}
Consider the function class
\begin{align*}
\Fclassmdp = \Big \{ f(\cdot) = \min \{ r(\cdot) + \innerb{\cdot}{\bw} + 3\betatil \| \cdot \|_{\bLambda^{-1}} + \bar{c}, H \} \ : \ \| \bw \|_2 \le 4H\sqrt{d}/(\lambda \sigmin^2), \ \bLambda \succeq \lambda I, r \in \Rclass \Big \} 
\end{align*}
and assume $\bar{c} \ge 0$, $\covnum(\Rclass,\dist_{\infty},\epsilon) \le \dR \log(1 + \frac{2 \RR}{\epsilon} )$. Then, 
$$\covnum(\Fclassmdp,\dist_{\infty},\epsilon) \le (4d^2 + \dR) \log \left ( 1 + \frac{\sqrt{d} (288 \betatil^2 + 8H/\sigmin^2)/\lambda + 4 \RR}{\epsilon} \right ) .$$
Furthermore, conditioned on the event $\cap_{\tau = \Kinit}^{k-2} \cap_{h'=h+1}^H (B_{\tau , h'} \cap B_{k-1,h'}) \cap A_{\tau,h}$, the Catoni estimation problems on \Cref{line:sighat} at episode $k$ of \algname
are instances of the regression with function approximation setting of \Cref{defn:regression_with_function_approx} for 
\begin{align*}
\beta_\mu = \sqrt{d}, \quad \beta_u = 0, \quad \Fclass = \Fclassmdp .
\end{align*}
Similarly, conditioned on the event $\cap_{\tau = \Kinit}^{k-1} \cap_{h'=h+1}^H (B_{\tau , h'} \cap B_{k,h'}) \cap A_{\tau,h}$, the Catoni estimation problems on \Cref{line:qhat_lin1} and in the computationally efficient update of \Cref{eq:efficient_update}, are instances of the regression with function approximation setting of \Cref{defn:regression_with_function_approx} for 
\begin{align*}
\beta_\mu = \sqrt{d}, \quad \beta_u = 0, \quad \Fclass = \Fclassmdp .
\end{align*}
\end{lem}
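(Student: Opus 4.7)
\emph{Step 1 (Covering number of $\Fclassmdp$).} I will first bound the covering number by constructing a product cover over the three parameter components $(\bw, \bLambda, r)$. Given two parameterizations $(\bw,\bLambda,r)$ and $(\bw',\bLambda',r')$, the standard calculation (essentially the one carried out in the proof of \Cref{lem:q_fun_cover} in \cite{jin2020provably}, combined with the fact that the outer $\min\{\cdot,H\}$ is $1$-Lipschitz) bounds $\dist_\infty(f,f')$ by $\|\bw - \bw'\|_2 + 3\betatil \sqrt{\| \bLambda^{-1} - (\bLambda')^{-1} \|_{\fro}} + \dist_\infty(r,r')$. Using $\| \bw \|_2 \le 4H\sqrt{d}/(\lambda\sigmin^2)$ and $\bLambda \succeq \lambda I$, I can take an $\epsilon/3$-net of each factor via \Cref{lem:euc_ball_cover} and the assumed covering estimate on $\Rclass$; taking the product of these nets yields the claimed bound with $d$ contributing a $d\log(\cdots)$ term, $\bLambda$ contributing a $d^2 \log(\cdots)$ term (viewed as a $d \times d$ matrix with bounded operator norm on $\bLambda^{-1}$), and $r$ contributing $\dR\log(\cdots)$.

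\emph{Step 2 (Matching to the regression-with-function-approximation template).} For the Catoni estimation on \Cref{line:qhat_lin1}, I identify $\bphi_t \gets \bphi_{h,\tau}$ and $\bphi'_t \gets s_{h+1,\tau}$ (interpreted through $V_{h+1}^k$), with targets $y_\tau = V_{h+1}^k(s_{h+1,\tau})$. I set $\bust = \bm{0}$ (so $\beta_u = 0$), and take $\fst = V_{h+1}^k$, noting $\eta_\tau = V_{h+1}^k(s_{h+1,\tau}) - \Exp_h[V_{h+1}^k](s_{h,\tau},a_{h,\tau})$. The linear MDP assumption (\Cref{defn:linear_mdp}) immediately gives the conditional-expectation identity \Cref{eq:cat_union_lin_exp} with signed measure $\bmu_h$, and the assumption $\||\bmu_h|(\cS)\|_2 \le \sqrt{d}$ gives $\beta_\mu = \sqrt{d}$. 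For $V_{h+1}^k \in \Fclassmdp$, I must verify that the weight vector produced on \Cref{line:qhat_lin1} (or via \eqref{eq:efficient_update}) satisfies $\|\what_h^k\|_2 \le 4H\sqrt{d}/(\lambda\sigmin^2)$; I will apply \Cref{lem:thetahat_norm_bound} to the Catoni estimate with $\gamma \lesssim H/\sigmin^2$ (since the scalar data are at most $H$ in magnitude divided by $\sighat_{h,\tau}^2 \ge \sigmin^2$) together with the condition $\bLambda \succeq \lambda I$, obtaining the asserted norm bound. The Catoni estimation used to define $\sighat_{h,k-1}^2$ on \Cref{line:sighat} is handled identically, with $V_{h+1}^{k-1}$ replacing $V_{h+1}^k$ and with the conditioning event supplying the value-function class containment.

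\emph{Step 3 (Variance condition).} The most delicate piece is verifying \Cref{eq:cat_union_sighat}, i.e., that our choice of $\sighat_{h,\tau}^2$ upper bounds twice the conditional second moment of $y_\tau$. Expanding $\Exp[y_\tau^2 \mid \cF_{\tau-1}] = \Exp_h[(V_{h+1}^k)^2](s_{h,\tau},a_{h,\tau}) \le H \cdot \Exp_h[V_{h+1}^k](s_{h,\tau},a_{h,\tau})$ (since $0 \le V_{h+1}^k \le H$), it suffices to show that $\sighat_{h,\tau}^2 \ge 2H \cdot \Exp_h[V_{h+1}^k](s_{h,\tau},a_{h,\tau})$ on the good event. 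Applying event $A_{\tau,h}$ and the monotonicity $V_{h+1}^k \le V_{h+1}^{\tau}$ (ensured on the good event since the $V_h^k$ sequence is non-increasing in $k$ under the current reward schedule and Catoni bonuses), the definition of $\sighat_{h,\tau}^2$ in \Cref{eq:sighat} is exactly constructed to add $20H$ times the Catoni estimate plus the error bars so that this inequality holds by a short algebraic check; the extra factor of $20H$ (rather than $2H$) gives comfortable slack to absorb the $\sigmin\beta^2/(k-1)^2$ lower-order term. Combining these checks yields all the hypotheses of \Cref{defn:regression_with_function_approx}.

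\emph{Main obstacle.} The bookkeeping in Step 3 is the most delicate part: we need $V_{h+1}^k \in \Fclassmdp$ (with the uniform weight bound) to invoke the self-normalized Catoni guarantee at time $k$, while simultaneously we need the estimate produced at time $k$ to lie in $\Fclassmdp$ so that the induction propagates; this circularity is broken by conditioning on the specified past events and by the recursive backward-in-$h$ structure (where $V_{h+1}^k$ is already fixed by the time we analyze step $h$). The covering argument itself is a routine extension of \Cref{lem:q_fun_cover}, with the reward component appearing only additively because the map $r \mapsto \min\{r + g, H\}$ is $1$-Lipschitz in $r$ uniformly in $g$.
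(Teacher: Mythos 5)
Your proposal follows the paper's overall strategy (cover $\Fclassmdp$, instantiate the regression-with-function-approximation template, verify the variance condition), but Step 3 contains a genuine error that the paper works hard to avoid. You claim ``the monotonicity $V_{h+1}^k \le V_{h+1}^\tau$ (ensured on the good event since the $V_h^k$ sequence is non-increasing in $k$...)''. This is false: while the bonus term $\|\bphi\|_{\bLambda_{h,k-1}^{-1}}$ and the reward $r_h^k$ are non-increasing in $k$, the linear summary $\innerb{\bphi(\cdot,\cdot)}{\what_h^k}$ is only an \emph{estimate} of the next-state expectation and can move up or down as more data arrives, so $Q_h^k$ and hence $V_h^k$ are not monotone in $k$. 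The paper's \Cref{lem:Qtil_decreases} instead establishes (by a backward induction over $h$ that threads the concentration events $B_{k,h'},B_{k',h'}$ through the Bellman update) the weaker statement $V_{h'}^{k'} \le 5\,V_{h'}^{k}$ for $k' > k$, and that factor of $5$ is precisely what the generous $20H$ constant in \eqref{eq:sighat} is designed to absorb (along with the factor $2$ from \eqref{eq:cat_union_sighat} and the $H$ from $V^2 \le HV$). Your attribution of the $20H$ slack purely to the lower-order $\sigmin\beta^2/(k-1)^2$ term misses this. Without \Cref{lem:Qtil_decreases}, the verification of \eqref{eq:cat_union_sighat} in \Cref{lem:sigma_valid2} breaks down: $\sighat_{h,\tau}^2$ is built from the Catoni estimate of $\Exp_h[V_{h+1}^{\tau}]$, while the regression target is $V_{h+1}^{k}$ for $k > \tau$, and some quantitative comparison between these two value functions is indispensable.

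A second, smaller, point: your identification $\eta_\tau = V_{h+1}^k(s_{h+1,\tau}) - \Exp_h[V_{h+1}^k](s_{h,\tau},a_{h,\tau})$ misreads \Cref{defn:regression_with_function_approx}. There the target decomposes as $y_\tau = \langle \bust,\bphi_\tau\rangle + \fst(\bphi_\tau') + \eta_\tau$, so with $\bust=\bm{0}$ and $\fst(\bphi_\tau') = V_{h+1}^k(s_{h+1,\tau})$ one has $\eta_\tau \equiv 0$; the fluctuation you wrote down is carried by the randomness of $\bphi_\tau'$ under the linear-MDP transition law, not by the $\eta_\tau$ slot. This does not break the argument but indicates a misalignment with the template. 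Steps 1 and 2 are otherwise sound and match the paper's route (separate the reward from the linear-plus-bonus part, then invoke \Cref{lem:q_fun_cover}; use \Cref{lem:thetahat_norm_bound} for the norm bound on $\what_h^k$), though you should be careful that in the efficient variant the norm bound picks up the extra $\sqrt{d}$ factor that justifies the $4H\sqrt d/(\lambda\sigmin^2)$ radius in $\Fclassmdp$.
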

\begin{proof}
We will instantiate \Cref{defn:regression_with_function_approx} with $\bphi_\tau = \bphi_{h,\tau}$, $\bphi_\tau' = \bphi_{h+1,\tau}$, $\bmu = \bmu_h$, $\sigma_\tau = \sighat_{h,\tau}$, and the function class $\Fclass = \Fclassmdp$. \algname solves two different forms of regression problems. In the first setting, when solving for $\sighat_{h,k-1}^2$ on \Cref{line:sighat}, we consider $y_\tau = V_{h+1}^{k-1}(s_{h+1,\tau})$, and  $\bust = 0$. In the second, when solving either $\catonii_{h,k}[(k-1)\bLambda_{h,k-1}^{-1} \bv]$ or $\catonii_{h,k}[(k-1)\bLambda_{h,k-1}^{-1} \bu_i]$, we set $y_\tau = V_{h+1}^k(s_{h+1,\tau})$ and set $\bust = 0$. 

We verify that this meets the criteria of \Cref{defn:regression_with_function_approx}. First, note that by definition of $\cF_{h,\tau}$, we will have that $\bphi_{h,\tau}$ is $\cF_{h,\tau}$-measurable and that $\bphi_{h+1,\tau}$ is $\cF_{h+1,\tau}$-measurable. In addition, $\| \bphi_{h,\tau} \|_2 \le 1$ and $\| \bphi_{h+1,\tau} \|_2 \le 1$ by assumption. Given the linear MDP structure of \Cref{defn:linear_mdp}, for any bounded function $f$,
\begin{align*}
\Exp[ f(\bphi_{h+1,\tau}) \mid \cF_{h,\tau}] = \innerb{\bphi_{h,\tau}}{ \int  f(\bphi(s',\pi_{h+1}^\tau(s'))) \rmd \bmu_h(s')}. 
\end{align*}
Note that we can think of $\bmu_h(\cdot)$ as a measure over $\R^d$, as required by \Cref{defn:regression_with_function_approx}, by associating $s'$ with $\bphi(s',\pi_{h+1}^\tau(s'))$, and putting a measure of 0 on all vectors $\bv$ such that there does not exist $s,a$ with $\bv = \bphi(s,a)$. In addition, by assumption $\| |\bmu_h|(\cS) \|_2 \le \sqrt{d}$, so we can take $\beta_\mu = \sqrt{d}$.

In both settings, since $\bust = 0$, it suffices to take $\beta_u = 0$. Note that for any $s,a,h,k,\tau$, we can bound
\begin{align*}
 | \bphi(s,a)^\top & \bLambda_{h,k-1}^{-1} \bphi_{h,\tau}   V_{h+1}^k(s_{h+1,\tau}) / \sighat_{h,\tau}^2 | \\
& \le \| \bphi(s,a) \|_{\bLambda_{h,k-1}^{-1}}  \cdot \| \bLambda_{h,k-1}^{-1/2} \|_\op \| \bphi_{h,\tau} \|_2  | V_{h+1}^k(s_{h+1,k})| /  \sighat_{h,\tau}^2 \\
& \le \| \bphi(s,a) \|_{\bLambda_{h,k-1}^{-1}} \cdot \frac{ H}{\sqrt{\lambda} \sigmin^2} 
\end{align*}
so by \Cref{lem:thetahat_norm_bound}, we will have that $\| \what_{h+1}^k \|_2 \le \frac{4 H \sqrt{d}}{\lambda \sigmin^2}$. It follows that, by construction of $V_{h+1}^k(\cdot)$, we will have $V_{h+1}^k(\cdot) \in \Fclassmdp$.

It remains to show that the condition on $\sighat_{h,\tau}$, \eqref{eq:cat_union_sighat}, is met at round $k$. In our setting, for the Catoni estimation on \Cref{line:sighat} at episode $k$, \eqref{eq:cat_union_sighat} is equivalent to
\begin{align*}
\Exp_h[(V_{h+1}^{k-1})^2](s_{h,\tau},a_{h,\tau}) \le \frac{1}{2} \sighat_{h,\tau}^2 .
\end{align*}
However, by \Cref{lem:sigma_valid2}, this holds for all $\tau \ge \Kinit$ on the $\cap_{\tau = \Kinit}^{k-2} \cap_{h'=h+1}^H (B_{\tau , h'} \cap B_{k-1,h'}) \cap A_{\tau,h}$. For $\tau \le \Kinit$ it trivially as we set $\sighat_{h,\tau}^2 = 2H^2$ and since $V_{h+1}^{k}(s') \in [0,H]$. For the Catoni estimation on \Cref{line:qhat_lin1} or in \Cref{eq:efficient_update}, \eqref{eq:cat_union_sighat} is equivalent to
\begin{align*}
\Exp_h[(V_{h+1}^{k})^2](s_{h,\tau},a_{h,\tau}) \le \frac{1}{2} \sighat_{h,\tau}^2 .
\end{align*}
Again by \Cref{lem:sigma_valid2}, this holds on the event $\cap_{\tau = \Kinit}^{k-1} \cap_{h'=h+1}^H (B_{\tau , h'} \cap B_{k,h'}) \cap A_{\tau,h}$ for $\tau \ge \Kinit$. For $\tau \le \Kinit$ this trivially holds since $\sighat_{h,\tau}^2 = 2H^2$.

Finally, we bound the covering number of $\Fclassmdp$. Consider $f_1,f_2 \in \Fclassmdp$, then 
\begin{align*}
\dist_{\infty}(f_1,f_2) & = \sup_{\bphi \in \cB^d} | f_1(\bphi) - f_2(\bphi)| \\
& = \sup_{\bphi \in \cB^d} \Big | \min \{ r_1(\bphi) + \innerb{\bphi}{\bw_1} + 3\betatil \| \bphi \|_{\bLambda_1^{-1}} + \bar{c}, H \} \\
& \qquad \qquad \qquad - \min \{ r_2(\bphi) + \innerb{\bphi}{\bw_2} + 3\betatil \| \bphi \|_{\bLambda_2^{-1}} + \bar{c}, H \} \Big | .
\end{align*}
Assume that $f_1(\bphi) = f_2(\bphi) = H$, then we can clearly bound
\begin{align*}
| f_1(\bphi) - f_2(\bphi)| \le | r_1(\bphi) - r_2(\bphi) | + |  \min \{  \innerb{\bphi}{\bw_1} + 3\betatil \| \bphi \|_{\bLambda_1^{-1}} , H \} -  \min \{  \innerb{\bphi}{\bw_2} + 3\betatil \| \bphi \|_{\bLambda_2^{-1}}, H \} | .
\end{align*}
If $f_1(\bphi) < H, f_2(\bphi) < H$, using that $\bar{c}, r_1(\bphi), r_2(\bphi) \ge 0$, we can bound
\begin{align*}
| f_1(\bphi) - f_2(\bphi)| & =  |  r_1(\bphi) + \innerb{\bphi}{\bw_1} + 3\betatil \| \bphi \|_{\bLambda_1^{-1}}  - (r_2(\bphi) + \innerb{\bphi}{\bw_2} + 3\betatil \| \bphi \|_{\bLambda_2^{-1}} ) | \\
& =  |  r_1(\bphi) + \min \{ \innerb{\bphi}{\bw_1} + 3\betatil \| \bphi \|_{\bLambda_1^{-1}},H \}  - (r_2(\bphi) + \min \{ \innerb{\bphi}{\bw_2} + 3\betatil \| \bphi \|_{\bLambda_2^{-1}}, H \} ) | \\
& \le | r_1(\bphi) - r_2(\bphi) | + |  \min \{  \innerb{\bphi}{\bw_1} + 3\betatil \| \bphi \|_{\bLambda_1^{-1}} , H \} -  \min \{  \innerb{\bphi}{\bw_2} + 3\betatil \| \bphi \|_{\bLambda_2^{-1}}, H \} |.
\end{align*}
If $f_1(\bphi) = H, f_2(\bphi) < H$,
\begin{align*}
| f_1(\bphi) - f_2(\bphi)| & \le | r_1(\bphi) + \min \{ \innerb{\bphi}{\bw_1} + 3\betatil \| \bphi \|_{\bLambda_1^{-1}} , H \} - (r_2(\bphi) + \innerb{\bphi}{\bw_2} + 3\betatil \| \bphi \|_{\bLambda_2^{-1}} ) | \\
& =  |  r_1(\bphi) + \min \{ \innerb{\bphi}{\bw_1} + 3\betatil \| \bphi \|_{\bLambda_1^{-1}},H \}  - (r_2(\bphi) + \min \{ \innerb{\bphi}{\bw_2} + 3\betatil \| \bphi \|_{\bLambda_2^{-1}}, H \} ) | \\
& \le | r_1(\bphi) - r_2(\bphi) | + |  \min \{  \innerb{\bphi}{\bw_1} + 3\betatil \| \bphi \|_{\bLambda_1^{-1}} , H \} -  \min \{  \innerb{\bphi}{\bw_2} + 3\betatil \| \bphi \|_{\bLambda_2^{-1}}, H \} |.
\end{align*}
The same argument holds of $f_1(\bphi) < H, f_2(\bphi) = H$. Altogether then, 
\begin{align*}
\dist_{\infty}(f_1,f_2) & \le \sup_{\bphi \in \cB^d} | r_1(\bphi) - r_2(\bphi) | \\
& \qquad + \sup_{\bphi \in\cB^d} |  \min \{  \innerb{\bphi}{\bw_1} + 3\betatil \| \bphi \|_{\bLambda_1^{-1}} , H \}  - \min \{  \innerb{\bphi}{\bw_2} + 3\betatil \| \bphi \|_{\bLambda_2^{-1}} , H \}| .
\end{align*}
It follows that we can construct $\epsilon/2$-nets of $\Rclass$ and the class
\begin{align*}
\widetilde{\Fclass} := \Big \{ f(\cdot) = \min \{  \innerb{\cdot}{\bw} + 3\betatil \| \cdot \|_{\bLambda^{-1}}, H \} \ : \ \| \bw \|_2 \le 4H\sqrt{d}/(\lambda \sigmin^2), \ \bLambda \succeq \lambda I \Big \} 
\end{align*}
separately, and the union of these nets will serve as an $\epsilon$-net of $\Fclassmdp$. By assumption, we have $\covnum(\Rclass,\dist_{\infty},\epsilon/2) \le \dR \log(1 + \frac{4 \RR}{\epsilon} )$. Furthermore, $\widetilde{\Fclass}$ is identical to the function class considered in \Cref{lem:q_fun_cover}, so 
\begin{align*}
\covnum(\widetilde{\Fclass},\dist_{\infty},\epsilon/2) & \le d \log \left ( 1 + \frac{8 H \sqrt{d}}{\lambda \sigmin^2 \epsilon} \right ) + d^2 \log \left ( 1 + \frac{288 \sqrt{d} \betatil^2}{\lambda \epsilon^2} \right ) \\
& \le 4 d^2 \log \left ( 1 + \frac{\sqrt{d} (288 \betatil^2 + 8H/\sigmin^2)}{\lambda \epsilon} \right ) .
\end{align*}
This implies that (since log-covering numbers are additive)
\begin{align*}
\covnum(\Fclassmdp,\dist_{\infty},\epsilon) & \le 4 d^2 \log \left ( 1 + \frac{\sqrt{d} (288 \betatil^2 + 8H/\sigmin^2)}{\lambda \epsilon} \right ) + \dR \log \left (1 + \frac{4 \RR}{\epsilon} \right ) \\
& \le (4d^2 + \dR) \log \left ( 1 + \frac{\sqrt{d} (288 \betatil^2 + 8H/\sigmin^2)/\lambda + 4 \RR}{\epsilon} \right ) .
\end{align*}

\end{proof}

\begin{lem}\label{lem:good_event}
Assume we are in the linear MDP setting and are running \Cref{alg:catoni_regret} with $\lambda \le 1/H^2$. Then as long as $K \ge \Kinit$, we will have that $\Pr[\cE] \ge 1-\delta$. 
\end{lem}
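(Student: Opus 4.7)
The plan is a straightforward inductive union bound. First, observe that since $\|\bphi_{h,k}\|_2 \le 1$ implies $\bphi_{h,k} \in \Ball^d$, the event $A_{k,h}$ is simply the instantiation of $B_{k,h}$ at $\bv = \bphi_{h,k}$; hence $B_{k,h} \subseteq A_{k,h}$ and it suffices to show $\Pr\!\left[\bigcap_{k \ge \Kinit} \bigcap_h B_{k,h}\right] \ge 1-\delta$. I will induct on pairs $(k,h)$ with $k \in [\Kinit,K]$ and $h \in [H]$ ordered lexicographically with $k$ increasing and $h$ decreasing within each episode, matching the order in which \algname computes the estimates $\Exphat_h[V_{h+1}^k]$. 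For a pair $(k,h)$, let $G_{k,h}$ denote the event that $B_{k',h'}$ holds for every pair $(k',h')$ strictly preceding $(k,h)$ in this order.

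On $G_{k,h}$, all of the conditioning events required by the second part of Lemma~\ref{lem:linear_mdp_reg} hold: the $B_{\tau,h'}$ for $\tau < k, h' > h$ and the $B_{k,h'}$ for $h' > h$ are preceding pairs, while the $A_{\tau,h}$ for $\tau < k$ follow from the preceding $B_{\tau,h}$ by the observation above. Thus the Catoni estimation problem defining $\Exphat_h[V_{h+1}^k](\cdot)$ is, on $G_{k,h}$, a valid instance of the heteroscedastic regression with function approximation setting of Definition~\ref{defn:regression_with_function_approx} with $\beta_u = 0$, $\beta_\mu = \sqrt{d}$, and $\Fclass = \Fclassmdp$, whose log-covering number obeys $p \le 4d^2 + \dR$ times logarithmic factors. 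Applying Corollary~\ref{cor:catoni} conditionally with $T = k-1$, $\lambda = 1/H^2$, $\alphamax = K/\sigmin$, and failure probability $\delta/(2HK)$ yields: with conditional probability at least $1 - \delta/(2HK)$, for all $\bv \in \Ball^d$ simultaneously,
\[
\big| \Exphat_h[V_{h+1}^k](\bv) - \Exp_h[V_{h+1}^k](\bv) \big|
\;\le\; 5 \|\bv\|_{\bLambda_{h,k-1}^{-1}} \!\left( \sqrt{\log\tfrac{2HK}{\delta} + d_T} + \sqrt{\lambda}\,\|\bthetast\|_2 \right) + \frac{3(\log\tfrac{2HK}{\delta} + d_T)}{\alphamax(k-1)}.
\]
Here $\bthetast = \int V_{h+1}^k(s') \rmd\bmu_h(s')$ satisfies $\|\bthetast\|_2 \le H\sqrt{d}$, so $\sqrt{\lambda}\|\bthetast\|_2 \le \sqrt{d}$. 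Plugging in $p \le 4d^2 + \dR$ bounds $d_T \le \Cmdp$ for a sufficiently large constant in the definition of $\Cmdp$; the same constant absorbs the stray $\sqrt{d}$, giving $5(\sqrt{\log(2HK/\delta) + d_T} + \sqrt{d}) \le 6\sqrt{\Cmdp + \log(2HK/\delta)} = \beta$. The lower-order term is crudely $\le \sigmin \beta^2/k^2$ using $\alphamax = K/\sigmin$ and $k \le K$. Thus, on $G_{k,h}$, $B_{k,h}$ holds with conditional probability at least $1 - \delta/(2HK)$.

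Finally, any realization in $\overline{\cE}$ has a first-in-order failing pair $(k^\star,h^\star)$, on which $G_{k^\star,h^\star}$ holds, so $\overline{\cE} \subseteq \bigcup_{k,h} (\overline{B_{k,h}} \cap G_{k,h})$. A union bound over the at-most $HK$ pairs gives $\Pr[\overline{\cE}] \le HK \cdot \delta/(2HK) = \delta/2 \le \delta$. The one sanity check needed is that the sample-size hypothesis $T \ge 6(\log 1/\delta + d_T)$ of Corollary~\ref{cor:catoni} is met when $k \ge \Kinit$, which holds by the chosen $\Kinit = \Theta((d^2 + \dR)\logterm(\cdot) + \log(HK/\delta))$. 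The main obstacle is not conceptual but bookkeeping: matching the exact constants and logarithmic factors of Corollary~\ref{cor:catoni} to the definition of $\beta$ and of $\sigmin\beta^2/k^2$, and ensuring the inductive ordering correctly lines up with the conditioning structure of Lemma~\ref{lem:linear_mdp_reg} for both the $\what_h^k$ regression (Line~\ref{line:qhat_lin1}) and the $\sighat_{h,k-1}^2$ regression (Line~\ref{line:sighat}).
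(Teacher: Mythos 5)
Your proof is correct and follows essentially the same structure as the paper's: a first-failure decomposition over pairs $(k,h)$ ordered with $k$ increasing and $h$ decreasing (the paper's version of this is Claim~\ref{claim:set_inclusion}), a conditional application of \Cref{cor:catoni} via \Cref{lem:linear_mdp_reg}, and a union bound with the $\delta/(2HK)$-per-event failure probability baked into the definition of $\beta$. The one place you streamline the paper is the observation that $A_{k,h}$ is precisely $B_{k,h}$ instantiated at $\bv = \bphi_{h,k} \in \Ball^d$, so $B_{k,h}\subseteq A_{k,h}$ and only the $HK$ events $B_{k,h}$ need be controlled; the paper instead bounds the $A$ and $B$ conditional failure probabilities separately, with the $A_{\tau,h}$ conditioning needed by \Cref{lem:linear_mdp_reg} in both cases being implied (exactly as you note) by the preceding $B_{\tau,h}$.
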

\begin{proof}
First, note that
\begin{align*}
\cE^c = \bigcup_{k = \Kinit}^K \bigcup_{h=1}^H (B_{k,h}^c \cup A_{k,h}^c) .
\end{align*}
Then,

\begin{claim}\label{claim:set_inclusion}
\begin{align*}
\bigcup_{k = \Kinit}^K \bigcup_{h=1}^H (B_{k,h}^c \cup A_{k,h}^c)  & = \bigcup_{k=\Kinit}^K \bigcup_{h=1}^H \left [ B_{k,h}^c \cap \left ( \cap_{k' = \Kinit}^{k-1} \cap_{h'=1}^H (B_{k',h'} \cap A_{k',h'}) \right ) \cap \left ( \cap_{h' = h+1}^H B_{k,h'} \right )  \right ] \\
& \qquad  \cup  \bigcup_{k=\Kinit}^K \bigcup_{h=1}^H \left [ A_{k,h}^c \cap \left ( \cap_{k' = \Kinit}^{k-1} \cap_{h'=1}^H (B_{k',h'} \cap A_{k',h'}) \right ) \cap \left ( \cap_{h' = h+1}^H B_{k,h'} \right )  \right ] .
\end{align*}
\end{claim}

\Cref{claim:set_inclusion} and a union bound imply that
\begin{align*}
\Pr[\cE^c] & = \Pr \left [ \bigcup_{k = \Kinit}^K \bigcup_{h=1}^H (B_{k,h}^c \cup A_{k,h}^c)  \right ] \\
& \le \sum_{k=\Kinit}^K \sum_{h=1}^H \Pr \left [ B_{k,h}^c \cap \left ( \cap_{k' = \Kinit}^{k-1} \cap_{h'=1}^H (B_{k',h'} \cap A_{k',h'}) \right ) \cap \left ( \cap_{h' = h+1}^H B_{k,h'} \right )  \right ]  \\
& \qquad + \sum_{k=\Kinit}^K \sum_{h=1}^H \Pr \left [ A_{k,h}^c \cap \left ( \cap_{k' = \Kinit}^{k-1} \cap_{h'=1}^H (B_{k',h'} \cap A_{k',h'}) \right ) \cap \left ( \cap_{h' = h+1}^H B_{k,h'} \right )  \right ] \\
& \le \sum_{k=\Kinit}^K \sum_{h=1}^H \Pr \left [ B_{k,h}^c | \left ( \cap_{k' = \Kinit}^{k-1} \cap_{h'=1}^H (B_{k',h'} \cap A_{k',h'}) \right ) \cap \left ( \cap_{h' = h+1}^H B_{k,h'} \right )  \right ]  \\
& \qquad + \sum_{k=\Kinit}^K \sum_{h=1}^H \Pr \left [ A_{k,h}^c | \cap_{k' = \Kinit}^{k-1} \cap_{h'=1}^H (B_{k',h'} \cap A_{k',h'})  \cap \left ( \cap_{h' = h+1}^H B_{k,h'} \right ) \right ] .
\end{align*}

We first bound 
$$\Pr \left [ A_{k,h}^c | \cap_{k' = \Kinit}^{k-1} \cap_{h'=1}^H (B_{k',h'} \cap A_{k',h'}) \cap \left ( \cap_{h' = h+1}^H B_{k,h'} \right )   \right ].$$
By \Cref{lem:linear_mdp_reg}, we have the regression estimate $\catonii_{h,k} \left [  (k-1) \bLambda_{h,k-1}^{-1} \bphi_{h,k} \right ]$ satisfies \Cref{defn:regression_with_function_approx} conditioned on the event $\cap_{k' = \Kinit}^{k-1} \cap_{h'=1}^H (B_{k',h'} \cap A_{k',h'}) \cap \left ( \cap_{h' = h+1}^H B_{k,h'} \right )$. We now apply \Cref{cor:catoni}. First note that since $\alphamax = K/\sigmin$, $\| \btheta_h \|_2 \le \sqrt{d}$, and using the values for $\beta_u$ and $\beta_\mu$ from \Cref{lem:linear_mdp_reg}, as well as the covering number bound of $\Fclassmdp$, it follows that $\Cmdp$ upper bounds $d_T$\footnote{Note that if $k \ge \Kinit = 4 ( \Cmdp + \log (2HK/\delta))$, then $K \ge \betatil$, so we can remove $\betatil$ from the definition of $d_T$ as it will be dominated by $K$.}. Since we have assumed $K \ge \Kinit$, by our choice of $\Kinit = c(\log(2HK/\delta) + \Cmdp)$, it follows that the minimum sample condition of \Cref{cor:catoni} is met for $k \ge \Kinit$. Finally, note that in this setting, using the definition of linear MDPs, \Cref{defn:linear_mdp}, we will have that
\begin{align*}
\bthetast = \int  V_{h+1}^k(s') \rmd \bmu_h(s') .
\end{align*}
Thus, by \Cref{cor:catoni}, with probability at least $1-\delta/(2HK)$,
\begin{align*}
 \bigg | \catonii_{h,k} & \left [ (k-1)  \bLambda_{h,k-1}^{-1} \bphi_{h,k} \right ]  - \innerb{\bphi_{h,k}}{\int  V_{h+1}^{k}(s') d\bmu_h(s')}  \bigg | \\
& \le 5 \| \bphi_{h,k} \|_{\bLambda_{h,k-1}^{-1}} \left ( \sqrt{\Cmdp + \log(2HK/\delta)} +  \sqrt{\lambda} \| \bthetast \|_2 \right ) + \frac{3(\Cmdp + \log(2HK/\delta))}{\alphamax (k-1)}
\end{align*}
Note that,
\begin{align*}
\| \bthetast \|_2 = \| \int  V_{h+1}^k(s') \rmd \bmu_h(s')  \|_2 \le H \| \int   |\rmd \bmu_h(s')| \|_2  \le H \sqrt{d}
\end{align*}
where the last inequality follows by \Cref{defn:linear_mdp}. 
It follows that for $\lambda \le 1/H^2$ and proper choice of the universal constant in $\Cmdp$, we can bound 
\begin{align*}
\sqrt{\lambda} \| \bthetast \|_2 \le \frac{1}{5} \sqrt{\Cmdp + \log(2HK/\delta)} .
\end{align*}
As $\innerb{\bphi_{h,k}}{\int \bmu_h(s') V_{h+1}^{k}(s') ds'} = \Exp_h[V_{h+1}^{k}](s_{h,k},a_{h,k})$ by \Cref{defn:linear_mdp}, by our choice of $\beta$ we conclude that with probability at least $1-\delta/(2HK)$\footnote{We have replaced $1/(k-1)$ in the lower order term with $1/k$ for future notational convenience. Note that this is valid since $k \ge \Kinit > 1$ so we can accommodate this change by slightly increasing the constant in $\beta$.},
\begin{align*}
& \bigg |  \catonii_{h,k}  \left [  (k-1) \bLambda_{h,k-1}^{-1} \bphi_{h,k} \right ]  - \Exp_h[V_{h+1}^{k}](s_{h,k},a_{h,k}) \bigg |   \le  \beta \| \bphi_{h,k} \|_{\bLambda_{h,k-1}^{-1}} + \sigmin \beta^2/k^2 .
\end{align*}
This is precisely the definition of $A_{k,h}$, however, so it follows that
\begin{align*}
\Pr \left [ A_{k,h}^c | \cap_{k' = \Kinit}^{k-1} \cap_{h'=1}^H (B_{k',h'} \cap A_{k',h'}) \cap \left ( \cap_{h' = h+1}^H B_{k,h'} \right )  \right ] \le \frac{\delta}{2HK} .
\end{align*}

The bound on 
$$ \Pr \left [ B_{k,h}^c | \left ( \cap_{k' = \Kinit}^{k-1} \cap_{h'=1}^H (B_{k',h'} \cap A_{k',h'}) \right ) \cap \left ( \cap_{h' = h+1}^H B_{k,h'} \right )  \right ]  $$
can be shown almost identically. As such, we omit the calculation and conclude that
\begin{align*}
\Pr \left [ B_{k,h}^c | \left ( \cap_{k' = \Kinit}^{k-1} \cap_{h'=1}^H (B_{k',h'} \cap A_{k',h'}) \right ) \cap \left ( \cap_{h' = h+1}^H B_{k,h'} \right )  \right ] \le \frac{\delta}{2HK} . 
\end{align*}
Combining these bounds gives that $\Pr[\cE^c] \le \delta$.
\end{proof}

\begin{lem}\label{lem:Qtil_decreases}
Fix $h$, $k \ge \Kinit$, and $k' > k$. Then if $B_{k,h'}$ and $B_{k',h'}$ hold for all $h' \in [h,H]$, we will have
\begin{align*}
5 Q_{h'}^k(s,a) \ge Q_{h'}^{k'}(s,a)
\end{align*}
for all $h' \in [h,H]$. In particular, $5 V_{h'}^k(s) \ge V_{h'}^{k'}(s)$.
\end{lem}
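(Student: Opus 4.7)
The plan is to prove the stronger statement $5 V_{h'+1}^k(s) \ge V_{h'+1}^{k'}(s)$ for all $s$ by reverse induction on $h'$ from $h' = H+1$ down to $h' = h$, and deduce the $Q$-bound as the main inductive step via the $Q$-update rule. The base case $h' = H+1$ is immediate since $V_{H+1}^k \equiv V_{H+1}^{k'} \equiv 0$ by convention. The transition from the inductive hypothesis on $V_{h'+1}$ to the $Q$-bound at level $h'$ uses the algorithm's update together with the events $B_{k,h'}$ and $B_{k',h'}$; the $V$-bound at level $h'$ then follows immediately by taking the maximum over actions of the $Q$-bound.

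For the inductive step at level $h'$, first combine $B_{j,h'}$ (which controls the Catoni estimator directly) with \Cref{lem:cat_lin_approx} in the inefficient case, or \Cref{lem:cat_lin_approx_efficient} in the efficient case, to upgrade these to bounds on the linear summary:
\[ |\innerb{\bphi(s,a)}{\what_{h'}^j} - \Exp_{h'}[V_{h'+1}^j](s,a)| \le \betatil \|\bphi(s,a)\|_{\bLambda_{h',j-1}^{-1}} + \sigmin\betatil^2/j^2, \quad j \in \{k,k'\}. \]
Then chain three inequalities: (i) upper bound $\innerb{\bphi(s,a)}{\what_{h'}^{k'}}$ by $\Exp_{h'}[V_{h'+1}^{k'}](s,a)$ plus the $k'$-level error; (ii) apply the inductive hypothesis $V_{h'+1}^{k'} \le 5 V_{h'+1}^k$, using $V \ge 0$, to replace the expectation by $5\,\Exp_{h'}[V_{h'+1}^k](s,a)$; (iii) upper bound this in turn by $5\innerb{\bphi(s,a)}{\what_{h'}^k}$ plus five times the $k$-level error.

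Next, add the exploration bonus $3\betatil\|\bphi(s,a)\|_{\bLambda_{h',k'-1}^{-1}} + 3\sigmin\betatil^2/(k')^2$ and the reward $r_{h'}^{k'}(s,a)$ to both sides. Three monotonicities make everything collapse: (a) $\bLambda_{h',k'-1} \succeq \bLambda_{h',k-1}$ since $k < k'$, so $\|\bphi(s,a)\|_{\bLambda_{h',k'-1}^{-1}} \le \|\bphi(s,a)\|_{\bLambda_{h',k-1}^{-1}}$; (b) $1/(k')^2 \le 1/k^2$; and (c) $r_{h'}^{k'} \le r_{h'}^k \le 5 r_{h'}^k$ by \Cref{asm:time_var_reward} and non-negativity of the reward. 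Denoting by $M_{h'}^j(s,a)$ the argument of the outer $\min\{\cdot,H\}$ that defines $Q_{h'}^j(s,a)$, this yields $M_{h'}^{k'}(s,a) \le 5\, M_{h'}^k(s,a)$. Finally,
\[ Q_{h'}^{k'}(s,a) = \min\{M_{h'}^{k'}(s,a), H\} \le \min\{5\, M_{h'}^k(s,a), H\} \le 5 \min\{M_{h'}^k(s,a), H\} = 5\, Q_{h'}^k(s,a), \]
completing the induction.

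The main bookkeeping obstacle is verifying the coefficient arithmetic after the chaining. Collecting terms, the RHS of (i)--(iii) contributes $5\betatil \|\bphi\|_{\bLambda_{h',k-1}^{-1}}$ from the inductive step and an extra $\betatil\|\bphi\|_{\bLambda_{h',k'-1}^{-1}}$ from (i), and after the bonus is added this becomes at most $9\betatil \|\bphi\|_{\bLambda_{h',k-1}^{-1}} + 9\sigmin\betatil^2/k^2$ via (a)--(b), which must be dominated by the $15\betatil \|\bphi\|_{\bLambda_{h',k-1}^{-1}} + 15\sigmin\betatil^2/k^2$ appearing in $5 M_{h'}^k$; since $9 \le 15$, the inductive constant $5$ is comfortably sufficient (and in fact any constant $C \ge 2$ would close). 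The only subtle point is that we must use non-negativity of $V$ to upgrade the inductive hypothesis into a one-sided multiplicative bound on the expectation, and be careful to invoke $B_{k,h'}$ in step (iii) and $B_{k',h'}$ in step (i).
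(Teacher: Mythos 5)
Your proof is correct and follows essentially the same inductive argument as the paper: you chain the $B$-events (via the linear-approximation lemmas, i.e.\ what the paper packages as Lemma~\ref{lem:lin_approx_good_mdp}) through the inductive hypothesis and use the three monotonicities ($r_h^k$ non-increasing, $\bLambda_{h,k-1}$ non-decreasing, $1/k^2$ decreasing) to close the recursion with the same constant $5$. The only cosmetic differences are that you upper-bound $Q^{k'}$ by $5Q^k$ rather than lower-bound $5Q^k$ by $Q^{k'}$, and start the induction at $h'=H+1$ rather than $h'=H$; note that the aside about needing $V\ge 0$ in step (ii) is superfluous, since $\Exp_{h'}[\cdot](s,a)$ is an expectation under the (nonnegative) transition probability $P_{h'}(\cdot\mid s,a)$ and therefore preserves the pointwise inequality $V^{k'}_{h'+1}\le 5V^k_{h'+1}$ regardless of sign.
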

\begin{proof}
We will prove this by induction. In the base case, take $h' = H$. On $B_{k,H} \cap B_{k',H}$, we have
\begin{align*}
5 Q_{H}^k(s,a) & = \min \Big \{ 5 r_H^k(s,a) + 5 \innerb{\bphi(s,a)}{\what_H^k} + 15 \betatil \| \bphi(s,a) \|_{\bLambda_{H,k-1}^{-1}} + 15 \sigmin \betatil^2/k^2, 5 H \Big \} \\
& \overset{(a)}{\ge} \min \Big \{ 5 r_h^k(s,a) + \Exp_{H}[5 V_{H+1}^k](s,a) + 5 \betatil \| \bphi(s,a) \|_{\bLambda_{H,k-1}^{-1}} + 5 \sigmin \betatil^2/k^2, 5 H \Big \} \\
& \overset{(b)}{=} \min \Big \{ 5 r_h^k(s,a) + 5 \betatil \| \bphi(s,a) \|_{\bLambda_{H,k-1}^{-1}} + 5 \sigmin \betatil^2/k^2, 5 H \Big \} \\
& \overset{(c)}{\ge} \min \Big \{ r_h^{k'}(s,a) + 5 \betatil \| \bphi(s,a) \|_{\bLambda_{H,k'-1}^{-1}} + 5 \sigmin \betatil^2/(k')^2,  H \Big \} \\
& \overset{(d)}{\ge} \min \Big \{ r_h^{k'}(s,a) +  \innerb{\bphi(s,a)}{\what_{H}^{k'}} +3\betatil \| \bphi(s,a) \|_{\bLambda_{H,k'-1}^{-1}}  + 3 \sigmin \betatil^2/(k')^2,  H \Big \} \\
& = Q_H^{k'}(s,a)
\end{align*}
where $(a)$ follows since we are on $B_{k,H}$ and by \Cref{lem:lin_approx_good_mdp}, $(b)$ follows since $V_{H+1}^k(s') = 0$ by definition, $(c)$ follows since reward is non-increasing in $k$ so $r_h^{k'}(s,a) \le r_h^k(s,a)$, and since $\bLambda_{H,k'-1} \succeq \bLambda_{H,k}$, and $(d)$ follows since we are on $B_{k',H}$, and by \Cref{lem:lin_approx_good_mdp}. This implies that, for all $s$, 
\begin{align}\label{eq:var_dom_V}
5 V_H^k(s) = 5 Q_H^k(s,\pi_H^k(s)) \ge 5 Q_H^k(s,\pi_H^{k'}(s)) \ge Q_H^{k'}(s,\pi_H^{k'}(s)) = V_H^{k'}(s) .
\end{align}

For the inductive step, assume that $5 V_{h' + 1}^k(s) \ge V_{h'+1}^{k'}(s)$ for all $s$ and that $B_{k,h'} \cap B_{k',h'}$ holds. Then we can repeat the above calculation, but now lower bounding
\begin{align*}
 \Exp_{h'}[5 V_{h'+1}^k](s,a) \ge \Exp_{h'}[V_{h'+1}^{k'}](s,a).
\end{align*}
In full detail,
\begin{align*}
5 Q_{h'}^k(s,a) & = \min \Big \{ 5 r_{h'}^k(s,a) + 5\innerb{\bphi(s,a)}{\what_{h'}^k} + 15\betatil \| \bphi(s,a) \|_{\bLambda_{h',k-1}^{-1}} + 15 \sigmin \betatil^2/k^2, \cvar H \Big \} \\
& \ge \min \Big \{ 5 r_{h'}^{k}(s,a) + \Exp_{h'}[5V_{h'+1}^k](s,a) + 5 \betatil \| \bphi(s,a) \|_{\bLambda_{h',k-1}^{-1}}  + 5 \sigmin \betatil^2/k^2, 5 H \Big \} \\
& \ge \min \Big \{ 5 r_{h'}^{k}(s,a) + \Exp_{h'}[V_{h'+1}^{k'}](s,a)  + 5 \betatil \| \bphi(s,a) \|_{\bLambda_{h',k-1}^{-1}}  + 5 \sigmin \betatil^2/k^2, 5 H \Big \} \\
& \ge \min \Big \{ r_{h'}^{k'}(s,a) + \innerb{\bphi(s,a)}{\what_{h'}^{k'}} + 3 \betatil \| \bphi(s,a) \|_{\bLambda_{h',k'-1}^{-1}}  + 3 \sigmin \betatil^2/(k')^2,  H \Big \} \\
& = Q_{h'}^{k'}(s,a) .
\end{align*}
It follows that $5 V_{h'}^k(s) \ge V_{h'}^{k'}(s)$ by the same argument as in \eqref{eq:var_dom_V}. This proves the inductive step, so the result follows. 
\end{proof}

\begin{lem}\label{lem:sigma_valid2}
Set
\begin{align*}
\sighat_{h,k}^2 = \max \bigg \{  & 20 H  \catonii_{h,k} \left [(k-1) \bLambda_{h,k-1}^{-1} \bphi_{h,k} \right ] + 20 H  \beta \| \bphi_{h,k} \|_{\bLambda_{h,k-1}^{-1}}  + 20H  \sigmin \beta^2/k^2  , \sigmin^2 \bigg \} .
\end{align*}
Then $\sighat_{h,k}^2$ is $\cF_{h,k}$-measurable, and, for any $k' \ge k$, on the event $\cap_{h' = h+1}^H (B_{k,h'} \cap B_{k',h'}) \cap A_{k,h}$, we have 
\begin{align}\label{eq:sighat_lb}
\begin{split}
& \Exp_h[(V_{h+1}^{k'})^2](s_{h,k}, a_{h,k})  \le \frac{1}{2} \sighat_{h,k}^2, \\
&  4H \Exp_h[V_{h+1}^{k'}](s_{h,k},a_{h,k}) \le \sighat_{h,k}^2
\end{split}
\end{align}
and
\begin{align}\label{eq:sighat_ub}
\begin{split}
\sighat_{h,k}^2 \le \max \bigg \{  & 20H \Exp_h[V_{h+1}^{k}](s_{h,k},a_{h,k})  + 40 H \beta \| \bphi_{h,k} \|_{\bLambda_{h,k-1}^{-1}}  + 40H \sigmin \beta^2/k^2 , \sigmin^2 \bigg \} .
\end{split}
\end{align}
\end{lem}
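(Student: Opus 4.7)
Measurability of $\sighat_{h,k}^2$ is immediate: $\bphi_{h,k}$ is $\cF_{h,k}$-measurable by definition of the filtration; $\bLambda_{h,k-1}$ depends only on past episodes $\tau \le k-1$ and is therefore $\cF_{H,k-1}\subseteq\cF_{h,k}$-measurable; the same is true of $V_{h+1}^k$ (built during the backward sweep at episode $k$ from data through episode $k-1$); and the Catoni root in \eqref{eq:catoniiround} is a deterministic function of these objects together with $\bphi_{h,k}$ itself. So each ingredient of the max is $\cF_{h,k}$-measurable.

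The heart of the proof is to reduce everything to the single-round quantity $\Exp_h[V_{h+1}^k](s_{h,k},a_{h,k})$. First, on $\cap_{h'=h+1}^H (B_{k,h'}\cap B_{k',h'})$, I would invoke \Cref{lem:Qtil_decreases} (applied at step $h{+}1$ rather than $h$) to obtain the pointwise domination $V_{h+1}^{k'}(s)\le 5\,V_{h+1}^k(s)$ for every $s$. Taking $\Exp_h[\cdot](s_{h,k},a_{h,k})$ and using $V_{h+1}^{k'}\le H$ yields
\[
\Exp_h[V_{h+1}^{k'}]\le 5\Exp_h[V_{h+1}^k], \qquad \Exp_h[(V_{h+1}^{k'})^2]\le H\Exp_h[V_{h+1}^{k'}]\le 5H\,\Exp_h[V_{h+1}^k].
\]
Then, on $A_{k,h}$, the two-sided Catoni concentration gives
\[
\bigl|\,\catonii_{h,k}[(k-1)\bLambda_{h,k-1}^{-1}\bphi_{h,k}] - \Exp_h[V_{h+1}^k](s_{h,k},a_{h,k})\,\bigr| \;\le\; \beta\|\bphi_{h,k}\|_{\bLambda_{h,k-1}^{-1}} + \sigmin\beta^2/k^2,
\]
which I will use in both directions.

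For \eqref{eq:sighat_lb}, I would split on which argument of the max defines $\sighat_{h,k}^2$. If the first argument is active, then by construction and by the upper-direction of $A_{k,h}$, $\sighat_{h,k}^2 \ge 20H\,\Exp_h[V_{h+1}^k]$, so $4H\,\Exp_h[V_{h+1}^{k'}]\le 20H\,\Exp_h[V_{h+1}^k]\le \sighat_{h,k}^2$ and $\Exp_h[(V_{h+1}^{k'})^2]\le 5H\,\Exp_h[V_{h+1}^k]\le \tfrac14\sighat_{h,k}^2$. If instead $\sighat_{h,k}^2=\sigmin^2$, then the first argument is $\le\sigmin^2$, which combined with the Catoni upper bound implies $20H\,\Exp_h[V_{h+1}^k]\le \sigmin^2=\sighat_{h,k}^2$, and the same two conclusions follow. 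For \eqref{eq:sighat_ub}, I would use the \emph{other} direction of $A_{k,h}$ to replace the Catoni estimate in the first argument of the max by $\Exp_h[V_{h+1}^k]+\beta\|\bphi_{h,k}\|_{\bLambda_{h,k-1}^{-1}}+\sigmin\beta^2/k^2$; this doubles the bonus coefficients from $20H$ to $40H$ while keeping the leading $20H\,\Exp_h[V_{h+1}^k]$ term, exactly matching the stated upper bound (the $\sigmin^2$ term survives the max on the right-hand side).

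There is no serious obstacle: the proof is essentially bookkeeping on the max, combined with \Cref{lem:Qtil_decreases} to pass from $V^{k'}$ to $V^k$ and $A_{k,h}$ to pass from $\Exp_h[V^k]$ to the Catoni estimate. The only point that requires a moment's care is the case where the $\sigmin^2$ floor is active in \eqref{eq:sighat_lb}: one must verify that the floor is itself large enough to absorb $20H\,\Exp_h[V_{h+1}^k]$, which follows because the floor being active already forces the Catoni-based expression (and hence, by $A_{k,h}$, $\Exp_h[V_{h+1}^k]$ up to lower-order bonus terms) to be at most $\sigmin^2/(20H)$.
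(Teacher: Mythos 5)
Your proposal is correct and follows essentially the same route as the paper: bound $\Exp_h[(V_{h+1}^{k'})^2]\le H\,\Exp_h[V_{h+1}^{k'}]$, pass from $V^{k'}$ to $V^k$ via \Cref{lem:Qtil_decreases}, and then use the two directions of $A_{k,h}$ to relate $\Exp_h[V_{h+1}^k]$ to the Catoni estimate for the lower and upper bounds respectively. The explicit case split on which argument of the max is active is harmless but unnecessary, since $\sighat_{h,k}^2 \ge$ (first argument) $\ge 20H\,\Exp_h[V_{h+1}^k]$ holds on $A_{k,h}$ regardless.
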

\begin{proof}
By definition $\bphi_{h,k}, \bLambda_{h,k-1}$, and $r_{h,k}$ are $\cF_{h,k}$-measurable. As we only rely on data up to episode $k-1$, it follows that $\bphi_{h,\tau}$ and $s_{h+1,\tau}$ are also $\cF_{h,k}$-measurable. Finally, we see from the definition of \Cref{alg:catoni_regret} that $V_{h+1}^k$ is formed using only data up to and including episode $k-1$. It follows that $\sighat_{h,k}$ is $\cF_{h,k}$-measurable. 

Note that we can trivially bound
\begin{align*}
 \Exp_h[( V_{h+1}^{k'})^2](s_{h,k}, a_{h,k}) &  \le  H \Exp_h[V_{h+1}^{k'}](s_{h,k}, a_{h,k})
\end{align*}
where the last inequality follows since $V_{h+1}^{k'}(s') \in [0,H]$. By \Cref{lem:Qtil_decreases}, on the event $\cap_{h' = h+1}^H (B_{k,h'} \cap B_{k',h'})$, we will have that
\begin{align*}
H \Exp_h[V_{h+1}^{k'}](s_{h,k}, a_{h,k}) \le 5 H \Exp_h[V_{h+1}^{k}](s_{h,k}, a_{h,k}) .
\end{align*}
On the event $A_{k,h}$, we can bound
\begin{align*}
\Exp_h[V_{h+1}^{k}](s_{h,k}, a_{h,k}) & \le \catonii_{h,k} \left [ (k-1)  \bLambda_{h,k-1}^{-1} \bphi_{h,k} \right ]  +  \beta \| \bphi_{h,k} \|_{\bLambda_{h,k-1}^{-1}} + \sigmin \beta^2/k^2 .
\end{align*}
The lower bound \eqref{eq:sighat_lb} follows by our choice of $\sighat_{h,k}^2$. The upper bound \eqref{eq:sighat_ub} follows since, on $A_{k,h}$, we have
\begin{align*}
\catonii_{h,k} \left [ (k-1)  \bLambda_{h,k-1}^{-1} \bphi_{h,k} \right ] & \le \Exp_h[V_{h+1}^{k}](s_{h,k}, a_{h,k})    +  \beta \| \bphi_{h,k} \|_{\bLambda_{h,k-1}^{-1}} + \sigmin \beta^2/k^2 .
\end{align*}
\end{proof}

\begin{lem}\label{lem:lin_approx_good_mdp}
On the event $B_{k,h}$, if we are running \Cref{alg:catoni_regret}, we will have that
\begin{align*}
& | \innerb{\bphi(s,a)}{\what_h^k} - \Exphat_h[V_{h+1}^k](s,a) | \le \betatil \| \bphi(s,a) \|_{\bLambda_{h,k-1}^{-1}} +  \sigmin \betatil^2 / k^2 , \\
& | \innerb{\bphi(s,a)}{\what_h^k}  - \Exp_h[V_{h+1}^{k}](s,a) | \le \betatil \| \bphi(s,a) \|_{\bLambda_{h,k-1}^{-1}} +  \sigmin \betatil^2 / k^2 
\end{align*}
for all $s$ and $a$. 
\end{lem}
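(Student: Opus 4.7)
The plan is to recognize this as an immediate application of the linear-approximation guarantees for the Catoni estimator, namely Lemma~\ref{lem:cat_lin_approx} in the inefficient case and Lemma~\ref{lem:cat_lin_approx_efficient} in the computationally efficient case. In both cases the idea is to identify the quantities $C_1$, $C_2$, $T$, $\bLambda$, $\cV$, and $\bthetast$ with their algorithmic counterparts, and then read off the conclusion.

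Specifically, set $\bLambda \leftarrow \bLambda_{h,k-1}$, $T \leftarrow k-1$, $\cV \leftarrow \Ball^d\setminus\{\bm 0\}$, and identify $\bthetast = \int V_{h+1}^k(s')\,\rmd\bmu_h(s')$, so that by the linear MDP assumption $\inner{\bv}{\bthetast} = \Exp_h[V_{h+1}^k](\bv)$. The hypothesis of Lemma~\ref{lem:cat_lin_approx} is that $|\catonii[\bLambda^{-1}\bv] - \inner{\bv}{\bthetast}| \le C_1\|\bv\|_{\bLambda^{-1}} + C_2/T$ for all $\bv \in \cV$; on the event $B_{k,h}$ this is exactly met with $C_1 = \beta$ and $C_2 = (k-1)\sigmin\beta^2/k^2$, since then $C_2/T = \sigmin\beta^2/k^2$. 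Because the algorithm computes $\what_h^k$ as the minimizer in \eqref{eq:what}, it is exactly the $\bthetahat$ of Lemma~\ref{lem:cat_lin_approx}, and both conclusions of that lemma yield the two claimed bounds with $\betatil = 2\beta$ and lower-order term $2\sigmin\beta^2/k^2 \le \sigmin(2\beta)^2/k^2 = \sigmin\betatil^2/k^2$.

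In the computationally efficient case, $\what_h^k$ is defined via \eqref{eq:efficient_update} using the eigenvectors of $\bLambda_{h,k-1}$, which is exactly the construction of $\bthetahat$ in Lemma~\ref{lem:cat_lin_approx_efficient}. Applying that lemma with the same identifications, and using that $\sup_{\bv'\in\cV}\|\bv'\|_2 \le 1$, yields error bounds with multiplicative constants $(\sqrt d+1)$ and $(\sqrt d+2)$, matching $\betatil = (\sqrt d+2)\beta$; the resulting lower-order term $(\sqrt d+2)\sigmin\beta^2/k^2$ is at most $(\sqrt d+2)^2\sigmin\beta^2/k^2 = \sigmin\betatil^2/k^2$.

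There is no significant obstacle here: the work has already been done in Lemmas~\ref{lem:cat_lin_approx} and~\ref{lem:cat_lin_approx_efficient}, and the only care required is the bookkeeping of constants so that the lower-order term is cleanly absorbed into $\sigmin\betatil^2/k^2$ in both the efficient and inefficient settings. The two statements can then be written in a unified way via $\betatil$.
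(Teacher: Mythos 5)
Your proposal is correct and takes exactly the paper's approach: identify $\what_h^k$ with the $\bthetahat$ of \Cref{lem:cat_lin_approx} (resp.\ \Cref{lem:cat_lin_approx_efficient}), invoke the linearity $\Exp_h[V_{h+1}^k](\bv) = \inner{\bv}{\bthetast}$ and the event $B_{k,h}$ to get the hypothesis with $C_1 = \beta$, $C_2/T = \sigmin\beta^2/k^2$, and absorb the constants into $\betatil$. The paper's own proof is a one-line pointer to the same two lemmas; your version just does the constant bookkeeping explicitly, and it checks out.
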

\begin{proof}
This follows directly from \Cref{lem:cat_lin_approx} and \Cref{lem:cat_lin_approx_efficient}, the definition of $\betatil$ and $B_{k,h}$, and since $\Exp_h[V_{h+1}^{k}](s,a)$ is linear in $\bphi(s,a)$ and we assume that $\| \bphi(s,a) \|_2 \le 1$ for all $s,a$ and that there does not exist $s,a$ such that $\bphi(s,a) = \bm{0}$. 
\end{proof}

\begin{proof}[Proof of \Cref{claim:set_inclusion}]
Clearly,
\begin{align*}
& \bigcup_{k = \Kinit}^K \bigcup_{h=1}^H (B_{k,h}^c \cup A_{k,h}^c) \\
& \qquad = \bigcup_{k = \Kinit}^K \bigcup_{h=1}^H \Big [ (B_{k,h}^c \cup A_{k,h}^c) \backslash \left ( (\cup_{k' = \Kinit}^{k-1} \cup_{h' =1}^H (B_{k',h'}^c \cup A_{k',h'}^c)) \cup (\cup_{h'=h+1}^H (B_{k,h'}^c \cup A_{k,h'}^c)) \right ) \Big ] \\
&  \qquad = \bigcup_{k = \Kinit}^K \bigcup_{h=1}^H \Big [ B_{k,h}^c  \backslash \left ( (\cup_{k' = \Kinit}^{k-1} \cup_{h' =1}^H (B_{k',h'}^c \cup A_{k',h'}^c)) \cup (\cup_{h'=h+1}^H (B_{k,h'}^c \cup A_{k,h'}^c)) \right ) \Big ] \\
& \qquad \qquad \cup  \bigcup_{k = \Kinit}^K \bigcup_{h=1}^H \Big [  A_{k,h}^c \backslash \left ( (\cup_{k' = \Kinit}^{k-1} \cup_{h' =1}^H (B_{k',h'}^c \cup A_{k',h'}^c)) \cup (\cup_{h'=h+1}^H (B_{k,h'}^c \cup A_{k,h'}^c)) \right ) \Big ] .
\end{align*}
Noting that
\begin{align*}
& X \backslash \left ( (\cup_{k' = \Kinit}^{k-1} \cup_{h' =1}^H (B_{k',h'}^c \cup A_{k',h'}^c)) \cup (\cup_{h'=h+1}^H (B_{k,h'}^c \cup A_{k,h'}^c)) \right ) \\
& = X \cap \left ( (\cap_{k' = \Kinit}^{k-1} \cap_{h' =1}^H (B_{k',h'} \cap A_{k',h'})) \cap (\cap_{h'=h+1}^H (B_{k,h'} \cap A_{k,h'})) \right )
\end{align*}
for any $X$ completes the proof.
\end{proof}

\subsection{Optimism}

\begin{lem}\label{lem:unroll_q}
On the event $\cE$, for all $s,a,h$, and $k \ge \Kinit$ and any $\pi$, we have
\begin{align*}
\Exphat_h[V_{h+1}^k](s,a) + r_h^k(s,a) - Q^{k,\pi}_h(s,a) = \Exp_h[ V_{h+1}^k - V^{k,\pi}_{h+1}](s,a)  + \xi_{h}^k(s,a)
\end{align*}
where $\xi_h^k(s,a)$ satisfies $|\xi_h^k(s,a)| \le  \beta \| \bphi(s,a) \|_{\bLambda_{h,k-1}^{-1}} + \sigmin \beta^2/k^2$.
\end{lem}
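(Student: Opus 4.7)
The plan is to reduce this to a direct algebraic manipulation that exposes the Catoni estimation error as the residual $\xi_h^k(s,a)$, and then to bound that residual using the good event $\cE$. The starting point is the Bellman equation for $\pi$, namely $Q^{k,\pi}_h(s,a) = r_h^k(s,a) + \Exp_h[V^{k,\pi}_{h+1}](s,a)$. Substituting this into the left-hand side cancels the known reward $r_h^k(s,a)$ and leaves
\begin{align*}
\Exphat_h[V_{h+1}^k](s,a) + r_h^k(s,a) - Q^{k,\pi}_h(s,a) = \Exphat_h[V_{h+1}^k](s,a) - \Exp_h[V^{k,\pi}_{h+1}](s,a).
\end{align*}

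Next, I would add and subtract $\Exp_h[V_{h+1}^k](s,a)$ on the right-hand side to split the expression into (i) a ``prediction error'' term $\Exphat_h[V_{h+1}^k](s,a) - \Exp_h[V_{h+1}^k](s,a)$, which I take as the definition of $\xi_h^k(s,a)$, and (ii) the expected value-difference term $\Exp_h[V_{h+1}^k - V^{k,\pi}_{h+1}](s,a)$, which matches the form stated in the lemma. This uses only linearity of expectation and requires no structural assumption beyond the Bellman equation.

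It then only remains to bound $|\xi_h^k(s,a)|$ on the good event $\cE$. By construction, $\cE \subseteq B_{k,h}$ for every $k \ge \Kinit$ and every $h$, and the event $B_{k,h}$ states exactly that
\begin{align*}
|\Exphat_h[V_{h+1}^k](\bv) - \Exp_h[V_{h+1}^k](\bv)| \le \beta \|\bv\|_{\bLambda_{h,k-1}^{-1}} + \sigmin \beta^2/k^2
\end{align*}
for all $\bv \in \Ball^d$. Instantiating this with $\bv = \bphi(s,a)$, which lies in $\Ball^d$ by the linear MDP assumption $\|\bphi(s,a)\|_2 \le 1$, and using that $\Exp_h[V_{h+1}^k](\bphi(s,a)) = \Exp_h[V_{h+1}^k](s,a)$ by linearity of the transition kernel (Definition~\ref{defn:linear_mdp}), yields the claimed bound on $\xi_h^k(s,a)$.

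There is essentially no obstacle here: the lemma is almost a tautology once one writes out the Bellman equation and invokes the uniform Catoni concentration encoded in $B_{k,h}$. The only minor care needed is to ensure that the overloaded notation $\Exp_h[V](\bv)$ applied to $\bv = \bphi(s,a)$ coincides with the standard expectation $\Exp_{s' \sim P_h(\cdot | s,a)}[V(s')]$, which is immediate from Definition~\ref{defn:linear_mdp}.
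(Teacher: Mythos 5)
Your proof is correct and follows essentially the same route as the paper: invoke the Bellman equation to cancel $r_h^k(s,a)$, define $\xi_h^k(s,a) := \Exphat_h[V_{h+1}^k](s,a) - \Exp_h[V_{h+1}^k](s,a)$, and bound it directly via the event $B_{k,h} \supseteq \cE$. The only extra care you took---explicitly noting that $\bphi(s,a) \in \Ball^d$ and that the overloaded $\Exp_h[\cdot](\bv)$ agrees with the next-state expectation at $\bv = \bphi(s,a)$---is a harmless elaboration of points the paper leaves implicit.
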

\begin{proof}
By definition, we have that
\begin{align*}
Q_h^{k,\pi}(s,a) = r_h^k(s,a) + \Exp_h[V^{k,\pi}_{h+1}](s,a) .
\end{align*}
On $\cE$, we have that 
\begin{align*}
\left | \Exphat_h[V_{h+1}^k](s,a)  - \Exp_h[V_{h+1}^k](s,a) \right | \le \beta \| \bphi(s,a) \|_{\bLambda_{h,k-1}^{-1}} + \beta^2/k 
\end{align*}
so we can therefore write
\begin{align*}
\Exphat_h[V_{h+1}^k](s,a) =  \Exp_h[V_{h+1}^k](s,a) + \xi_h^k(s,a)
\end{align*}
for a term $\xi_h^k(s,a)$ satisfying
\begin{align*}
| \xi_h^k(s,a) | \le \beta \| \bphi(s,a) \|_{\bLambda_{h,k-1}^{-1}} + \sigmin \beta^2/k^2 . 
\end{align*}
It follows that
\begin{align*}
\Exphat_h[V_{h+1}^k](s,a) + r_h^k(s,a) - \Qpi_h(s,a) = \Exp_h[ V_{h+1}^k - V^{k,\pi}_{h+1}](s,a) + \xi_h^k(s,a) .
\end{align*}
\end{proof}

\begin{lem}\label{lem:optimism}
On the event $\cE$, for all $s,a,h$, and $k \ge \Kinit$, we have that $Q_h^k(s,a) \ge Q^{k,\star}_h(s,a)$.
\end{lem}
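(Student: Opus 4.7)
The plan is a standard backward induction on $h$ from $H+1$ down to $1$, using the optimism-from-bonus argument adapted to our Catoni-based estimator. Conditioning on $\cE$ ensures that $B_{k,h}$ holds for every $h$, which by \Cref{lem:lin_approx_good_mdp} gives the pointwise linear-approximation guarantee
\[
\big|\innerb{\bphi(s,a)}{\what_h^k}-\Exp_h[V_{h+1}^{k}](s,a)\big|\le \betatil\|\bphi(s,a)\|_{\bLambda_{h,k-1}^{-1}}+\sigmin\betatil^{2}/k^{2}
\]
for all $s,a,h$ and all $k\ge\Kinit$. This is the only probabilistic ingredient needed.

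For the base case $h=H+1$, both $V_{H+1}^{k}(\cdot)$ and $V_{H+1}^{k,\star}(\cdot)$ are identically zero, so the claim is trivial. For the inductive step, assume $V_{h+1}^{k}(s)\ge V_{h+1}^{k,\star}(s)$ for all $s$. Fix $s,a$. If the $\min$ in the definition of $Q_h^{k}(s,a)$ is attained at $H$, then $Q_h^{k}(s,a)=H\ge Q_h^{k,\star}(s,a)$ since rewards are in $[0,1]$ and there are at most $H-h+1$ remaining steps. Otherwise, we lower bound using the linear-approximation inequality from \Cref{lem:lin_approx_good_mdp}:
\begin{align*}
Q_h^{k}(s,a) &= r_h^{k}(s,a)+\innerb{\bphi(s,a)}{\what_h^{k}}+3\betatil\|\bphi(s,a)\|_{\bLambda_{h,k-1}^{-1}}+3\sigmin\betatil^{2}/k^{2}\\
&\ge r_h^{k}(s,a)+\Exp_h[V_{h+1}^{k}](s,a)+2\betatil\|\bphi(s,a)\|_{\bLambda_{h,k-1}^{-1}}+2\sigmin\betatil^{2}/k^{2}\\
&\ge r_h^{k}(s,a)+\Exp_h[V_{h+1}^{k,\star}](s,a)=Q_h^{k,\star}(s,a),
\end{align*}
where the last inequality uses the inductive hypothesis together with monotonicity of $\Exp_h[\cdot]$ (a consequence of \Cref{defn:linear_mdp} applied to the measure $\bmu_h$, or equivalently that $P_h(\cdot\mid s,a)$ is a genuine distribution so expectations of pointwise-larger functions are pointwise larger).

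To close the induction, I pass from $Q$ to $V$: since $V_h^{k}(s)=\max_a Q_h^{k}(s,a)$, picking $a=\pi_h^{k,\star}(s)$ yields
\[
V_h^{k}(s)\ge Q_h^{k}(s,\pi_h^{k,\star}(s))\ge Q_h^{k,\star}(s,\pi_h^{k,\star}(s))=V_h^{k,\star}(s),
\]
which feeds the induction at step $h-1$. There is no real obstacle here; the work was already done in \Cref{lem:good_event} (to set up $\cE$) and \Cref{lem:lin_approx_good_mdp} (to translate the Catoni bound on $\Exphat_h[V_{h+1}^{k}]$ into a linear-form bound on $\innerb{\bphi(s,a)}{\what_h^{k}}$). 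The only mild subtlety is making sure the factor of $3$ in front of the bonus terms in the definition of $Q_h^{k}$ strictly dominates the factor of $1$ coming from the approximation error, leaving a positive slack of $2\betatil\|\bphi(s,a)\|_{\bLambda_{h,k-1}^{-1}}+2\sigmin\betatil^{2}/k^{2}$ that absorbs any residual error and still permits optimism.
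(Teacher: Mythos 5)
Your proof is correct and follows essentially the same backward induction as the paper: on the event $\cE$, the bonus term $3\betatil\|\bphi(s,a)\|_{\bLambda_{h,k-1}^{-1}}+3\sigmin\betatil^2/k^2$ absorbs the approximation error of the Catoni-based estimate of $\Exp_h[V_{h+1}^k]$, and monotonicity of the conditional expectation propagates optimism from step $h+1$ to step $h$. The only cosmetic difference is that you invoke the second inequality of \Cref{lem:lin_approx_good_mdp} directly (paying a single error $\betatil\|\cdot\|+\sigmin\betatil^2/k^2$), whereas the paper first routes through \Cref{lem:unroll_q} and the raw Catoni estimate $\Exphat_h$, accumulating $(\beta+\betatil)\|\cdot\|$ before absorbing it into $3\betatil\|\cdot\|$; both are valid and arrive at the same place.
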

\begin{proof}
We will prove this by induction for a fixed $k$. First, take $h = H$. Since $V_{H+1}(s) = \Vst_{H+1}(s) = 0$ by definition, by \Cref{lem:unroll_q} we have
\begin{align*}
| \Exphat_H[V_{H+1}^k](s,a) + r_H^k(s,a) - Q^{k,\star}_H(s,a) | \le  \beta \| \bphi(s,a) \|_{\bLambda_{H,k-1}^{-1}} + \sigmin \beta^2/k^2
\end{align*}
which implies
\begin{align*}
Q^{k,\star}_H(s,a) & \le \min \{ r_h^k(s,a) + \Exphat_H[V_{H+1}^k](s,a) + \beta \| \bphi(s,a) \|_{\bLambda_{H,k-1}^{-1}} + \sigmin \beta^2/k^2, H \} \\
& \le \min \{ r_h^k(s,a) + \innerb{\bphi(s,a)}{\what_H^k} +  (\beta + \betatil) \| \bphi(s,a) \|_{\bLambda_{H,k-1}^{-1}} + \sigmin (\beta^2 + \betatil^2)/k^2, H \} \\
& \le \min \{ r_h^k(s,a) + \innerb{\bphi(s,a)}{\what_H^k} +  3 \betatil \| \bphi(s,a) \|_{\bLambda_{H,k-1}^{-1}} + 3 \sigmin \betatil^2/k^2, H \} \\
& = Q_H^k(s,a) 
\end{align*}
where we have used \Cref{lem:lin_approx_good_mdp} and that $\beta \le \betatil$. Now assume that $Q_{h+1}^k(s,a) \ge Q^{k,\star}_{h+1}(s,a)$ for all $(s,a)$ and some $h$. Again by \Cref{lem:unroll_q}, we have that 
\begin{align*}
| \Exphat_h[V_{h+1}^k](s,a) - Q^{k,\star}_h(s,a) - \Exp_h[ V_{h+1}^k - V^{k,\star}_{h+1}](s,a) | \le \beta \| \bphi(s,a) \|_{\bLambda_{h,k-1}^{-1}} + \sigmin \beta^2/k^2 .
\end{align*}
By the inductive hypothesis $\Exp_h[ V_{h+1}^k - V^{k,\star}_{h+1}](s,a) \ge 0$, so 
\begin{align*}
Q^{k,\star}_h(s,a) & \le \min \{ r_h^k(s,a) + \Exphat_h[V_{h+1}^k](s,a) + \beta \| \bphi(s,a) \|_{\bLambda_{h,k-1}^{-1}} + \sigmin \beta^2/k^2, H \} \\
& \le \min \{ r_h^k(s,a) + \innerb{\bphi(s,a)}{\what_h^k} +  (\beta + \betatil) \| \bphi(s,a) \|_{\bLambda_{h,k-1}^{-1}} +  \sigmin (\beta^2+\betatil^2)/k^2, H \} \\
& = \min \{ r_h^k(s,a) + \innerb{\bphi(s,a)}{\what_h^k} +  3 \betatil \| \bphi(s,a) \|_{\bLambda_{h,k-1}^{-1}} +  3 \sigmin \betatil^2/k^2, H \} \\
& = Q_h^k(s,a) .
\end{align*}
This proves the inductive hypothesis so the result follows. 
\end{proof}

\begin{lem}[Formal version of \Cref{lem:V_recursion_informal}] \label{lem:V_recursion}
Let $\delta_h^k = V_h^{k}(s_h^k) - V_h^{k,\pi_k}(s_h^k)$ and $\zeta_{h+1}^k = \Exp_{h}[\delta_{h+1}^k ](s_{h,k},a_{h,k}) - \delta_{h+1}^k$. Then, on the event $\cE$, for any $k \ge \Kinit$,
\begin{align*}
\delta_h^k \le \delta_{h+1}^k + \zeta_{h+1}^k + \min \{ 5\betatil \| \bphi_{h,k} \|_{\bLambda_{h,k-1}^{-1}} + 5 \sigmin \betatil^2/k^2, H \}. 
\end{align*}
\end{lem}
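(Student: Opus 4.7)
The plan is to unroll $\delta_h^k$ through the Bellman equation and split into two cases based on whether the $Q_h^k$ update on \Cref{line:update_q} is clipped at $H$ or not. The starting observation is that since $a_{h,k} = \argmax_a Q_h^k(s_{h,k},a)$, we have $V_h^k(s_{h,k}) = Q_h^k(s_{h,k},a_{h,k})$, and since $\pi_k$ plays $a_{h,k}$ at $s_{h,k}$, we also have $V_h^{k,\pi_k}(s_{h,k}) = Q_h^{k,\pi_k}(s_{h,k},a_{h,k}) = r_h^k(s_{h,k},a_{h,k}) + \Exp_h[V_{h+1}^{k,\pi_k}](s_{h,k},a_{h,k})$ by the Bellman equation for $\pi_k$. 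Hence
\[
\delta_h^k \;=\; Q_h^k(s_{h,k},a_{h,k}) - r_h^k(s_{h,k},a_{h,k}) - \Exp_h[V_{h+1}^{k,\pi_k}](s_{h,k},a_{h,k}).
\]
Throughout, I will use that on $\cE$, by \Cref{lem:optimism}, $V_{h+1}^k(s) \ge V_{h+1}^{k,\star}(s) \ge V_{h+1}^{k,\pi_k}(s)$, so $\delta_{h+1}^k \ge 0$ pointwise and in expectation; in particular $\delta_{h+1}^k + \zeta_{h+1}^k = \Exp_h[\delta_{h+1}^k](s_{h,k},a_{h,k}) \ge 0$ by definition of $\zeta_{h+1}^k$.

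In the first case, suppose $Q_h^k(s_{h,k},a_{h,k}) = H$. Then $\delta_h^k \le H - V_h^{k,\pi_k}(s_{h,k}) \le H$, while the preceding observation gives $\delta_{h+1}^k + \zeta_{h+1}^k \ge 0$, so the claimed bound with the $H$ branch of the $\min$ follows trivially.

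In the second case, $Q_h^k(s_{h,k},a_{h,k}) = r_h^k(s_{h,k},a_{h,k}) + \innerb{\bphi_{h,k}}{\what_h^k} + 3\betatil\|\bphi_{h,k}\|_{\bLambda_{h,k-1}^{-1}} + 3\sigmin\betatil^2/k^2$. Substituting into the expression for $\delta_h^k$ cancels the reward, leaving
\[
\delta_h^k \;=\; \innerb{\bphi_{h,k}}{\what_h^k} - \Exp_h[V_{h+1}^{k,\pi_k}](s_{h,k},a_{h,k}) + 3\betatil\|\bphi_{h,k}\|_{\bLambda_{h,k-1}^{-1}} + 3\sigmin\betatil^2/k^2.
\]
Since we are on $\cE \subseteq B_{k,h}$, \Cref{lem:lin_approx_good_mdp} gives $\innerb{\bphi_{h,k}}{\what_h^k} \le \Exp_h[V_{h+1}^k](s_{h,k},a_{h,k}) + \betatil\|\bphi_{h,k}\|_{\bLambda_{h,k-1}^{-1}} + \sigmin\betatil^2/k^2$. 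Plugging this in turns the value-function terms into $\Exp_h[V_{h+1}^k - V_{h+1}^{k,\pi_k}](s_{h,k},a_{h,k}) = \Exp_h[\delta_{h+1}^k](s_{h,k},a_{h,k}) = \delta_{h+1}^k + \zeta_{h+1}^k$ and combines the bonus/error terms into at most $4\betatil\|\bphi_{h,k}\|_{\bLambda_{h,k-1}^{-1}} + 4\sigmin\betatil^2/k^2 \le 5\betatil\|\bphi_{h,k}\|_{\bLambda_{h,k-1}^{-1}} + 5\sigmin\betatil^2/k^2$, yielding the second branch of the $\min$.

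There is essentially no obstacle here: this is a bookkeeping recursion once optimism (\Cref{lem:optimism}) and the linearization error bound (\Cref{lem:lin_approx_good_mdp}) are in hand. The only subtle point is remembering to handle the clipping at $H$ separately, and to use optimism to ensure $\delta_{h+1}^k + \zeta_{h+1}^k \ge 0$ in that case so that the inductive bound still closes.
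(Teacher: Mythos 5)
Your decomposition $\delta_h^k = Q_h^k(s_{h,k},a_{h,k}) - Q_h^{k,\pi_k}(s_{h,k},a_{h,k})$ and the use of optimism and \Cref{lem:lin_approx_good_mdp} are the same ingredients the paper uses, and your Case 2 argument is fine (modulo also noting the trivial $H$-bound there). But your Case 1 is incomplete, and the gap stems from conflating two different $\min$'s: the clipping in $Q_h^k = \min\{\cdot, H\}$ and the $\min\{5\betatil\|\bphi_{h,k}\|_{\bLambda_{h,k-1}^{-1}} + 5\sigmin\betatil^2/k^2, H\}$ that appears on the right-hand side of the claim. Whether $Q_h^k$ is clipped tells you nothing about which branch of the target $\min$ is active. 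Concretely, you could be in Case 1 (so $Q_h^k = H$) while simultaneously $5\betatil\|\bphi_{h,k}\|_{\bLambda_{h,k-1}^{-1}} + 5\sigmin\betatil^2/k^2 < H$; in that situation the target requires $\delta_h^k \le \delta_{h+1}^k + \zeta_{h+1}^k + 5\betatil\|\bphi_{h,k}\|_{\bLambda_{h,k-1}^{-1}} + 5\sigmin\betatil^2/k^2$, which your Case 1 argument ($\delta_h^k \le H \le \delta_{h+1}^k + \zeta_{h+1}^k + H$) does not deliver. To prove $x \le \min\{a, H\}$ you need \emph{both} $x \le a$ and $x \le H$, and your Case 1 only establishes the second.

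The fix is straightforward, and once you make it the case split becomes unnecessary. The inequality $Q_h^k(s_{h,k},a_{h,k}) \le r_h^k(s_{h,k},a_{h,k}) + \innerb{\bphi_{h,k}}{\what_h^k} + 3\betatil\|\bphi_{h,k}\|_{\bLambda_{h,k-1}^{-1}} + 3\sigmin\betatil^2/k^2$ holds in \emph{both} cases (it is an equality when unclipped, and a $\min\{\cdot,H\} \le \cdot$ when clipped), so the bonus-branch bound $\delta_h^k \le \delta_{h+1}^k + \zeta_{h+1}^k + 4\betatil\|\bphi_{h,k}\|_{\bLambda_{h,k-1}^{-1}} + 4\sigmin\betatil^2/k^2$ from your Case 2 computation actually holds always. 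Separately, $Q_h^k \le H$ always, together with $Q_h^{k,\pi_k} \ge 0$ and $\delta_{h+1}^k + \zeta_{h+1}^k \ge 0$, gives the $H$-branch bound always. Combining the two yields the $\min$. This is what the paper does by carrying the $\min$ through its chain of inequalities (a)--(d) and only at step (e) peeling the nonnegative $\Exp_h[V_{h+1}^k - V_{h+1}^{k,\pi_k}]$ term out of the $\min$.
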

\begin{proof}
We have
\begin{align*}
Q_h^k(s,a) - Q^{k,\pi_k}_h(s,a) & \overset{(a)}{=} \min \{ r_h^k(s,a) + \innerb{\bphi(s,a)}{\what_h^k} + 3 \beta \| \bphi(s,a) \|_{\bLambda_{h,k-1}^{-1}} + 3 \sigmin \beta^2/k^2, H \} - Q^{k,\pi_k}_h(s,a) \\
& \overset{(b)}{\le}  \min \{ r_h^k(s,a) +  \innerb{\bphi(s,a)}{\what_h^k}- Q^{k,\pi_k}_h(s,a) + 3 \betatil \| \bphi(s,a) \|_{\bLambda_{h,k-1}^{-1}} + 3 \sigmin \betatil^2/k^2, H \} \\
& \overset{(c)}{\le}  \min \{ r_h^k(s,a) + \Exphat_h[V_{h+1}^k](s,a) - Q^{k,\pi_k}_h(s,a) + 4 \betatil \| \bphi(s,a) \|_{\bLambda_{h,k-1}^{-1}} + 4 \sigmin \betatil^2/k^2, H \} \\
& \overset{(d)}{\le} \min \{ \Exp_h[V_{h+1}^k - V_{h+1}^{k,\pi_k}](s,a) + 5 \betatil \| \bphi(s,a) \|_{\bLambda_{h,k-1}^{-1}}  + 5 \sigmin \betatil^2/k^2, H \} \\
& \overset{(e)}{\le} \Exp_h[V_{h+1}^k - V_{h+1}^{k,\pi_k}](s,a) +  \min \{ 5 \betatil \| \bphi(s,a) \|_{\bLambda_{h,k-1}^{-1}} + 5 \sigmin \betatil^2/k^2, H \}
\end{align*}
where $(a)$ is by definition of $Q_h^k(s,a)$, $(b)$ holds since $Q_h^{k,\pi_k}(s,a) \ge 0$, $(c)$ holds by \Cref{lem:lin_approx_good_mdp}, $(d)$ follows by \Cref{lem:unroll_q}, and $(e)$ follows since $\Exp_h[V_{h+1}^k - V_{h+1}^{k,\pi_k}](s,a) \ge 0$ by \Cref{lem:optimism}. 

Now note that since at episode $k$ we play action $a_h^k = \argmax_a Q_h^k(s_h^k,a)$, we will have that 
\begin{align*}
\delta_h^k = Q_h^k(s_h^k,a_h^k) - Q_h^{k,\pi_k}(s_h^k,a_h^k). 
\end{align*}
The result follows by the definition of $V_h^k(s)$ and $V_h^{k,\pi_k}(s)$. 
\end{proof}

\subsection{Regret Bound}
\begin{lem}\label{lem:mart_diff_reg_bound}
With probability at least $1-\delta$, we can bound
\begin{align*}
\sum_{k=\Kinit}^K \sum_{h=1}^H \zeta_h^k \le 2 \sqrt{ 8 H \sum_{k=\Kinit}^K \sum_{h=1}^H \Exp_{h-1}[V_h^k](s_{h-1,k},a_{h-1,k})  \cdot \log 1/\delta} + 2 H \log 1/\delta . 
\end{align*}
\end{lem}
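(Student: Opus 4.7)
The plan is to view the quantity of interest as a sum of a martingale difference sequence and apply a Freedman-type concentration inequality (\Cref{lem:freedman}) after establishing an appropriate variance bound.

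\textbf{Step 1 (martingale structure).} I would first verify that $\zeta_h^k = \Exp_{h-1}[\delta_h^k](s_{h-1,k}, a_{h-1,k}) - \delta_h^k$ is a martingale difference sequence with respect to the filtration $\cF_{h,k}$ (indexed first by $k$ and then by $h$ within each episode). Indeed, $s_{h,k}$ is $\cF_{h,k}$-measurable, the optimistic value $V_h^k$ is constructed from data measurable in $\cF_{1,k} \subset \cF_{h-1,k}$, so $\delta_h^k$ is $\cF_{h,k}$-measurable and $\Exp[\delta_h^k \mid \cF_{h-1,k}] = \Exp_{h-1}[\delta_h^k](s_{h-1,k},a_{h-1,k})$, which gives $\Exp[\zeta_h^k \mid \cF_{h-1,k}] = 0$.

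\textbf{Step 2 (almost-sure and variance bounds).} On the event $\cE$, optimism (\Cref{lem:optimism}) gives $V_h^k(s) \ge V_h^{k,\pi_k}(s)$, so $\delta_h^k \in [0, H]$, and therefore $|\zeta_h^k| \le H$. For the conditional second moment, since $\delta_h^k \in [0, H]$,
\begin{align*}
\Exp[(\zeta_h^k)^2 \mid \cF_{h-1,k}] \;\le\; \Exp[(\delta_h^k)^2 \mid \cF_{h-1,k}] \;\le\; H \cdot \Exp[\delta_h^k \mid \cF_{h-1,k}]
\;\le\; H \cdot \Exp_{h-1}[V_h^k](s_{h-1,k},a_{h-1,k}),
\end{align*}
where the last inequality uses $\delta_h^k = V_h^k - V_h^{k,\pi_k} \le V_h^k$ (since $V_h^{k,\pi_k} \ge 0$).

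\textbf{Step 3 (Freedman with random variance proxy).} This is the main obstacle: the quantity on the right-hand side of the target bound, $W := H \sum_{k,h} \Exp_{h-1}[V_h^k](s_{h-1,k}, a_{h-1,k})$, is itself random, so one cannot apply \Cref{lem:freedman} directly with $V=W$. I would resolve this by a standard peeling/stratification argument. Since $W \le K H^3$ deterministically, introduce a geometric grid $W_j = H \cdot 2^j$ for $j = 0, 1, \dots, J$ with $J = \lceil \log_2(KH^2) \rceil$. For each $j$, apply \Cref{lem:freedman} with $V = W_j$ and failure probability $\delta/(J+1)$ to the truncated sum where $W \le W_j$, and take a union bound over $j$. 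On the intersection of these events, for the (random) $j^\star$ satisfying $W_{j^\star - 1} \le W \le W_{j^\star}$, we obtain
\begin{align*}
\sum_{k=\Kinit}^K \sum_{h=1}^H \zeta_h^k \;\le\; 2\sqrt{2 W \log\tfrac{J+1}{\delta}} + H \log\tfrac{J+1}{\delta},
\end{align*}
and after absorbing the $\log(J+1)$ overhead into the constants this yields the claimed form $2\sqrt{8 H \sum_{k,h} \Exp_{h-1}[V_h^k] \log(1/\delta)} + 2 H \log(1/\delta)$ (the constant $8$ in the statement gives slack for both the factor-$2$ stratification blow-up and the $\log(J+1)/\log(1/\delta)$ overhead, provided $\delta$ is not super-polynomially small in $K$; otherwise one uses a slightly larger universal constant). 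The alternative is to invoke a ``self-bounding'' or ``empirical'' Freedman variant directly, which absorbs the stratification internally.
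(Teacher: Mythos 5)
Your proof follows the same route as the paper: view $\zeta_h^k$ as a martingale difference sequence and invoke Freedman's inequality (\Cref{lem:freedman}). Two points of difference are worth noting, one cosmetic and one substantive.

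On the cosmetic side, your variance bound is tighter. Using $\zeta_h^k = \Exp_{h-1}[\delta_h^k] - \delta_h^k$ with $\delta_h^k \in [0,H]$ on $\cE$ gives $\Exp[(\zeta_h^k)^2 \mid \cF_{h-1,k}] \le \Exp[(\delta_h^k)^2 \mid \cF_{h-1,k}] \le H\,\Exp_{h-1}[V_h^k]$, whereas the paper expands $(V_h^k - V_h^{k,\pi_k})^2 \le 2((V_h^k)^2 + (V_h^{k,\pi_k})^2)$, uses optimism to fold $(V_h^{k,\pi_k})^2$ into $(V_h^k)^2$, and lands on $8H\,\Exp_{h-1}[V_h^k]$. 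Both require $\cE$ (optimism) and both are compatible with the factor $8H$ in the lemma statement; your version just has more slack. Note your almost-sure bound is $|\zeta_h^k|\le H$ while the paper uses the coarser $2H$; either is consistent with the additive $2H\log(1/\delta)$ in the claim.

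The substantive point: you correctly flag that \Cref{lem:freedman} as stated requires a \emph{deterministic} variance proxy $V$, while $H\sum_{k,h}\Exp_{h-1}[V_h^k]$ is random. The paper simply writes that the result ``follows directly from \Cref{lem:freedman}'' and does not address this; your peeling over the geometric grid $W_j = H\cdot 2^j$, $j=0,\dots,J$, $J = O(\log(KH^2))$, is the standard and correct patch. (To make it fully rigorous one uses stopped martingales $X_t\mathbb{1}\{t < \tau_j\}$ where $\tau_j$ is the first time the accumulated variance exceeds $W_j$, so that Freedman with the fixed bound $W_j + b^2$ applies.) The overhead is a $\log((J+1)/\delta)$ in place of $\log(1/\delta)$ and a factor of $2$ on the variance from the grid spacing; your accounting of how this is absorbed by the constants in the lemma, under the mild assumption $J+1 \lesssim 1/\delta$, is correct. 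So your sketch both reproduces the bound and fills a genuine (if standard) gap that the paper's one-line Freedman application leaves open.
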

\begin{proof}
This is a direct application of \Cref{lem:freedman}, Freedman's inequality. Recall that
\begin{align*}
\zeta_h^k = \Exp_{h-1}[\delta_h^k](s_{h-1,k},a_{h-1,k}) - \delta_h^k = \Exp_{h-1}[V_h^k - V_h^{k,\pi_k}](s_{h-1,k},a_{h-1,k})  - (V_h^k(s_h^k) - V_h^{k,\pi_k}(s_h^k)).
\end{align*}
Thus, we can bound
\begin{align*}
| \zeta_h^k | \le 2 H
\end{align*}
since the value function will always be bounded in $[0,H]$. Next, note that
\begin{align*}
\Exp_{h-1}[(\zeta_h^k)^2 ](s_{h-1,k},a_{h-1,k}) & \le 2 \Exp_{h-1}[(V_h^k - V_h^{k,\pi_k})^2](s_{h-1,k},a_{h-1,k})  \\
& \le 4 \Exp_{h-1}[(V_h^k)^2 + (V_h^{k,\pi_k})^2](s_{h-1,k},a_{h-1,k})  \\
& \le 8 \Exp_{h-1}[(V_h^k)^2](s_{h-1,k},a_{h-1,k}) \\
& \le 8 H \Exp_{h-1}[V_h^k](s_{h-1,k},a_{h-1,k})
\end{align*}
where the second to last inequality uses \Cref{lem:optimism}. Using these bounds the result then follows directly from \Cref{lem:freedman}.
\end{proof}

\begin{lem}\label{lem:vtil_to_vst}
With probability at least $1-\delta$, we have
\begin{align*}
\sum_{k=1}^K \sum_{h=1}^H \Exp_{h-1}[V_h^k](s_{h-1,k},a_{h-1,k})  \le H \cdot \left ( \sum_{k=\Kinit}^K V_1^{k,\star} +  \cRtil_K  + 2 \sqrt{ \left ( \sum_{k=\Kinit}^K V_1^{k,\star} +  \cRtil_K \right ) \cdot \log 1/\delta} +  \log 1/\delta \right ) 
\end{align*}
where $\cRtil_K = \sum_{k=1}^K (V_1^k(s_1) - V_1^{k,\pi_k}(s_1))$.
\end{lem}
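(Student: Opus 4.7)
The plan is to bound $\sum_{k,h}\mu_h^k$ where $\mu_h^k := \Exp_{h-1}[V_h^k](s_{h-1,k},a_{h-1,k})$ by combining a \emph{monotonicity property} of the optimistic iterates along trajectories of $\pi_k$ with two applications of Freedman's inequality, culminating in a self-bounding inequality of the form $\sum\mu \le HA + 2\sqrt{H\sum\mu \log 1/\delta} + H\log 1/\delta$ that solves to the stated bound via $(\sqrt{HA}+\sqrt{H\log 1/\delta})^2 = HA + 2H\sqrt{A\log 1/\delta} + H\log 1/\delta$, where $A := \sum_{k\ge\Kinit}V_1^{k,\star} + \cRtil_K$.

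The first step is to establish that, on the good event $\cE$, $V_h^k(s) \ge \Exp_h[V_{h+1}^k](s,\pi_k(s))$ for all $s$. This follows directly from the definition of $Q_h^k$ on \Cref{line:update_q}, \Cref{lem:lin_approx_good_mdp}, and the non-negativity of both the reward and the $\betatil\|\bphi\|_{\bLambda_{h,k-1}^{-1}} + \sigmin\betatil^2/k^2$ bonus: we get $Q_h^k(s,\pi_k(s)) \ge \min\{\Exp_h[V_{h+1}^k](s,\pi_k(s)),H\} = \Exp_h[V_{h+1}^k](s,\pi_k(s))$ since $V_{h+1}^k\le H$. For $k<\Kinit$, the algorithmic choice $\sighat_{h,\tau}^2 = 2H^2$ makes the bonuses large enough that $V_h^k\equiv H$ at every state, so the monotonicity is trivial.

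Next, I instantiate this inequality along the realized trajectory. Let $\xi_h^k := V_{h+1}^k(s_{h+1,k}) - \Exp_h[V_{h+1}^k](s_{h,k},a_{h,k})$ denote the one-step martingale differences (bounded by $H$ with conditional variance $\le H\mu_{h+1}^k$). Monotonicity gives $V_{h+1}^k(s_{h+1,k}) \le V_h^k(s_{h,k}) + \xi_h^k$, and telescoping yields
\[
\sum_{h=1}^H V_h^k(s_{h,k}) \le H V_1^k(s_1) + \sum_{h'=1}^{H-1}(H-h')\xi_{h'}^k.
\]
By optimism ($V_1^k(s_1)\ge V_1^{k,\star}$) combined with $\delta_1^k = V_1^k(s_1) - V_1^{k,\pi_k}$, we obtain $V_1^k(s_1) \le V_1^{k,\star} + \delta_1^k$. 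Summing over $k$ gives $\sum_k V_1^k(s_1) \le \sum_{k\ge\Kinit}V_1^{k,\star} + \cRtil_K$, where the $k<\Kinit$ contribution is absorbed into $\cRtil_K$ by noting that $V_h^k\equiv H$ forces $\delta_1^k = H - V_1^{k,\pi_k} \ge H - V_1^{k,\star}$, and hence $V_1^{k,\star} + \delta_1^k \ge H = V_1^k(s_1)$.

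The last step is to convert $\sum_{k,h} V_h^k(s_{h,k})$ into $\sum_{k,h}\mu_h^k$. The martingale differences $V_h^k(s_{h,k}) - \mu_h^k$ are bounded by $H$ with conditional variance at most $H\mu_h^k$ (since $V_h^k\in[0,H]$), so Freedman's inequality gives $\sum\mu \le \sum V(s_{h,k}) + 2\sqrt{H\sum\mu\log 1/\delta} + H\log 1/\delta$. A second Freedman application controls the weighted mart diff $\sum_{k,h'}(H-h')\xi_{h'}^k$. Combining all three estimates produces the self-bounding inequality above, which I solve by the standard substitution $(\sqrt{\sum\mu} - \sqrt{H\log 1/\delta})^2 \le HA + O(H\log 1/\delta)$. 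The main obstacle is matching the precise $H\log 1/\delta$ lower-order term in the stated bound: the weights $(H-h')$ in the second Freedman application seem to inflate the variance by a factor of $H$, so achieving the target form requires either a tighter variance analysis (exploiting cross-step cancellations in $\sum_h(V_h^k(s_{h,k}) - H V_1^k(s_1))$) or a reorganization of the telescoping that avoids the weighted martingale altogether.
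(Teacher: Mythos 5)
Your proposal starts from the same monotonicity observation as the paper — on the good event, $Q_h^k(s,a) \ge \min\{r_h^k(s,a) + \Exp_h[V_{h+1}^k](s,a), H\}$, hence $V_h^k(s) \ge \Exp_h[V_{h+1}^k](s,\pi_h^k(s))$ — but then the two arguments diverge in how they chain this across $h$, and the divergence is exactly where your approach breaks.

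You chain \emph{pointwise along the realized trajectory}: $V_{h+1}^k(s_{h+1,k}) \le V_h^k(s_{h,k}) + \xi_h^k$, and telescoping accumulates the weighted martingale $\sum_{h'}(H-h')\xi_{h'}^k$. As you yourself observe, the $(H-h')$ weights square to $\Theta(H^2)$, so the total conditional variance of this martingale is $\sum_{k,h'}(H-h')^2\,\Exp[(\xi_{h'}^k)^2\mid\cdot] \lesssim H^3\sum_{k,h}\mu_h^k$ rather than $H\sum_{k,h}\mu_h^k$, and Freedman hands you back $2\sqrt{H^3\sum\mu\,\log\tfrac1\delta} + H^2\log\tfrac1\delta$. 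Pushed through the self-bounding step this gives $\sum\mu \lesssim HA + H^3\log\tfrac1\delta$, which misses the stated lemma by a factor of $H^2$ in the lower-order term. This is a genuine gap, not a presentational issue: the loss is structural to the pointwise-telescoping decomposition.

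The paper's proof avoids the weighted martingale entirely by chaining \emph{in expectation}. Unrolling $Q_h^k$ through the Bellman-like inequality and taking expectations at each step gives, for every $h$, the single inequality $V_1^k(s_1) \ge \Exp_{\pi^k}[V_h^k(s_h)]$, hence by the tower property $\Exp_{\pi_k}\bigl[\Exp_{h-1}[V_h^k](s_{h-1,k},a_{h-1,k})\bigr] \le V_1^k(s_1)$, with \emph{no} accumulated error depending on $h$. One then writes $\sum_{k,h}\mu_h^k = \sum_{k,h}\Exp_{\pi_k}[\mu_h^k] + \sum_{k,h}(\mu_h^k - \Exp_{\pi_k}[\mu_h^k])$, bounds the first term by $H\sum_k V_1^k(s_1) \le H(\sum_k V_1^{k,\star} + \cRtil_K)$, and applies Freedman \emph{once} to the unweighted centered terms, each bounded by $H$ with conditional second moment $\le H\,V_1^k(s_1)$. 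The variance sum is then $H^2\sum_k V_1^k(s_1)$ — no extra $H$ — giving precisely the $2\sqrt{H^2(\sum V_1^{k,\star}+\cRtil_K)\log\tfrac1\delta} + H\log\tfrac1\delta$ lower-order terms in the statement. This is the ``reorganization of the telescoping that avoids the weighted martingale'' you correctly suspected must exist; the second Freedman application in your plan is also unnecessary once you center directly on $\Exp_{\pi_k}[\mu_h^k]$ rather than going through $V_h^k(s_{h,k})$ as an intermediary.
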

\begin{proof}
By \Cref{lem:lin_approx_good_mdp}, on $\cE$,
\begin{align*}
| \innerb{\bphi(s,a)}{\what_h^k}  - \Exp_h[V_{h+1}^{k}](s,a) | \le \betatil \| \bphi(s,a) \|_{\bLambda_{h,k-1}^{-1}} +  \sigmin \betatil^2 / k^2 
\end{align*}
which implies that
\begin{align*}
 \Exp_h[V_{h+1}^{k}](s,a) \le \innerb{\bphi(s,a)}{\what_h^k} + \betatil \| \bphi(s,a) \|_{\bLambda_{h,k-1}^{-1}} +  \sigmin \betatil^2 / k^2 .
\end{align*}
Thus,
\begin{align*}
Q_h^k(s,a) & = \min \{ r_h^k(s,a) + \innerb{\bphi(s,a)}{\what_h^k} + 3 \betatil \| \bphi(s,a) \|_{\bLambda_{h,k-1}^{-1}} + 3 \sigmin \betatil^2 / k^2, H \} \\
& \ge \min \{ r_h^k(s,a) + \Exp_h[V_{h+1}^{k}](s,a), H \} .
\end{align*}
Since $\pi_h^k(s) = \argmax_a Q_h^k(s,a)$, we have that $V_{h+1}^k(s') = Q_{h+1}^k(s', \pi_{h+1}^k(s'))$. Using that reward is always nonnegative, we can therefore unroll $V_1^k(s_1)$ backwards as:
\begin{align*}
V_1^k(s_1) & = Q_1^k(s_1,\pi_1^k(s_1)) \\
& \ge \min \{ r_1^k(s_1,\pi_1^k(s_1)) + \Exp_{s_2}[Q_2^k(s_2,\pi_2^k(s_2)) \mid s_1, \pi_1^k(s_1)], H \} \\
& \ge \min \{  \Exp_{s_2}[Q_2^k(s_2,\pi_2^k(s_2)) \mid s_1, \pi_1^k(s_1)], H \} \\
& = \Exp_{s_2}[Q_2^k(s_2,\pi_2^k(s_2)) \mid s_1, \pi_1^k(s_1)] \\
& \ge  \Exp_{s_2}[\min \{ r_2^k(s_2,\pi_2^k(s_2)) + \Exp_{s_3}[Q_3^k(s_3,\pi_3^k(s_3)) \mid s_2,\pi_2^k(s_2)], H \} \mid s_1, \pi_1^k(s_1)] \\
& \ge  \Exp_{s_2}[ \Exp_{s_3}[Q_3^k(s_3,\pi_3^k(s_3)) \mid s_2,\pi_2^k(s_2)] \mid s_1, \pi_1^k(s_1)] \\
& = \Exp_{\pi^k}[Q_3^k(s_3,\pi_3^k(s_3))] \\
& \vdots \\
& \ge  \Exp_{\pi^k}[Q_h^k(s_h, \pi_h^k(s_h))] \\
& = \Exp_{\pi^k}[V_h^k(s_h)] 
\end{align*}
where here $\Exp_{s'}[\cdot \mid s,a]$ denotes taking the expectation over the next state $s'$ given that we are in $(s,a)$, and $\Exp_{\pi^k}[\cdot]$ denotes the expectation over trajectories generated by $\pi_k$. We conclude
\begin{align*}
V_1^k(s_1) \ge \Exp_{\pi^k}[V_h^k(s_h)]
\end{align*}
for any $h$. Given this, since we play policy $\pi_k$ at episode $k$, we will have that
\begin{align*}
\Exp_{\pi_k} \Exp_{h-1}[V_h^k](s_{h-1,k},a_{h-1,k}) & = \Exp_{\pi_k}[V_h^k(s_h^k)  ] \le \Exp_{\pi_k}[V_1^k(s_1^k)] = V_1^k(s_1)
\end{align*}
which allows us to bound
\begin{align*}
 & \sum_{k=\Kinit}^K  \sum_{h=1}^H\Exp_{h-1}[V_h^k](s_{h-1,k},a_{h-1,k})\\
& = \sum_{k=\Kinit}^K \sum_{h=1}^H \Exp_{\pi_k} \Exp_{h-1}[V_h^k](s_{h-1,k},a_{h-1,k})  + \sum_{k=\Kinit}^K \sum_{h=1}^H (\Exp_{h-1}[V_h^k](s_{h-1,k},a_{h-1,k})- \Exp_{\pi_k} \Exp_{h-1}[V_h^k](s_{h-1,k},a_{h-1,k})) \\
& \le H \sum_{k=\Kinit}^K V_1^k(s_1) + \sum_{k=\Kinit}^K \sum_{h=1}^H (\Exp_{h-1}[V_h^k](s_{h-1,k},a_{h-1,k}) - \Exp_{\pi_k}\Exp_{h-1}[V_h^k](s_{h-1,k},a_{h-1,k})).
\end{align*}
By definition of $\cRtil_K$,
\begin{align*}
 H \sum_{k=\Kinit}^K V_1^k(s_1) =  H \sum_{k=\Kinit}^K V_1^{k,\pi_k}(s_1) + H \cRtil_K \le H \sum_{k=\Kinit}^K V_1^{k,\star}(s_1) + H \cRtil_K . 
\end{align*}
It remains to bound
\begin{align*}
\sum_{k=\Kinit}^K \sum_{h=1}^H (\Exp_{h-1}[V_h^k](s_{h-1,k},a_{h-1,k})- \Exp_{\pi_k} \Exp_{h-1}[V_h^k](s_{h-1,k},a_{h-1,k}) ) .
\end{align*}
Note first that $|\Exp_{h-1}[V_h^k](s_{h-1,k},a_{h-1,k}) - \Exp_{\pi_k}\Exp_{h-1}[V_h^k](s_{h-1,k},a_{h-1,k})| \le H$ almost surely, 
$$\Exp_{\pi_k}[\Exp_{h-1}[V_h^k](s_{h-1,k},a_{h-1,k}) - \Exp_{\pi_k} \Exp_{h-1}[V_h^k](s_{h-1,k},a_{h-1,k})] = 0,$$
and 
\begin{align*}
\Exp_{\pi_k}[(\Exp_{h-1}[V_h^k](s_{h-1,k},a_{h-1,k}) - \Exp_{\pi_k}\Exp_{h-1}[V_h^k](s_{h-1,k},a_{h-1,k}) )^2] & \le  \Exp_{\pi_k}[\Exp_{h-1}[V_h^k](s_{h-1,k},a_{h-1,k})^2] \\
& \le H \Exp_{\pi_k}[\Exp[\Exp_{h-1}[V_h^k](s_{h-1,k},a_{h-1,k})] \\
& \le H V_1^k(s_1)
\end{align*}
where the last inequality follows by what we have shown above. Applying Freedman's inequality (\Cref{lem:freedman}), we can then bound, with probability at least $1-\delta$,
\begin{align*}
 \sum_{k=\Kinit}^K & \sum_{h=1}^H (\Exp_{h-1}[V_h^k](s_{h-1,k},a_{h-1,k})- \Exp_{\pi_k} \Exp_{h-1}[V_h^k](s_{h-1,k},a_{h-1,k}) ) \\
 & \le 2 \sqrt{H^2 \sum_{k=\Kinit}^K V_1^k(s_1) \cdot \log 1/\delta} + H \log 1/\delta \\
& \le 2 \sqrt{(H^2 \sum_{k=\Kinit}^K V_1^{k,\star}(s_1) + H^2 \cRtil_K) \cdot \log 1/\delta} + H \log 1/\delta
\end{align*}
where the last inequality follows by what we have shown above. 
\end{proof}

\begin{proof}[Proof of \Cref{thm:reg_bound_time_var}]
By definition of $\cR_K$ and \Cref{lem:optimism},
\begin{align*}
\cR_K & = \sum_{k=1}^K (V^{k,\star}_1(s_1) - V_1^{k,\pi_k}(s_1)) \le H \Kinit +  \sum_{k=\Kinit}^K (V_1^{k}(s_1) - V_1^{k,\pi_k}(s_1)) =: H \Kinit + \cRtil_K .
\end{align*}

\paragraph{Decomposing the regret.} 
By \Cref{lem:V_recursion},
\begin{align*}
\cRtil_K & \le \sum_{k=\Kinit}^K \sum_{h=1}^H \zeta_h^k + \sum_{k=\Kinit}^K \sum_{h=1}^H \min \{ 5 \betatil \| \bphi(s,a) \|_{\bLambda_{h,k-1}^{-1}} + 5 \sigmin \betatil^2/k^2, H \} \\
& \le \sum_{k=\Kinit}^K \sum_{h=1}^H \zeta_h^k + \sum_{k=\Kinit}^K \sum_{h=1}^H \min \{ 5 \betatil \| \bphi(s,a) \|_{\bLambda_{h,k-1}^{-1}}, H \} + \sum_{k=\Kinit }^K \sum_{h=1}^H  5 \sigmin \betatil^2/k^2.
\end{align*}
By \Cref{lem:mart_diff_reg_bound}, with probability $1-\delta$, $\sum_{k=\Kinit}^K \sum_{h=1}^H \zeta_h^k$ can be bounded as
\begin{align*}
\sum_{k=\Kinit}^K \sum_{h=1}^H \zeta_h^k \le  \sqrt{ 32 H \sum_{k=\Kinit}^K \sum_{h=1}^H \Exp_{h-1}[V_h^k](s_{h-1,k},a_{h-1,k})  \cdot \log 1/\delta} + 2 H \log 1/\delta .
\end{align*}
Furthermore, 
\begin{align*}
\sum_{k=\Kinit}^K \sum_{h=1}^H  5 \sigmin \betatil^2/k^2 \le 10 \betatil^2 H \sigmin 
\end{align*}

\paragraph{Controlling the optimistic bonuses.}
Let $\cK_{h} = \{ k \ge \Kinit \ : \ \| \bphi_{h,k} /\sighat_{h,k} \|_{\bLambda_{h,k-1}^{-1}} \le 1 \}$ and $\cK_h^c = \{ \Kinit,\ldots, K \} \backslash \cK_{h}$. Then,
\begin{align*}
\sum_{k=\Kinit}^K \sum_{h=1}^H \min \{ 5 \betatil \| \bphi_{h,k} \|_{\bLambda_{h,k-1}^{-1}}, H \} & = \sum_{k=\Kinit}^K \sum_{h=1}^H \sighat_{h,k} \min \{ 5 \betatil \| \bphi_{h,k}/ \sighat_{h,k} \|_{\bLambda_{h,k-1}^{-1}}, H/ \sighat_{h,k} \} \\
& \le 5 \betatil  \sum_{h=1}^H  \sum_{k \in \cK_{h}} \sighat_{h,k} \| \bphi_h^k/ \sighat_{h,k} \|_{\bLambda_{h,k-1}^{-1}} + \sum_{h=1}^H  H |  \cK_{h}^c | .
\end{align*}
By \Cref{lem:elip_pot_bad_event_informal}, and since $\| \bphi_{h}^k / \sighat_{h,k} \|_2 \le 1/\sigmin$ almost surely, we can bound $| \cK_{h}^c |  \le 2 d \log(1 + K / (\lambda \sigmin^2))$, which implies that
\begin{align*}
\sum_{h=1}^H  H | \cK_{h}^c | \le 2 d H^2 \log( 1 + K/(\lambda \sigmin^2))  .
\end{align*}
Denote
\begin{align*}
\eta_{h,\tau} :=  20 H \Exp_h[V_{h+1}^{\tau}](s_{h,\tau},a_{h,\tau}) .
\end{align*} 
By \Cref{lem:sigma_valid2} and the definition of $\sighat_{h,\tau}^2$, we can bound
\begin{align*}
5\betatil \sum_{h=1}^H & \sum_{k \in \cK_{h}} \sighat_{h,k} \| \bphi_{h,k} / \sighat_{h,k} \|_{\bLambda_{h,k-1}^{-1}} \\
& = 5 \betatil \sum_{h=1}^H  \sum_{k \in \cK_{h}} \frac{\sighat_{h,k}^2}{\sighat_{h,k}} \| \bphi_{h,k} / \sighat_{h,k} \|_{\bLambda_{h,k-1}^{-1}} \\
& \le 5\betatil \sum_{h=1}^H  \sum_{k \in \cK_{h}} \frac{\eta_{h,k} + 20H \beta \| \bphi_{h,k} \|_{\bLambda_{h,k-1}^{-1}} + 20H \sigmin \beta^2/k^2 + \sigmin^2}{\sighat_{h,k}} \| \bphi_{h,k} / \sighat_{h,k} \|_{\bLambda_{h,k-1}^{-1}} \\
& \le 5\betatil \sum_{h=1}^H \sum_{k \in \cK_{h}} \left (  (\sqrt{5\eta_{h,k}} + 20H \beta^2/k^2 + \sigmin) \| \bphi_{h,k}/ \sighat_{h,k} \|_{\bLambda_{h,k-1}^{-1}} + 20 H \beta \| \bphi_{h,k} / \sighat_{h,k} \|_{\bLambda_{h,k-1}^{-1}}^2 \right ) 
\end{align*}
where the final inequality follows since, by \Cref{lem:sigma_valid2}, we can lower bound $\sighat_{h,k}^2 \ge \eta_{h,k}/5$, and since we can always lower bound $\sighat_{h,k} \ge \sigmin$. 

Recalling the definition of $\cK_h$, we can bound
\begin{align*}
5 \betatil \sum_{h=1}^H  \sum_{k \in \cK_{h}} 20 H \beta \| \bphi_{h,k}/ \sighat_{h,k} \|_{\bLambda_{h,k-1}^{-1}}^2 & \le 100 H \betatil \beta \sum_{h=1}^H \sum_{k=1}^K  \min \{ \| \bphi_h^k/ \sighat_{h,k} \|_{\bLambda_{h,k}^{-1}}^2, 1 \} \\
& \le 200 H^2 \betatil \beta d \log(1 + K/(d \lambda \sigmin^2))
\end{align*}
where the last inequality follows by \Cref{lem:elip_pot}. By Cauchy-Schwarz and again using the definition of $\cK_h$, we can bound
\begin{align*}
5 \betatil&  \sum_{h=1}^H \sum_{k \in \cK_{h}}   (\sqrt{5\eta_{h,k}} + 20H \beta^2/k^2 + \sigmin) \| \bphi_h^k/ \sighat_{h,k} \|_{\bLambda_{h,k}^{-1}} \\
& \le 5 \betatil \sqrt{ 4  \sum_{h=1}^H \sum_{k \in \cK_{h}} ( 5 \eta_{h,k} + 400 H^2 \beta^4/k^4 + \sigmin^2 )}\sqrt{ \sum_{h=1}^H \sum_{k \in \cK_{h}} \| \bphi_h^k/ \sighat_{h,k} \|_{\bLambda_{h,k}^{-1}}^2 } \\
& \le 5 \betatil \sqrt{ 4  \sum_{h=1}^H \sum_{k \in \cK_h} ( 5 \eta_{h,k} + 400 H^2 \beta^4/k^4 + \sigmin^2  )}\sqrt{ \sum_{h=1}^H \sum_{k =1}^K \min \{ \| \bphi_h^k/ \sighat_{h,k} \|_{\bLambda_{h,k}^{-1}}^2, 1 \} } \\
& \le 5 \betatil \sqrt{ 2 d H \log(1 + K/(d \lambda \sigmin^2))} \sqrt{40 \sum_{h=1}^H \sum_{k=\Kinit}^K \eta_{h,k} + 3200 H^3 \beta^4 + 4 H K \sigmin^2} \\
& \le 5 \betatil \sqrt{ 2 d H \log(1 + K/(d \lambda \sigmin^2))} \left ( \sqrt{40 \sum_{h=1}^H \sum_{k=\Kinit}^K \eta_{h,k}} + 60 H^{3/2} \beta^2 + 2 \sqrt{H K \sigmin^2} \right )
\end{align*}
where we again apply \Cref{lem:elip_pot} and use that $\sqrt{a + b} \le \sqrt{a} + \sqrt{b}$ for $a,b \ge 0$.


\paragraph{Finishing the Proof.} 
By definition,
\begin{align*}
\sum_{h=1}^H \sum_{k=\Kinit}^K \eta_{h,k} & = \sum_{h=1}^H \sum_{k=\Kinit}^K  20 H \Exp_h[V_{h+1}^k](s_{h,k},a_{h,k})   \le  \sum_{h=1}^H \sum_{k=\Kinit}^K  20 H \Exp_{h-1}[V_{h}^k](s_{h-1,k},a_{h-1,k})  .
\end{align*}
Collecting terms, we have then shown that, 
\begin{align*}
\cRtil_K & \le c_1 \betatil \sqrt{d H \log(1 + K/(d \lambda \sigmin^2))} \sqrt{ H \sum_{h=1}^H \sum_{k = \Kinit}^K \Exp_{h-1}[V_{h}^k](s_{h-1,k},a_{h-1,k})} \\
& \qquad + c_2 \betatil \sqrt{d H \log(1 + K/(d \lambda \sigmin^2))} \sqrt{H \sigmin^2 K}  \\
& \qquad + c_3 \betatil \beta^2 H^2 \sqrt{d} \log(1 + K/(d \lambda \sigmin^2))
\end{align*}
for universal constants $c_1,c_2,c_3$. By \Cref{lem:vtil_to_vst} we can bound, with probability at least $1-\delta$, 
\begin{align*}
\sum_{k=\Kinit}^K \sum_{h=1}^H \Exp[V_{h-1}^{k}](s_{h-1,k},a_{h-1,k}) & \le H \cdot \left ( \sum_{k=1}^K V_1^{k,\star} +  \cRtil_K  + 2 \sqrt{\left ( \sum_{k=1}^K V_1^{k,\star} +  \cRtil_K \right ) \cdot \log 1/\delta} +  \log 1/\delta \right ) \\
& \le  4 H \log 1/\delta  \cdot \left ( \sum_{k=1}^K V_1^{k,\star} +  \cRtil_K   \right ) 
\end{align*}
so 
\begin{align*}
\cRtil_K & \le c_1 \betatil \sqrt{d H \log(1 + K/(d \lambda \sigmin^2))} \left ( \sqrt{ H^2 \log 1/\delta \cdot \left ( \sum_{k=1}^K V_1^{k,\star} + \cRtil_K \right ) } + \sqrt{H \sigmin^2 K} \right ) \\
& \qquad + c_3 \betatil \beta^2 H^2 \sqrt{d} \log(1 + K/(d \lambda \sigmin^2)) .
\end{align*}
Finally, choosing $\sigmin^2 = 1/K$ and solving the above for $\cRtil_K$ gives
\begin{align*}
\cRtil_K \le c_1 \betatil \sqrt{d H \log(1 + K/(d \lambda \sigmin^2))}  \sqrt{ H^2 \log 1/\delta \cdot \sum_{k=1}^K V_1^{k,\star}  } + c_2 \betatil \beta^2 H^3 \sqrt{d} \log(1 + K/(d \lambda \sigmin^2)) \cdot \log 1/\delta .
\end{align*}
Since $\cR_K \le H \Kinit +  \cRtil_K$, union bounding over $\cE$, which holds with probability at least $1-\delta$ by \Cref{lem:good_event}, and the two additional events stated above, and using that $\beta = 6 \sqrt{\Cmdp + \log(2HK/\delta)}$ and
\begin{align*}
\Cmdp := c (d^2 + \dR) \cdot \logterm \left (  d, \sigmin^{-1}, H, 1/\lambda, K, \RR  \right )  ,
\end{align*}
and the definition of $\betatil$, and setting $\lambda = 1/H^2$, gives the final result. 
\end{proof}


\end{document}